\documentclass[journal]{IEEEtran}

%
\ifCLASSINFOpdf
\usepackage[pdftex]{graphicx}
\else
\fi
%
%

\usepackage{url}            
\usepackage{booktabs}       
\usepackage{amsfonts}       

%
\usepackage{amsmath}
\usepackage{amsthm}
\usepackage{amssymb}

\usepackage{dsfont}

\usepackage{cleveref}
\crefformat{footnote}{#2\footnotemark[#1]#3}

\usepackage{algorithm}
\usepackage[noend]{algorithmic}
\usepackage[shortlabels]{enumitem}

\usepackage{xcolor}

\newcommand{\Sdom}{\mathcal{S}}
\newcommand{\Adom}{\mathcal{A}}
\newcommand{\Zprior}{{z}_0}
\newcommand{\Sprior}{{s}_0}
\newcommand{\zHist}[1]{\mathbf{z}_{#1}}
\newcommand{\sHist}[1]{\mathbf{s}_{#1}}

\newcommand{\sNew}[1]{{s}_{#1}}
\newcommand{\zNew}[1]{{z}_{#1}}
\newcommand{\xNew}[1]{\mathbf{x}_{#1}}

\newcommand{\argmax}{\operatornamewithlimits{argmax}}
\newcommand{\argmin}{\operatornamewithlimits{argmin}}


\newtheorem{definition}{Definition}
\newtheorem{theorem}{Theorem}
\newtheorem{lemma}{Lemma}

\newtheorem{corollary}{Corollary}

\theoremstyle{remark}
\newtheorem{remark}{Remark}

\setcounter{topnumber}{8}
\setcounter{bottomnumber}{8}
\setcounter{totalnumber}{8}

\hyphenation{op-tical net-works semi-conduc-tor}

\begin{document}
	%
	\title{Nonmyopic Gaussian Process Optimization with Macro-Actions}
	%
	%
	%

	
	\author{Dmitrii Kharkovskii, Chun Kai Ling, Bryan Kian Hsiang Low
		\thanks{D. Kharkovskii and B. K. H. Low are with the Department of Computer Science,
		National University of Singapore, Republic of Singapore (email:
		dmitrkha@comp.nus.edu.sg, lowkh@comp.nus.edu.sg).}
	\thanks{C. K. Ling is with the Department of Computer Science, Carnegie Mellon University, Pittsburgh, PA $15213$, USA
	(e-mail: chunkail@cs.cmu.edu).}
	}

	\maketitle
	
	\begin{abstract}
		This paper presents a multi-staged approach to nonmyopic adaptive \emph{Gaussian process optimization} (GPO)  for \emph{Bayesian optimization} (BO) of unknown, highly complex objective functions 
		that, in contrast to existing nonmyopic adaptive BO algorithms, exploits the notion of macro-actions for scaling up to a further lookahead to match up to a larger available budget.
		To achieve this, we generalize GP upper confidence bound to a new acquisition function defined w.r.t. a nonmyopic adaptive macro-action policy, which is intractable to be optimized exactly due to an uncountable set of candidate outputs.
		The contribution of our work here is thus to derive a nonmyopic adaptive \emph{$\epsilon$-Bayes-optimal macro-action GPO} ($\epsilon$-Macro-GPO) policy. 
		To perform nonmyopic adaptive BO in real time, we then propose an asymptotically optimal anytime variant of our $\epsilon$-Macro-GPO policy with a performance guarantee.
		We empirically evaluate the performance of our $\epsilon$-Macro-GPO policy and its anytime variant in BO with synthetic and real-world datasets.
	\end{abstract}
	
	\begin{IEEEkeywords}
		Bayesian optimization, Gaussian process.
	\end{IEEEkeywords}

	%
	\IEEEpeerreviewmaketitle
	
	\section{Introduction}
	\label{sec:intro}
	Recent advances in \emph{Bayesian optimization} (BO) have delivered a promising suite of tools for optimizing an unknown (possibly noisy, non-convex, with no closed-form expression/derivative) objective function with a finite budget of function evaluations, as demonstrated in a wide range of applications like automated machine learning, robotics, sensor networks, environmental monitoring, among others \cite{shahriari16}. 
	Conventionally, a BO algorithm relies on some choice of acquisition function (e.g., 
	improvement-based such as 
	probability of improvement or \emph{expected improvement} (EI) over currently found maximum, information-based  \cite{hennig12,lobato14,villemonteix09}, or \emph{upper confidence bound} (UCB) \cite{srinivas10}) as a heuristic to guide its search for the global maximum. 
	To do this, the BO algorithm exploits the chosen acquisition function
	to repeatedly select an input for evaluating the unknown objective function that trades off between observing a likely maximum based on a GP belief of the unknown objective function (exploitation) vs.~improving the GP belief (exploration) until the budget is expended.
	
	Unfortunately, such a conventional BO algorithm is greedy/myopic and hence performs suboptimally with respect to the given finite budget\footnote{\label{bravo}Acquisition functions like EI~\cite{bull11,vazquez10} and UCB~\cite{srinivas10} 
		offer theoretical guarantees for the convergence rate of their BO algorithms  
		(i.e., in the limit) via regret bounds. 
		In practice, since the budget is limited, such bounds are suboptimal as they cannot be specified to be arbitrarily small.
	}. 
	To be nonmyopic, its policy to select the next input has to additionally account for its subsequent selections of inputs for evaluating the unknown objective function\footnote{Fig.~\ref{fig:illustration} shows how a nonmyopic BO algorithm can outperform a myopic one.}. 
	Perhaps surprisingly, this can be partially achieved by batch BO algorithms capable of \emph{jointly}\footnote{\label{boo}In contrast, a \emph{greedy} batch BO algorithm~\cite{azimi10,contal13,desautels14,gonzalez16b}  selects the inputs of a batch one at a time myopically.} optimizing a batch of inputs~\cite{chevalier13,daxberger17,shah15,wu2016} because 
	their selection of each input has to account for that of all other inputs of the batch\footnote{Batch BO is traditionally considered when resources are available to evaluate the objective function in parallel.
		We suggest a further possibility of using batch BO for non-myopic selections of inputs here.}.
	However, since the batch size is typically set to be much smaller than the given budget, they have to repeatedly select the next batch greedily.
	Furthermore, unlike the conventional BO algorithm described above, their selection of each input is independent of the outputs observed from evaluating the objective function at the other selected inputs of the batch, thus sacrificing some degree of adaptivity.
	Hence, they also perform suboptimally with respect to the given budget.

	Some nonmyopic adaptive BO algorithms~\cite{lam17,lam16,ling16,marchant14,Osborne09} have been developed to combine the best of both worlds. But, they have been empirically demonstrated to be effective and tractable for at most a lookahead of $5$  observations which is usually much less than the size of the available budget in practice, thus causing them to behave myopically in this case.
	To increase the lookahead,
	the work of~\cite{gonzalez16a} has proposed a two-staged approach that utilizes a \emph{greedy} batch BO algorithm\cref{boo} in its second stage to efficiently but myopically optimize all but the first input afforded by the budget.
	Note that the above works on nonmyopic adaptive BO do not provide theoretical performance guarantees except for that of~\cite{ling16}.
	The challenge therefore remains in devising a multi-staged approach to nonmyopic adaptive BO that can empirically scale well to a further lookahead (and hence match up to a larger budget) and still be amenable to a theoretical analysis of its performance,
	which is the focus of our work here.
	
	\begin{figure*}
		\centering
		\begin{tabular}{cccc}
				\hspace{-2.5mm}\includegraphics[width=0.245 \textwidth]{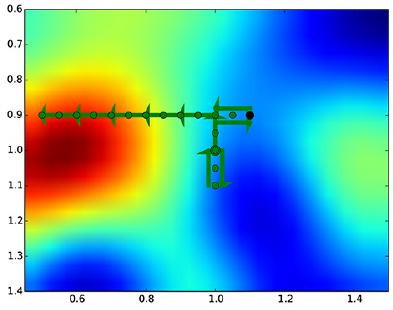} & \hspace{-4mm}
				\includegraphics[width=0.245 \textwidth]{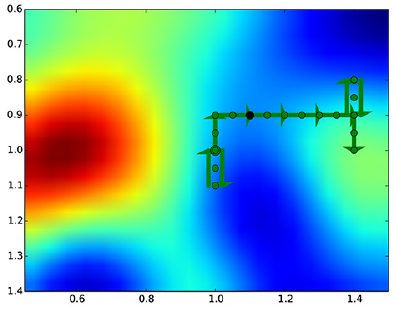} &
				\hspace{-4mm} \includegraphics[width=0.245 \textwidth]{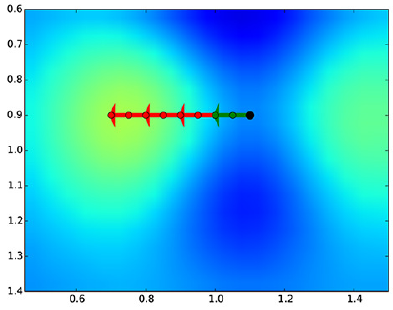} &
				\hspace{-3mm}\includegraphics[width=0.245 \textwidth]{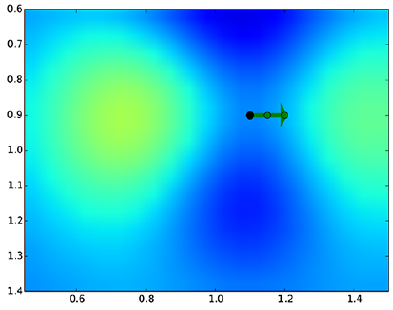} \\
				\hspace{-2.5mm}{\small (a)}  &\hspace{-4mm}{\small (b)}  & \hspace{-4mm} {\small (c)} 
				& \hspace{-3mm} {\small (d)}  
			\end{tabular}
			\caption{Illustrating the behaviors of (a) our nonmyopic $\epsilon$-Macro-GPO policy with a lookahead of $8$ observations ($H=4$, $N=1$) vs.~(b) greedy/myopic DB-GP-UCB~\cite{daxberger17} with macro-action length $\kappa=2$ and budget of $20$ observations 
				in controlling an AUV to gather observations for finding a hotspot (i.e., global maximum) in a simulated plankton density phenomenon (Section~\ref{expt}). 
				Prior observations are at the AUV's initial starting input location (blue circle) and buoy's location $(0,0)$ (not shown here).
				Up till $t=5$, both $\epsilon$-Macro-GPO and DB-GP-UCB produce the same trajectories to reach the input location denoted by a black circle. Figs.~c \& d plot maps of GP posterior mean~\eqref{gp-posteriors} over the phenomenon at stage $t=5$. 
				(c) Since $\epsilon$-Macro-GPO is able to look ahead and plan its macro-actions in the later planning stages 
				(red arrows) to reach the region containing the global maximum, it moves the UAV left towards the global maximum. (d) On the other hand, DB-GP-UCB moves the UAV right towards the local maximum. So, by utilizing lookahead, our nonmyopic $\epsilon$-Macro-GPO policy can outperform the myopic DB-GP-UCB.}
		\label{fig:illustration}
	\end{figure*}

	To address this challenge, we exploit the notion of macro-actions (i.e., each denoting a sequence of primitive actions executed in full without considering any observation taken after performing each primitive action in the sequence)
	inherent to the structure
	of several real-world task environments/applications such as environmental sensing and monitoring, mobile sensor networks, and robotics. Some examples are given below and described in detail in Section~\ref{expt}: 
	\begin{enumerate}[(a)] 
		\item  In monitoring of algal bloom in the coastal ocean, 
		an \emph{autonomous underwater vehicle} (AUV) is deployed on board a research vessel in search for a hotspot of peak phytoplankton abundance and tasked to take dives from the vessel to gather ``Gulper'' water samples for on-deck testing that can be cast as macro-actions~\cite{pennington16};
		
		\item In servicing the mobility demands within an urban city, an autonomous robotic vehicle in a mobility-on-demand system
		cruises along different road trajectories abstracted as macro-actions to find a hotspot of highest mobility demand to pick up a user~\cite{TASE15};
		
		\item In monitoring of the indoor environmental quality of an office environment~\cite{choi12}, a mobile robot mounted with a weather board is tasked to find a hotspot of peak temperature by exploring different stretches of corridors that can be naturally abstracted into macro-actions. This setting is visually illustrated in  Fig.~\ref{fig:robot-example};
		
		\item In monitoring of algal bloom in the coastal ocean, an underwater glider is tasked to find a hotspot of peak chlorophyll fluorescence by optimizing its search trajectory tractably over simple ellipses of varying sizes~\cite{leonard07} that constitute different macro-actions.
	\end{enumerate}

	Macro-actions have in fact been well-studied and used by the planning community
	to scale up algorithms for planning under uncertainty to a further lookahead~\cite{he10,he11,lim11}, which is realized from a much reduced space of possible sequences of primitive actions (i.e., macro-actions) induced by the structure of the task environment/application.Macro-actions are also studied in reinforcement learning community but named  as options instead~\cite{barto03,konidaris07,stolle02}.
	
	The use of macro-actions in the context of nonmyopic adaptive BO 
	poses an interesting research question:
	\emph{How can an acquisition function be defined with respect to a nonmyopic adaptive macro-action\footnote{In BO, each macro-action denotes a sequence of inputs for evaluating the unknown objective function.} policy and optimized tractably to yield such a policy with a provable performance guarantee for a given finite budget?}

	The main technical difficulty in answering this question stems from the need to account for the correlation of outputs to be observed from evaluating the unknown objective function at inputs found within a macro-action and between different macro-actions (Section~\ref{main-section}). Such a correlation structure is the chief ingredient to be exploited for selecting informative observations to find the global maximum.

This paper presents a principled multi-staged Bayesian sequential decision problem framework for nonmyopic adaptive \emph{GP optimization} (GPO) (Section~\ref{main-section}) that, in particular, exploits macro-actions inherent to the structure of several real-world task environments/applications for scaling up to a further lookahead (as compared to the existing nonmyopic adaptive BO algorithms discussed above~\cite{lam16,lam17,ling16,marchant14,Osborne09}) 
to match up to a larger available budget.
To achieve this, we first generalize GP-UCB~\cite{srinivas10} 
to a new acquisition function defined with respect to a nonmyopic adaptive macro-action policy, which,  unfortunately, is intractable to be optimized exactly due to an uncountable set of candidate outputs.
The key novel contribution of our work here is to show that it is in fact possible to solve for a nonmyopic adaptive \emph{$\epsilon$-Bayes-optimal macro-action GPO} ($\epsilon$-Macro-GPO) policy given an arbitrarily user-specified loss bound $\epsilon$ 
via stochastic sampling in each planning stage which requires only a polynomial number of samples in the length of macro-actions\footnote{\label{ling}In contrast, though the nonmyopic adaptive BO algorithm of~\cite{ling16} based on deterministic sampling can be naively generalized to exploit macro-actions, it requires an exponential number of samples per planning stage, as detailed in Remark~\ref{ref:remark-chun}.}.
To perform nonmyopic adaptive BO in real time, we then propose an asymptotically optimal anytime variant of our $\epsilon$-Macro-GPO policy with a performance guarantee. We empirically evaluate the performance of our nonmyopic adaptive 
$\epsilon$-Macro-GPO policy and its anytime variant in BO with synthetic and real-world datasets (Section~\ref{expt}). 

\section{Modeling Spatially Varying Phenomena with Gaussian Processes} 
\label{gppfram}
To simplify exposition of our work here, we will assume the task environment to be a spatially varying phenomenon (e.g., 
indoor environmental quality of an office environment, 
plankton bloom in the ocean, mobility demand within an urban city, as described in Section~\ref{sec:intro}). 
A mobile sensing agent utilizes our proposed nonmyopic adaptive $\epsilon$-Macro-GPO policy or its anytime variant to select and gather observations from the task environment for finding the global maximum.
\subsection{Notations and Preliminaries}
Let $\Sdom$ be the domain of a spatially varying phenomenon corresponding to a set of input locations.
In every stage $t > 0$, the agent executes one of the available macro-actions of length $\kappa$ 
at its current input location 
by deterministically moving through a sequence of $\kappa$ input locations,
denoted by a vector  
$\sNew{t} \in\Adom(\sNew{t-1})$, and observes the corresponding output measurements 
$\zNew{t} \in \mathbb{R}^\kappa$,  
where   
$\Adom(\sNew{t-1})\subseteq \Sdom^\kappa$
denotes a finite set of available macro-actions 
at the agent's current input location. Note that $\Adom(\sNew{t-1})$ depends on the agent's current input location which corresponds to the last component of macro-action $\sNew{t-1}$ executed in the previous stage $t-1$. These notations are visually illustrated in Fig.~\ref{fig:robot-example} and its caption b.  
The state of the agent at its initial starting input location 
is represented by prior observations/data $d_0\triangleq\langle \Sprior , \Zprior\rangle$ available before planning where 
$\Sprior$ and $\Zprior$ denote, respectively, vectors comprising input locations visited and corresponding output measurements observed by the agent prior to planning. 
The agent's initial starting input location
is the last component of $\Sprior$.
In stage $t>0$, the state of the agent 
is represented by observations/data $d_t \triangleq \langle \sHist{t}, \zHist{t} \rangle$ where 
$\sHist{t}\triangleq \Sprior \oplus\ldots\oplus \sNew{t}$ and
$\zHist{t}\triangleq \Zprior \oplus\ldots\oplus \zNew{t}$
denote, respectively, vectors comprising input locations visited and corresponding output measurements observed by the agent up till 
stage $t$
and `$\oplus$' denotes vector concatenation. 
\begin{figure}[t]
	\centering
	\includegraphics[width=0.45\textwidth]{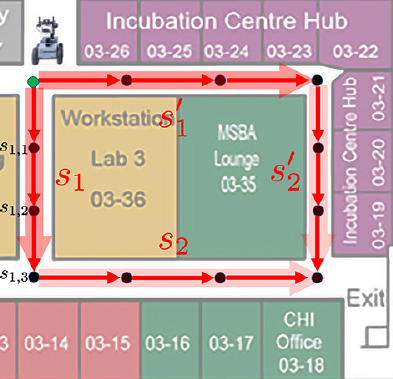} 
	\caption{Example of monitoring indoor environmental quality of an office environment~\cite{choi12}: 
		(a) A mobile robot mounted with a weather board is tasked to find a hotspot of peak temperature by exploring different stretches of corridors that can be naturally abstracted into macro-actions. 
		(b) In stage $t=1$, the robot is at its initial starting input location (green dot). It can decide to execute macro-action \textcolor{red}{$\sNew{1}$} (translucent red arrow), 
		which is a sequence of $\kappa=3$ primitive actions (opaque red arrows) moving it through a sequence of $\kappa=3$ input locations (black dots) to arrive at input location $s_{1, 3}$. 
		So, $\textcolor{red}{\sNew{1}}\triangleq (s_{1,1},s_{1,2}, s_{1,3})$.
		(c) To derive a myopic Macro-GPO or $\epsilon$-Macro-GPO policy with $H=1$, the last stages of Bellman equations in \eqref{eq:OptimalValFunDef}-\eqref{eq_4_8} require macro-actions \textcolor{red}{$\sNew{1}$} and \textcolor{red}{$\sNew{1}'$} as inputs. To derive a nonmyopic one with $H=2$, they require macro-action sequences $\textcolor{red}{\sNew{1}}\oplus \textcolor{red}{\sNew{2}}$ and $\textcolor{red}{\sNew{1}'}\oplus \textcolor{red}{\sNew{2}'}$ as inputs instead.}
	\label{fig:robot-example} 
\end{figure}
\subsection{Gaussian Process (GP)}
\label{subsection-gp}
The spatially varying phenomenon is modeled as a realization of a GP:
Each input location $s \in \Sdom$ is associated with an output measurement $y_s$.
Let $y_{\Sdom} \triangleq \{y_{s}\}_{s \in \Sdom}$ denote a GP, that is, every finite subset of $y_{\Sdom}$ has a multivariate Gaussian distribution.
Then, the GP is fully specified by its \emph{prior} mean $\mu_s \triangleq \mathbb{E}[y_s]$ (we assume w.l.o.g.~that $\mu_s=0$  for all $s \in \Sdom$) and covariance 
$\sigma_{ss'} \triangleq \text{cov}[y_s,y_{s'}]$ for all $s, s'\in \Sdom$,
the latter of which characterizes the spatial correlation structure of the phenomenon.  For example, $\sigma_{ss'}$ can be defined by the commonly-used squared exponential covariance function
$\sigma_{ss'} \triangleq \sigma_{y}^2\exp\{-0.5(s-s')^{\top}\Gamma^{-2}(s-s')\}$ where $\sigma_{y}^2$ is the signal variance controlling the intensity of  output measurements and $\Gamma$ is a diagonal matrix with length-scale components $\ell_1$ and $\ell_2$ controlling the spatial correlation or ``similarity'' between output measurements in the respective east-west and north-south directions of the $2$D phenomenon.

All output measurements observed by the agent are corrupted by an additive noise $\varepsilon$, i.e., $z_{i, j} \triangleq y_{s_{i, j}}+\varepsilon$ for 
stage $i=0,\ldots, t$  and $j = 1, \ldots \kappa$ where $s_{i, j}$ is the $j$-th input location of macro-action $\sNew{i}$ at stage $i$, $z_{i, j}$ is the corresponding output measurement 
and $\varepsilon\sim\mathcal{N}(0,\sigma_n^2)$ with  the noise variance  $\sigma_n^2$.
Supposing the agent has gathered observations  $d_t = \langle \sHist{t}, \zHist{t} \rangle$ from
stages $0$ to $t$, 
the GP model can exploit these observations $d_t$ to perform probabilistic regression by 
providing a Gaussian posterior belief $p(\zNew{t+1} | \sNew{t+1}, d_t ) = \mathcal{N}(\mu_{\sNew{t+1} | d_t}, \Sigma_{\sNew{t+1} | \sHist{t}})$ 
of noisy output measurements for any $\kappa$ input locations $\sNew{t+1} \subset \Sdom$
with the following \emph{posterior} mean vector and covariance matrix, respectively~\cite{gpml}:

\begin{equation}
\begin{array}{rcl}
\mu_{\sNew{t+1} | d_t} &\hspace{-2.4mm}\triangleq&\hspace{-2.4mm}\displaystyle   \Sigma_{\sNew{t+1} \sHist{t}}   \Sigma_{\sHist{t} \sHist{t}}^{-1}   \zHist{t}^\top\ , \vspace{0.5mm}\\ 
\Sigma_{\sNew{t+1} | \sHist{t}} &\hspace{-2.4mm}\triangleq&\hspace{-2.4mm}\displaystyle \Sigma_{\sNew{t+1} \sNew{t+1}}\hspace{-0.5mm} - \Sigma_{\sNew{t+1} \sHist{t}}  \Sigma_{\sHist{t} \sHist{t}}^{-1}  \Sigma_{\sHist{t} \sNew{t+1}}
\end{array}\hspace{-2.5mm}
\label{gp-posteriors}
\end{equation}
where
$\Sigma_{\sNew{t+1} \sHist{t}}$ is a matrix with covariance components $\sigma_{ss'}$ for every input location $s$ of $\sNew{t+1}$ and $s'$ of $\sHist{t}$, $\Sigma_{\sHist{t} \sNew{t+1}}$ is the transpose of $\Sigma_{\sNew{t+1} \sHist{t}}$, and  $\Sigma_{\sHist{t} \sHist{t}}$ $(\Sigma_{\sNew{t+1} \sNew{t+1}})$ is a matrix with covariance components $\sigma_{ss'}+\sigma^2_n\delta_{ss'}$ for every  pair of input locations $s, s'$ of $\sHist{t}$ $(\sNew{t+1})$ and $\delta_{ss'}$ is a Kronecker delta of value $1$ if $s=s'$, and $0$ otherwise. A key property of the GP model is that, different from $\mu_{\sNew{t+1} | d_t}$, $\Sigma_{\sNew{t+1} | \sHist{t}}$ is independent of the output measurements $\zHist{t}$.
\begin{figure*}
	{\begin{tabular}{ccc}
			\includegraphics[height=6.0cm]{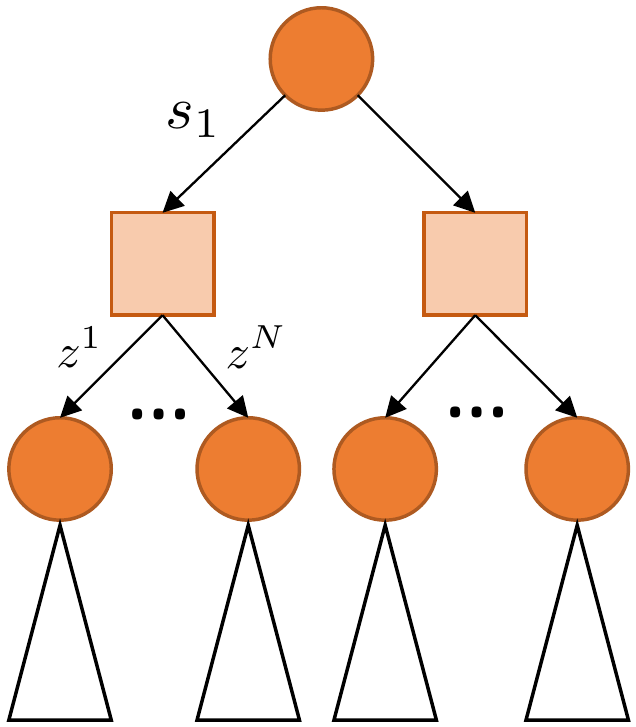} &  \hspace{3mm}
			\includegraphics[height=6.0cm]{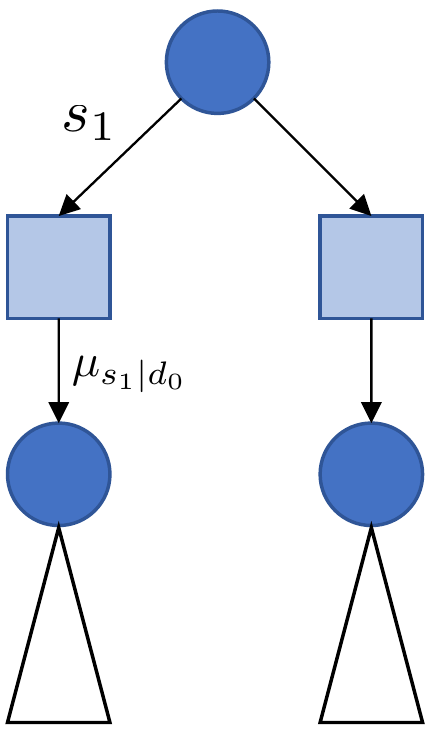} &  \hspace{3mm}
			\includegraphics[height=6.0cm]{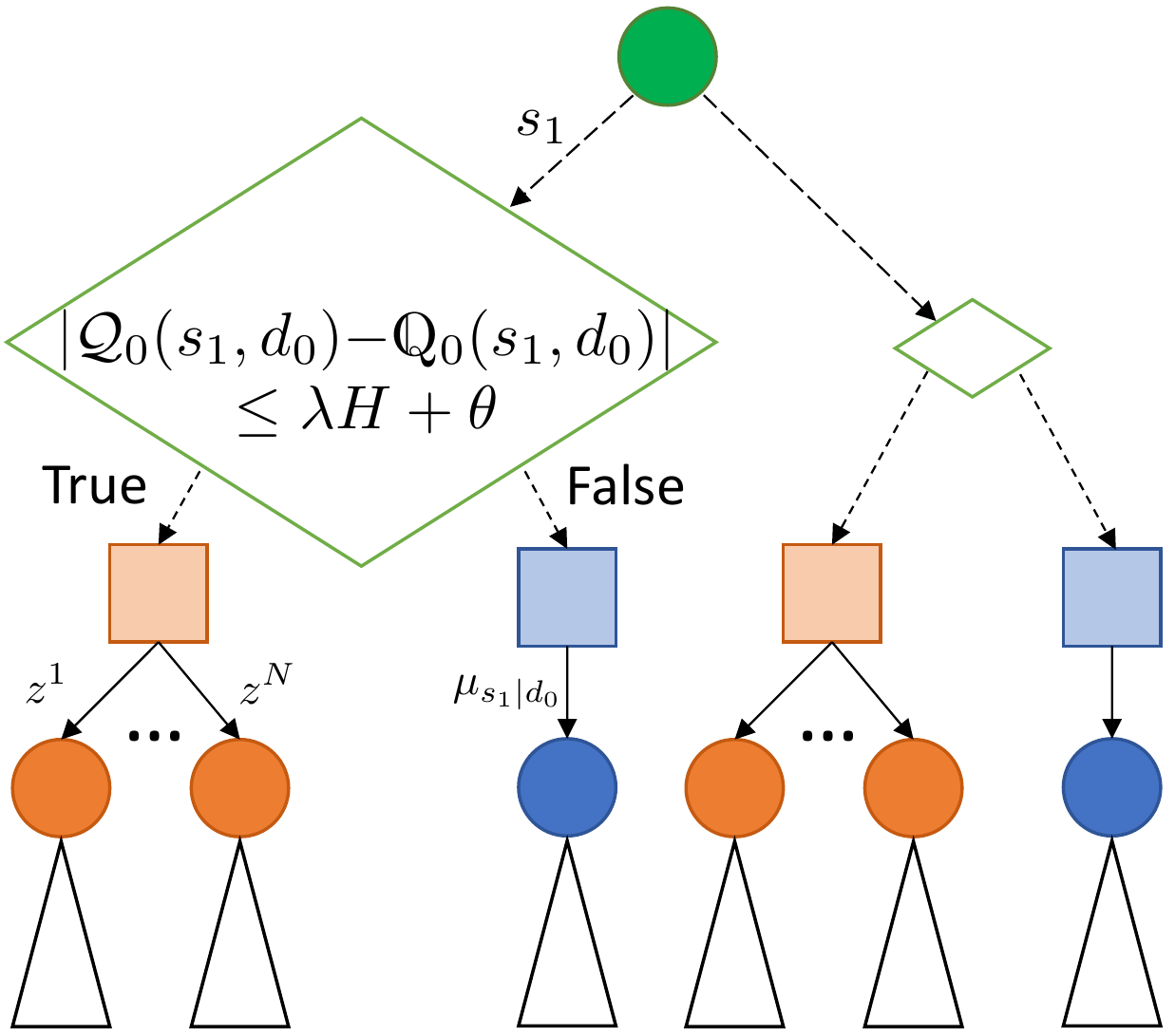}\\
			{(a)} &{(b)} &  {(c)}
	\end{tabular}}
	\caption{Visual illustrations of policies induced by (a) stochastic sampling~\eqref{approx-policy},  (b) most likely observations~\eqref{ml-policy}, and (c) our $\epsilon$-Macro-GPO policy $\pi^{\epsilon}$~\eqref{eq_4_8}.  Circles denote nodes $d_t$ and  squares denote nodes $\langle \sHist{t+1}, \zHist{t} \rangle$.}
	\label{fig:policies}
\end{figure*}
\section{$\epsilon$-Bayes-Optimal Macro-GPO}
\label{main-section}
\subsection{Problem Formulation} 
To cast nonmyopic adaptive \emph{macro-action GP optimization} (Macro-GPO) as a Bayesian sequential decision problem, we define a nonmyopic adaptive macro-action policy $\pi$ to sequentially decide in each stage $t$ 
the next 
macro-action $\pi(d_t) \in \Adom(\sNew{t})$ to be executed for gathering $\kappa$ new observations based on the current observations $d_t$ over a finite planning horizon of $H$ stages (i.e., a  lookahead of $\kappa H$ observations). 
The goal of the agent is to plan/decide its macro-actions to visit input locations $\sHist{H}\triangleq \sNew{1} \oplus\ldots\oplus \sNew{H}$ with the maximum total corresponding output measurements 
\begin{equation*}
	\mathbf{1}^{\top} \zHist{H} = \sum_{t=1}^{H}\mathbf{1}^{\top} \zNew{t} = \sum_{t=1}^{H} \sum_{i=1}^{\kappa} z_{t, i} 
\end{equation*}
or, equivalently, minimum cumulative regret where $\zHist{H} \triangleq \zNew{1} \oplus\ldots\oplus \zNew{H}$ and $\zNew{t} \triangleq (z_{t, 1},\ldots,z_{t, \kappa})$.
However, since only the prior observations/data $d_0$ are known, the Macro-GPO problem involves finding a 
nonmyopic adaptive 
macro-action policy $\pi$ to select input locations 
$\sHist{H}$ 
to be visited by the agent with the
maximum \emph{expected} total corresponding  output measurements 
$\mathbb{E}_{ \zHist{H}| d_0, \pi} [ \mathbf{1}^{\top} \zHist{H}]$ instead.

Supposing the size of the available budget in a real-world task environment exceeds the 
lookahead of $\kappa H$ observations, 
it can afford a \emph{stronger exploration behavior} by including 
an additional weighted exploration term 
$\beta\  \mathbb{I}[y_{\Sdom} ; \zHist{H} | d_0,\pi]$; 
its effect on BO performance is empirically investigated in Section~\ref{expt}.
The conditional mutual information 
$\mathbb{I}   [y_{\Sdom} ; \zHist{H}  | d_0,\pi]$ here
can be interpreted as the information gain on the phenomenon over the entire domain $\Sdom$ (i.e., equivalent to $y_{\Sdom}$) from gathering observations $\langle \sHist{H} , \zHist{H}  \rangle$ selected according to the 
nonmyopic adaptive 
macro-action policy $\pi$ given the prior data $d_0$. Then, the acquisition function w.r.t. a nonmyopic adaptive macro-action policy $\pi$ when starting in $d_0$ and following $\pi$ thereafter can be defined as
\begin{equation}
	V_0^{\pi}(d_0) \triangleq \mathbb{E}_{\zHist{H}| d_0, \pi } [  \mathbf{1}^{\top} \zHist{H}]  + \beta\  \mathbb{I}[y_{\Sdom} ; \zHist{H} | d_0, \pi]\ .
\label{eq:policy_value} 
\end{equation}
Applying the chain rule for mutual information and a few other information-theoretic
results to~\eqref{eq:policy_value} yields the following $H$-stage Bellman equations (Appendix~\ref{general-policy-proof}): 
\begin{equation}
	\hspace{-1.9mm}
	\begin{array}{rcl}
		V_t^\pi(d_t) &\hspace{-2.4mm}\triangleq &\hspace{-2.4mm} Q_t^\pi( \pi(d_{t}), d_t)\ , \\
		Q_t^\pi(\sNew{t+1}, d_t)&\hspace{-2.4mm} \triangleq  &\hspace{-2.4mm} R(\sNew{t+1}, d_t)\ + \\ 
		&&\hspace{-2.4mm}\mathbb{E}_{\zNew{t+1} | \sNew{t+1}, d_{t}} [ V_{t+1}^\pi ( \langle \sHist{t+1},\zHist{t}\hspace{-0.5mm}\oplus\hspace{-0.5mm}\zNew{t+1} \rangle)]
	\end{array}\hspace{-4.7mm}
	\label{eq:general-policy} 
\end{equation}
for stages $t =0,\ldots,H-1$ where $V_H^{\pi}(d_H) \triangleq 0$ and 
\begin{equation}
	\hspace{-1.9mm}
	\begin{array}{c}
		R(\sNew{t+1}, d_t) \triangleq \mathbf{1}^{\top}\mu_{\sNew{t+1} | d_t} +  0.5\beta \log | I + \sigma_n^{-2}  \Sigma_{\sNew{t+1} | \sHist{t}} | \ .
	\end{array}
	\label{eq:reward-def}
\end{equation}
To solve the Macro-GPO problem, Bayes-optimality\footnote{Bayes-optimality is previously studied in discrete \emph{Bayesian reinforcement learning} (BRL)~\cite{Poupart2006} but its assumed discrete-valued output measurements and Markov property do not hold in Macro-GPO. Continuous BRLs~\cite{Ross09,Ross08} assume a known parametric observation function, the reward function to be independent of output measurements and previous input locations, and/or, when using GP, the most likely observations during planning with no performance guarantee.} is exploited to select input locations to be visited by the agent that maximize the expected total corresponding  output measurements (and, if the budget can afford, the additional weighted exploration term representing the information gain on the phenomenon) with respect to all possible induced sequences of future GP posterior beliefs $p(\zNew{t+1} | \sNew{t+1}, d_t )$ for $t = 0,\ldots, H-1$.
Formally, this involves choosing a nonmyopic adaptive macro-action policy $\pi$ to maximize $V^{\pi}_0 (d_0)$, which we call the Bayes-optimal Macro-GPO policy $\pi^{*}$. That
is, 
\begin{equation*}
	V_{0}^{*}(d_0) \triangleq V_{0}^{\pi^*}(d_0) = \max_{\pi} {V_{0}^{\pi}(d_0)}. 
\end{equation*} 
Plugging $\pi^*$ into $V_{t}^{\pi}(d_t)$ and $Q_{t}^{\pi}(\sNew{t+1}, d_t)$~\eqref{eq:general-policy} gives

\begin{equation} 
	\hspace{-1.9mm}
	\begin{array}{rcl}
		\displaystyle V_t^*(d_t)&\hspace{-2.4mm}  \triangleq & \hspace{-2.4mm}\max_{\sNew{t+1} \in \Adom(\sNew{t})} \hspace{-0.5mm} Q_t^*(\sNew{t+1}, d_t)\ ,  \\
		Q_t^* (\sNew{t+1}, d_t) &\hspace{-2.4mm} \triangleq & \hspace{-2.4mm} R(\sNew{t+1}, d_t)\ + \\
		& &\hspace{-2.4mm}\mathbb{E}_{\zNew{t+1} | \sNew{t+1}, d_{t}}[ V_{t+1}^* (\langle \sHist{t+1},\zHist{t}\hspace{-0.5mm}\oplus\hspace{-0.5mm}\zNew{t+1} \rangle)]
	\end{array}\hspace{-4.4mm}
\label{eq:OptimalValFunDef}  
\end{equation}
for stages $t=0,\ldots, H-1$ where $V_H^{*}(d_H) \triangleq 0$.\footnote{\label{footnote-h-diff}To understand the effect of $H$ on how much macro-action sequence information are required as inputs to the Bellman equations in \eqref{eq:OptimalValFunDef}-\eqref{eq_4_8}, 
	refer to Fig.~\ref{fig:robot-example} and its caption c for a visual illustration.} 
When the lookahead of  $\kappa H$  observations matches up to the available budget, 
the Bayes-optimal Macro-GPO policy $\pi^*$ can naturally trade off between exploration vs. exploitation without needing the additional weighted exploration term in~\eqref{eq:policy_value} or~\eqref{eq:reward-def} (i.e., $\beta=0$): Its selected macro-action $\pi^*(d_t) = \argmax_{\sNew{t+1} \in \Adom(\sNew{t})} \, Q^{*}_t (\sNew{t+1}, d_{t}) $ in each stage $t$ has to trade off between exploiting the current GP posterior belief $p(\zNew{t+1} | \pi^*(d_t), d_t )$ to maximize the expected total corresponding output measurements $R(\pi^*(d_t), d_t)= \mathbf{1}^{\top} \mu_{\pi^*(d_t) | d_t}$ vs.~improving the GP posterior belief of the phenomenon (i.e., exploration) so as to maximize the expected total output measurements $\mathbb{E}_{\zNew{t+1}| \pi^*(d_t), d_t}[V^*_{t+1}(\langle \sHist{t}\oplus\pi^*(d_t),\zHist{t}\oplus\zNew{t+1} \rangle)]$  in the later stages.
\begin{figure*}
	\hspace{-2.3mm}
	\begin{tabular}{cc}
		\hspace{-0mm}\includegraphics[height=1.9cm]{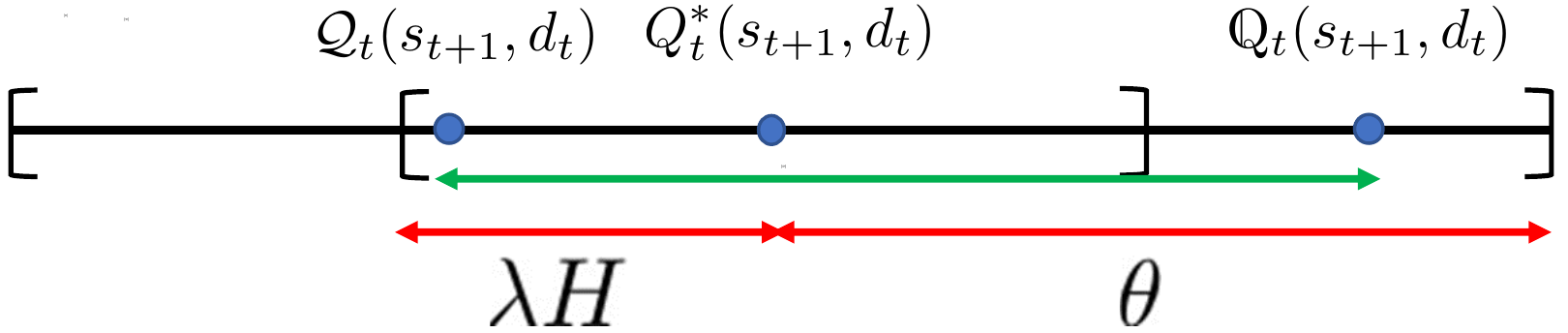}\hspace{-0mm} &
		\includegraphics[height=1.9cm]{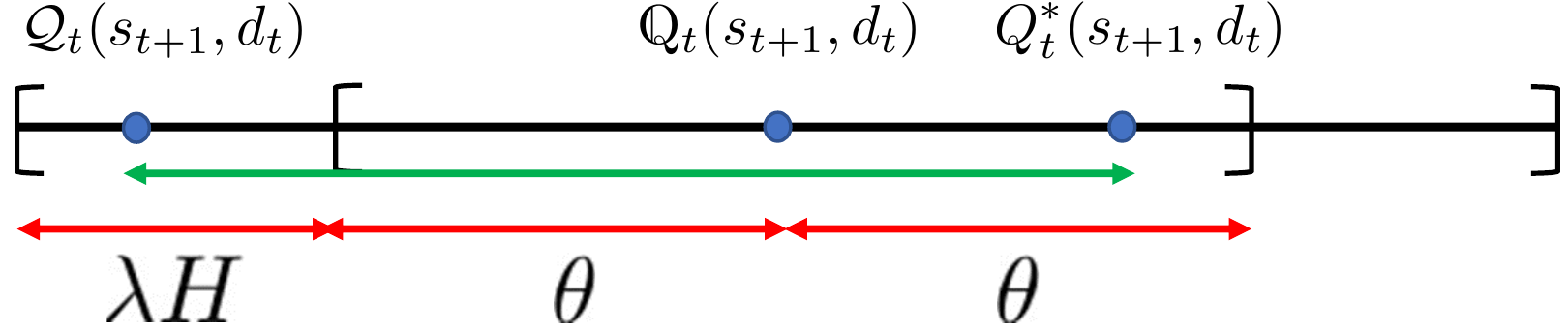}\\
		\hspace{-0mm} {(a)}  \hspace{-0mm} & {(b)}
	\end{tabular}\vspace{-0mm}
	\caption{Implications of specifying conditional policy~\eqref{eq_4_8}: 
		(a) When $| \mathcal{Q}_t (\sNew{t+1}, d_t) - {Q}^*_t (\sNew{t+1}, d_t) | \le \lambda H$, $|\mathcal{Q}_t (\sNew{t+1}, d_t) - \mathds{Q}_t(\sNew{t+1}, d_t)|$ (green) is at most 
		$\lambda H  + \theta$ (red). 
		(b) When $| \mathcal{Q}_t (\sNew{t+1}, d_t) - {Q}^*_t (\sNew{t+1}, d_t) | > \lambda H$ and $|\mathcal{Q}_t (\sNew{t+1}, d_t) - \mathds{Q}_t(\sNew{t+1}, d_t)| \le \lambda H  + \theta$, ${Q}^{\epsilon}_t (\sNew{t+1}, d_t)=\mathcal{Q}_t (\sNew{t+1}, d_t)$ due to~\eqref{eq_4_8} and 
		$| {Q}^{\epsilon}_t (\sNew{t+1}, d_t) - {Q}^*_t (\sNew{t+1}, d_t) |$ (green)  is at most $\lambda H +2\theta$ (red). 
		We do not show other cases (e.g., when 
		both $\mathcal{Q}_t (\sNew{t+1}, d_t)$ and $\mathds{Q}_t (\sNew{t+1}, d_t)$ are larger than  $Q_t^* (\sNew{t+1}, d_t)$ in (a) or 
		$| \mathcal{Q}_t (\sNew{t+1}, d_t) - {Q}^*_t (\sNew{t+1}, d_t) | > \lambda H$ and 
		$|\mathcal{Q}_t (\sNew{t+1}, d_t) - \mathds{Q}_t(\sNew{t+1}, d_t)| > \lambda H  + \theta$ in (b), ${Q}^{\epsilon}_t (\sNew{t+1}, d_t)=\mathds{Q}_t (\sNew{t+1}, d_t)$ due to~\eqref{eq_4_8})			
		which are all covered by our rigorous analysis in the text below~\eqref{eq_4_8}.}	
	\label{fig:conditions}
\end{figure*}

When the available budget is larger than the lookahead of  $\kappa H$  observations,
it can afford a \emph{stronger exploration behavior} by setting a positive weight $\beta>0$ on the exploration term $0.5 \log | I + \sigma_n^{-2}  \Sigma_{\pi^*(d_t) | \sHist{t}} |$ in~\eqref{eq:reward-def};
its effect on BO performance is empirically investigated in Section~\ref{expt}.
This exploration term can be interpreted as the information gain $\mathbb{I}[y_{\Sdom};\zNew{t+1}|d_t,\pi^*(d_t)]$ on the phenomenon (Appendix~\ref{general-policy-proof}) from executing the macro-action $\pi^*(d_t)$ to gather $\kappa$ new observations. As such, the macro-action $\pi^*(d_t)$ can gain more information on the phenomenon (larger exploration term) by gathering observations with higher uncertainty (larger individual posterior variance) but lower  correlation (smaller magnitude of posterior covariance) between them.
\subsection{$\epsilon$-Bayes-Optimal Macro-GPO ($\epsilon$-Macro-GPO)} 
In general, the 
Bayes-optimal Macro-GPO policy $\pi^\ast$ cannot be derived exactly because the expectation term in~\eqref{eq:OptimalValFunDef} (and hence $Q^{*}_t$ and $V^{*}_t$) often cannot be evaluated in closed form due to an uncountable set of candidate output measurements. 

To resolve this issue, we will exploit the following result on the Lipschitz continuity of $R(\sNew{t+1}, d_t)$~\eqref{eq:reward-def} and consequently of $V_t^*(d_t)$~\eqref{eq:OptimalValFunDef} in the realized output measurements $\zHist{t}$ (see Appendices~\ref{lemma:reward-proof} and~\ref{lip-optimal-value} for their respective proofs) to tractably derive a nonmyopic adaptive $\epsilon$-Macro-GPO policy $\pi^{\epsilon}$ whose expected performance loss is theoretically guaranteed to be not more than an arbitrarily user-specified loss bound $\epsilon$:
\begin{lemma} 
	\label{lemma:reward}
	Let $\alpha(\sHist{t+1}) \triangleq \lVert\Sigma_{\sNew{t+1}\sHist{t}}\Sigma_{\sHist{t}\sHist{t}}^{-1}\rVert_F$ 
	and $d_{t}'\triangleq\langle \sHist{t}, \zHist{t}' \rangle$. 
	Then,
	$$| R(\sNew{t+1}, d_t) - R(\sNew{t+1}, d'_{t}) | \le \sqrt{\kappa}\ \alpha(\sHist{t+1}) \lVert\zHist{t} - \zHist{t}'\rVert.
	$$
\end{lemma}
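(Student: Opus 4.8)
The plan is to exploit a structural feature of the reward that makes the dependence on the measurements essentially trivial. In the definition~\eqref{eq:reward-def}, the $\beta$-weighted exploration term depends on the data only through the posterior covariance $\Sigma_{\sNew{t+1} \mid \sHist{t}}$, which the paper has already observed (just below~\eqref{gp-posteriors}) is \emph{independent} of the observed measurements $\zHist{t}$. Since $d_t$ and $d_t'$ share the same input locations $\sHist{t}$ and differ only in their measurements $\zHist{t}$ versus $\zHist{t}'$, this entire log-determinant term is identical for both and cancels in the difference. Hence the first step is to reduce the claim to a bound on the posterior-mean contribution alone: $R(\sNew{t+1}, d_t) - R(\sNew{t+1}, d_t') = \mathbf{1}^\top(\mu_{\sNew{t+1} \mid d_t} - \mu_{\sNew{t+1} \mid d_t'})$.

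Next I would make the linear dependence on the measurements explicit. Writing $A \triangleq \Sigma_{\sNew{t+1}\sHist{t}}\Sigma_{\sHist{t}\sHist{t}}^{-1}$, the posterior mean in~\eqref{gp-posteriors} is $\mu_{\sNew{t+1}\mid d_t} = A\,\zHist{t}^\top$, which is linear in $\zHist{t}$. The difference therefore factors cleanly as $R(\sNew{t+1}, d_t) - R(\sNew{t+1}, d_t') = \mathbf{1}^\top A\,(\zHist{t} - \zHist{t}')^\top$, i.e. a single inner product between the fixed vector $A^\top\mathbf{1}$ and the measurement perturbation $(\zHist{t} - \zHist{t}')^\top$.

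The remaining step is a short chain of standard norm inequalities. By Cauchy--Schwarz, $|\mathbf{1}^\top A\,(\zHist{t} - \zHist{t}')^\top| \le \lVert A^\top\mathbf{1}\rVert\,\lVert\zHist{t} - \zHist{t}'\rVert$. I would then bound $\lVert A^\top\mathbf{1}\rVert \le \lVert A\rVert_2\,\lVert\mathbf{1}\rVert \le \lVert A\rVert_F\,\sqrt{\kappa}$, using that the spectral norm is dominated by the Frobenius norm, that $\lVert A^\top\rVert_2 = \lVert A\rVert_2$, and that $\mathbf{1}$ is the $\kappa$-vector of ones so $\lVert\mathbf{1}\rVert = \sqrt{\kappa}$. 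Since $\lVert A\rVert_F = \alpha(\sHist{t+1})$ by definition, these combine to give exactly $\sqrt{\kappa}\,\alpha(\sHist{t+1})\,\lVert\zHist{t} - \zHist{t}'\rVert$, as claimed.

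I do not anticipate a genuine obstacle: once the cancellation of the exploration term is recognized, the argument is mechanical. The only points demanding care are bookkeeping ones, namely keeping track of the $\sqrt{\kappa}$ factor contributed by $\lVert\mathbf{1}\rVert$ and confirming that the statement deliberately uses the Frobenius norm $\alpha(\sHist{t+1})$ rather than the slightly tighter spectral norm, so that the stated constant is matched (indeed over-estimated) rather than violated.
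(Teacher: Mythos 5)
Your proposal is correct and follows essentially the same route as the paper's proof: both cancel the measurement-independent log-determinant term, use the linearity of the posterior mean in $\zHist{t}$ to reduce the difference to $\mathbf{1}^{\top}\Sigma_{\sNew{t+1}\sHist{t}}\Sigma_{\sHist{t}\sHist{t}}^{-1}(\zHist{t}-\zHist{t}')^{\top}$, and finish with standard norm inequalities. The only cosmetic difference is the final chain: the paper bounds $|\mathbf{1}^{\top}v|\le\lVert v\rVert_1\le\sqrt{\kappa}\,\lVert v\rVert$ and then applies Frobenius submultiplicativity, whereas you apply Cauchy--Schwarz first (so the $\sqrt{\kappa}$ arises from $\lVert\mathbf{1}\rVert$) and pass through the spectral norm before relaxing to the Frobenius norm --- both yield the identical constant.
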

Preliminary to the design and construction of our proposed nonmyopic adaptive $\epsilon$-Macro-GPO policy $\pi^{\epsilon}$  is the approximation of the expectation term in~\eqref{eq:OptimalValFunDef} for each candidate macro-action $\sNew{t+1}$ in every stage using \emph{stochastic} sampling of $N$ i.i.d. multivariate Gaussian vectors $\zNew{}^1,\ldots, \zNew{}^N$ from the GP posterior belief $p(\zNew{t+1} | \sNew{t+1}, d_t )$~\eqref{gp-posteriors}, as illustrated in Fig.~\ref{fig:policies}a:
\begin{equation}
	\hspace{-1.9mm}
	\begin{array}{rcl}
		\displaystyle
		\mathcal{V}_t (d_t) & \hspace{-2.4mm}\triangleq &  \hspace{-2.4mm}\max_{\sNew{t+1} \in \Adom(\sNew{t})} \mathcal{Q}_t (\sNew{t+1}, d_t)\ , \\
		\mathcal{Q}_t (\sNew{t+1}, d_t) & \hspace{-2.4mm}\triangleq & \hspace{-2.4mm} \displaystyle R(\sNew{t+1}, d_t)\hspace{-0.5mm}+\hspace{-0.5mm}\frac{1}{N}\hspace{-0.5mm} \sum_{\ell = 1}^{N} 
		\mathcal{V}_{t+1} (\langle \sHist{t+1}, \zHist{t}\hspace{-0.5mm}\oplus\hspace{-0.5mm} \zNew{}^\ell \rangle)
	\end{array}
	\label{approx-policy} 
\end{equation}
for stages $t =0, \ldots, H-1$ where $\mathcal{V}_H (d_H) \triangleq 0$.\cref{footnote-h-diff}
We prove in Appendix~\ref{sec:th:1_new_proof} that 
$\mathcal{Q}_t (\sNew{t+1}, d_t)$~\eqref{approx-policy} can approximate ${Q}^*_t (\sNew{t+1}, d_t)$~\eqref{eq:OptimalValFunDef} arbitrarily closely for all 
$\sNew{t+1}$ 
with a high probability of at least $1 - \delta$ requiring only a polynomial number $N$ of samples in the macro-action length $\kappa$~\eqref{eq:expected_samples} per planning stage:
\begin{theorem}
	\label{th:1_new}
	Suppose that the observations $d_{t}$, $H\in\mathbb{Z}^+$, a budget of $\kappa(H-t)$ input locations for $t=0,\ldots, H-1$, $\delta\in(0, 1)$, and $\lambda > 0$ are given.
	Then, the probability of 
	$$| \mathcal{Q}_t (\sNew{t+1}, d_t) - {Q}^*_t (\sNew{t+1}, d_t) | \le \lambda H$$ 
	for all $\sNew{t+1} \in \Adom(\sNew{t})$
	is at least $1 - \delta$ by setting
	\begin{equation}
	\begin{array}{c}
	N =\mathcal{O}((\kappa^{2H}/\lambda^2) \log(\kappa A /(\delta\lambda)))
	\end{array}
	\label{eq:th1-samples}
	\end{equation}
	where $A$ is the largest number of candidate macro-actions available in any input location.
\end{theorem}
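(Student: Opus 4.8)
The plan is to bound $|\mathcal{Q}_t(\sNew{t+1}, d_t) - Q_t^*(\sNew{t+1}, d_t)|$ by backward induction on the stage index, decomposing the stage-$t$ discrepancy into an error \emph{propagated} from the coarser value approximation one stage deeper and a \emph{fresh} sampling error. Writing $d_{t+1}^\ell \triangleq \langle \sHist{t+1}, \zHist{t}\oplus\zNew{}^\ell\rangle$ and subtracting the Bellman recursion~\eqref{eq:OptimalValFunDef} from its sampled counterpart~\eqref{approx-policy}, the common reward term $R(\sNew{t+1}, d_t)$ cancels and leaves
\begin{equation*}
\mathcal{Q}_t - Q_t^* = \frac{1}{N}\sum_{\ell=1}^{N}\bigl(\mathcal{V}_{t+1}(d_{t+1}^\ell) - V_{t+1}^*(d_{t+1}^\ell)\bigr) + \Bigl(\frac{1}{N}\sum_{\ell=1}^{N} V_{t+1}^*(d_{t+1}^\ell) - \mathbb{E}_{\zNew{t+1}|\sNew{t+1},d_t}\!\bigl[V_{t+1}^*\bigr]\Bigr).
\end{equation*}
Because the maximum over macro-actions is non-expansive, the first (propagated) term is bounded in absolute value by $\max_{\sNew{t+2}}|\mathcal{Q}_{t+1} - Q_{t+1}^*|$ evaluated at each sampled child $d_{t+1}^\ell$, which by the inductive hypothesis is at most $\lambda(H-t-1)$. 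If the second (Monte Carlo) term is at most $\lambda$ at every internal node, then $|\mathcal{Q}_t - Q_t^*| \le \lambda(H-t) \le \lambda H$; the base case $t=H-1$ holds trivially since $\mathcal{V}_H = V_H^* = 0$ makes the discrepancy vanish.

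The crux is therefore the Monte Carlo term, and its difficulty is that $V_{t+1}^*(d_{t+1}^\ell)$ is an \emph{unbounded} function of the Gaussian draw $\zNew{}^\ell \sim p(\zNew{t+1}|\sNew{t+1}, d_t)$, so Hoeffding's inequality cannot be invoked on the raw summands. I would tame this using the Lipschitz continuity of $V_{t+1}^*$ in the realized outputs, obtained by propagating Lemma~\ref{lemma:reward} through the recursion~\eqref{eq:OptimalValFunDef} (Appendix~\ref{lip-optimal-value}); reparametrizing $\zNew{}^\ell = \mu_{\sNew{t+1}|d_t} + \Sigma_{\sNew{t+1}|\sHist{t}}^{1/2}\xi_\ell$ with $\xi_\ell\sim\mathcal{N}(0,I)$ exhibits each summand as a Lipschitz function of a standard Gaussian, hence sub-Gaussian, so its $N$-sample mean concentrates at rate $\exp(-N\lambda^2/(2L^2))$ with $L$ the stage-$(t+1)$ Lipschitz constant of $V_{t+1}^*$ (equivalently, one truncates the Gaussian to a high-probability ball, on which the summands are bounded, and applies Hoeffding). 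A geometric recursion in the remaining horizon then shows $L^2 = \mathcal{O}(\kappa^{2H})$, up to the problem-dependent covariance quantities $\alpha(\cdot)$ of Lemma~\ref{lemma:reward}, which is precisely the origin of the $\kappa^{2H}$ in the sample count.

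It remains to make the per-node bound hold \emph{simultaneously} across the sampling tree. Rooted at $d_t$, each of the at most $A$ macro-actions spawns $N$ children, so the tree carries $\mathcal{O}((AN)^{H})$ internal nodes; allocating failure probability $\delta' = \delta/\mathcal{O}((AN)^{H})$ to each and inserting $L^2 = \mathcal{O}(\kappa^{2H})$ into the per-node requirement $N = \Omega\bigl((L^2/\lambda^2)\log(1/\delta')\bigr)$ yields, after resolving the mild self-reference of $N$ inside $\log(1/\delta')$ (the $\log N$ contribution is lower order) and absorbing poly-$H$ factors into the constant, the claimed $N = \mathcal{O}\bigl((\kappa^{2H}/\lambda^2)\log(\kappa A/(\delta\lambda))\bigr)$. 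On the intersection of all these Monte Carlo events, which occurs with probability at least $1-\delta$, the induction of the first paragraph runs unconditionally and delivers $|\mathcal{Q}_t(\sNew{t+1}, d_t) - Q_t^*(\sNew{t+1}, d_t)| \le \lambda H$ for every $\sNew{t+1}\in\Adom(\sNew{t})$.

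I expect the main obstacle to be this Monte Carlo step: unlike textbook sparse-sampling analyses with bounded rewards, here the value function is an unbounded function of Gaussian observations, so the argument hinges on converting the Lipschitz continuity of $V^*$ into a sub-Gaussian (or truncated-Hoeffding) tail, and on controlling how that Lipschitz constant inflates through the Bellman recursion — where perturbing $\zHist{t}$ shifts the very distribution from which $\zNew{t+1}$ is sampled — since this inflation is exactly what drives the $\kappa^{2H}$ scaling of the sample complexity.
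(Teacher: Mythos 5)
Your proposal is correct and follows essentially the same route as the paper's proof: the same decomposition of $\mathcal{Q}_t - Q_t^*$ into a propagated term and a Monte Carlo term (the latter being the paper's intermediate function $\mathcal{U}_t$), the same backward induction giving the $\lambda(H-t)$ accumulation, the same treatment of the unbounded summands via Lipschitz continuity of $V_{t+1}^*$ plus a Cholesky change of variables and Gaussian concentration for Lipschitz functions (the paper invokes the Tsirelson--Ibragimov--Sudakov inequality, which is exactly your sub-Gaussian step), and the same union bound over the $\mathcal{O}((AN)^H)$ tree nodes with the $\log N$ self-reference absorbed. The only cosmetic difference is that the paper applies the Gaussian concentration once to the full sample mean as a Lipschitz function of the concatenated vector $\xNew{}^1\oplus\cdots\oplus\xNew{}^N$, whereas you phrase it summand-by-summand; these are equivalent.
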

\begin{remark}
	\label{rem:3}
	Since 
	$|	\mathcal{V}_{t} (d_{t} ) -  V^{*}_{t} (d_{t} )|   \le   \max_{\sNew{t+1} \in \Adom(\sNew{t})} |  \mathcal{Q}_{t}(\sNew{t+1}, d_t)  -  Q_{t}^{*}(\sNew{t+1},d_{t}) |$, 
	it immediately follows from Theorem~\ref{th:1_new} that the probability of $|\mathcal{V}_{t} (d_{t} ) -  V^{*}_{t} (d_{t} )| \le \lambda H$ is 
	at least $1- \delta$.
\end{remark}
\begin{remark}
	It can be observed from Theorem~\ref{th:1_new} that the number $N$~\eqref{eq:th1-samples} of stochastic samples increases\cref{joker} with (a) a tighter bound $\lambda$ on the error $|\mathcal{Q}_t (\sNew{t+1}, d_t) - {Q}^*_t (\sNew{t+1}, d_t) |$ due to stochastic sampling, 
	(b) a higher probability $1- \delta$ of $\mathcal{Q}_t$~\eqref{approx-policy} approximating ${Q}^*_t$~\eqref{eq:OptimalValFunDef} closely, 
	(c) a larger number $A$ of candidate macro-actions, and (d) a greater macro-action length $\kappa$.
\end{remark}
Deriving the above probabilistic bound usually requires using a concentration inequality involving independent Gaussian random variables. However, 
the components of the multivariate Gaussian random vector $\zNew{t+1}$ in~\eqref{eq:OptimalValFunDef} are \emph{correlated} output measurements corresponding to the $\kappa$ input locations found within the candidate macro-action $\sNew{t+1}$.
To resolve this complication, we exploit a change of variables trick (i.e., to make the components independent) and  the Lipschitz continuity of $R(\sNew{t+1}, d_t)$ (Lemma~\ref{lemma:reward}) for enabling the use of the Tsirelson-Ibragimov-Sudakov inequality~\cite{bouch} to prove the probabilistic bound in  Theorem~\ref{th:1_new}, as shown in Appendix~\ref{sec:th:1_new_proof}.

Theorem~\ref{th:1_new}, however, only entails probabilistic bounds on how far $\mathcal{V}_t (d_t)$~\eqref{approx-policy} is from ${V}^*_t (d_t)$~\eqref{eq:OptimalValFunDef} (see Remark~\ref{rem:3}) and on the resulting policy loss.
We will prove a stronger non-trivial result: In the unlikely event (with an arbitrarily small probability of at most $\delta$) that 
$\mathcal{Q}_t (\sNew{t+1}, d_t)$~\eqref{approx-policy} is unboundedly far from ${Q}^*_t (\sNew{t+1}, d_t)$~\eqref{eq:OptimalValFunDef} for some 
$\sNew{t+1}$, 
we instead rely on the $\kappa$ most likely observations\footnote{Though the nonmyopic BO algorithm of~\cite{marchant14} assumes the most likely observations during planning, it does not consider macro-actions nor give a performance guarantee.} $\mu_{\sNew{t+1} | d_t}$ for approximating the expectation term in~\eqref{eq:OptimalValFunDef}  (see Fig.~\ref{fig:policies}b):
\begin{equation}
\hspace{-1.9mm}
	\begin{array}{rcl}
	\displaystyle\mathds{V}_t (d_t) &\hspace{-2.4mm}\triangleq &\hspace{-2.4mm}\max_{\sNew{t+1} \in \Adom(\sNew{t})} \mathds{Q}_t (\sNew{t+1}, d_t)\ ,\vspace{0.5mm}\\
	\mathds{Q}_t (\sNew{t+1}, d_t) &\hspace{-2.4mm}\triangleq  &\hspace{-2.4mm} R(\sNew{t+1}, d_t)\hspace{-0.5mm}+ \hspace{-0.5mm}\mathds{V}_{t+1} (\langle \sHist{t+1}, \zHist{t}\hspace{-0.5mm}\oplus\hspace{-0.5mm}\mu_{\sNew{t+1} | d_t} \rangle)
	\end{array}	
\label{ml-policy}
\end{equation}	
for stages $t =0, \ldots, H-1$ where $\mathds{V}_H (d_H)  \triangleq 0$.\cref{footnote-h-diff} 
Unlike $\mathcal{Q}_t (\sNew{t+1}, d_t)$~\eqref{approx-policy}, the approximation quality of $\mathds{Q}_t (\sNew{t+1}, d_t)$~\eqref{ml-policy} can be \emph{deterministically} bounded but cannot be user-specified to be arbitrarily good, as shown in Theorem~\ref{th-mle_bound} below (see Appendix~\ref{sec:th-mle_bound_proof} for the proof).
To ease understanding, we visually illustrate in Fig.~\ref{fig:policies} how the policies induced by stochastic sampling~\eqref{approx-policy} vs.~most likely observations~\eqref{ml-policy} differ and are used to design our $\epsilon$-Macro-GPO policy $\pi^{\epsilon}$~\eqref{eq_4_8}.
\begin{theorem}
	\label{th-mle_bound}
	Suppose that the observations $d_{t}$, $H\in\mathbb{Z}^+$, and a budget of $\kappa(H-t)$ input locations for $t=0,\ldots, H-1$ are given. Then, 
	$$| \mathds{Q}_t (\sNew{t+1}, d_t) - {Q}^*_t (\sNew{t+1}, d_t) | \le \theta$$
	for all $\sNew{t+1} \in \Adom(\sNew{t})$ where $\theta \triangleq \mathcal{O}( \kappa^{H + 1/2})$. 
\end{theorem}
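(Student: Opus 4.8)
The plan is to argue by backward induction on $t$, exploiting the fact that the reward $R(\sNew{t+1}, d_t)$ is identical in the definitions of $\mathds{Q}_t$~\eqref{ml-policy} and $Q_t^*$~\eqref{eq:OptimalValFunDef}. Upon subtracting the two, the reward cancels, so it suffices to bound the discrepancy between the continuation terms $\mathbb{E}_{\zNew{t+1}|\sNew{t+1},d_t}[V_{t+1}^*(\langle\sHist{t+1},\zHist{t}\oplus\zNew{t+1}\rangle)]$ and $\mathds{V}_{t+1}(\langle\sHist{t+1},\zHist{t}\oplus\mu_{\sNew{t+1}|d_t}\rangle)$. Writing $e_t\triangleq\sup_{d_t}|V_t^*(d_t)-\mathds{V}_t(d_t)|$ and using $|\max_a f(a)-\max_a g(a)|\le\max_a|f(a)-g(a)|$ together with $V^*_t=\max_{\sNew{t+1}}Q^*_t$ and $\mathds{V}_t=\max_{\sNew{t+1}}\mathds{Q}_t$, it is enough to control $\sup_{d_t,\sNew{t+1}}|Q_t^*(\sNew{t+1},d_t)-\mathds{Q}_t(\sNew{t+1},d_t)|$ and to propagate the resulting recursion down from the base case $e_H=0$.

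For the inductive step I would insert $V_{t+1}^*(\langle\sHist{t+1},\zHist{t}\oplus\mu_{\sNew{t+1}|d_t}\rangle)$ and split via the triangle inequality into (A) a Jensen-type gap $|\mathbb{E}_{\zNew{t+1}}[V_{t+1}^*(\langle\sHist{t+1},\zHist{t}\oplus\zNew{t+1}\rangle)]-V_{t+1}^*(\langle\sHist{t+1},\zHist{t}\oplus\mu_{\sNew{t+1}|d_t}\rangle)|$, measuring the error of replacing the expectation by the value at the posterior mean, and (B) the term $|V_{t+1}^*(\langle\sHist{t+1},\zHist{t}\oplus\mu_{\sNew{t+1}|d_t}\rangle)-\mathds{V}_{t+1}(\langle\sHist{t+1},\zHist{t}\oplus\mu_{\sNew{t+1}|d_t}\rangle)|$, which is bounded by $e_{t+1}$ because both value functions are evaluated at the \emph{same} data node. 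For (A), I would invoke the Lipschitz continuity of $V_{t+1}^*$ in its realized output vector (Appendix~\ref{lip-optimal-value}), giving (A) $\le L_{t+1}\,\mathbb{E}_{\zNew{t+1}}\lVert\zNew{t+1}-\mu_{\sNew{t+1}|d_t}\rVert$ with $L_{t+1}$ the Lipschitz constant of $V^*_{t+1}$; by Jensen/Cauchy--Schwarz $\mathbb{E}\lVert\zNew{t+1}-\mu_{\sNew{t+1}|d_t}\rVert\le(\mathrm{tr}\,\Sigma_{\sNew{t+1}|\sHist{t}})^{1/2}=\mathcal{O}(\sqrt{\kappa})$, since the trace is a sum of $\kappa$ posterior variances each bounded by the signal-plus-noise variance. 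This yields the recursion $e_t\le L_{t+1}\,\mathcal{O}(\sqrt{\kappa})+e_{t+1}$.

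It then remains to track the $\kappa$-dependence of $L_{t+1}$ and unroll the recursion. The Lipschitz constant of $V^*$ compounds across the $H-t$ remaining stages: each stage contributes the reward sensitivity $\sqrt{\kappa}\,\alpha(\sHist{t+1})$ of Lemma~\ref{lemma:reward} together with the posterior-mean sensitivity $\alpha(\sHist{t+1})=\lVert\Sigma_{\sNew{t+1}\sHist{t}}\Sigma_{\sHist{t}\sHist{t}}^{-1}\rVert_F$ that feeds forward into the next stage's conditioning. A reparameterization $\zNew{t+1}=\mu_{\sNew{t+1}|d_t}+\Sigma_{\sNew{t+1}|\sHist{t}}^{1/2}\xi$ keeps the Gaussian noise $\xi$ fixed while exposing this mean dependence, using the GP property that $\Sigma_{\sNew{t+1}|\sHist{t}}$ is output-independent. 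Bounding $\alpha(\sHist{t+1})$ in terms of $\kappa$ and solving the resulting geometric-type recurrence gives $L_{t+1}=\mathcal{O}(\kappa^{H-t})$, whence each increment in the recursion for $e_t$ is $\mathcal{O}(\kappa^{H-t+1/2})$ and the stagewise sum is dominated by its first term, so $e_0=\mathcal{O}(\kappa^{H+1/2})$; the same order bound holds uniformly over $t$, establishing $|\mathds{Q}_t(\sNew{t+1},d_t)-Q_t^*(\sNew{t+1},d_t)|\le\theta=\mathcal{O}(\kappa^{H+1/2})$.

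I expect the main obstacle to be the careful bookkeeping of the Lipschitz constant $L_{t+1}$ of $V^*$ and its exact dependence on $\kappa$: unlike the one-step reward bound of Lemma~\ref{lemma:reward}, the value function's sensitivity accumulates the posterior-mean shift $\alpha(\sHist{t+1})$ across all remaining stages, so the recurrence for $L_{t+1}$ must be set up and solved so as to land on the exponent $H$ rather than something larger. A secondary subtlety is ensuring that term (B) genuinely collapses to $e_{t+1}$, which hinges on comparing $V^*_{t+1}$ and $\mathds{V}_{t+1}$ at the \emph{identical} data node $\langle\sHist{t+1},\zHist{t}\oplus\mu_{\sNew{t+1}|d_t}\rangle$ --- precisely the reason the most-likely-observation backup in~\eqref{ml-policy} plugs in $\mu_{\sNew{t+1}|d_t}$ before recursing.
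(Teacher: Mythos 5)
Your proposal is correct and follows essentially the same route as the paper's proof: the paper likewise introduces the intermediate quantity $R(\sNew{t+1},d_t)+V^*_{t+1}(\langle\sHist{t+1},\zHist{t}\oplus\mu_{\sNew{t+1}|d_t}\rangle)$, splits $|Q^*_t-\mathds{Q}_t|$ by the triangle inequality into your terms (A) and (B), bounds (A) via the Lipschitz continuity of $V^*_{t+1}$ together with $\sqrt{\mathrm{Tr}(\Sigma_{\sNew{t+1}|\sHist{t}})}=\mathcal{O}(\sqrt{\kappa})$, collapses (B) to the inductive error at the identical data node, and unrolls the recursion using the Lipschitz-constant bound $L_{t+1}(\sHist{t+1})$ built from $\alpha(\sHist{t+1})$. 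Your bookkeeping at the $V$-level (versus the paper's $Q$-level errors $\theta_t$) and your slightly looser estimate $L_{t+1}=\mathcal{O}(\kappa^{H-t})$ (the paper gets $\mathcal{O}(\kappa^{H-t-1/2})$) are immaterial, since both land on $\theta=\mathcal{O}(\kappa^{H+1/2})$.
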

\begin{remark}
	\label{ref:remark-chun}
	$\mathds{V}_t$~\eqref{ml-policy} can be potentially generalized to resemble $\mathcal{V}_t$~\eqref{approx-policy} by approximating the expectation term in \eqref{eq:OptimalValFunDef} 
	for each candidate macro-action $\sNew{t+1}$ in every stage 
	via \emph{deterministic} sampling from the GP posterior belief $p(\zNew{t+1} | \sNew{t+1}, d_t ) = \mathcal{N}(\mu_{\sNew{t+1} | d_t}, \Sigma_{\sNew{t+1} | \sHist{t}})$~\eqref{gp-posteriors} over the $\kappa$-dimensional output measurement space of $\zNew{t+1}$.
	To do this, the nonmyopic adaptive BO algorithm of \cite{ling16} can be extended to handle macro-actions by uniformly partitioning and sampling the $\kappa$-dimensional space of $\zNew{t+1}$ but would consequently incur an \emph{exponential} number of samples (in $\kappa$) per planning stage.
	In contrast, our $\epsilon$-Macro-GPO policy $\pi^\epsilon$ only requires a polynomial number (in $\kappa$) of samples per planning stage, as shown in Theorem~\ref{th:expected}.
\end{remark}
The key question remains: Under what condition(s) should our $\epsilon$-Macro-BO policy $\pi^\epsilon$ decide to follow that induced by stochastic sampling~\eqref{approx-policy} and, 
if so, what is the required number $N$ of samples in~\eqref{approx-policy} 
such that its \emph{expected} performance loss can be deterministically guaranteed to be within an arbitrarily user-specified bound $\epsilon$?
Ideally, this can be decided if we can directly assess whether $\mathcal{Q}_t (\sNew{t+1}, d_t)$~\eqref{approx-policy} approximates ${Q}^*_t (\sNew{t+1}, d_t)$~\eqref{eq:OptimalValFunDef} closely (i.e., $| \mathcal{Q}_t (\sNew{t+1}, d_t) - {Q}^*_t (\sNew{t+1}, d_t) | \le \lambda H$) for all $\sNew{t+1} \in \Adom(\sNew{t})$, which unfortunately is not possible since ${Q}^*_t (\sNew{t+1}, d_t)$ cannot be tractably evaluated, as explained previously. 
To overcome this technical difficulty, we propose a nonmyopic adaptive $\epsilon$-Macro-BO policy $\pi^\epsilon$ that decides to strictly follow that induced by stochastic sampling~\eqref{approx-policy} 
{only} if $\mathcal{Q}_t (\sNew{t+1}, d_t)$~\eqref{approx-policy} is boundedly close to $\mathds{Q}_t(\sNew{t+1}, d_t)$~\eqref{ml-policy} for all $\sNew{t+1} \in \Adom(\sNew{t})$:
\begin{equation}
\hspace{-1.9mm}
\begin{array}{rcl}
\displaystyle
\pi^{\epsilon}(d_t) &\hspace{-2.4mm} \triangleq & \hspace{-2.4mm} \argmax_{\sNew{t+1} \in \Adom(\sNew{t})} Q^{\epsilon}_t (\sNew{t+1}, d_{t})\ , \\
Q^{\epsilon}_t(\sNew{t+1}, d_t) &\hspace{-2.4mm} \triangleq  & \hspace{-2.4mm}
\begin{cases}
\mathcal{Q}_t(\sNew{t+1}, d_t) &\hspace{-3.7mm}  
\begin{array}{l}
\text{if } |\mathcal{Q}_t\hspace{-0.3mm} (\sNew{t+1}, d_t)\hspace{-0.8mm} -\hspace{-0.6mm} \mathds{Q}_t\hspace{-0.3mm}(\sNew{t+1}, d_t)|\\ 
\quad \le \lambda H  +\theta\ , 
\end{array}
\\
\mathds{Q}_t\hspace{-0.3mm} (\sNew{t+1}, d_t)     & \hspace{-2mm}\text{otherwise};
\end{cases}
\end{array}  		
\label{eq_4_8}
\end{equation}
for stages $t =0, \ldots, H-1$.\cref{footnote-h-diff} 
Like the Macro-GPO policy $\pi^*$, $\pi^\epsilon$ can also naturally trade off between exploration vs.~exploitation, by the same reasoning as earlier. Unlike the deterministic policy $\pi^*$, $\pi^\epsilon$ is stochastic due to its use of stochastic sampling in $\mathcal{Q}_t$~\eqref{approx-policy}.

Of noteworthy interest and discussion are the implications of the tractable choice of the if condition in~\eqref{eq_4_8}
for theoretically guaranteeing the performance of our $\epsilon$-Macro-BO policy $\pi^\epsilon$, which we illustrate in Fig.~\ref{fig:conditions}:\vspace{0.5mm}

\noindent
{\bf I.} In the likely event (with a high probability of at least $1-\delta$) that $| \mathcal{Q}_t (\sNew{t+1}, d_t) - {Q}^*_t (\sNew{t+1}, d_t) | \le \lambda H$ for all $\sNew{t+1} \in \Adom(\sNew{t})$ (Theorem~\ref{th:1_new}), 
\begin{equation*}
\begin{array}{l}
\displaystyle |\mathcal{Q}_t (\sNew{t+1}, d_t) -\mathds{Q}_t(\sNew{t+1}, d_t)|\\
\displaystyle \le | \mathcal{Q}_t (\sNew{t+1}, d_t) - {Q}^*_t (\sNew{t+1}, d_t) |  \\
\displaystyle \quad + |{Q}^*_t (\sNew{t+1}, d_t) - \mathds{Q}_t \hspace{-0.3mm}(\sNew{t+1}, d_t)| \\
\displaystyle \leq \lambda H  + \theta
\end{array}
\end{equation*} 
for all $\sNew{t+1} \in \Adom(\sNew{t})$ such that the first inequality is due to triangle inequality and the second inequality is due to Theorems~\ref{th:1_new} and~\ref{th-mle_bound}. 
Consequently, according to~\eqref{eq_4_8}, $Q^{\epsilon}_t (\sNew{t+1}, d_{t}) = \mathcal{Q}_t (\sNew{t+1}, d_t)$ for all $\sNew{t+1} \in \Adom(\sNew{t})$ and $\pi^\epsilon(d_{t})$ thus selects the same macro-action as the policy induced by stochastic sampling~\eqref{approx-policy}.\vspace{0.5mm}

\noindent
{\bf II.}  In the unlikely event (with an arbitrarily small probability of at most $\delta$) that 
$\mathcal{Q}_t (\sNew{t+1}, d_t)$~\eqref{approx-policy} is unboundedly far from ${Q}^*_t (\sNew{t+1}, d_t)$~\eqref{eq:OptimalValFunDef}  
(i.e., $| \mathcal{Q}_t (\sNew{t+1}, d_t) - {Q}^*_t (\sNew{t+1}, d_t) | > \lambda H$) for some $\sNew{t+1} \in \Adom(\sNew{t})$, 
$\pi^\epsilon(d_{t})$~\eqref{eq_4_8} guarantees that, for any selected macro-action $\sNew{t+1} \in \Adom(\sNew{t})$,
$$
\hspace{-1.9mm}
\begin{array}{l}
| {Q}^{\epsilon}_t (\sNew{t+1}, d_t) - {Q}^*_t (\sNew{t+1}, d_t) |\\
=\hspace{-1mm}
\begin{cases}
|\mathcal{Q}_t \hspace{-0.3mm}(\sNew{t+1}, d_t)\hspace{-0.8mm} -\hspace{-0.6mm} {Q}^*_t \hspace{-0.3mm}(\sNew{t+1}, d_t) | &\hspace{-1.05mm}  
\begin{array}{l}
\text{if } |\mathcal{Q}_t\hspace{-0.3mm} (\sNew{t+1}, d_t)\hspace{-0.8mm} -\hspace{-0.6mm} \mathds{Q}_t\hspace{-0.3mm}(\sNew{t+1}, d_t)| \\
\quad\le \lambda H  + \theta , 
\end{array}
\\
|\mathds{Q}_t \hspace{-0.3mm}(\sNew{t+1}, d_t)\hspace{-0.8mm} -\hspace{-0.6mm} {Q}^*_t \hspace{-0.3mm}(\sNew{t+1}, d_t) |     & \hspace{0.65mm}\text{otherwise};
\end{cases}
\\
\le\hspace{-1mm}
\begin{cases}
\hspace{-1.78mm}
\begin{array}{l}
|\mathcal{Q}_t \hspace{-0.3mm}(\sNew{t+1}, d_t)\hspace{-0.8mm} -\hspace{-0.6mm} \mathds{Q}_t \hspace{-0.3mm}(\sNew{t+1}, d_t) | \\
+ |\mathds{Q}_t \hspace{-0.3mm}(\sNew{t+1}, d_t)\hspace{-0.8mm} -\hspace{-0.6mm} {Q}^*_t \hspace{-0.3mm}(\sNew{t+1}, d_t) |
\end{array}
&\hspace{-5.5mm}  
\begin{array}{l}
\text{if } |\mathcal{Q}_t\hspace{-0.3mm} (\sNew{t+1}, d_t)\hspace{-0.8mm} -\hspace{-0.6mm} \mathds{Q}_t\hspace{-0.3mm}(\sNew{t+1}, d_t)| \\
\quad\le \lambda H  + \theta , 
\end{array}
\\
\theta     & \hspace{-3.8mm}\text{otherwise};
\end{cases}
\\
\le \lambda H +2\theta\ ,\quad\text{by triangle inequality and Theorem~\ref{th-mle_bound}.}
\end{array}
$$
The above two implications of our tractable choice of the if condition in~\eqref{eq_4_8} are central to establishing our main result deterministically  	
bounding the \emph{expected} performance loss of $\pi^\epsilon$  relative to that of Bayes-optimal Macro-BO policy $\pi^*$, that is, policy $\pi^\epsilon$ is $\epsilon$-Bayes-optimal.
\begin{figure*}
	\centering
	{\begin{tabular}{cccc}
			\hspace{-4.5mm}\includegraphics[width=0.24 \textwidth]{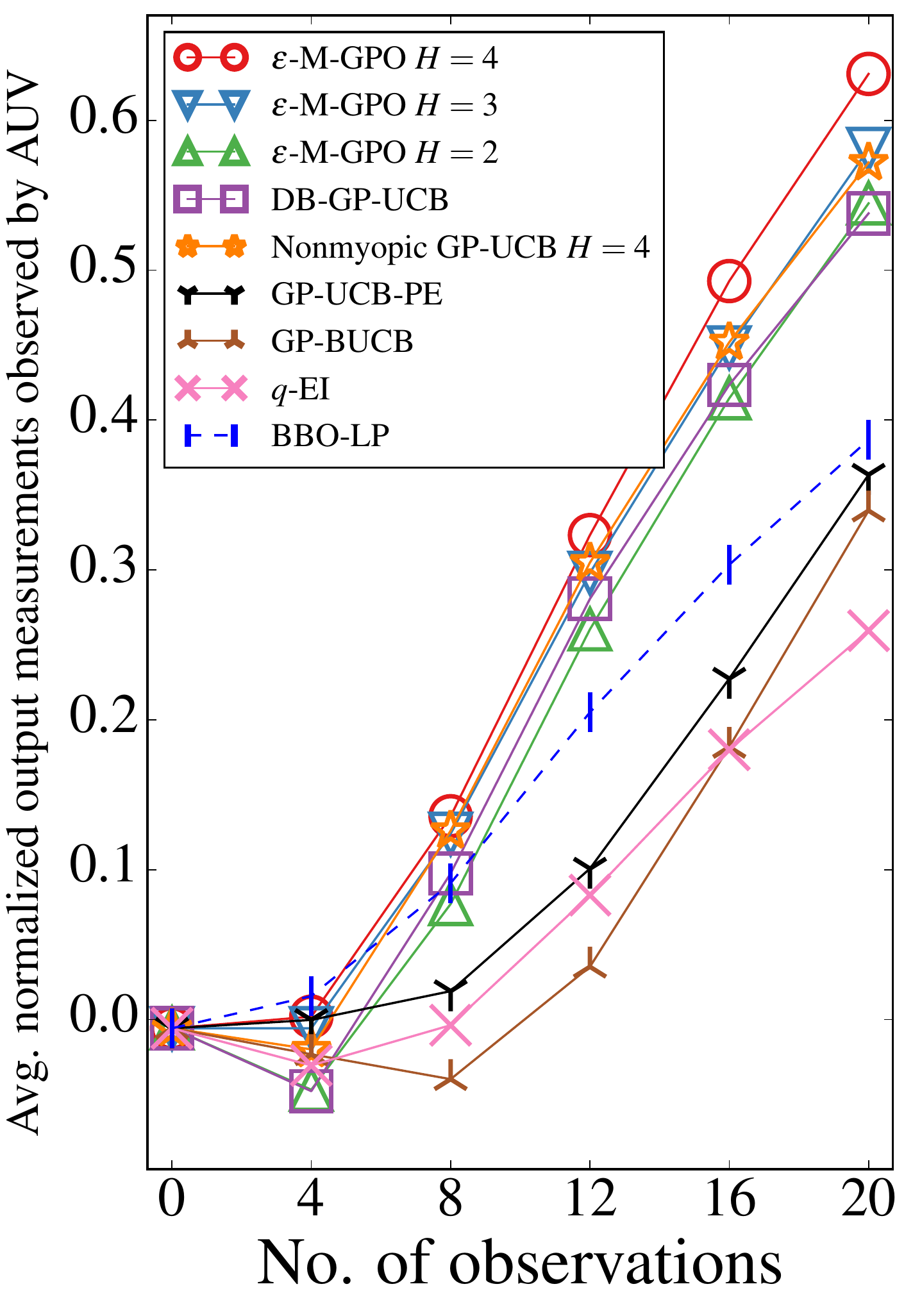} & \hspace{-4mm}
			\includegraphics[width=0.24 \textwidth]{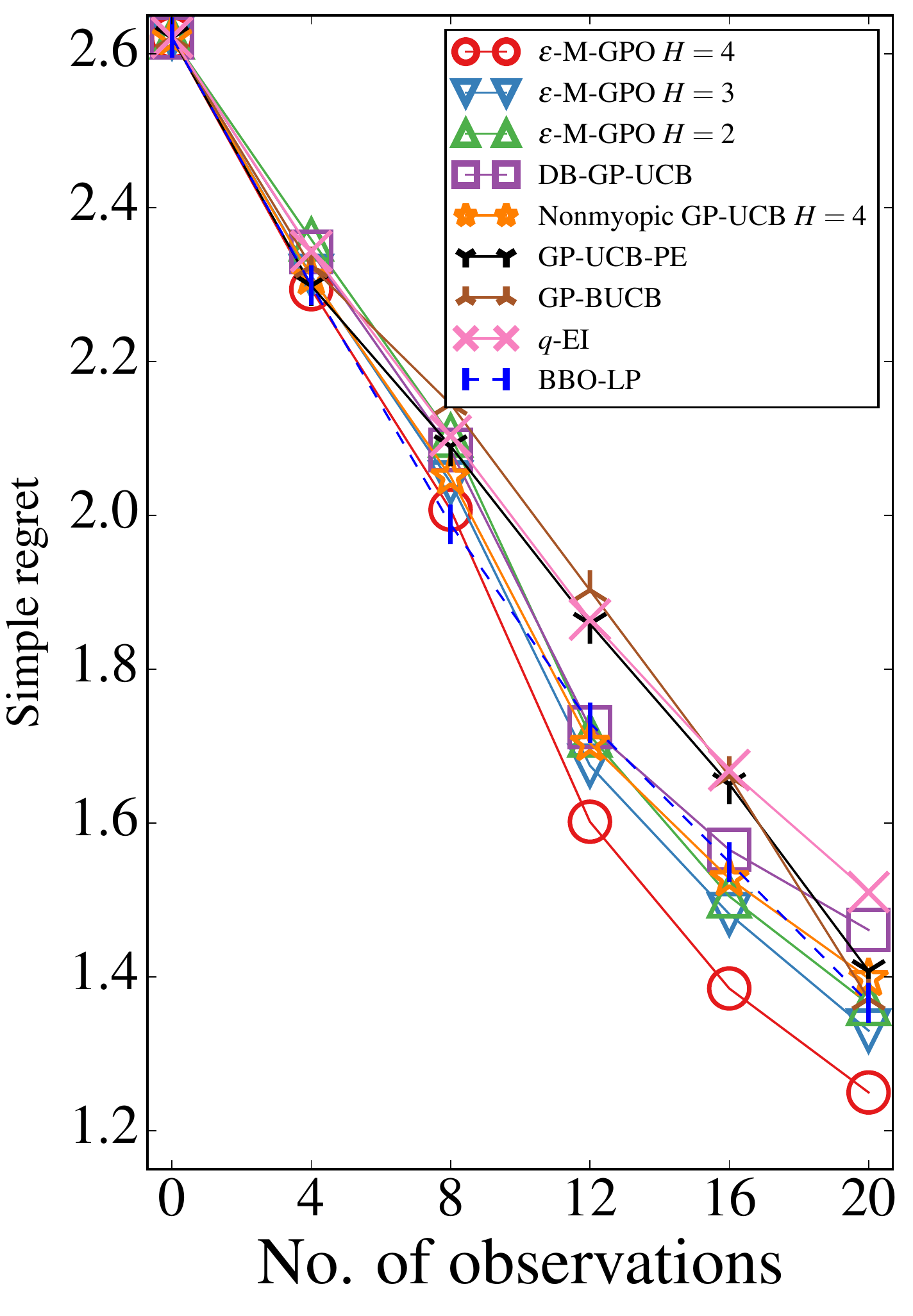} &
			\hspace{-4mm} \includegraphics[width=0.24 \textwidth]{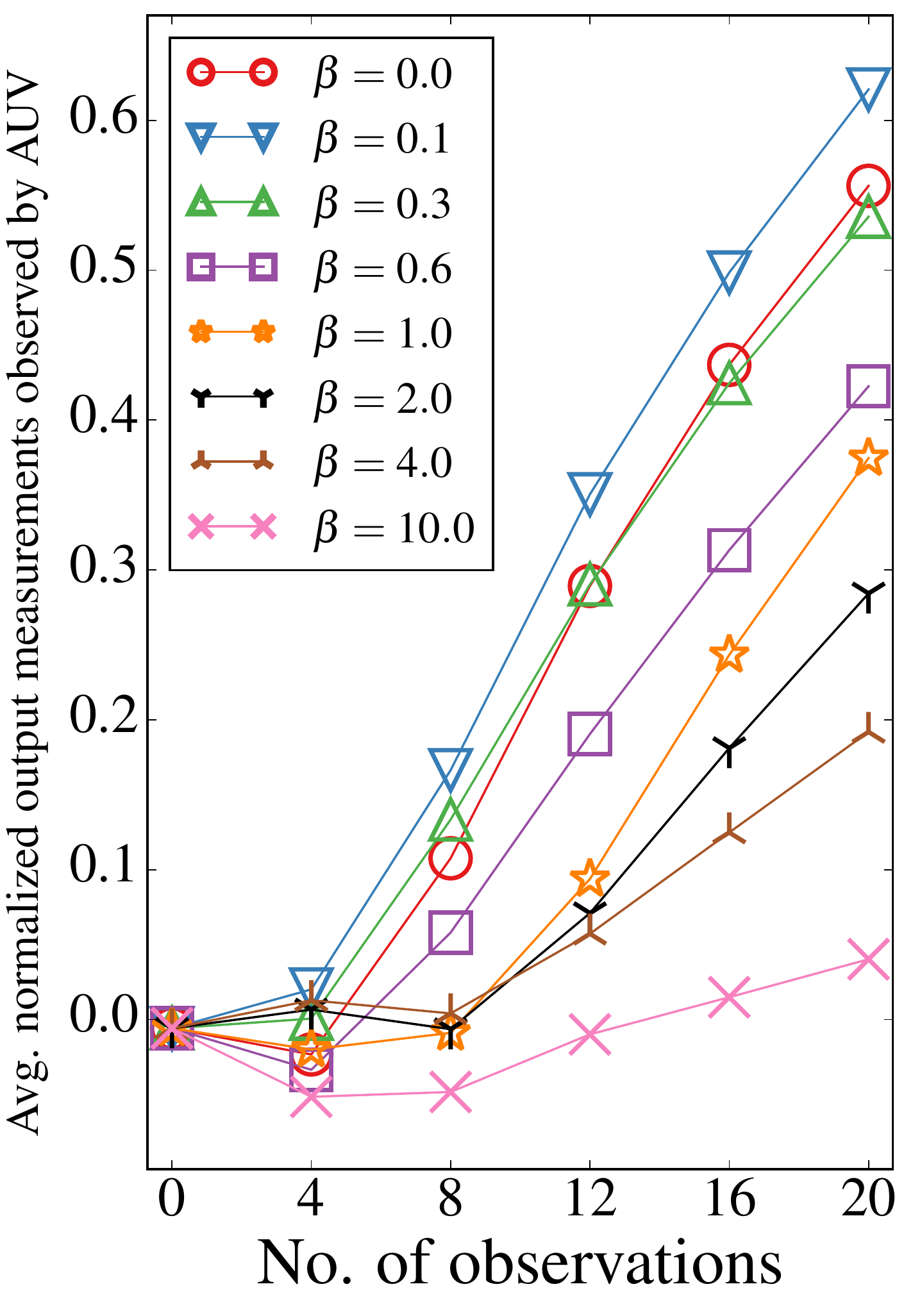} &
			\hspace{-3mm}\includegraphics[width=0.24 \textwidth]{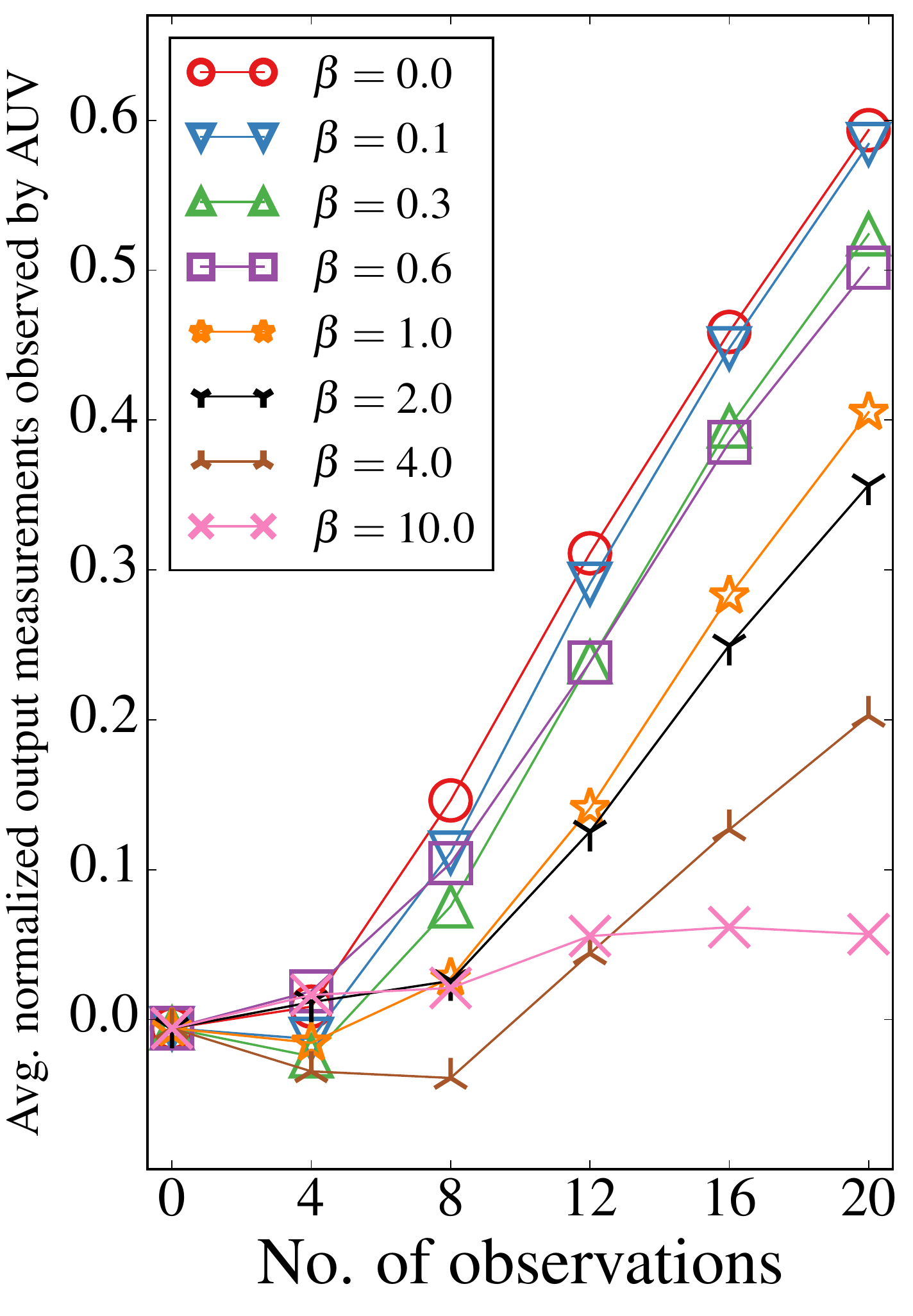}  \\
			\hspace{-4.5mm}{\small (a)}  &\hspace{-4mm}{\small (b)}  & \hspace{-4mm} {\small (c)} 
			& \hspace{-3mm} {\small (d)} \\
	\end{tabular}}
	\caption{Graphs of  (a) average normalized\cref{footnote-rewards} output measurements observed by AUV, (b) simple regrets achieved by tested BO algorithms,  average normalized output measurements achieved by $\epsilon$-Macro-GPO ($\epsilon$-M-GPO in the graphs) with  (c) $H=2$  and  (d) $H = 3$ and varying exploration weights $\beta$  vs. no. of observations for simulated plankton density phenomena. 
		Standard errors are given in Tables~\ref{table:simulated-var-methods} and~\ref{table::simulated-var-beta} in Appendix~\ref{plankton}.} 
	\label{fig:simulated_total}
\end{figure*}

To understand the rationale/implications of our choice of 
if condition in~\eqref{eq_4_8}, refer to Fig.~\ref{fig:conditions}.
These implications are central to establishing our main result deterministically bounding the \emph{expected} performance loss of $\pi^\epsilon$  relative to that of $\pi^*$, i.e., $\pi^\epsilon$ is $\epsilon$-Bayes-optimal (see proof in Appendix~\ref{th:expected-proof}):
\begin{theorem}
	\label{th:expected}
	Suppose that the observations $d_{0}$, $H\in\mathbb{Z}^+$, a budget of $\kappa H$ input locations, and a user-specified loss bound $\epsilon > 0$ are given. 		
	Then, $V^*_0(d_0) - \mathbb{E}_{\pi^{\epsilon}}[V^{\pi^\epsilon}_0(d_0)] \le \epsilon$ 
	for policy $\pi^\epsilon$ defined in \eqref{eq_4_8} , 
	by setting $\theta \triangleq \mathcal{O}( \kappa^{H + 1/2})$  according to Theorem~\ref{th-mle_bound}, 
	$\delta = {\epsilon}/(8 \theta H)$, and $\lambda = {\epsilon}/(4 H^2)$ 
	in Theorem~\ref{th:1_new} 
	to yield 
	\begin{equation}
	\begin{array}{c}
	N  = \mathcal{O}( (\kappa^{2H}/{\epsilon^2}) \log(\kappa A /\epsilon))
	\end{array}
	\label{eq:expected_samples}
	\end{equation}
	where $A$ denotes the largest number of candidate macro-actions available at any input location in $\Sdom$.
\end{theorem}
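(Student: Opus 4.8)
The plan is to bound the expected loss through a \emph{performance-difference} decomposition that reduces the global $H$-stage loss to a sum of per-stage one-step suboptimalities, each of which is then controlled by the pointwise closeness of $Q^{\epsilon}_t$ to $Q^*_t$ already established in Implications~I and~II above. Starting from the Bellman recursions~\eqref{eq:general-policy} and~\eqref{eq:OptimalValFunDef} and inserting and subtracting $Q^*_t(\pi^{\epsilon}(d_t), d_t)$ (the reward term $R(\pi^{\epsilon}(d_t),d_t)$ then cancels between $Q^*_t$ and $Q^{\pi^\epsilon}_t$), I would first derive the telescoping identity
\begin{equation*}
V^*_t(d_t) - V^{\pi^\epsilon}_t(d_t) = \bigl(V^*_t(d_t) - Q^*_t(\pi^{\epsilon}(d_t), d_t)\bigr) + \mathbb{E}_{\zNew{t+1}}\bigl[V^*_{t+1}(d_{t+1}) - V^{\pi^\epsilon}_{t+1}(d_{t+1})\bigr],
\end{equation*}
where $\mathbb{E}_{\zNew{t+1}}$ is over the real observation under $\pi^{\epsilon}(d_t)$. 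Unrolling from $t=0$ to $H-1$ with $V^*_H = V^{\pi^\epsilon}_H = 0$ and averaging over both the stochastic sampling internal to $\pi^{\epsilon}$ and the observations along its induced trajectory yields
\begin{equation*}
V^*_0(d_0) - \mathbb{E}_{\pi^\epsilon}[V^{\pi^\epsilon}_0(d_0)] = \mathbb{E}\Bigl[\textstyle\sum_{t=0}^{H-1}\bigl(V^*_t(d_t) - Q^*_t(\pi^{\epsilon}(d_t), d_t)\bigr)\Bigr].
\end{equation*}

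Next I would bound each per-stage term, i.e.\ the (non-negative) one-step suboptimality of the macro-action chosen by $\pi^{\epsilon}$ measured against $Q^*_t$. The elementary approximate-greedy argument shows that if $|Q^{\epsilon}_t(\sNew{t+1}, d_t) - Q^*_t(\sNew{t+1}, d_t)| \le \eta$ for \emph{all} $\sNew{t+1} \in \Adom(\sNew{t})$, then, because $\pi^{\epsilon}(d_t)$ maximizes $Q^{\epsilon}_t$ in~\eqref{eq_4_8} while the Bayes-optimal macro-action maximizes $Q^*_t$, two applications of the bound give $V^*_t(d_t) - Q^*_t(\pi^{\epsilon}(d_t), d_t) \le 2\eta$. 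I would apply this conditionally on the reached node $d_t$, splitting on the good event that $|\mathcal{Q}_t - Q^*_t| \le \lambda H$ for all macro-actions (probability at least $1-\delta$ over the fresh samples at $d_t$, by Theorem~\ref{th:1_new}) versus its complement. Under the good event, Implication~I gives $\eta = \lambda H$; under the complement, Implication~II still supplies the deterministic fallback $\eta = \lambda H + 2\theta$ via Theorem~\ref{th-mle_bound}. Hence, conditional on $d_t$,
\begin{equation*}
\mathbb{E}\bigl[V^*_t(d_t) - Q^*_t(\pi^{\epsilon}(d_t), d_t)\mid d_t\bigr] \le (1-\delta)(2\lambda H) + \delta(2\lambda H + 4\theta) \le 2\lambda H + 4\delta\theta.
\end{equation*}

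Summing this stage-wise bound over $t=0,\ldots,H-1$ (which needs only the marginal per-node guarantee of Theorem~\ref{th:1_new}, not joint independence across stages) gives $V^*_0(d_0) - \mathbb{E}_{\pi^\epsilon}[V^{\pi^\epsilon}_0(d_0)] \le 2\lambda H^2 + 4\delta\theta H$. Substituting $\lambda = \epsilon/(4H^2)$ turns the first term into $\epsilon/2$ and $\delta = \epsilon/(8\theta H)$ turns the second into $\epsilon/2$, so the total loss is at most $\epsilon$, establishing $\epsilon$-Bayes-optimality. The sample-complexity claim~\eqref{eq:expected_samples} then follows by inserting these choices, together with $\theta = \mathcal{O}(\kappa^{H+1/2})$, into $N = \mathcal{O}((\kappa^{2H}/\lambda^2)\log(\kappa A/(\delta\lambda)))$ from Theorem~\ref{th:1_new}: the $1/\lambda^2$ factor contributes $\mathcal{O}(H^4/\epsilon^2)$, while $\delta\lambda = \Theta(\epsilon^2/(\theta H^3))$ renders the logarithm $\mathcal{O}(\log(\kappa A/\epsilon))$ after absorbing the $\kappa^{H+1/2}$ and polynomial-in-$H$ factors.

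The main obstacle is not the algebra but the bookkeeping of randomness: $\pi^{\epsilon}$ is a \emph{stochastic} policy whose action at each node depends on freshly drawn Gaussian samples, so I must justify the performance-difference identity and the conditional-expectation step with respect to the joint law of the samples and the real observations, and define the ``good event'' at stage $t$ on the samples drawn \emph{at the reached node} $d_t$ so that Theorem~\ref{th:1_new} applies conditionally with probability at least $1-\delta$ there. A secondary subtlety is that the deterministic fallback $\lambda H + 2\theta$ of Implication~II must be verified for \emph{every} candidate macro-action, not merely the selected one, so that the approximate-greedy bound with $\eta = \lambda H + 2\theta$ is applicable on the complement event; this is exactly what the case analysis below~\eqref{eq_4_8} provides.
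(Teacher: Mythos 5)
Your proposal is correct and follows essentially the same route as the paper's proof: the paper's Appendix~\ref{th:expected-proof} proves the same per-stage expected bound $(1-\delta)(2\lambda H)+\delta(2\lambda H+4\theta)\le 2\lambda H+4\delta\theta$ (its Lemmas~\ref{smith} and~\ref{lem:fixed_policy_beta}, using Implications~I and~II exactly as you do), and your telescoping performance-difference identity is just the paper's induction $V^*_t(d_t)-\mathbb{E}_{\pi^\epsilon}[V^{\pi^\epsilon}_t(d_t)]\le(2\lambda H+4\delta\theta)(H-t)$ written in unrolled form. The final algebra ($\lambda=\epsilon/(4H^2)$ and $\delta=\epsilon/(8\theta H)$ each absorbing $\epsilon/2$, then substitution into Theorem~\ref{th:1_new}) matches the paper as well.
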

\begin{remark}
	It can be observed from Theorem~\ref{th:expected} that the number $N$ of stochastic samples increases\footnote{\label{joker}In fact, $N$ also increases when a larger $H$ is available and the spatial phenomenon varies with more intensity and less noise (larger $\sigma^2_y/\sigma^2_n$) (Appendix~\ref{sec:th:1_new_proof}).
		These constants are omitted from~\eqref{eq:expected_samples} to ease clutter.} with (a) a tighter user-specified loss bound $\epsilon$, (b) a larger number $A$ of candidate macro-actions at any input location in $\Sdom$,
	and (c) a greater macro-action length $\kappa$.  
\end{remark} 
\subsection{Anytime $\epsilon$-Macro-GPO}
Unlike the Bayes-optimal Macro-GPO
policy $\pi^*$, our  $\epsilon$-Macro-GPO
policy $\pi^{\epsilon}$ can be derived exactly since its incurred time does not depend on the size of the uncountable set of candidate output measurements.
But, deriving $\pi^{\epsilon}$~\eqref{eq_4_8} requires expanding an entire search tree of $\mathcal{O}(N^H)$ nodes to solve the $H$-stage Bellman equations of $\mathcal{V}_t$~\eqref{approx-policy}, which 
 which incurs time with a $\mathcal{O}(N^H)$ term and 
is not always needed to achieve $\epsilon$-Bayes optimality in practice.
To ease this computational burden (e.g., for real-time planning), we propose an asymptotically optimal anytime variant of our $\epsilon$-Macro-GPO policy that can attain good BO performance 
quickly and improve its approximation quality over time, as briefly discussed here and detailed along with the pseudocode in Appendix~\ref{sec:anytime}.

The intuition behind our anytime $\epsilon$-Macro-GPO algorithm is to incrementally expand a search tree by iteratively simulating greedy exploration paths down the partially constructed tree and expanding the sub-trees rooted at nodes with the largest uncertainty of their corresponding values $V^{*}_t(d_t)$ so as to improve their approximation quality. Such an uncertainty at each encountered node $d_t$ is quantified by the gap between its maintained upper and lower heuristic bounds $\overline{V}^{*}_t(d_t)$ and $\underline{V}^{*}_t(d_t)$ that are (a) tightened via backpropagation from the leaves up through node $d_t$ to the root $d_0$ and (b) subsequently used to refine that at its siblings by exploiting the Lipschitz continuity of $V^{*}_t$ (Appendix~\ref{lip-optimal-value}).
%
%
%
Consequently, each iteration of our anytime $\epsilon$-Macro-GPO algorithm only incurs linear time in $N$.
The formulation of our anytime variant resembles that of $\epsilon$-Macro-GPO policy $\pi^{\epsilon}$~\eqref{eq_4_8} except that it utilizes the lower heuristic bound instead of $\mathcal{Q}_t$~\eqref{approx-policy} and a modified if condition to bound its expected performance loss likewise, 
as detailed in Appendix~\ref{sec:anytime}.
\begin{figure*}[t]
	\centering
	\begin{tabular}{cc}
		\hspace{-4mm} \includegraphics[width=0.45 \textwidth]{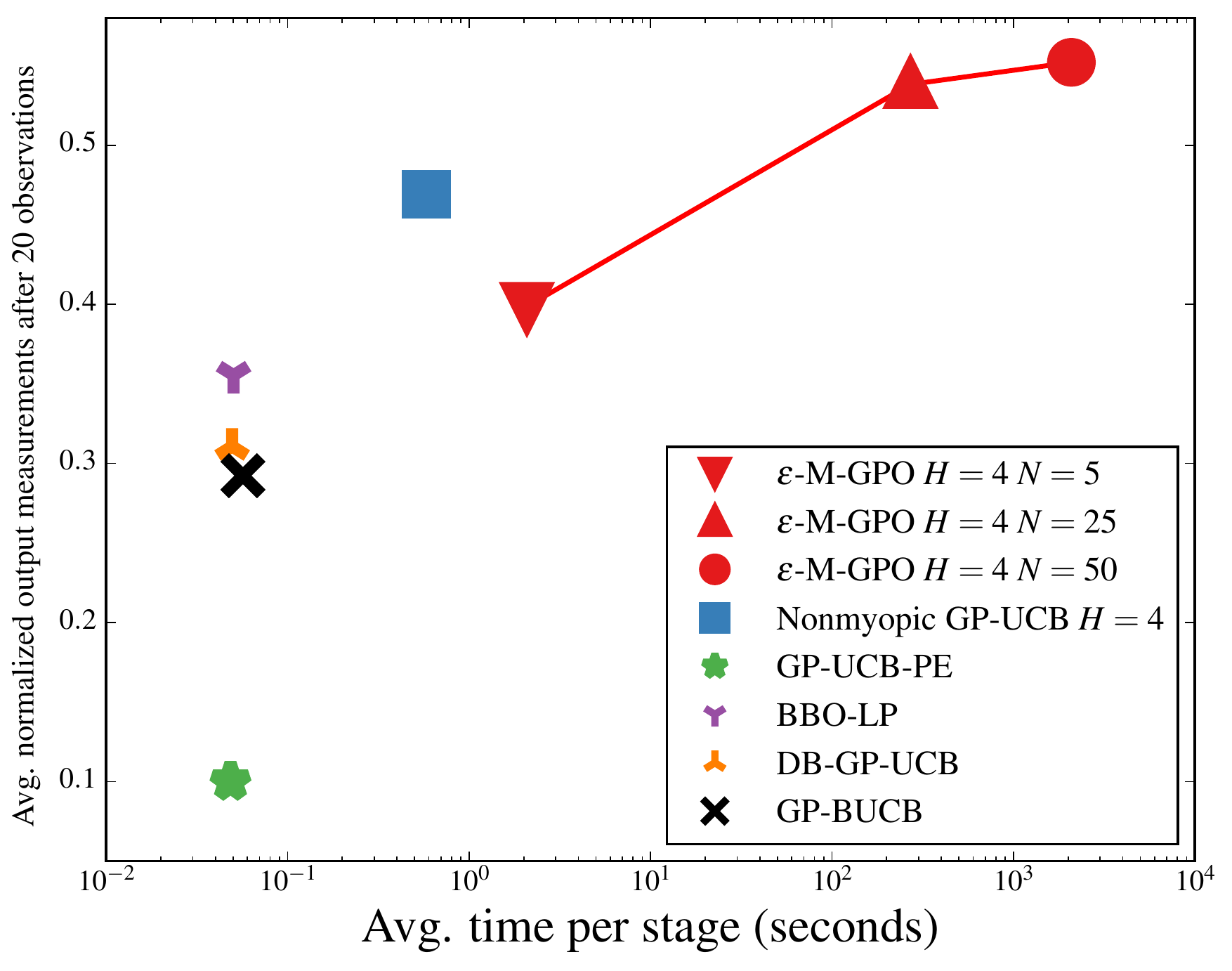}\hspace{-1mm} &
		\hspace{-4mm} \includegraphics[width=0.45 \textwidth]{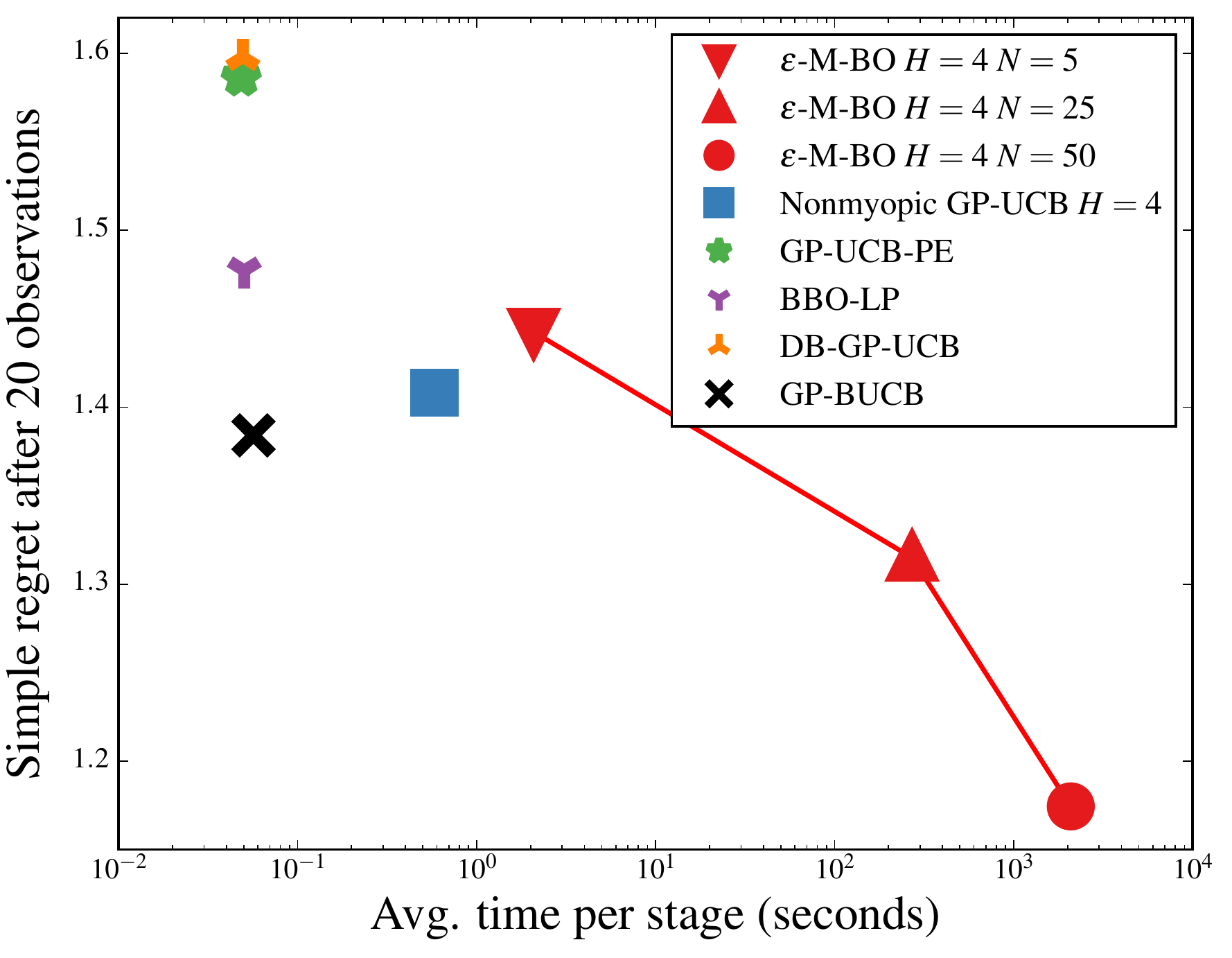}\\
		\hspace{-4mm}{\small (a)}\hspace{-1mm} & {\small (b)}\hspace{-1mm} 
	\end{tabular}
	\caption{Graphs of  (a) average normalized\cref{footnote-rewards} output measurements observed by AUV and (b) simple regrets achieved by tested BO algorithms vs. average time per stage for simulated plankton density phenomena.}
	\label{fig:runtime}
\end{figure*}
\section{Experiments and Discussion}
\label{expt}
This section empirically evaluates the performance of our nonmyopic adaptive $\epsilon$-Macro-GPO policy and its anytime variant for a given finite budget with three datasets featuring simulated plankton density phenomena~\cite{pennington16}, 
a real-world traffic phenomenon~\cite{TASE15}, 
and
a real-world temperature phenomenon over an office environment~\cite{choi12}.
The performances of our $\epsilon$-Macro-GPO policy and its anytime variant are compared with that of state-of-the-art (a) nonmyopic GP-UCB~\cite{marchant14} generalized to handle macro-actions that coincides with our deterministic policy~\eqref{ml-policy} exploiting the most likely observations during planning,
(b) \emph{distributed batch GP-UCB} (DB-GP-UCB)~\cite{daxberger17} that 
casts a macro-action as a batch to be optimized and
is thus equivalent to $\epsilon$-Macro-GPO with $H=1$, 
(c) $q$-EI \cite{chevalier13} that does likewise,
and (d) greedy batch BO algorithms\footnote{\label{footnote1}Unlike DB-GP-UCB and $q$-EI, a greedy batch BO algorithm cannot exploit the full informativeness of any candidate macro-action for its macro-action selection: Since it selects the inputs of a batch one at a time myopically\cref{boo}, its first few selected input locations immediately decide its chosen macro-action and consequently the remaining sequence of input locations found within.}
such as GP-BUCB~\cite{desautels14}, GP-UCB-PE~\cite{contal13}, and BBO-LP~\cite{gonzalez16b} 
whose implementations are detailed in Appendix~\ref{used-algs}.\footnote{It is not obvious to us how GLASSES~\cite{gonzalez16a} and Rollout~\cite{lam16} can be modified to handle macro-actions and  are thus not empirically compared here. However, since Rollout~\cite{lam16} also exploits Bellman equations, it is compared with our $\epsilon$-Macro-GPO by setting macro-action length to $\kappa=1$ (i.e., primitive action).} 

Four performance metrics are used: (a) average 
normalized\footnote{\label{footnote-rewards}To ease interpretation of results, the prior mean is subtracted from each output measurement to normalize it.} 
output measurements observed by the agent (larger average output measurements imply less average/cumulative regret (Section~\ref{main-section})),
(b) simple regret (i.e., difference between global maximum and currently found maximum),   (c) no. of explored nodes in all constructed search trees (more nodes incur more time), and (d) average runtime per stage.
\subsection{Simulated plankton density phenomena}
An \emph{autonomous underwater vehicle} (AUV) is deployed on board of a \emph{research vessel} (RV) in search for a hotspot of peak phytoplankton abundance (i.e., algal bloom) in coastal ocean. 
The AUV and RV are initially positioned near the center of the plankton density (mg/m$^{3}$) phenomenon spatially distributed over a $5$ km by $5$ km region that is discretized into a $50 \times 50$ grid of input locations. 
The phenomenon is modeled as a realization of a GP and simulated using the GP hyperparameters $\mu_s  = 0$, $\ell_1 = \ell_2 = 0.5$~km,  $\sigma_{y}^2 = 1$, and $\sigma_n^2 = 10^{-5}$. 
The AUV is tasked to execute the selected macro-action of a straight dive (due to limited maneuverability) along one of the $4$ cardinal directions from the RV to gather ``Gulper'' water samples/observations over $\kappa=4$ input locations for precise on-deck testing~\cite{pennington16}; given a budget of $20$ observations, this will be repeated for $5$ times (i.e. $5$ stages) 
from the input location that it has previously surfaced.
\begin{table}[h]
	\caption{No. of explored nodes by $\epsilon$-Macro-GPO (when $H=1$, it corresponds to DB-GP-UCB)  for simulated plankton density phenomena.}
	\centering
	\begin{tabular}{cccc}
		\hline
		$H = 1$& $H = 2$ & $H = 3$ & $H = 4$\\
		\hline
		$2.50 \times 10$ & $8.01 \times 10^3$ & $2.40 \times 10^6$ & 	$6.41 \times 10^8$\\
		\hline
	\end{tabular}		
	\label{table:simulated}
	\vspace{0mm}
\end{table}
Figs.~\ref{fig:simulated_total}a and~\ref{fig:simulated_total}b show results of the performances of $\epsilon$-Macro-GPO with $H =2,3,4$ (lookahead of, respectively, $8$, $12$, $16$ observations), $\beta = 0$, and $N=100$,\footnote{\label{crawfish}Specifying the value of $N$ (instead of $\epsilon$) may yield a loose $\epsilon$ based on Theorem~\ref{th:expected}. Nevertheless, the resulting $\epsilon$-Macro-GPO with $H =3,$$4$ empirically outperforms other tested BO algorithms.} and the other tested BO algorithms 
averaged over $250$ independent realizations of the simulated phenomena. 	
It can be observed that as the number of observations increases, the nonmyopic adaptive BO algorithms generally outperform the myopic ones.
In particular, the performance of $\epsilon$-Macro-GPO  improves considerably by increasing $H$: $\epsilon$-Macro-GPO with the furthest lookahead (i.e., $H = 4$) achieves the largest average  normalized output measurements observed by the AUV and smallest simple regret after $20$ observations at the cost of a larger number of explored nodes (see Table~\ref{table:simulated}).
For example, the nonmyopic $\epsilon$-Macro-GPO with $H = 4$ achieves $0.093\sigma_y$ ($0.059\sigma_y$) more average output measurements and $0.211 \sigma_y$ ($0.148\sigma_y$) less simple regret than myopic DB-GP-UCB (nonmyopic GP-UCB with the same horizon $H=4$ but assuming most likely observations during planning), which are expected.

Figs.~\ref{fig:simulated_total}c and~\ref{fig:simulated_total}d show the effect of varying exploration weights 	
$\beta$ on the performance of $\epsilon$-Macro-GPO with $H = 2$ and $H = 3$, respectively. 
It can be observed from
Fig.~\ref{fig:simulated_total}c that when $H = 2$, $\epsilon$-Macro-GPO with $\beta=0.1$ achieves $0.064\sigma_y$ more average normalized output measurements than that with $\beta = 0$ after $20$ observations,
which indicates the need of a slightly stronger exploration behavior. 
Fig.~\ref{fig:simulated_total}d shows that by increasing to a lookahead of $12$ observations (i.e., $H = 3$), $\epsilon$-Macro-GPO no longer needs the additional weighted exploration term in~\eqref{eq:reward-def} (i.e., $\beta=0$) since it can naturally trade off between exploration vs. exploitation, as explained previously (Section~\ref{main-section}).
It can also be observed from Figs.~\ref{fig:simulated_total}c and~\ref{fig:simulated_total}d that $\beta = 10$ greatly hurts its performance due to an overly aggressive exploration.

We also investigate the effect of varying the number $N$ of stochastic samples on the behavior of  $\epsilon$-Macro-GPO. To this end, $\epsilon$-Macro-GPO with a fixed horizon $H$ offers an advantage of being able to trade off its performance for time efficiency by decreasing  $N$. This observation is theoretically validated in Theorem~\ref{th:expected} and empirically illustrated in Fig.~\ref{fig:runtime}.

Figs.~\ref{fig:runtime}a and~\ref{fig:runtime}b show results of the performances of $\epsilon$-Macro-GPO with $H =4$ (lookahead of $16$ observations), $\beta = 0$, and $N=5,25,50$, and the other tested BO algorithms 
averaged over $35$ independent realizations of the simulated plankton density phenomena. 
It can be observed that the performance of $\epsilon$-Macro-GPO  improves considerably by increasing $N$: $\epsilon$-Macro-GPO with the largest number of samples (i.e., $N = 50$) achieves the largest average  normalized output measurements and smallest simple regret after $20$ observations  at the cost of larger average time per iteration.
For example,  $\epsilon$-Macro-GPO with  $N=50$ achieves $0.26\sigma_y$ more average output measurements and $0.21 \sigma_y$ less simple regret than myopic GP-BUCB, but needs $2085.37$ more seconds per iteration.
\begin{figure*}
	\centering
	{\begin{tabular}{cccc}
			\hspace{-1mm}\includegraphics[height=6.1cm]{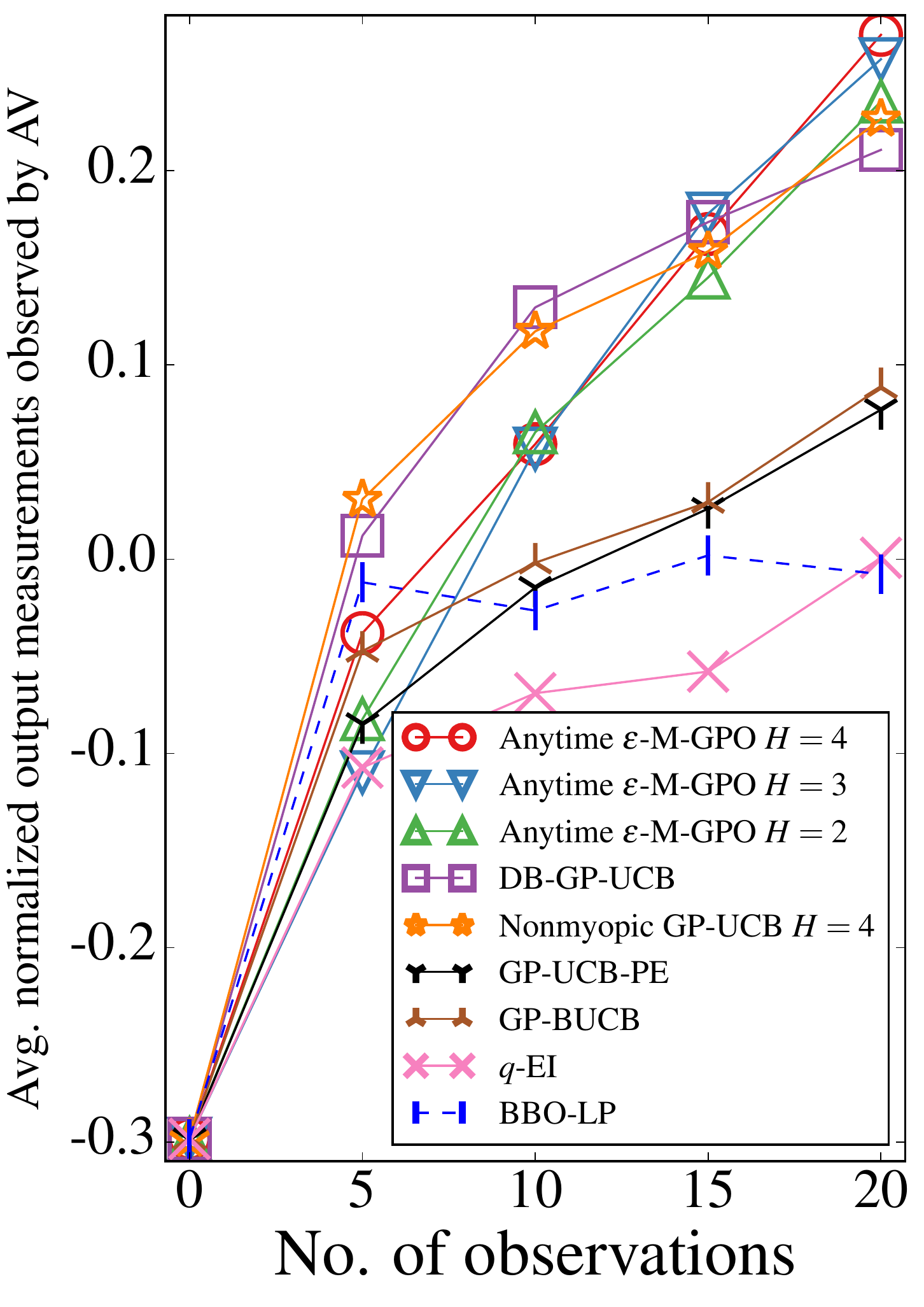}\hspace{-1mm} &
			\includegraphics[height=6.1cm]{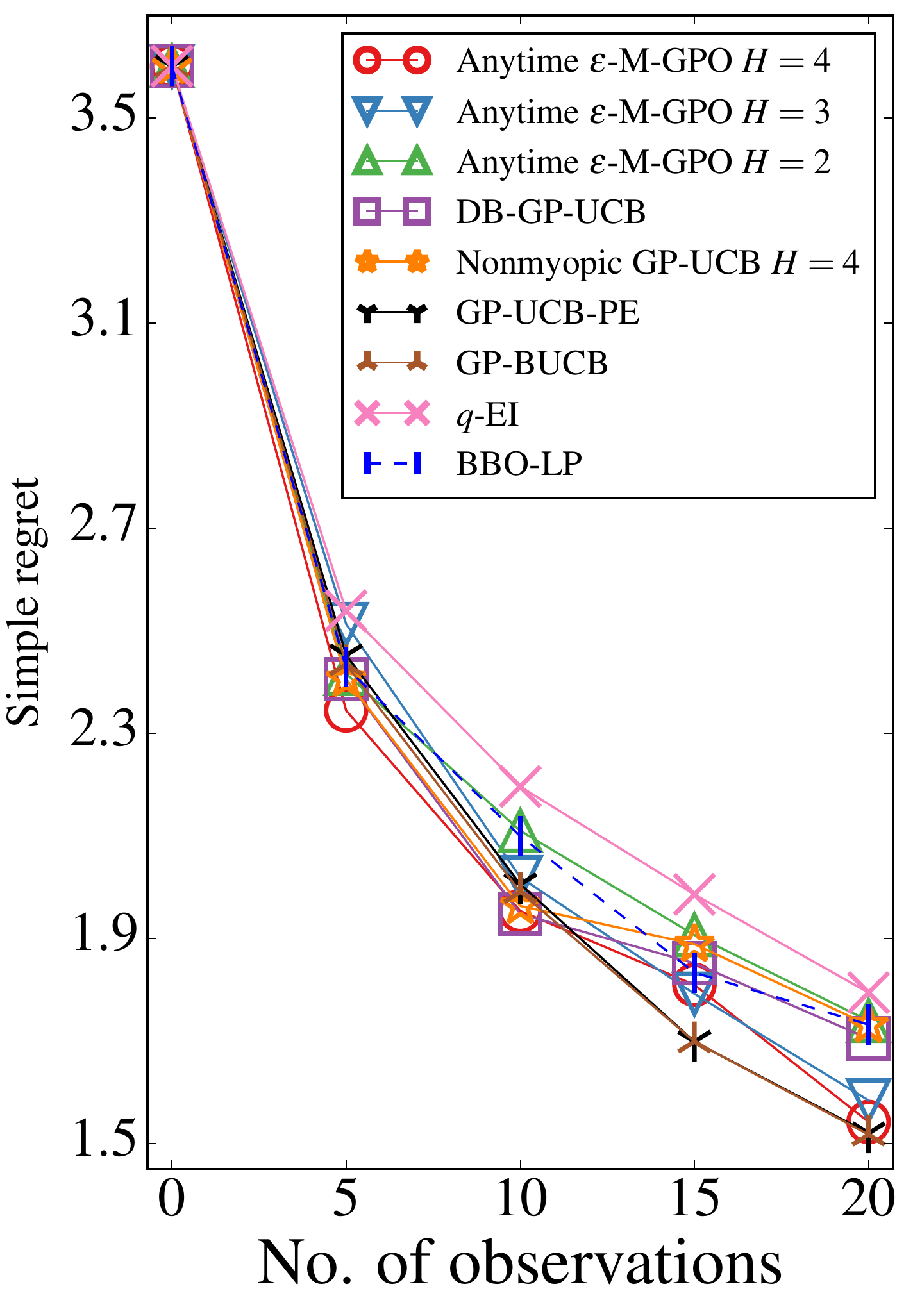}\hspace{-1mm} &
			\includegraphics[height=6.1cm]{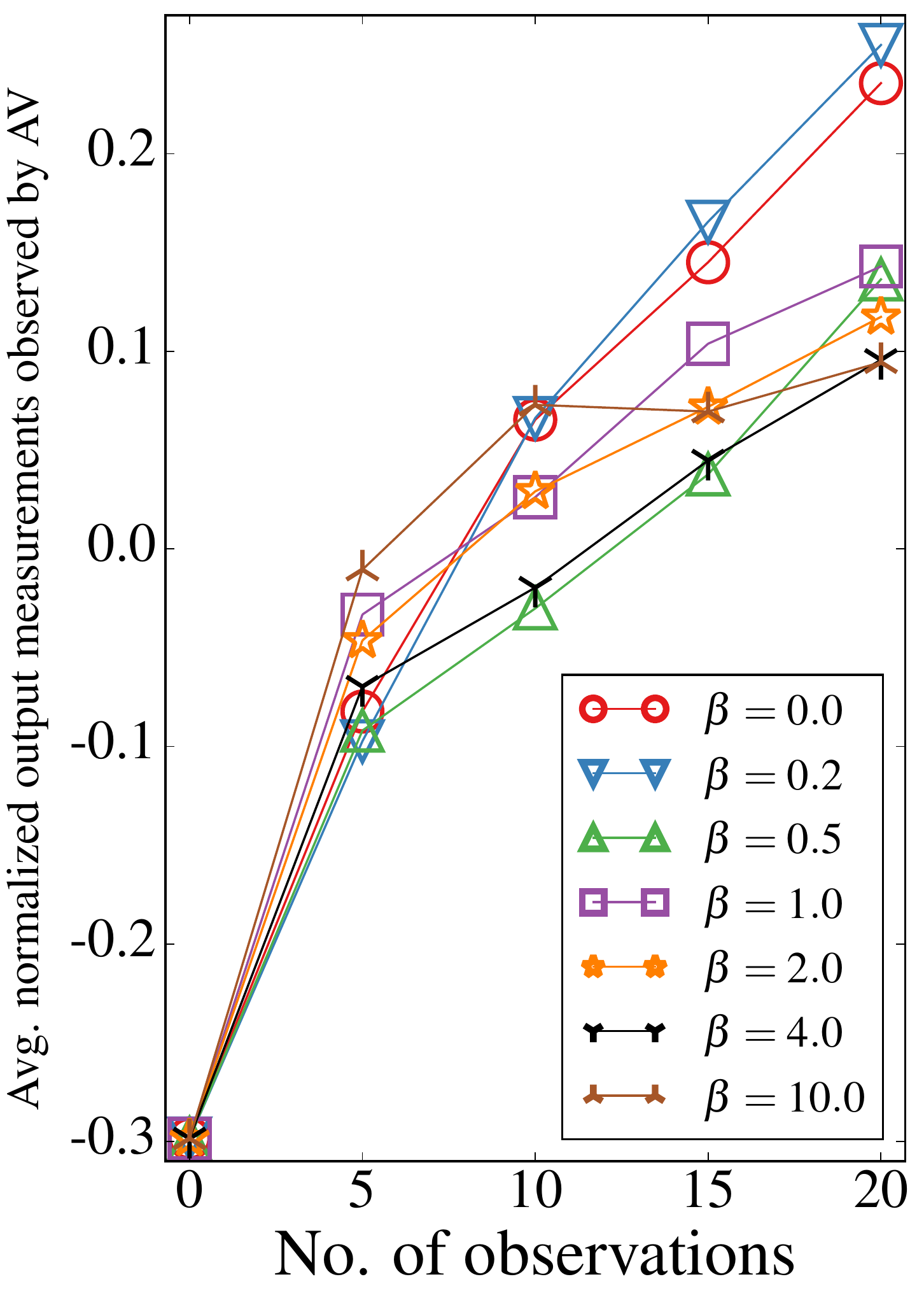}\hspace{-1mm} &
			\includegraphics[height=6.1cm]{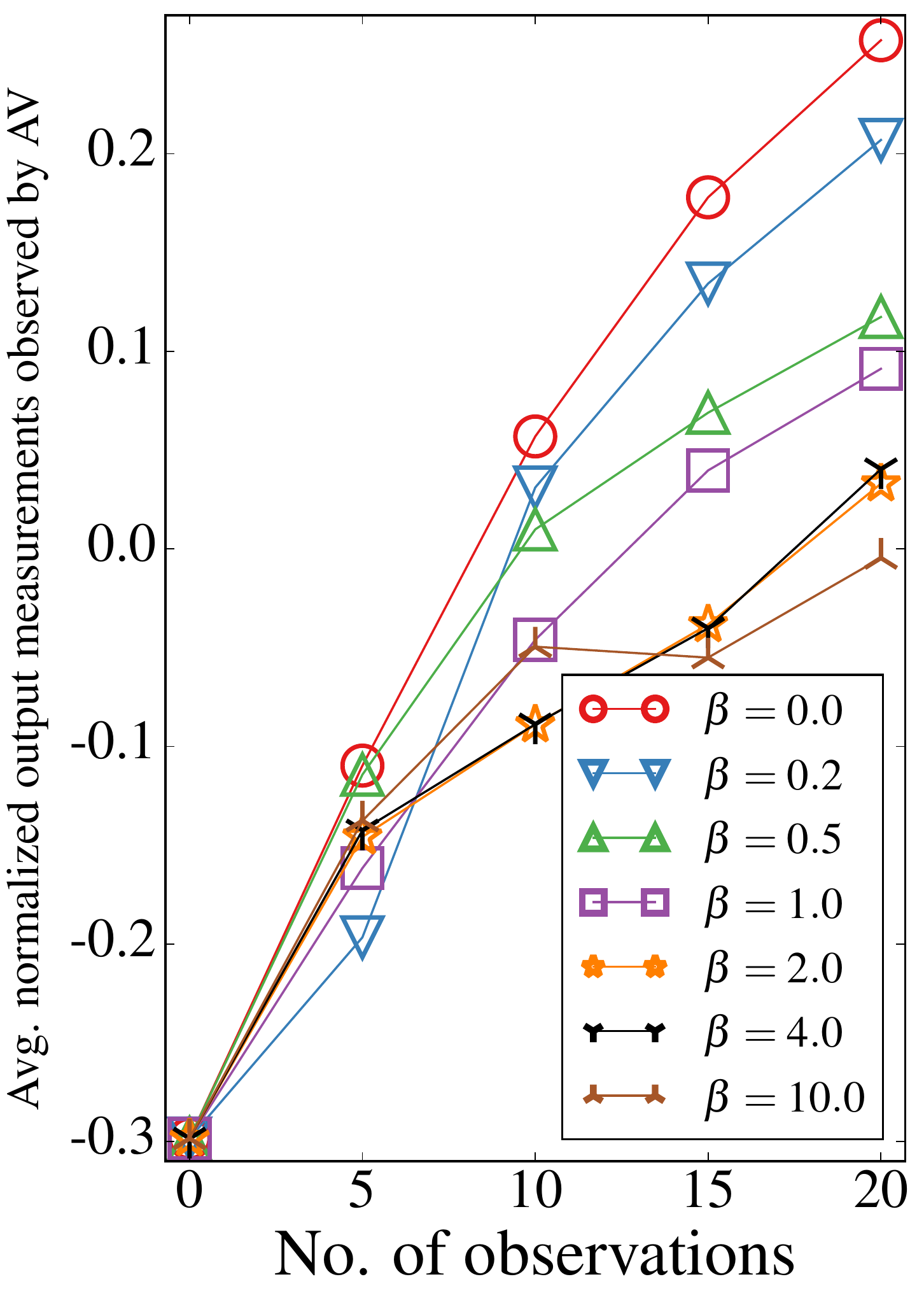}\\
			\hspace{-1mm}{\small (a)}\hspace{-1mm} & {\small (b)}\hspace{-1mm} & {\small (c)}\hspace{-1mm} & {\small (d)}
	\end{tabular}}
	\caption{Graphs of (a) average normalized\cref{footnote-rewards} output measurements observed by the AV and (b) simple regrets achieved by the tested BO algorithms, and  average normalized output measurements achieved by \emph{anytime} $\epsilon$-Macro-GPO with (c) $H=2$ and (d) $H = 3$ and varying exploration weights $\beta$ vs. no. of observations for  real-world traffic phenomenon. The standard errors are given in Tables~\ref{table:road-var-methods} and~\ref{table::road-var-beta} in Appendix~\ref{sec:road-add}.}
	\label{fig:roads_total}
\end{figure*}	
\subsection{Real-world traffic phenomenon} 
To service the mobility demands within the central business district of an urban city, an \emph{autonomous vehicle} (AV) in a mobility-on-demand system cruises along different road trajectories  to find a hotspot of highest mobility demand to pick up a user.
The $29.4$~km by~$11.9$~km service area is gridded into $100 \times 50$ input regions, of which only $2506$ input regions are accessible to the AV via the road network. 
The AV can cruise from input region $s$ to an adjacent input region $s'$ 
using one primitive action 
iff at least one road segment in the road network starts in $s$ and ends in $s'$;
the maximum outdegree from any input region is $8$.  
In any input region, a surrogate demand measurement is obtained by counting the number of pickups\footnote{A distributed gossip-based protocol can be used to aggregate these pickup information from the AVs in the input region that are connected via an ad hoc wireless communication network~\cite{TASE15}. Any AV entering the input region can then access its pickup count by joining its ad hoc network.} from all historic taxi trajectories generated by a major taxi company during $9$:$30$-$10$~p.m.~on August $2$, $2010$~\cite{TASE15}; the resulting mobility demand pattern is visualized in Fig.~\ref{domdom}. 
\begin{figure}[h]
	\centering
	\includegraphics[width=0.49 \textwidth]{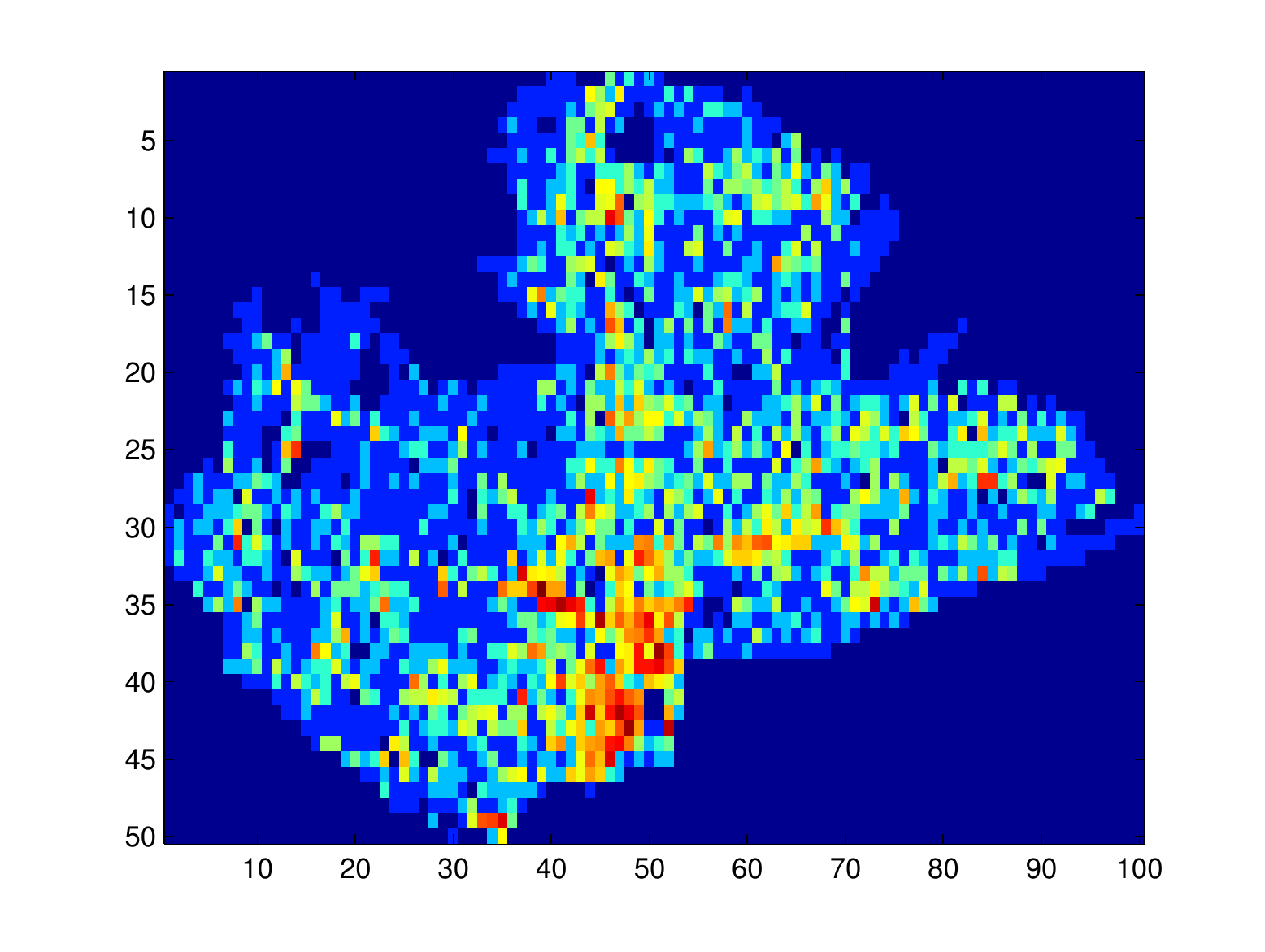}
	\caption{Mobility demand pattern spatially distributed over the central business district of an urban city during $9$:$30$-$10$~p.m.~on August $2$, $2010$: ``Hotter'' regions
		indicate larger numbers of pickups (Image courtesy of~\cite{TASE15}).}
	\label{domdom}
\end{figure}
The original demand measurements are log-transformed to remove skewness and extremity for stabilizing the GP covariance structure and
the GP hyperparameters $\mu_s  =  1.5673$, 
$\ell_1 = 0.1689$~km,  $\ell_2 = 0.1275$~km,  
$\sigma_y^2 = 0.7486$, and $\sigma_n^2 = 0.0111$ are then learned using  maximum likelihood estimation~\cite{gpml}; 
note that the length-scales and signal-to-noise ratio are relatively smaller than that of the simulated plankton density phenomena.
The AV is tasked to execute the selected macro-action of a cruising trajectory along $\kappa = 5$ adjacent input regions to observe their corresponding demand measurements; given a budget of $20$ observations, this will be repeated for $4$ times 
(i.e. $4$ stages) 
from the input region that it has previously cruised to. 
Since every input region $s$ has a large number of available macro-actions (i.e., with an average of $178$ and maximum of $1193$ macro-actions), $20$ of them are randomly\footnote{\label{fret}The BO performance of $\epsilon$-Macro-GPO and its anytime variant can be potentially improved by using macro-action generation algorithms~\cite{he11} instead of random selection.} selected 
to form its representative set of candidate macro-actions.
\begin{table}[h]
	\caption{No. of explored nodes by anytime $\epsilon$-Macro-GPO (when $H=1$, it corresponds to DB-GP-UCB) for the real-world traffic phenomenon (i.e., mobility demand pattern).}
	\centering
	\begin{tabular}{cccc}
		\hline
		$H = 1$& $H = 2$ & $H = 3$ & $H = 4$\\
		\hline
		$8.29 \times 10$ & $9.52 \times 10^4$ & $1.29 \times 10^6$ & 	$1.34 \times 10^7$\\
		\hline
	\end{tabular}		
	\label{table:roads}
\end{table}

Figs.~\ref{fig:roads_total}a and~\ref{fig:roads_total}b show results of the performances of \emph{anytime} $\epsilon$-Macro-GPO with $H =2,3,4$ (a lookahead of, respectively, $10$, $15$, $20$ observations), $\beta = 0$, and $N=300$ 
after running for $1500$ iterations\cref{crawfish}, and the other tested BO algorithms averaged over $35$ random starting input regions of the AV.		
Similar to the results for simulated plankton density phenomena, it can be observed that the performance of anytime $\epsilon$-Macro-GPO  improves considerably by increasing $H$: Anytime $\epsilon$-Macro-GPO with the furthest lookahead (i.e., $H = 4$) achieves the largest average normalized output measurements observed by the AV and among the least simple regret after $20$ observations at the cost of a larger number of explored nodes (see Table~\ref{table:roads}).
For example, the nonmyopic anytime $\epsilon$-Macro-GPO with $H = 4$ achieves $0.069 \sigma_{y}$ ($0.05 \sigma_{y}$) more average output measurements and $0.188 \sigma_y$ ($0.219\sigma_y$) less simple regret than myopic DB-GP-UCB (nonmyopic GP-UCB with $H=4$), which are expected.
Interestingly, GP-BUCB and GP-UCB-PE can achieve simple regret comparable to that of anytime $\epsilon$-Macro-GPO with $H = 4$ even though they perform very poorly in terms of average output measurements.

Figs.~\ref{fig:roads_total}c and~\ref{fig:roads_total}d show the effect of varying exploration weights 	
$\beta$ on the performance of anytime $\epsilon$-Macro-GPO with $H = 2$ and $H = 3$, respectively. 
It can be observed from Fig.~\ref{fig:roads_total}c that when $H = 2$, anytime $\epsilon$-Macro-GPO with $\beta=0.2$ achieves $0.022 \sigma_{y}$ more average normalized output measurements than that with $\beta = 0$ after $20$ observations,
which indicates the need of a slightly stronger exploration behavior. Fig.~\ref{fig:roads_total}d shows that by increasing to a lookahead of $15$  observations(i.e., $H = 3$), anytime $\epsilon$-Macro-GPO no longer needs the additional weighted exploration term in~\eqref{eq:reward-def} (i.e., $\beta=0$) since it can naturally trade off between exploration vs. exploitation, as explained previously (Section~\ref{main-section}).		
It can also be observed from Figs.~\ref{fig:roads_total}c and~\ref{fig:roads_total}d that $\beta \geq 0.5$ hurts its performance due to overly aggressive exploration.	
\begin{figure}
	\centering
	{\begin{tabular}{cc}
				\hspace{-4mm} \includegraphics[width=0.23 \textwidth]{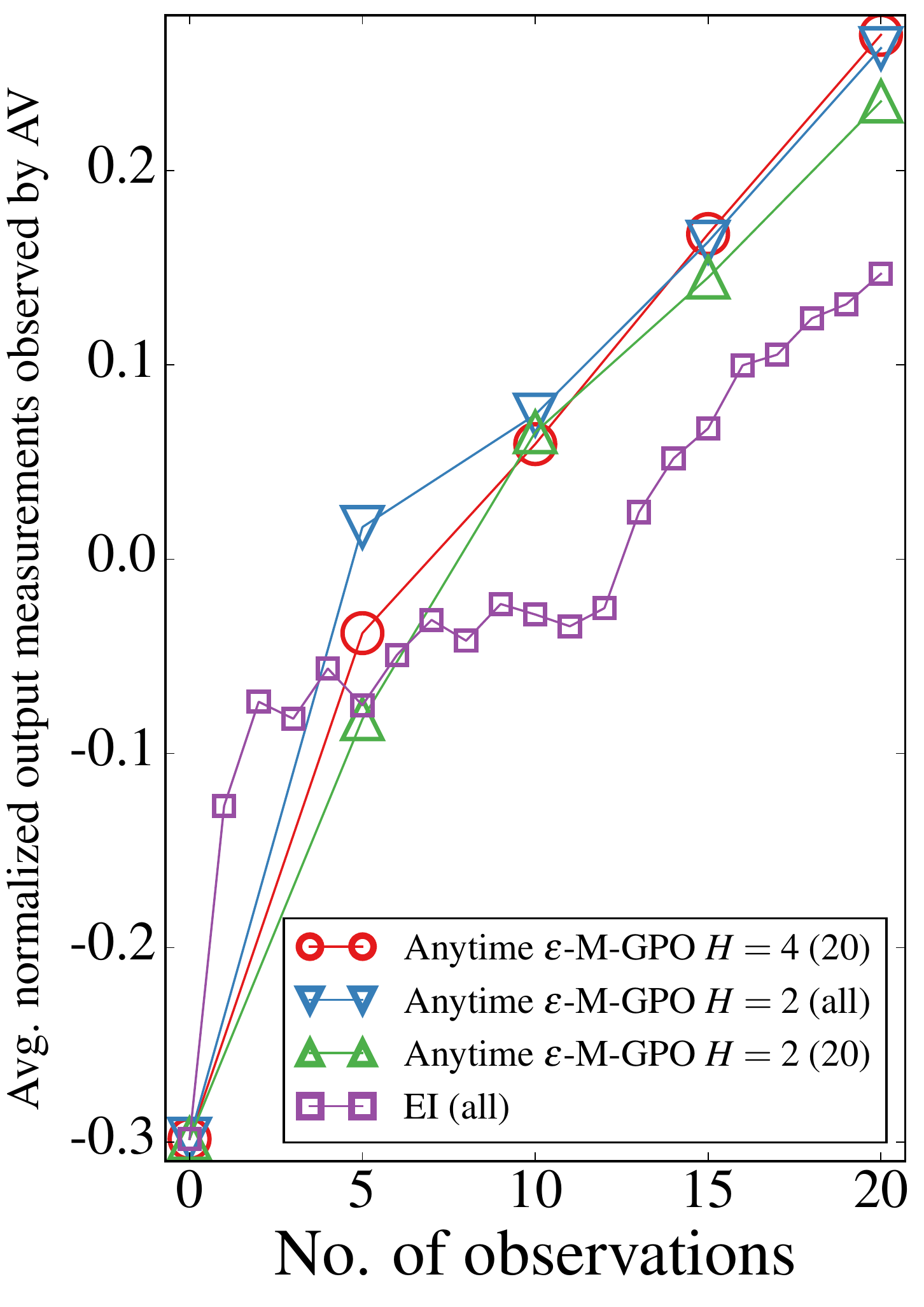}\hspace{-1mm} &
				\hspace{-4mm} \includegraphics[width=0.23 \textwidth]{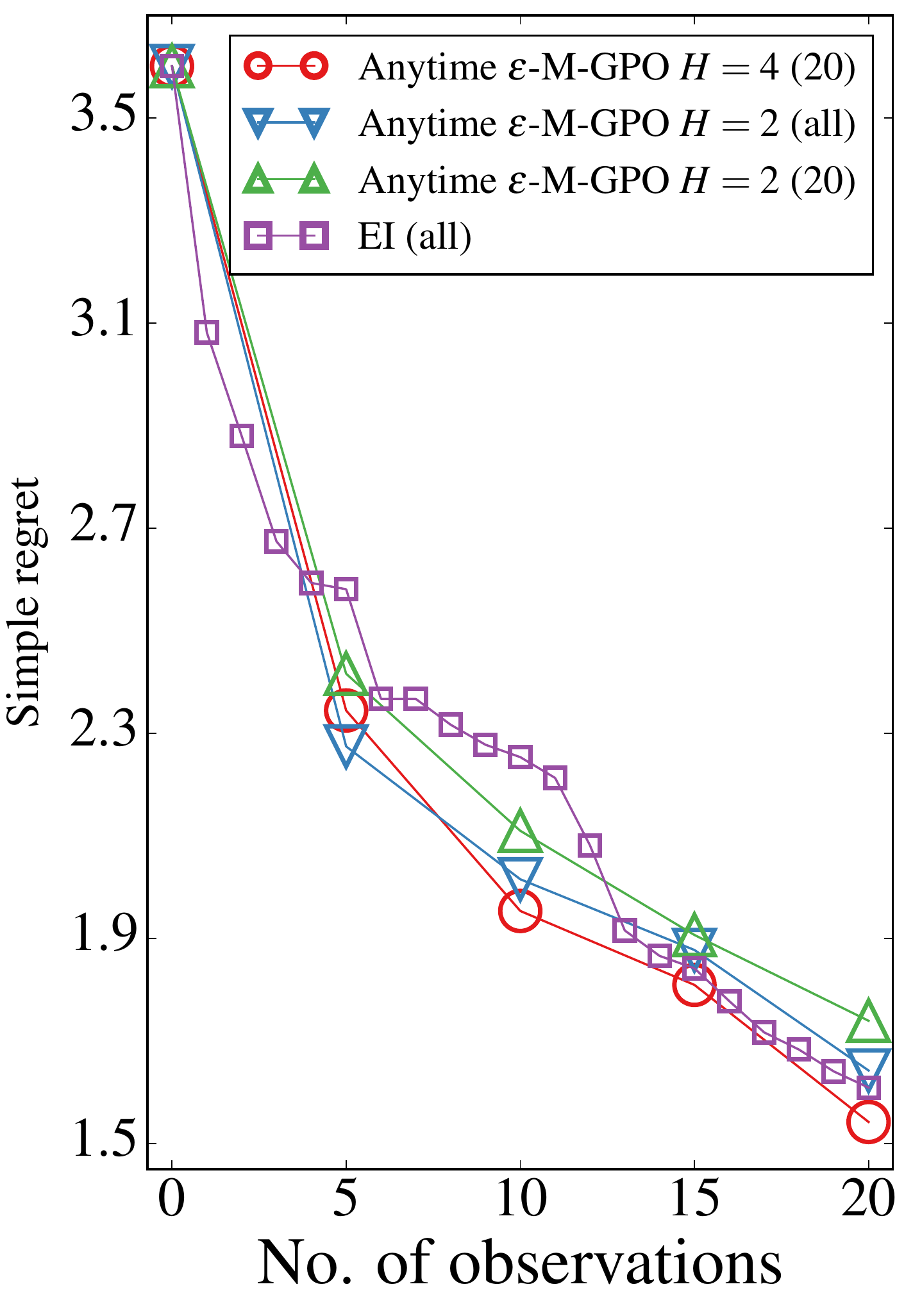}\\
			\hspace{-4mm}{\small (a)}\hspace{-1mm} & {\small (b)}\hspace{-1mm} 
	\end{tabular}}
	\caption{Graphs of  (a) average normalized output measurements observed by the AV and (b) simple regrets achieved by \emph{anytime} $\epsilon$-Macro-GPO with $H=2, 4$ and $20$ randomly selected macro-actions per input region, anytime $\epsilon$-Macro-GPO with $H=2$ and all available macro-actions (the no. of available macro-actions per input region is enclosed in brackets), and EI with all available macro-actions of length $1$ vs. no. of observations for real-world traffic phenomenon. Standard errors are given in Table~\ref{table:road-var-h2} in Appendix~\ref{sec:road-add}.}
	\label{fig:roads_h2}
\end{figure}

Lastly, we investigate the effect of downsampling the number of available macro-actions per input region to $20$ on the performance of anytime $\epsilon$-Macro-GPO.		
To do this, the performances of anytime $\epsilon$-Macro-GPO with $H=2,4$ and $20$ randomly selected macro-actions per input region are compared with that of anytime $\epsilon$-Macro-GPO with $H=2$ and all available macro-actions as well as myopic EI~\cite{shahriari16} with all available macro-actions of length $1$.
It can be observed from Figs.~\ref{fig:roads_h2}a and~\ref{fig:roads_h2}b that when $H=2$, downsampling the number of available macro-actions per input region to $20$ decreases average normalized output measurements by $0.032 \sigma_{y}$ and increases simple regret by $0.112 \sigma_y$ after $20$ observations, 	
but also reduces the number of explored nodes by more than $1$ order of magnitude (see Table~\ref{table:roads_h2}). 
By increasing to a lookahead of $20$ observations, anytime $\epsilon$-Macro-GPO with $H=4$ and $20$ randomly selected macro-actions per input region achieves $0.008\sigma_y$ more average normalized output measurements and $0.116 \sigma_y$ less simple regret than that with $H=2$ and all available macro-actions at the cost of a larger number of explored nodes.
Though EI can access all available macro-actions of length $1$ (i.e, no restriction on action space of AV), it obtains much less average normalized output measurements and more simple regret than anytime $\epsilon$-Macro-GPO with $H=4$ and $20$ randomly selected macro-actions per input region due to its myopia.
\begin{table}[h]
	\caption{No. of explored nodes by anytime $\epsilon$-Macro-GPO (the no. of available macro-actions per input region is enclosed in brackets) for the real-world traffic phenomenon (i.e., mobility demand pattern).}
	\centering
	\begin{tabular}{cccc}
		\hline
		$H = 2\ (20)$ & $H = 2\ (\text{all})$ & $H = 4\ (20)$\\
		\hline
		$0.95 \times 10^5$ & $1.26 \times 10^6$ & $1.34 \times 10^7$\\
		\hline
	\end{tabular}		
	\label{table:roads_h2}
\end{table}
\begin{figure*}[h]
	\centering
	{\begin{tabular}{cccc}
			\hspace{-1mm}\includegraphics[width=0.23\textwidth]{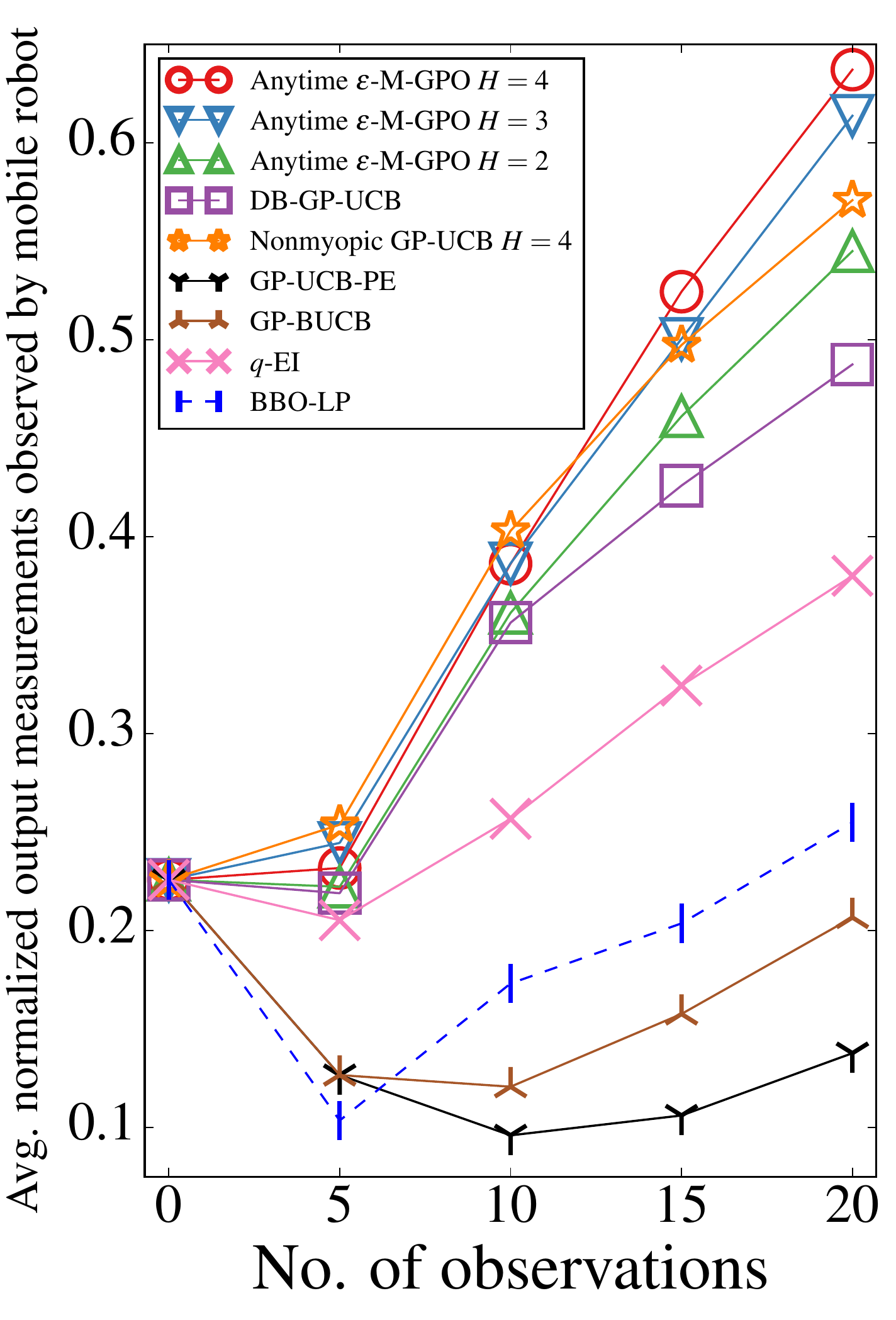} &
			\includegraphics[width=0.23\textwidth]{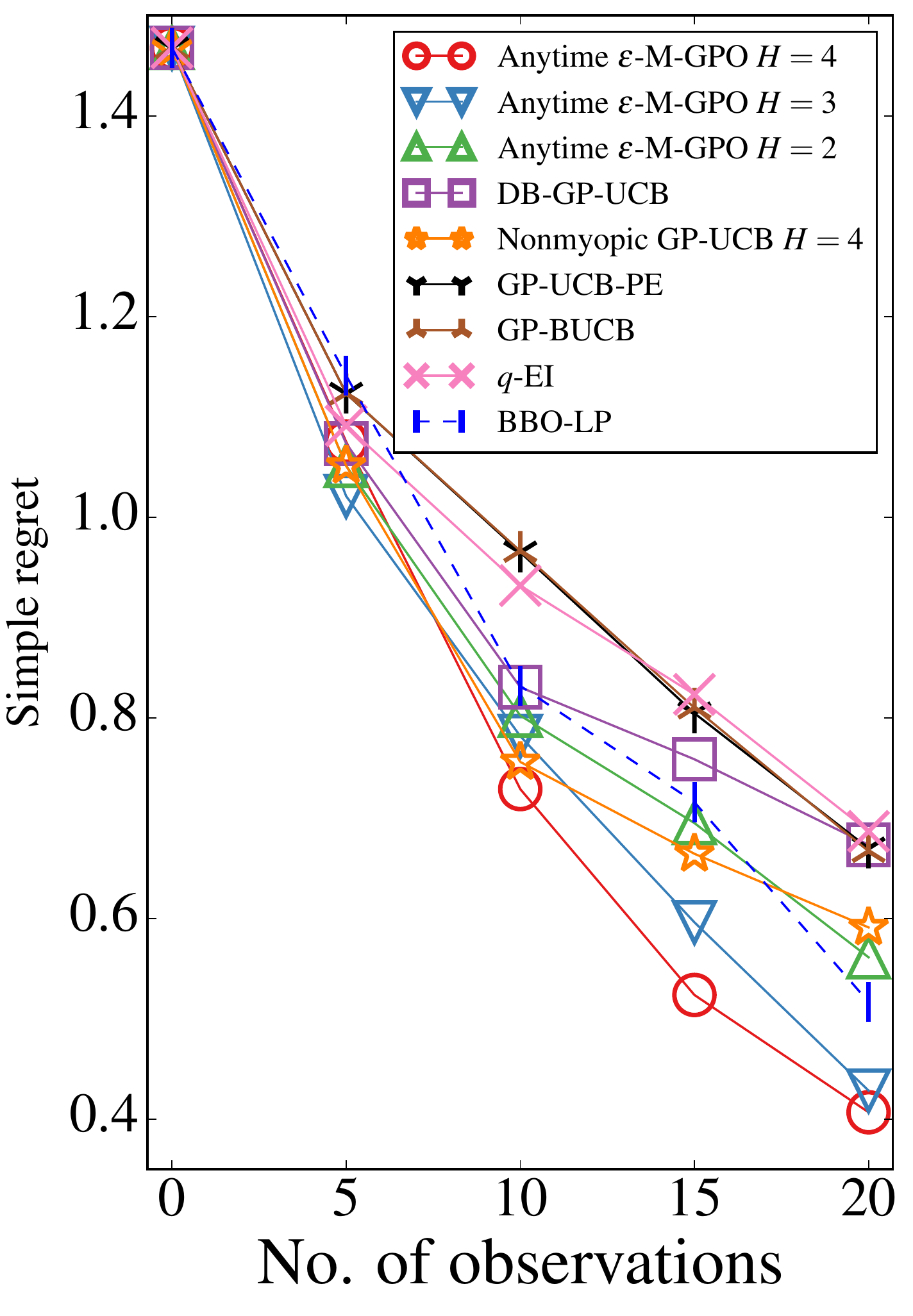} &
			\includegraphics[width=0.23\textwidth]{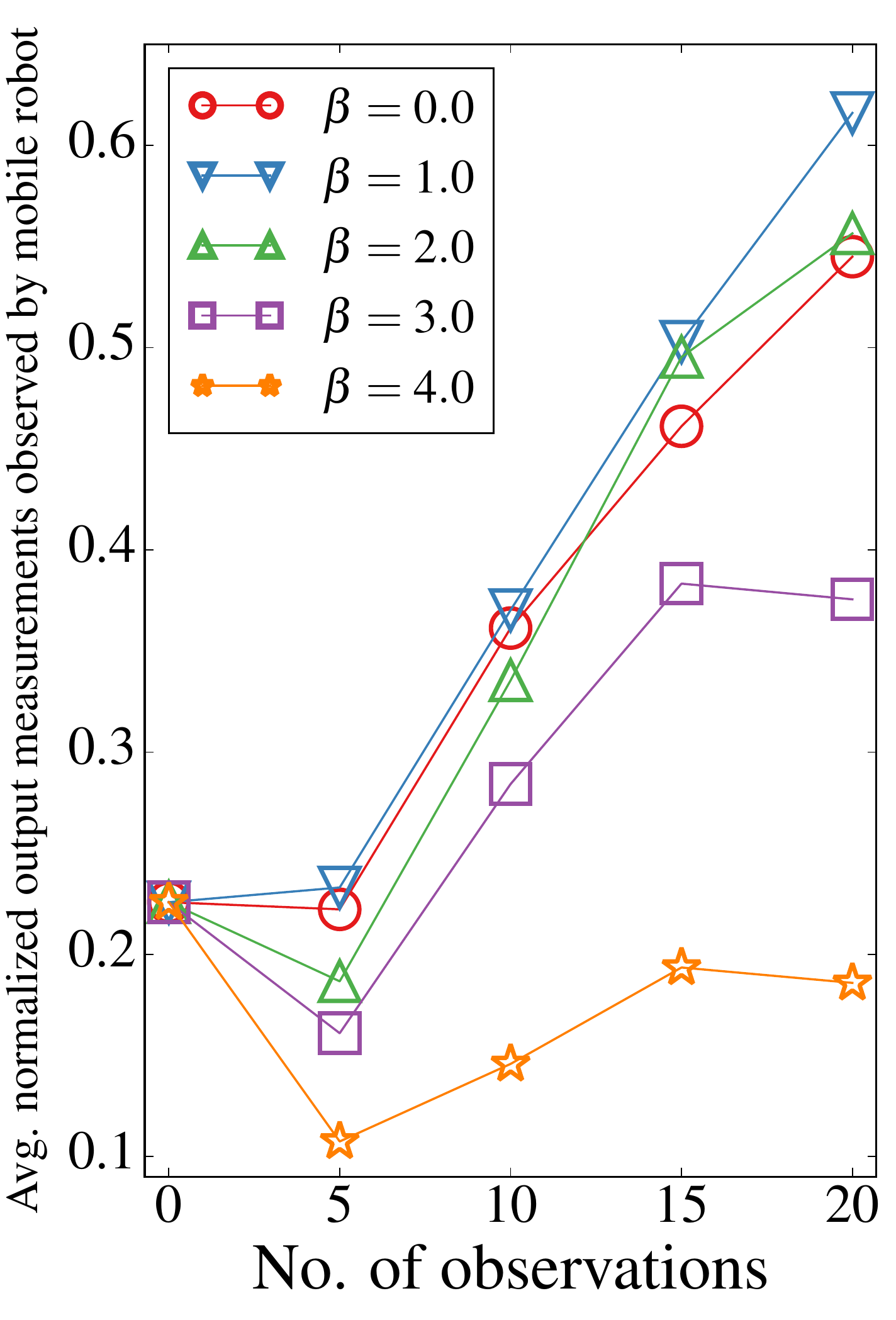} &
			\includegraphics[width=0.23\textwidth]{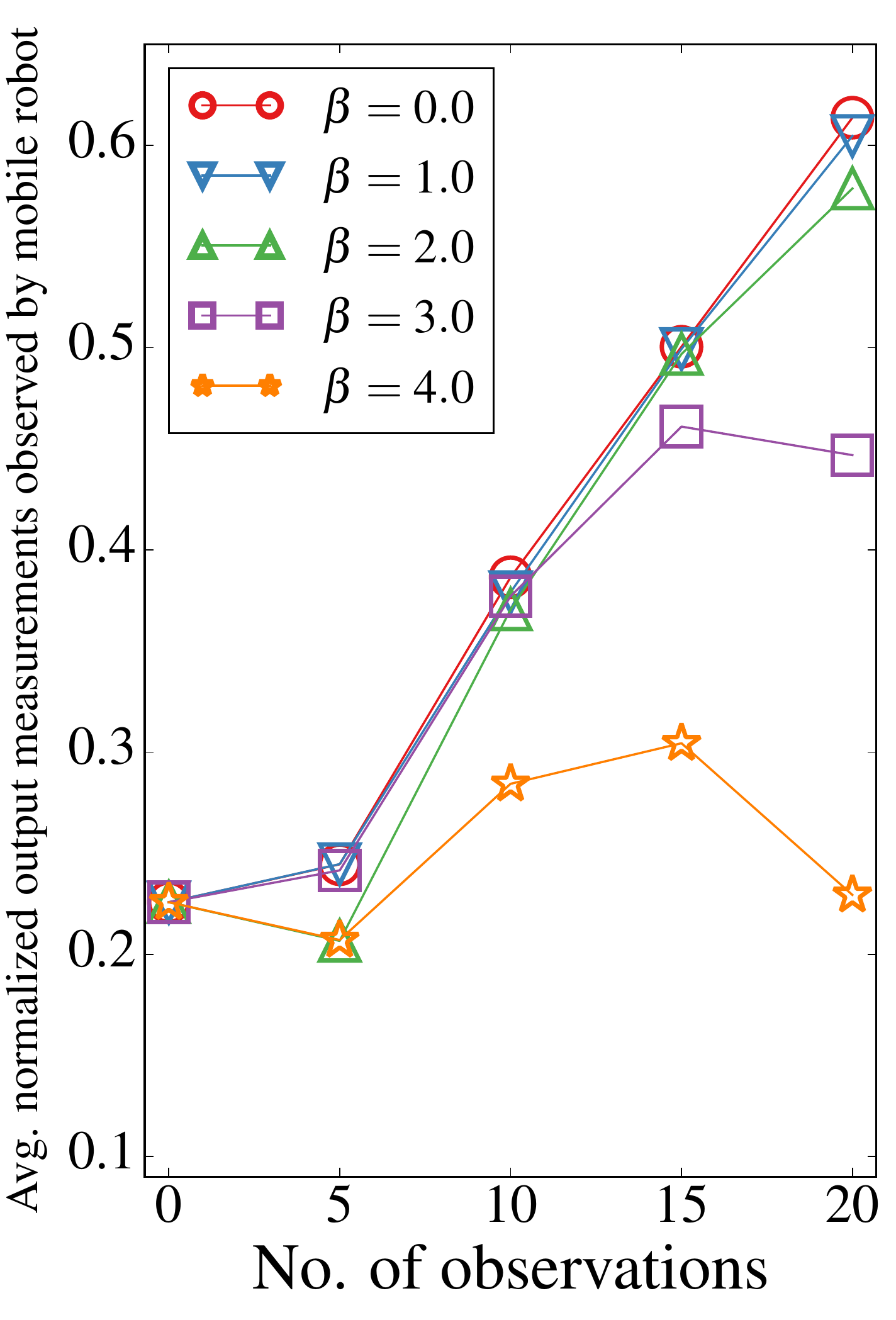}\\
			\hspace{-1mm}{(a)} & {(b)} & {(c)} & {(d)}
	\end{tabular}}
	\caption{Graphs of (a) average normalized\cref{footnote-rewards} output measurements observed by the mobile robot and (b) simple regrets achieved by the tested BO algorithms vs. no. of observations, and  average normalized output measurements achieved by \emph{anytime} $\epsilon$-Macro-GPO with (c) $H=2$ and (d) $H = 3$ and varying exploration weights $\beta$ vs. no. of observations for the real-world temperature phenomenon over the Intel Berkeley Research Lab. The standard errors are given in Tables~\ref{table:robot-var-methods} and~\ref{table::robot-var-beta} in Appendix~\ref{sec:exp-robot}.}
	\label{fig:robot_total}
\end{figure*}	
\subsection{Real-world temperature phenomenon} 
In monitoring of the indoor environmental quality of an office environment~\cite{choi12}, a mobile robot mounted with a weather board is tasked to find a hotspot of peak temperature by exploring different stretches of corridors that can be naturally abstracted into macro-actions. 
The temperature ($^{\circ}$C) phenomenon is spatially distributed over the Intel Berkeley Research Lab (of about $41$~m by $32$~m in size) with $41$ deployed temperature sensors (see Fig.~\ref{fig:dataset})
and modeled as a realization of a GP.
Using the observations/data gathered by the $41$ temperature sensors\footnote{\url{http://db.csail.mit.edu/labdata/labdata.html}},
the GP hyperparameters $\mu_s  =  17.8513$, $\ell_1 = 4.0058$~m, $\ell_2 = 11.3811$~m, $\sigma_y^2 = 0.5964$, and $\sigma_n^2 = 0.0597$ are learned using maximum likelihood estimation~\cite{gpml}.
Then, using these learned hyperparameters and the observations/data gathered by the $41$ temperature sensors, 
we exploit the GP posterior mean~\eqref{gp-posteriors} to predict the temperature measurements at the $104$ input locations shown in Fig.~\ref{fig:dataset}; these predictions together with the data obtained from the $41$ sensors serve as the dataset for the experiment here.
The mobile robot is tasked to execute the selected macro-action of a motion path along a stretch of $\kappa=5$ input locations on one of the corridors in the lab to observe their corresponding temperature measurements;
given a budget of $20$ observations, this will be repeated for $4$ times 
from the input location that it has previously moved to.
Since every input location $s$ has a large number of available macro-actions (i.e., with an average of $27$ and maximum of $114$ macro-actions), $20$ of them are randomly\cref{fret} selected to form its representative set of candidate macro-actions.%
\begin{figure}[h]
	\centering
	\includegraphics[height=5.5cm]{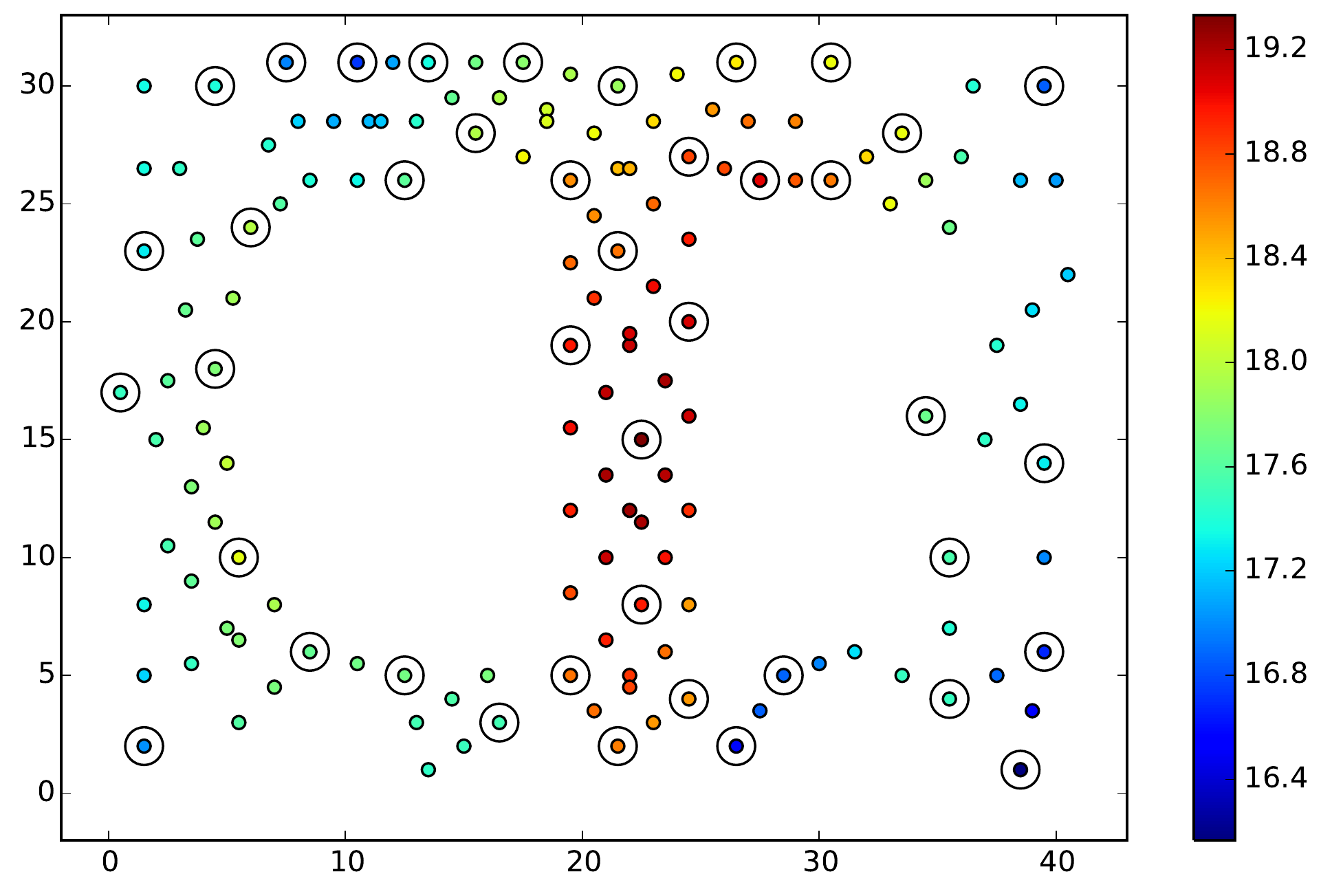}
	\caption{The temperature measurements at the $104$ input locations (not circled) in the Intel Berkeley Research lab are predicted using the GP posterior mean~\eqref{gp-posteriors} based on the data gathered by the $41$ temperature sensors (circled);
		these predictions together with the data obtained from the $41$ sensors serve as the dataset for the experiment here.}
	\label{fig:dataset}
\end{figure}

Figs.~\ref{fig:robot_total}a and~\ref{fig:robot_total}b show results of the performances of \emph{anytime} $\epsilon$-Macro-GPO with $H =2,3,4$ (lookahead of, respectively, $10$, $15$, $20$ observations), $\beta = 0$, and $N=300$ 
after running for $1500$ iterations\cref{crawfish}, and the other tested BO algorithms averaged over $35$ random initial starting input locations of the mobile robot. %
Similar to the results for simulated plankton density phenomena and real-world traffic phenomenon, it can be observed that as the number of observations increases, the nonmyopic adaptive BO algorithms generally outperform the myopic ones.
In particular, the performance of anytime $\epsilon$-Macro-GPO  improves considerably by increasing $H$ such that anytime $\epsilon$-Macro-GPO with the furthest lookahead (i.e., $H = 4$) achieves the largest average normalized output measurements observed by the mobile robot and smallest simple regret after $20$ observations at the cost of a larger number of explored nodes (see Table~\ref{table:robot}).
For example, the nonmyopic anytime $\epsilon$-Macro-GPO with $H = 4$ achieves $0.194 \sigma_{y}$ ($0.086 \sigma_{y}$) more average normalized output measurements and $0.345 \sigma_y$ ($0.239\sigma_y$) less simple regret than the myopic DB-GP-UCB (nonmyopic GP-UCB with the same horizon $H=4$ but assuming most likely observations during planning), which are expected.
\begin{table}[h]
	\caption{No. of explored nodes by anytime $\epsilon$-Macro-GPO (when $H=1$, it corresponds to DB-GP-UCB)  for the real-world temperature phenomenon over the Intel  Berkeley Research Lab.}
	\centering
	\begin{tabular}{cccc}
		\hline
		$H = 1$& $H = 2$ & $H = 3$ & $H = 4$\\
		\hline
		$7.51 \times 10$ & $8.88 \times 10^4$ & $1.13 \times 10^6$ & 	$1.12 \times 10^7$\\
		\hline
	\end{tabular}		
	\label{table:robot}
\end{table}

Figs.~\ref{fig:robot_total}c and~\ref{fig:robot_total}d show the effect of varying exploration weights 	
$\beta$ on the performance of anytime $\epsilon$-Macro-GPO with $H = 2$ and $H = 3$, respectively. It can be observed from Fig.~\ref{fig:robot_total}c that 
when $H = 2$, anytime $\epsilon$-Macro-GPO with $\beta=1$ achieves $0.092 \sigma_{y}$ more average normalized output measurements than that with $\beta = 0$ after $20$ observations,
which indicates the need of a slightly stronger exploration behavior. 
Fig.~\ref{fig:robot_total}d shows that by increasing to a lookahead of $15$ observations (i.e., $H = 3$), anytime $\epsilon$-Macro-GPO no longer needs the additional weighted exploration term in~\eqref{eq:reward-def} (i.e., $\beta=0$) since it can naturally trade off between exploration vs. exploitation, as explained previously (Section~\ref{main-section}).	
It can also be observed from Figs.~\ref{fig:robot_total}c and~\ref{fig:robot_total}d that $\beta\geq 3$ hurts its performance due to overly aggressive exploration.	

Lastly, we investigate the effect of downsampling the number of available macro-actions per input location to $20$ on the performance of anytime $\epsilon$-Macro-GPO. Similar to that for the real-world traffic phenomenon, 	
the performances of anytime $\epsilon$-Macro-GPO with $H=2,4$ and $20$ randomly selected macro-actions per input location are compared with that of anytime $\epsilon$-Macro-GPO with $H=2$ and all available macro-actions as well as myopic EI~\cite{shahriari16} with all available macro-actions of length $1$.	
It can be observed from Figs.~\ref{fig:robot_h2}a and~\ref{fig:robot_h2}b that when $H=2$, downsampling the number of available macro-actions per input location to $20$ decreases average normalized output measurements by $0.106 \sigma_{y}$ and increases simple regret by $0.064 \sigma_y$ after $20$ observations, 	
but also reduces the number of explored nodes (see Table~\ref{table:robot_h2}). 
By increasing to a lookahead of $20$ observations, anytime $\epsilon$-Macro-GPO with $H=4$ and $20$ randomly selected macro-actions per input location achieves  average normalized output measurements comparable to that with $H=2$ and all available macro-actions, but $0.136 \sigma_y$ less simple regret at the cost of a larger number of explored nodes.
Though EI can access all available macro-actions of length $1$ (i.e, no restriction on action space of the mobile robot), it obtains much less average normalized output measurements and considerably more simple regret than anytime $\epsilon$-Macro-GPO with $H=4$ and $20$ randomly selected macro-actions per input location due to its myopia.	
\begin{figure}[t]
	\centering
	{\begin{tabular}{cc}
			\includegraphics[height=5.7cm]{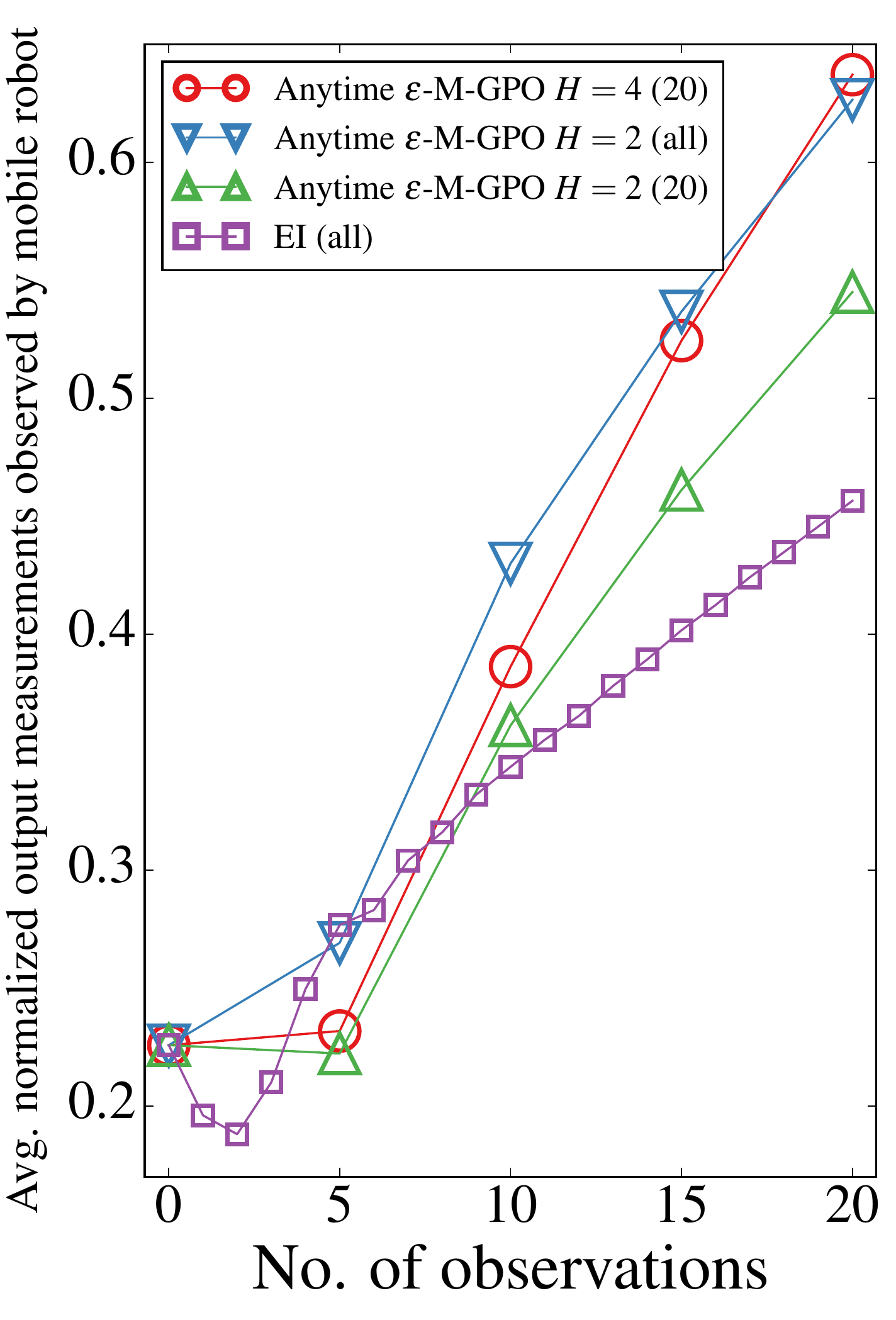} &
			\includegraphics[height=5.7cm]{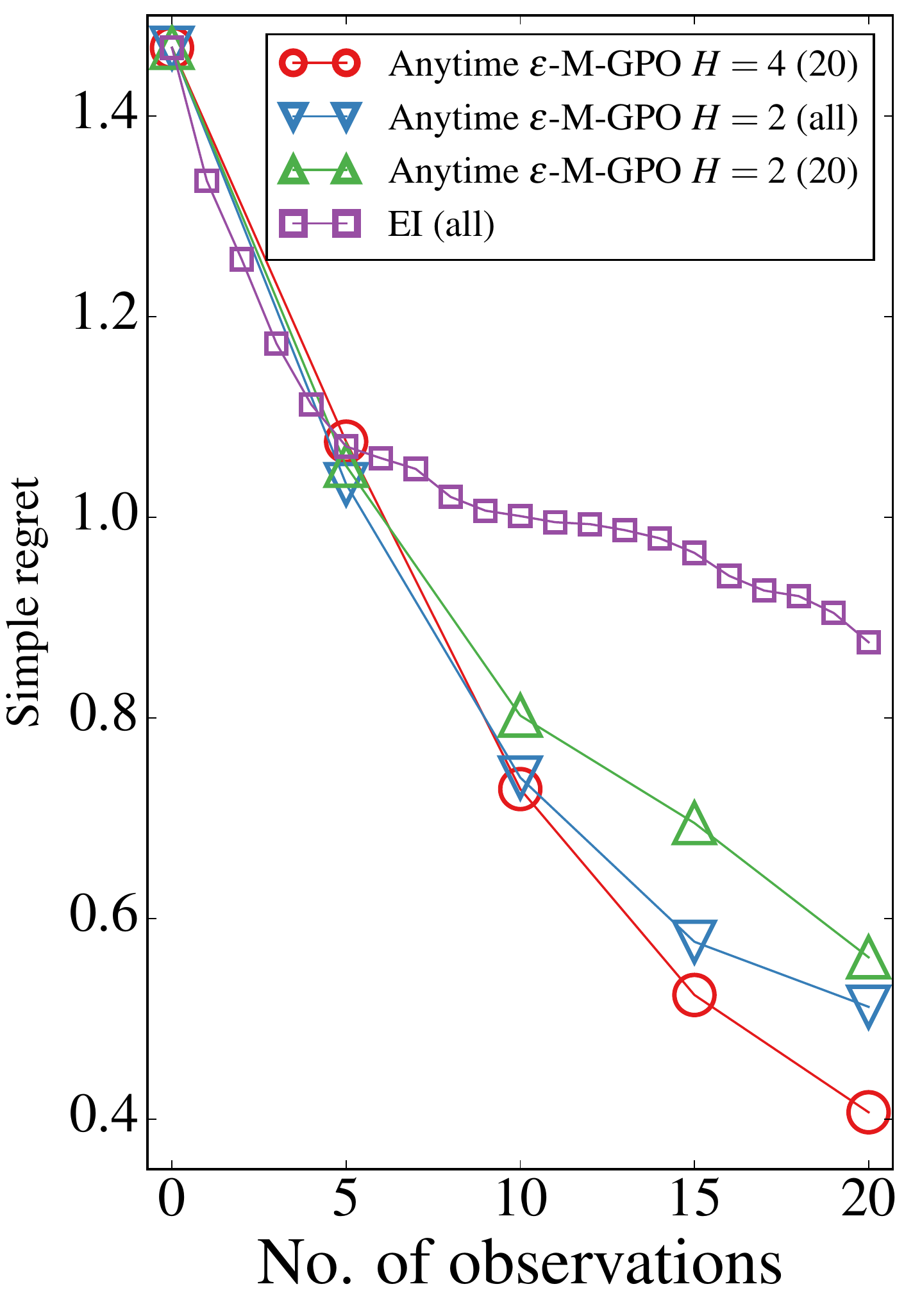}\\
			{(a)} & {(b)} 
	\end{tabular}}
	\caption{Graphs of (a) average normalized\cref{footnote-rewards} output measurements observed by the mobile robot and (b) simple regrets achieved by \emph{anytime} $\epsilon$-Macro-GPO with $H=2, 4$ and $20$ randomly selected macro-actions per input region, anytime $\epsilon$-Macro-GPO with $H=2$ and all available macro-actions (the no. of available macro-actions per input region is enclosed in brackets), and EI with all available macro-actions of length $1$ vs. no. of observations for the real-world temperature phenomenon over the Intel  Berkeley Research Lab. The standard errors are given in Table~\ref{table:robot-var-h2} in Appendix~\ref{sec:exp-robot}.}
	\label{fig:robot_h2}
\end{figure}
\begin{table}[h]
	\caption{No. of explored nodes by anytime $\epsilon$-Macro-GPO (the no. of available macro-actions per input region is enclosed in brackets)  for the real-world temperature phenomenon over the Intel  Berkeley Research Lab.}
	\centering
	\begin{tabular}{cccc}
		\hline
		$H = 2\ (20)$ & $H = 2\ (\text{all})$ & $H = 4\ (20)$\\
		\hline
		$8.88 \times 10^4$ & $2.49 \times 10^5$ & $1.12 \times 10^7$\\
		\hline
	\end{tabular}		
	\label{table:robot_h2}
\end{table}

\subsection{Comparison with Rollout~\cite{lam16}}
Our proposed algorithms are not benchmarked against Rollout~\cite{lam16} because Rollout~\cite{lam16} is not designed to handle macro-actions that are inherent to the structure of the task environments/applications considered in our work and experiments.
So, such a comparison would not be fair.
For a fair comparison with Rollout~\cite{lam16}, we set the macro-action length to $\kappa=1$ (i.e., primitive action) for our $\epsilon$-Macro-GPO and evaluate their performances using the metrics of average normalized output measurements observed by the agent and simple regret, and the synthetic dataset featuring the simulated plankton density phenomena in Section~\ref{expt}. 

Figs.~\ref{fig:rollout}a and~\ref{fig:rollout}b show results of the performances of  $\epsilon$-Macro-GPO ($H =4$, $\beta = 0$, and $N=20$) and the best-performing Rollout ($H=4$, $\gamma=1.0$, base policy: greedy EI-based policy defined in equations $22$ and $23$ in~\cite{lam16}) reported on page $7$ in~\cite{lam16}  averaged over $107$  independent realizations of the simulated phenomena.  
It can be observed that $\epsilon$-Macro-GPO achieves $0.143 \sigma_y$ more average normalized output measurement and $0.173 \sigma_y$ less simple regret than Rollout~\cite{lam16}.
To explain this, $\epsilon$-Macro-GPO considers all available actions from each input location during planning (equations~\ref{approx-policy},~\ref{ml-policy}, and~\ref{eq_4_8}) while Rollout utilizes only the action selected by the base policy (e.g., greedy EI) and ignores all the other available actions during planning, thus resulting in its suboptimal behavior.
\begin{figure}[h]
	\centering
	{\begin{tabular}{cc}
			\hspace{-1mm}\includegraphics[width=0.23 \textwidth]{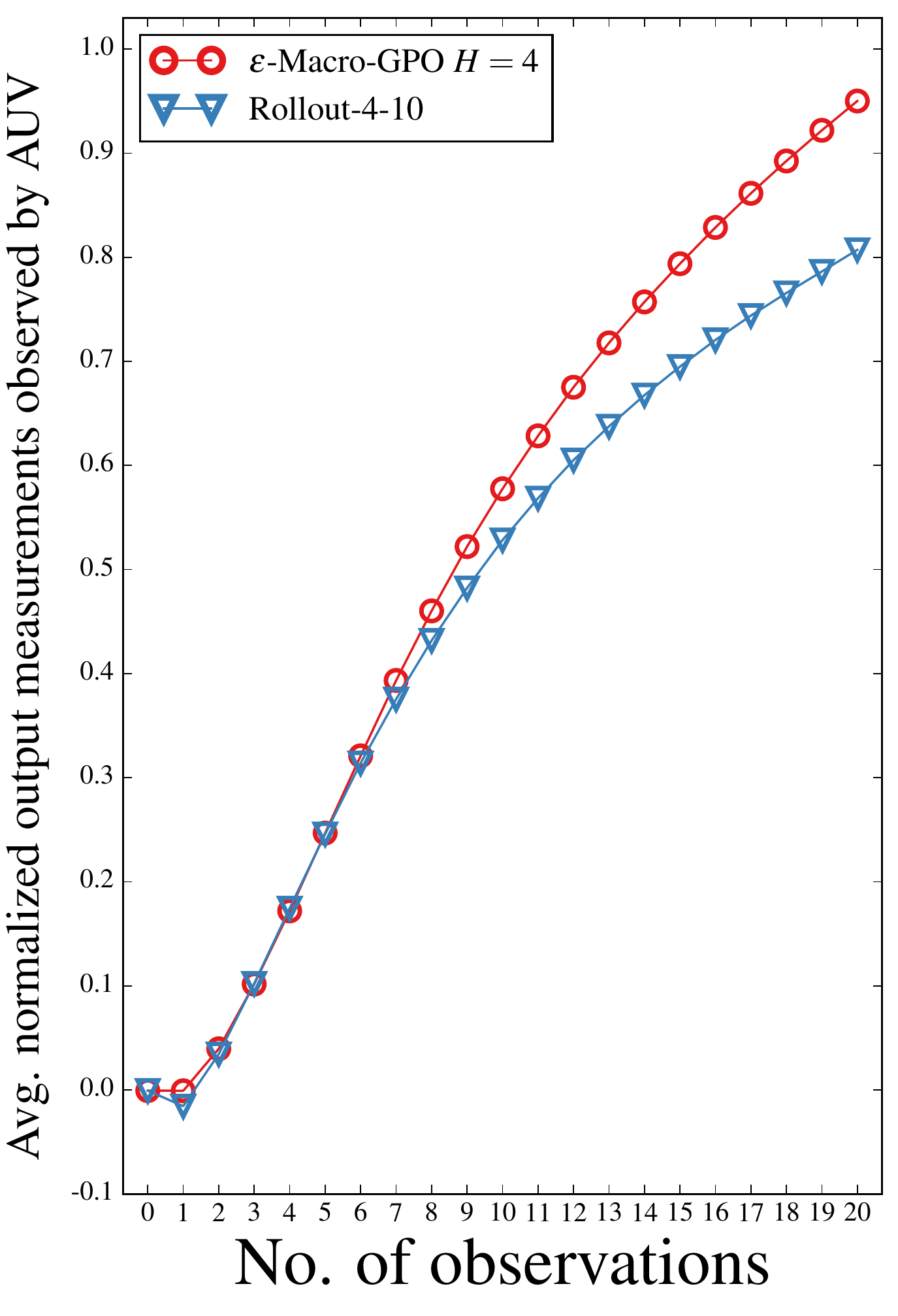}\hspace{-1mm} &
			\includegraphics[width=0.23 \textwidth]{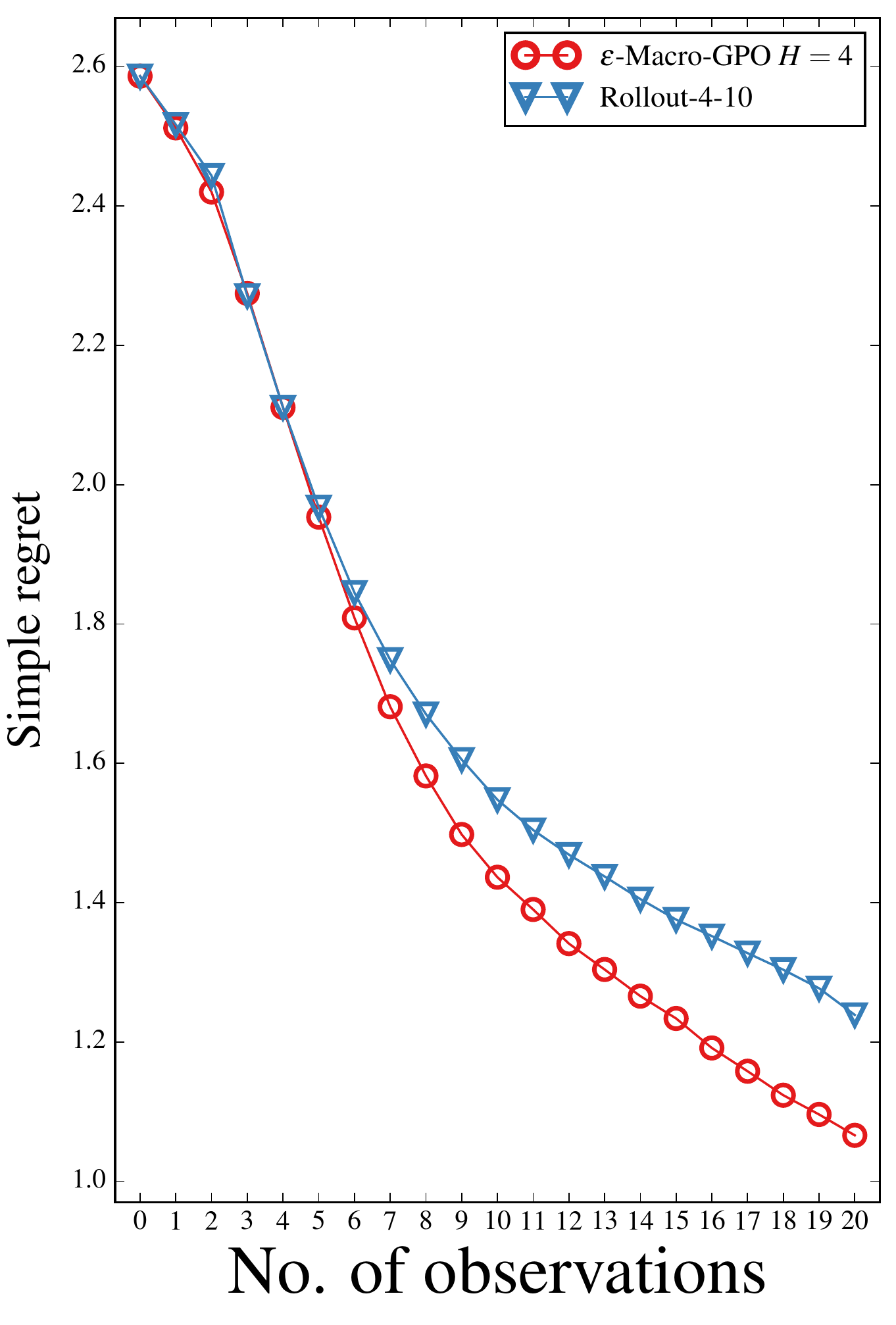}\\
			\hspace{-1mm}{\small (a)}\hspace{-1mm} & {\small (b)}\hspace{-1mm} \\
	\end{tabular}}
	\caption{Graphs of  (a) average normalized\cref{footnote-rewards} output measurements observed by AUV and (b) simple regrets achieved by $\epsilon$-Macro-GPO with $H=4$  and Rollout-$4$-$10$ vs. no. of observations for simulated plankton density phenomena. Standard errors are given in Table~\ref{table:rollout-var}  in Appendix~\ref{sec:rollout-add}.}
	\label{fig:rollout}
\end{figure}
\section{Conclusion}
This paper describes $\epsilon$-Macro-GPO and its anytime variant for nonmyopic adaptive BO that have been empirically shown to scale up to a lookahead of $20$ observations by exploiting macro-actions and consequently achieve superior BO performance.
Different from the asymptotic no-regret performance\cref{bravo} typical of GP-UCB and its variants, we theoretically guarantee the \emph{expected} performance loss of $\epsilon$-Macro-GPO and its anytime variant that can be specified to be arbitrarily small given a \emph{limited} budget.
Though this requires a polynomial number of stochastic samples in the macro-action length $\kappa$ in each planning stage (Theorem~\ref{th:expected}),
our experiments reveal that a relatively small sample size ($N$=$100$-$300$) is needed for $\epsilon$-Macro-GPO and its anytime variant to outperform state-of-the-art BO algorithms.
Though a sufficiently large exploration weight $\beta$ is usually needed to guarantee asymptotic no-regret performance\cref{bravo} for GP-UCB and its variants, we have observed in our experiments that their performances are highly sensitive to the chosen value of $\beta$ given a finite/limited budget and can be greatly hurt  by an often unknowingly ``large'' value of $\beta$ due to excessive exploration.
To sidestep this, $\epsilon$-Macro-GPO can eliminate the need of 
the additional weighted exploration term 
(i.e., $\beta= 0$) by utilizing a further lookahead, that is, if computational resources permit or are more affordable than the  cost of function evaluations.
\section*{Acknowledgment}
This research is supported by the Singapore Ministry of Education Academic Research Fund Tier $2$, MOE$2016$-T$2$-$2$-$156$.

%

%
%

\ifCLASSOPTIONcaptionsoff
  \newpage
\fi



%
\bibliographystyle{IEEEtran}
\bibliography{draft}

\clearpage

\appendices
\onecolumn

\section{Details on the Implementations of Batch BO Algorithms}
\label{used-algs}
See Table~\ref{table:used-algs}.
\begin{table}[h]
	\caption{Details on the available implementations of the batch BO algorithms for comparison with $\epsilon$-Macro-GPO in our experiments.}
	\centering
	\begin{scriptsize}
		\begin{tabular}{ | c | c |c |}
			\hline
			BO Algorithm & Language &  URL of Source Code \\
			\hline
			GP-BUCB & MATLAB & \url{http://www.gatsby.ucl.ac.uk/~tdesautels/} \\
			GP-UCB-PE & MATLAB  & \url{http://econtal.perso.math.cnrs.fr/software/} \\
			$q$-EI & Python & \url{https://github.com/oxfordcontrol/Bayesian-Optimization} \\
			BBO-LP & Python & \url{http://sheffieldml.github.io/GPyOpt/} \\
			\hline   
		\end{tabular}
	\end{scriptsize}
	\label{table:used-algs}	
\end{table}
\section{Additional Experimental Results for Simulated Plankton Density Phenomena}
\label{plankton}	
See Table~\ref{table:simulated-var-methods} and Table~\ref{table::simulated-var-beta}.
\begin{table}[h]
	\caption{Average normalized\cref{footnote-rewards} output measurements observed by the AUV and simple regrets achieved by the tested BO algorithms after $20$ observations.}
	\centering
	\begin{tabular}{ | c | c |c|}
		\hline
		BO Algorithm & Average normalized output measurements  &  Simple regret \\
		\hline
		$\epsilon$-Macro-GPO  $H = 4$ & $0.6310 \pm 0.0458$  & $1.2500 \pm 0.0541$ \\
		$\epsilon$-Macro-GPO  $H = 3$ & $0.5809 \pm 0.0486$ &  $1.3303 \pm 0.0542$\\
		$\epsilon$-Macro-GPO  $H = 2$ & $0.5446 \pm 0.0464$ & $1.3651 \pm 0.0550$ \\
		DB-GP-UCB & $0.5379 \pm 0.0462$ & $1.4612 \pm 0.0572$ \\
		Nonmyopic GP-UCB $H = 4$ & $0.5719 \pm 0.0467$ & $1.3984 \pm 0.0537$ \\
		GP-UCB-PE & $0.3635 \pm 0.0467$ & $1.4079 \pm 0.0568$ \\
		GP-BUCB & $0.3396 \pm 0.0486$ & $1.3717 \pm 0.0573$ \\
		$q$-EI & $0.2595 \pm 0.0444$ & $1.5104 \pm 0.0544$ \\
		BBO-LP & $0.3868 \pm 0.0444$ & $1.3666 \pm 0.0547$ \\
		\hline   
	\end{tabular}
	\label{table:simulated-var-methods}	
\end{table}

\begin{table}[h]
	\caption{Average normalized\cref{footnote-rewards} output measurements achieved by $\epsilon$-Macro-GPO with   $H=2$  and  $H = 3$ after $20$ observations.}
	\centering
	\begin{tabular}{ | c | c |c|}
		\hline
		Value of $\beta$ & $H=2$ & $H=3$ \\
		\hline
		$\beta = 0.0$  & $0.5563 \pm 0.0446$ & $0.5935 \pm 0.0461$\\
		$\beta = 0.1$  & $0.6207 \pm 0.0458$ &  $0.5842 \pm 0.0438$\\
		$\beta = 0.3$ &  $0.5357 \pm 0.0459$ & $0.5240 \pm 0.0446$\\
		$\beta =0.6$ &  $ 0.4226 \pm 0.0471$ & $0.5016 \pm 0.0470$\\
		$\beta =1.0$ &  $ 0.3746 \pm 0.0460$ & $0.4052 \pm 0.0489$\\
		$\beta = 2.0$ &  $ 0.2843 \pm 0.0478$ & $0.3566 \pm 0.0491$\\
		$\beta = 4.0$ &  $ 0.1919 \pm 0.0498$ & $0.2026 \pm 0.0441$\\
		$\beta = 10.0$ &  $ 0.0402 \pm 0.0468$ & $0.0569 \pm 0.0453$\\
		\hline   
	\end{tabular}
	\label{table::simulated-var-beta}	
\end{table}

\section{Additional Experimental Results for Real-World Traffic Phenomenon (i.e., Mobility Demand Pattern) over the Central Business District of an Urban City}
\label{sec:road-add}
See Tables~\ref{table:road-var-methods}, ~\ref{table::road-var-beta} and ~\ref{table:road-var-h2}.

\begin{table}[h]
	\caption{Average normalized\cref{footnote-rewards} output measurements observed by the AV and simple regrets achieved by the tested BO algorithms after $20$ observations for the real-world traffic phenomenon (i.e., mobility demand pattern).}
	\centering
	\begin{tabular}{ | c | c |c|}
		\hline
		BO Algorithm & Average normalized output measurements  &  Simple regret \\
		\hline
		Anytime $\epsilon$-Macro-GPO  $H = 4$ & $0.2700 \pm 0.1014$ & $1.5423 \pm 0.1047$ \\
		Anytime $\epsilon$-Macro-GPO  $H = 3$ & $0.2574 \pm 0.1019$ & $1.5843 \pm 0.0994$ \\
		Anytime $\epsilon$-Macro-GPO  $H = 2$ & $0.2357 \pm 0.1109$ & $1.7396 \pm 0.1179$ \\
		DB-GP-UCB & $0.2108 \pm 0.1081$ & $1.7050 \pm 0.1212$ \\
		Nonmyopic GP-UCB $H = 4$ & $0.2267 \pm 0.1134$ & $1.7314 \pm 0.1158$ \\
		GP-UCB-PE & $0.0770 \pm 0.0808$ & $1.5203 \pm 0.1247$ \\
		GP-BUCB & $0.0884 \pm 0.0819$ & $1.5177 \pm 0.1262$ \\
		$q$-EI & $0.0007 \pm 0.0945$ & $1.7945 \pm 0.1515$ \\
		BBO-LP & $-0.0077 \pm 0.0957$ & $1.7320 \pm 0.1149$ \\
		\hline   
	\end{tabular}
	\label{table:road-var-methods}	
\end{table}

\begin{table}[h]
	\caption{Average normalized\cref{footnote-rewards} output measurements achieved by  \emph{anytime} $\epsilon$-Macro-GPO with   $H=2, 3$ and varying exploration weights $\beta$ after $20$ observations for the real-world traffic phenomenon (i.e., mobility demand pattern).}
	\centering
	\begin{tabular}{ | c | c |c|}
		\hline
		Value of $\beta$ & $H=2$ & $H=3$ \\
		\hline
		$\beta = 0.0$ & $0.2357 \pm 0.1109$ & $0.2574 \pm 0.1019$ \\
		$\beta = 0.2$ & $0.2550 \pm 0.1032$ & $0.2069 \pm 0.0987$ \\
		$\beta = 0.5$ & $0.1364 \pm 0.0967$ & $0.1174 \pm 0.0893$ \\
		$\beta = 1.0$ & $0.1429 \pm 0.0967$ & $0.0911 \pm 0.0772$ \\
		$\beta = 2.0$ & $0.1174 \pm 0.0843$ & $0.0330 \pm 0.0755$ \\
		$\beta = 4.0$ & $0.0957 \pm 0.0841$ & $0.0403 \pm 0.0765$ \\
		$\beta = 10.0$ & $0.0944 \pm 0.0768$ & $-0.0046 \pm 0.0756$ \\
		\hline   
	\end{tabular}
	\label{table::road-var-beta}	
\end{table}

\begin{table}[h]
	\caption{Average normalized output measurements observed by the AV and simple regrets achieved by \emph{anytime} $\epsilon$-Macro-GPO with $H=2, 4$ and $20$ randomly selected macro-actions per input region, anytime $\epsilon$-Macro-GPO with $H=2$ and all available macro-actions (the no. of available macro-actions per input region is enclosed in brackets), and EI with all available macro-actions of length $1$ after $20$ observations for the real-world traffic phenomenon (i.e., mobility demand pattern).}
	\centering
	\begin{tabular}{ | c | c |c|}
		\hline
		BO Algorithm & Average normalized output measurements  &  Simple regret \\
		\hline
		Anytime $\epsilon$-Macro-GPO  $H = 4$  ($20$) & $0.2700 \pm 0.1014$ & $1.5423 \pm 0.1047$ \\
		Anytime $\epsilon$-Macro-GPO  $H = 2$ (all) & $0.2631 \pm 0.0918$ & $1.6427 \pm 0.0792$ \\
		Anytime $\epsilon$-Macro-GPO  $H = 2$  ($20$) & $0.2357 \pm 0.1109$ & $1.7396 \pm 0.1179$ \\
		EI (all) & $0.1469 \pm 0.1084$ & $1.6094 \pm 0.0946$ \\
		\hline   
	\end{tabular}
	\label{table:road-var-h2}	
\end{table}


\section{Additional Experimental Results for Real-World Temperature Phenomenon over an Office Environment}
\label{sec:exp-robot}
See Table~\ref{table:robot-var-methods}, Table~\ref{table::robot-var-beta} and Table~\ref{table:robot-var-h2}.

\begin{table}[h]
	\caption{Average normalized\cref{footnote-rewards} output measurements observed by the mobile robot and simple regrets achieved by the tested BO algorithms after $20$ observations for the real-world temperature phenomenon over the Intel Berkeley Research Lab.}
	\centering
	\begin{tabular}{ | c | c |c|}
		\hline
		BO Algorithm & Average normalized output measurements  &  Simple regret \\
		\hline
		Anytime $\epsilon$-Macro-GPO  $H = 4$ & $0.6371 \pm 0.0797$ & $0.4069 \pm 0.0723$ \\
		Anytime $\epsilon$-Macro-GPO  $H = 3$ & $0.6137 \pm 0.0829$ & $0.4285 \pm 0.0678$ \\
		Anytime $\epsilon$-Macro-GPO  $H = 2$ & $0.5450 \pm 0.0951$ & $0.5613 \pm 0.0834$ \\
		DB-GP-UCB & $0.4874 \pm 0.1017$ & $0.6734 \pm 0.0934$ \\
		Nonmyopic GP-UCB $H = 4$ & $0.5708 \pm 0.0908$ & $0.5911 \pm 0.0886$ \\
		GP-UCB-PE & $0.1377 \pm 0.0734$ & $0.6700 \pm 0.0758$ \\
		GP-BUCB & $0.2067 \pm 0.0758$ & $0.6670 \pm 0.0762$ \\
		$q$-EI & $0.3801 \pm 0.1044$ & $0.6868 \pm 0.1116$ \\
		BBO-LP & $0.2549 \pm 0.0833$ & $0.5168 \pm 0.0733$ \\	
		\hline   
	\end{tabular}
	\label{table:robot-var-methods}	
\end{table}
\begin{table}[h]
	\caption{Average normalized\cref{footnote-rewards} output measurements achieved by $\epsilon$-Macro-GPO with   $H=2, 3$ and varying exploration weights $\beta$ after $20$ observations for the real-world temperature phenomenon over the Intel Berkeley Research Lab.}
	\centering
	\begin{tabular}{ | c | c |c|}
		\hline
		Value of $\beta$ & $H=2$ & $H=3$ \\
		\hline
		$\beta=0.0$ & $0.5450 \pm 0.0951$ & $0.6137 \pm 0.0829$ \\
		$\beta=1.0$ & $0.6160 \pm 0.0820$ & $0.6047 \pm 0.0764$ \\
		$\beta=2.0$ & $0.5565 \pm 0.0765$ & $0.5787 \pm 0.0786$ \\
		$\beta=3.0$ & $0.3755 \pm 0.0670$ & $0.4468 \pm 0.0645$ \\
		$\beta=4.0$ & $0.1859 \pm 0.0608$ & $0.2294 \pm 0.0472$ \\
		\hline   
	\end{tabular}
	\label{table::robot-var-beta}	
\end{table}

\begin{table}[h]
	\caption{Average normalized output measurements observed by the mobile robot and simple regrets achieved by \emph{anytime} $\epsilon$-Macro-GPO with $H=2, 4$ and $20$ randomly selected macro-actions per input region, anytime $\epsilon$-Macro-GPO with $H=2$ and all available macro-actions (the no. of available macro-actions per input region is enclosed in brackets), and EI with all available macro-actions of length $1$ after $20$ observations for the real-world temperature phenomenon over the Intel Berkeley Research Lab.}
	\centering
	\begin{tabular}{ | c | c |c|}
		\hline
		BO Algorithm & Average normalized output measurements  &  Simple regret \\
		\hline
		Anytime $\epsilon$-Macro-GPO  $H = 4$  ($20$) & $0.6371 \pm 0.0797$ & $0.4069 \pm 0.0723$ \\
		Anytime $\epsilon$-Macro-GPO  $H = 2$ (all) & $0.6265 \pm 0.0861$ & $0.5119 \pm 0.0807$ \\
		Anytime $\epsilon$-Macro-GPO  $H = 2$  ($20$) & $0.5450 \pm 0.0951$ & $0.5613 \pm 0.0834$ \\
		EI (all) & $0.4565 \pm 0.1051$ & $0.8754 \pm 0.0941$ \\
		\hline   
	\end{tabular}
	\label{table:robot-var-h2}	
\end{table}

\section{Additional Experimental Results for Comparison with Rollout~\cite{lam16}}

\label{sec:rollout-add}
See Table~\ref{table:rollout-var}.

\begin{table}[h]
	\caption{Average normalized\cref{footnote-rewards} output measurements observed by AUV and (b) simple regrets achieved by $\epsilon$-Macro-GPO with $H=4$  and Rollout-$4$-$10$ vs. no. of observations for simulated plankton density phenomena.}
	\centering
	\begin{tabular}{ | c | c |c|}
		\hline
		BO Algorithm & Average normalized output measurements  &  Simple regret \\
		\hline
		$\epsilon$-Macro-GPO  $H = 4$  & $0.9501 \pm 0.0659$ & $1.066 \pm 0.0783$ \\
		Rollout-$4$-$10$ & $0.8071 \pm 0.0637$ & $1.2389 \pm 0.0808$ \\
		\hline   
	\end{tabular}
	\label{table:rollout-var}	
\end{table}

\section{Derivation of~\eqref{eq:general-policy}}
\label{general-policy-proof}
The second summand on RHS of~\eqref{eq:policy_value} can be re-written as
\begin{equation}
\label{eq:app-1}
\mathbb{I}   [y_{\Sdom} ; \zHist{H}| d_0, \pi] = \sum_{t = 1}^H {\mathbb{I}[y_{\Sdom} ; \zNew{t}|\langle \sHist{t-1},\zNew{0}\oplus \zHist{t - 1} \rangle, \pi]} = 0.5 \sum_{t = 0}^{H-1} {   \log | I + \sigma_n^{-2}  \Sigma_{\sNew{t+1} | \sHist{t},\pi} |}\ .
\end{equation}
The first equality is due to the chain rule for mutual information~\cite{cover-elements}. Let $\sNew{t-1} \triangleq (s_{t-1, 1},\ldots,s_{t-1, \kappa})$.
The last equality follows from
\begin{equation}
\begin{array}{l}
\displaystyle\mathbb{I}[y_{\Sdom} ; \zNew{t}|\langle \sHist{t-1},\zNew{0}\oplus \zHist{t - 1} \rangle, \pi] \\
\displaystyle= \mathbb{H}[\zNew{t}| \langle \sHist{t-1},\zNew{0}\oplus \zHist{t - 1} \rangle, \pi] - \mathbb{H}[\zNew{t}| \langle \sHist{t-1},\zNew{0}\oplus \zHist{t - 1} \rangle, y_{\Sdom}, \pi]  \\
= \mathbb{H}[\zNew{t}| \langle \sHist{t-1},\zNew{0}\oplus \zHist{t - 1}  \rangle, \pi] - \mathbb{H}[\zNew{t}| (y_{s_{t-1, 1}},\ldots,y_{s_{t-1, \kappa}}), \pi]  \\	
= 0.5 \kappa \log(2\pi e) + 0.5\log |\sigma^2_n I+\Sigma_{\sNew{t} | \sHist{t-1},\pi}| - 0.5  \kappa \log(2\pi e) - 0.5\log |\sigma^2_n I|  \\
= 0.5 \log (|\sigma^2_n I+\Sigma_{\sNew{t} | \sHist{t-1},\pi} | \ |\sigma^2_n I|^{-1}) \\
= 0.5\log (|\sigma^2_n I+\Sigma_{\sNew{t} | \sHist{t-1},\pi}| |\sigma^{-2}_n I|)  \\
= 0.5\log |I+\sigma^{-2}_n  \Sigma_{\sNew{t} | \sHist{t-1},\pi}|
\end{array}
\end{equation}
where the first equality is due to the definition of conditional mutual information, the third equality is due to the definition of Gaussian entropy, that is, $\mathbb{H}[\zNew{t}| \langle \sHist{t-1},\zNew{0}\oplus \zHist{t - 1} \rangle, \pi]\triangleq 0.5  \kappa\log(2\pi e) + 0.5\log |\sigma^2_n I+\Sigma_{\sNew{t} | \sHist{t-1},\pi}|$ and $\mathbb{H}[\zNew{t}|  (y_{s_{t-1,1}},\ldots,y_{s_{t-1, \kappa }}), \pi] \triangleq 0.5  \kappa  \log(2\pi e) + 0.5\log |\sigma^2_n I|$, the latter of which follows from $\varepsilon = z_{t, i} - y_{s_{t, i}} \sim \mathcal{N}({0},\sigma^2_n)$ for stage  $t = 0,\ldots, H -1$ and $i = 1,\ldots, \kappa$, and hence $p(\zNew{t}| (y_{s_{t-, 1}},\ldots,y_{s_{t-1, \kappa}}), \pi) = \mathcal{N}({\bf 0},\sigma^2_n I)$. So,~\eqref{eq:policy_value} can be re-expressed as
\begin{equation}\label{arghh}
V^{\pi}_0(d_0)=\mathbb{E}_{ \zHist{H} | d_0, \pi} [ \mathbf{1}^{\top} \zHist{H} ] + 0.5\beta \sum_{t = 0}^{H-1} {   \log | I + \sigma_n^{-2}  \Sigma_{\sNew{t+1} | \sHist{t},\pi} |}\ .
\end{equation}
Given an arbitrary positive integer $H'$ and denoting  $\mathbf{z}_{\tau+1:H'}$ 
as a vector of realized 
output measurements from stage $\tau + 1$ to stage $H'$,~\eqref{arghh} for $H=1,\ldots,H'$
are, respectively, equivalent to
\begin{equation}
\label{whyme}
V^{\pi}_{\tau}(d_{\tau})=\mathbb{E}_{\mathbf{z}_{\tau+1:H'} | d_{\tau}, \pi} [ \mathbf{1}^{\top} \mathbf{z}_{\tau+1:H'} ] + 0.5\beta \sum_{t = \tau}^{H'-1} {   \log | I + \sigma_n^{-2}  \Sigma_{\sNew{t+1} | \sHist{t},\pi} |}
\end{equation}
for $\tau=H'-1,\ldots,0$ by simply adding $\tau$ to the indices denoting the planning stage in~\eqref{arghh}. From~\eqref{whyme},
$$
\hspace{-1.7mm}
\begin{array}{l}
V^{\pi}_{\tau}(d_{\tau})\\
=\displaystyle\mathbb{E}_{ \mathbf{z}_{\tau+1:H'} | d_{\tau}, \pi } [ \mathbf{1}^{\top} \mathbf{z}_{\tau+1:H'} ] + 0.5\beta \sum_{t = \tau}^{H'-1} {   \log | I + \sigma_n^{-2}  \Sigma_{\sNew{t+1} | \sHist{t},\pi} |}\\
=\displaystyle\int \mathbf{1}^{\top} \mathbf{z}_{\tau+1:H'}\ p(\mathbf{z}_{\tau+1:H'} | d_{\tau}, \pi) \ \mathrm{d} \mathbf{z}_{\tau+1:H'} + 0.5\beta \sum_{t = \tau}^{H'-1} {   \log | I + \sigma_n^{-2}  \Sigma_{\sNew{t+1} | \sHist{t},\pi} |}\\
=\displaystyle\int (\mathbf{1}^{\top} \mathbf{z}_{\tau+1}+\mathbf{1}^{\top} \mathbf{z}_{\tau+2:H'})\ p(\mathbf{z}_{\tau+2:H'} | d_{\tau+1}, \pi)\ \mathrm{d} \mathbf{z}_{\tau+2:H'} \ p(\mathbf{z}_{\tau+1} | d_{\tau}, \pi)\ \mathrm{d} \mathbf{z}_{\tau+1}\\
\quad \displaystyle+\ 0.5\beta \sum_{t = \tau}^{H'-1} {   \log | I + \sigma_n^{-2}  \Sigma_{\sNew{t+1} | \sHist{t},\pi} |}\\
=\displaystyle\int \mathbf{1}^{\top} \mathbf{z}_{\tau+1}\int p(\mathbf{z}_{\tau+2:H'} | d_{\tau+1}, \pi)\ \mathrm{d} \mathbf{z}_{\tau+2:H'} \ p(\mathbf{z}_{\tau+1} | d_{\tau}, \pi)\ \mathrm{d} \mathbf{z}_{\tau+1}\\
\quad \displaystyle+\ 0.5\beta\  {\log | I + \sigma_n^{-2}  \Sigma_{\sNew{\tau+1} | \sHist{\tau},\pi} |}\\
\quad \displaystyle+\int \mathbf{1}^{\top} \mathbf{z}_{\tau+2:H'}\ p(\mathbf{z}_{\tau+2:H'} | d_{\tau+1}, \pi)\ \mathrm{d} \mathbf{z}_{\tau+2:H'} \ p(\mathbf{z}_{\tau+1} | d_{\tau}, \pi)\ \mathrm{d} \mathbf{z}_{\tau+1}\\
\quad \displaystyle+\ 0.5\beta \sum_{t = \tau+1}^{H'-1} {   \log | I + \sigma_n^{-2}  \Sigma_{\sNew{t+1} | \sHist{t},\pi} |}\\
=\displaystyle\int \mathbf{1}^{\top} \mathbf{z}_{\tau+1}\ p(\mathbf{z}_{\tau+1} | d_{\tau}, \pi)\ \mathrm{d} \mathbf{z}_{\tau+1}+\ 0.5\beta\  {\log | I + \sigma_n^{-2}  \Sigma_{\sNew{\tau+1} | \sHist{\tau},\pi} |}\\
\quad \displaystyle+\int\hspace{-1mm}\int \mathbf{1}^{\top} \mathbf{z}_{\tau+2:H'}\ p(\mathbf{z}_{\tau+2:H'} | d_{\tau+1}, \pi)\ \mathrm{d} \mathbf{z}_{\tau+2:H'} + 0.5\beta \hspace{-1mm}\sum_{t = \tau+1}^{H'-1}\hspace{-1mm} {   \log | I + \sigma_n^{-2}  \Sigma_{\sNew{t+1} |\sHist{t},\pi} |} \ p(\mathbf{z}_{\tau+1} | d_{\tau}, \pi)\ \mathrm{d} \mathbf{z}_{\tau+1}\\
=\displaystyle\mathbf{1}^{\top} \mu_{\mathbf{s}_{\tau+1}|d_{\tau},\pi}+\ 0.5\beta\  {\log | I + \sigma_n^{-2}  \Sigma_{\sNew{\tau+1} | \sHist{\tau},\pi} |}\\
\quad \displaystyle+\int\mathbb{E}_{\mathbf{z}_{\tau+2:H'} | d_{\tau+1}, \pi} [ \mathbf{1}^{\top} \mathbf{z}_{\tau+2:H'} ]+ 0.5\beta \hspace{-1mm}\sum_{t = \tau+1}^{H'-1}\hspace{-1mm} {   \log | I + \sigma_n^{-2}  \Sigma_{\sNew{t+1} | \sHist{t},\pi} |} \ p(\mathbf{z}_{\tau+1} | d_{\tau}, \pi)\ \mathrm{d} \mathbf{z}_{\tau+1}\\
=\displaystyle\mathbf{1}^{\top} \mu_{\pi(d_{\tau})|d_{\tau}}+\ 0.5\beta\  {\log | I + \sigma_n^{-2}  \Sigma_{\pi(d_{\tau}) | \sHist{\tau}} |}+\int V^{\pi}_{\tau+1}(d_{\tau+1}) \ p(\mathbf{z}_{\tau+1} | d_{\tau}, \pi)\ \mathrm{d} \mathbf{z}_{\tau+1}\\
=\displaystyle R(\pi(d_{\tau}),d_{\tau}) + \mathbb{E}_{\zNew{t+1} |\pi(d_{\tau}),d_{\tau}}
[V^{\pi}_{\tau+1}(\langle\sHist{t}\oplus\pi(d_{\tau}), \zHist{t}\oplus \zNew{t+1} \rangle)]\\
=\displaystyle Q_{\tau}^\pi( \pi(d_{\tau}), d_{\tau})
\end{array}	
$$	
for stages $\tau=0,\ldots,H'-1$ where	 the third last equality is due to~\eqref{whyme} and the last two equalities follow from the definitions of $R$ and $Q_{\tau}^\pi$ in~\eqref{eq:reward-def} and~\eqref{eq:general-policy}, respectively.

\section{Proof of Lemma~\ref{lemma:reward}}
\label{lemma:reward-proof}

\begin{proof}
	$$
	\begin{array}{l}
	\displaystyle | R(\sNew{t+1}, d_t) - 	R(\sNew{t+1}, d'_{t}) | \\
	\displaystyle= | \mathbf{1}^{\top}( \mu_{\sNew{t+1}|d_{t}} -  \mu_{\sNew{t+1}|d'_{t}})| \\
	\displaystyle\le \lVert \mu_{\sNew{t+1}|d_{t}} -  \mu_{\sNew{t+1}|d'_{t}} \rVert_1 \\
	\displaystyle = \lVert \Sigma_{\sNew{t+1} \sHist{t}}    \Sigma^{-1}_{\sHist{t} \sHist{t}}    (\zHist{t} - \zHist{t}')^{\top} \rVert_1 \\
	\displaystyle\le \sqrt{\kappa}\  \lVert \Sigma_{\sNew{t+1} \sHist{t}} \Sigma^{-1}_{\sHist{t} \sHist{t}}    (\zHist{t} - \zHist{t}')^{\top} \rVert\\
	\displaystyle = \sqrt{\kappa}\  \lVert \Sigma_{\sNew{t+1} \sHist{t}} \Sigma^{-1}_{\sHist{t} \sHist{t}}    (\zHist{t} - \zHist{t}')^{\top} \rVert_F\\
	\displaystyle\le \sqrt{\kappa}\  \lVert \Sigma_{\sNew{t+1} \sHist{t}} \Sigma^{-1}_{\sHist{t} \sHist{t}}\rVert_F   \lVert \zHist{t} - \zHist{t}' \rVert_F\\
	\displaystyle = \sqrt{\kappa}\  \lVert \Sigma_{\sNew{t+1} \sHist{t}} \Sigma^{-1}_{\sHist{t} \sHist{t}}\rVert_F   \lVert \zHist{t} - \zHist{t}' \rVert\\				
	\displaystyle= \sqrt{\kappa}\ \alpha(\sHist{t+1}) \lVert\zHist{t} - \zHist{t}'\rVert\ .
	\end{array}
	$$	
	The first equality is due to \eqref{eq:reward-def}. The first inequality is due to triangle inequality. The second equality is due to \eqref{gp-posteriors}. The second inequality follows from a property of vector norms (see Section $2.2.2$ in~\cite{Golub96}). 
	The last inequality is due to the submultiplicativity of the Frobenius norm (see Section II.$2.1$ in~\cite{Stewart90}).
	The last equality follows from the definition of $\alpha(\sHist{t+1})$.
\end{proof}
\section{Lipschitz Continuity of $V^*_t(d_t)$~\eqref{eq:OptimalValFunDef}}
\label{lip-optimal-value}
\begin{definition}
	\label{definition-L}
	Let $L_H({\sHist{H}}) \triangleq 0$. Define
	\begin{equation*}
	L_t(\sHist{t}) \triangleq \max_{{\sNew{t+1} \in \Adom(\sNew{t})}} 
	\sqrt{\kappa}\ \alpha(\sHist{t+1}) + L_{t+1}(\sHist{t+1}) \sqrt{1 + \alpha(\sHist{t+1})^2}
	\end{equation*}
	for $t=0,\ldots, H-1$ where the function $\alpha$ is previously defined in Lemma~\ref{lemma:reward}.
\end{definition}
The following result shows that $V_t^*(d_t)$~\eqref{eq:OptimalValFunDef} is Lipschitz continuous in the realized output measurements $\zHist{t}$ with Lipschitz constant $L_t(\sHist{t})$:
\begin{theorem}
	\label{th:lip-optimal}
	For $t=0,\ldots, H$,
	\begin{equation}\label{whata}
	| V^{*}_t(d_t) - V^{*}_t(d'_t)| \le L_t(\sHist{t}) \lVert\zHist{t} - \mathbf{z}'_{0:t}\rVert
	\end{equation}
	where $d'_t$ is previously defined in Lemma~\ref{lemma:reward}.
\end{theorem}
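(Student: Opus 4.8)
The plan is to prove \eqref{whata} by backward induction on $t$, descending from $t=H$ to $t=0$, exploiting the parallel recursive structure of the Bellman equations \eqref{eq:OptimalValFunDef} and of $L_t$ in Definition~\ref{definition-L}. The base case $t=H$ is immediate: $V_H^*(d_H)\triangleq 0$ for every history, so both sides of \eqref{whata} vanish, and $L_H(\sHist{H})\triangleq 0$ makes the bound trivially hold. For the inductive step I assume \eqref{whata} at stage $t+1$ and establish it at stage $t$. First I would reduce the value-function gap to a per-macro-action $Q$-function gap via the standard inequality $|\max_a f(a)-\max_a g(a)|\le\max_a|f(a)-g(a)|$, giving $|V_t^*(d_t)-V_t^*(d_t')|\le \max_{\sNew{t+1}\in\Adom(\sNew{t})}|Q_t^*(\sNew{t+1},d_t)-Q_t^*(\sNew{t+1},d_t')|$. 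Fixing $\sNew{t+1}$, I split $Q_t^*$ through \eqref{eq:OptimalValFunDef} into its reward term $R(\sNew{t+1},\cdot)$ and its expectation term, bounding the two separately; the reward difference is controlled directly by Lemma~\ref{lemma:reward}, contributing $\sqrt{\kappa}\,\alpha(\sHist{t+1})\lVert\zHist{t}-\zHist{t}'\rVert$.

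The hard part is bounding the difference of the two expectation terms, since $p(\zNew{t+1}|\sNew{t+1},d_t)=\mathcal{N}(\mu_{\sNew{t+1}|d_t},\Sigma_{\sNew{t+1}|\sHist{t}})$ depends on the conditioning data through its mean, so the two expectations are over genuinely different Gaussians. Here I would invoke the structural fact noted after \eqref{gp-posteriors}: the posterior covariance $\Sigma_{\sNew{t+1}|\sHist{t}}$ is \emph{independent} of the realized measurements, and only the mean shifts with $\zHist{t}$. This licenses a coupling/change-of-variables argument: writing the two sampled outputs as $\mu_{\sNew{t+1}|d_t}+\xi$ and $\mu_{\sNew{t+1}|d_t'}+\xi$ with a \emph{shared} $\xi\sim\mathcal{N}(0,\Sigma_{\sNew{t+1}|\sHist{t}})$, both expectations become expectations over the same $\xi$, and their difference equals $\mathbb{E}_\xi[V_{t+1}^*(\langle\sHist{t+1},\zHist{t}\oplus(\mu_{\sNew{t+1}|d_t}+\xi)\rangle)-V_{t+1}^*(\langle\sHist{t+1},\zHist{t}'\oplus(\mu_{\sNew{t+1}|d_t'}+\xi)\rangle)]$.

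I then apply the induction hypothesis pointwise in $\xi$, noting that both arguments carry the \emph{same} input locations $\sHist{t+1}=\sHist{t}\oplus\sNew{t+1}$, so the hypothesis is applicable and its Lipschitz constant $L_{t+1}(\sHist{t+1})$ is common to both. Crucially, $\xi$ appears identically in both output blocks and hence cancels inside the resulting norm, leaving the $\xi$-free quantity $\lVert(\zHist{t}-\zHist{t}')\oplus(\mu_{\sNew{t+1}|d_t}-\mu_{\sNew{t+1}|d_t'})\rVert$; after bounding $|\mathbb{E}_\xi[\cdot]|\le\mathbb{E}_\xi[|\cdot|]$ this constant pulls straight out of the expectation.

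To finish, I would bound that concatenated norm. Using $\lVert a\oplus b\rVert^2=\lVert a\rVert^2+\lVert b\rVert^2$ together with the mean computation reused from Lemma~\ref{lemma:reward}, namely $\mu_{\sNew{t+1}|d_t}-\mu_{\sNew{t+1}|d_t'}=\Sigma_{\sNew{t+1}\sHist{t}}\Sigma_{\sHist{t}\sHist{t}}^{-1}(\zHist{t}-\zHist{t}')^\top$ so that $\lVert\mu_{\sNew{t+1}|d_t}-\mu_{\sNew{t+1}|d_t'}\rVert\le\alpha(\sHist{t+1})\lVert\zHist{t}-\zHist{t}'\rVert$ by submultiplicativity of the Frobenius norm, I obtain the factor $\sqrt{1+\alpha(\sHist{t+1})^2}\,\lVert\zHist{t}-\zHist{t}'\rVert$ for the expectation term. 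Adding the reward and expectation bounds gives $|Q_t^*(\sNew{t+1},d_t)-Q_t^*(\sNew{t+1},d_t')|\le\bigl(\sqrt{\kappa}\,\alpha(\sHist{t+1})+L_{t+1}(\sHist{t+1})\sqrt{1+\alpha(\sHist{t+1})^2}\bigr)\lVert\zHist{t}-\zHist{t}'\rVert$, and maximizing over $\sNew{t+1}\in\Adom(\sNew{t})$ reproduces exactly the recursion defining $L_t(\sHist{t})$ in Definition~\ref{definition-L}, closing the induction.
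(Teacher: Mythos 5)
Your proposal is correct and follows essentially the same route as the paper's proof: backward induction with the base case $V_H^*\equiv 0$, splitting $Q_t^*$ into the reward term (bounded via Lemma~\ref{lemma:reward}) and the expectation term, coupling the two posterior Gaussians through their shared covariance $\Sigma_{\sNew{t+1}|\sHist{t}}$ (your shared-$\xi$ formulation is exactly the paper's change of variable $\zNew{t+1}'=\zNew{t+1}-\Delta_{t+1}$), bounding $\lVert\Delta_{t+1}\rVert\le\alpha(\sHist{t+1})\lVert\zHist{t}-\zHist{t}'\rVert$, and closing the recursion against Definition~\ref{definition-L}. The only cosmetic difference is that you reduce to a per-action gap via $|\max_a f-\max_a g|\le\max_a|f-g|$, while the paper fixes $\sNew{t+1}^*=\pi^*(d_t)$ after a WLOG ordering; both are equivalent here since $d_t$ and $d_t'$ share the same action set.
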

\begin{proof}
	We give a proof by induction on $t$.
	When $t = H$ (i.e., base case), $V^*_H(d_H) = 0$ for any $d_H$. So, $| V^{*}_H(d_H) - V^{*}_H(d'_H)| = 0 \le L_H(\sHist{H}) \lVert\zHist{H} - \mathbf{z}'_{0:H}\rVert$.
	Supposing~\eqref{whata} holds for $t+1$ (i.e., induction hypothesis), we will prove that it holds for $t=0,\ldots,H-1$.
	Let $\sNew{t+1}^* \triangleq \pi^*(d_t)$ and $\Delta_{t+1} \triangleq \mu_{\sNew{t+1}^*
		| d_t } - \mu_{\sNew{t+1}^*| d'_t }$. Using~\eqref{gp-posteriors}, the submultiplicativity of the Frobenius norm (see Section II.$2.1$ in~\cite{Stewart90}), and the definition of $\alpha(\sHist{t+1})$,
	\begin{equation}
	\label{eq:app_delta}
	\lVert\Delta_{t+1}\rVert \le \alpha(\sHist{t} \oplus \sNew{t+1}^*) \lVert\zHist{t} - \zHist{t}' \rVert\ .
	\end{equation}
	Without loss of generality, assume that $V^*_t (d_t)  \ge  V^*_t (d'_t)$. From \eqref{eq:OptimalValFunDef},
	\begin{equation} 
	\hspace{-1.7mm}
	\label{eq:ap-2}
	\begin{array}{l}
	\displaystyle V^*_t (d_t)  -  V^*_t (d'_t)\\
	\displaystyle\le   Q^*_t (\sNew{t+1}^*, d_t) - Q^*_t (\sNew{t+1}^*, d'_t) \\
	\displaystyle\le | Q^*_t(\sNew{t+1}^*, d_t) - Q^*_t(\sNew{t+1}^*, d'_t) | \\
	\displaystyle\le \left| R(\sNew{t+1}^*, d_t) - R(\sNew{t+1}^*, d'_{t}) \right| +
	\Bigg| \int {p(\zNew{t+1} | \sNew{t+1}^*, d_t)\ V^*_{t+1}(
		\langle\sHist{t} \oplus \sNew{t+1}^*, \zHist{t+1} \rangle
		) }\ \text{d}\zNew{t+1}\\
	\quad\displaystyle - \int {p(\zNew{t+1}' | \sNew{t+1}^*, d'_t)\ V^*_{t+1}(\langle\sHist{t} \oplus \sNew{t+1}^*, \zHist{t+1}' \rangle		
		) }\ \text{d}\zNew{t+1}' \Bigg| \\
	\displaystyle\le 
	\sqrt{\kappa}\ \alpha(\sHist{t} \oplus \sNew{t+1}^*) \lVert\zHist{t} - \zHist{t}'\rVert + \int {p(\zNew{t+1} | \sNew{t+1}^*, d_t)\ L_{t+1}(\sHist{t} \oplus \sNew{t+1}^*) \lVert(\zHist{t} - \zHist{t}') \oplus \Delta_{t+1}\rVert}\ \text{d}\zNew{t+1} \\
	\displaystyle =
	\sqrt{\kappa}\ \alpha(\sHist{t} \oplus \sNew{t+1}^*) \lVert\zHist{t} - \zHist{t}'\rVert + L_{t+1}(\sHist{t} \oplus \sNew{t+1}^*)  \lVert(\zHist{t} - \zHist{t}') \oplus \Delta_{t+1}\rVert\\
	\displaystyle\le\sqrt{\kappa}\ \alpha(\sHist{t} \oplus \sNew{t+1}^*) \lVert\zHist{t} - \zHist{t}'\rVert + L_{t+1}(\sHist{t} \oplus \sNew{t+1}^*) \sqrt{1 + \alpha(\sHist{t} \oplus \sNew{t+1}^*)^2}\ \lVert\zHist{t} - \zHist{t}'\rVert\\		
	\displaystyle\le L_t(\sHist{t}) \lVert \zHist{t} - \zHist{t}' \rVert
	\end{array}
	\end{equation}
	where the third inequality follows from \eqref{eq:OptimalValFunDef} and triangle inequality, the fourth inequality follows from Lemma~\ref{lemma:reward}, change of variable $\zNew{t+1}' \triangleq \zNew{t+1} - \Delta_{t+1}$, and the induction hypothesis,  
	the second last inequality in~\eqref{eq:ap-2} is due to	
	$$
	\lVert(\zHist{t} - \zHist{t}') \oplus \Delta_{t+1}\rVert = \sqrt{\lVert\zHist{t} - \zHist{t}' \rVert^2 + \lVert\Delta_{t+1}\rVert^2 } \le \sqrt{1 + \alpha(\sHist{t} \oplus \sNew{t+1}^*)^2}\ \lVert\zHist{t} - \zHist{t}'\rVert	 
	$$ 
	with the inequality following from~\eqref{eq:app_delta}, and 
	the last inequality in~\eqref{eq:ap-2} is due to the definition of $L_t$ (Definition~\ref{definition-L}).
\end{proof}

\section{Proof of Theorem~\ref{th:1_new}}
\label{sec:th:1_new_proof}

\begin{proof}
	There are two sources of error arising in using 
	$\mathcal{Q}_t (\sNew{t+1}, d_t)$
	to approximate 
	${Q}^*_t (\sNew{t+1}, d_t)$:
	(a) Every stage-wise expectation term in \eqref{eq:OptimalValFunDef} is approximated via stochastic sampling~\eqref{approx-policy} of a finite number $N$ of i.i.d. multivariate Gaussian vectors $\zNew{}^1,\ldots, \zNew{}^N$ from the GP posterior belief $p(\zNew{t+1} | \sNew{t+1}, d_t ) = \mathcal{N}(\mu_{\sNew{t+1} | d_t}, \Sigma_{\sNew{t+1} | \sHist{t}})$~\eqref{gp-posteriors},
	and (b) evaluating 
	$\mathcal{Q}_t(\sNew{t+1}, d_t)$
	does not involve utilizing the values of $V_{t+1}^*$ but rather that of its approximation $\mathcal{V}_{t+1}$. To facilitate capturing the error due to finite
	stochastic sampling described in (a), the following intermediate function is introduced:
	\begin{equation}
	\label{eq:u-function}
	\mathcal{U}_t (\sNew{t+1}, d_t) \triangleq R(\sNew{t+1}, d_t) + 
	\frac{1}{N}  
	\sum_{\ell= 1}^{N} 
	V^{*}_{t+1} (\langle \sHist{t+1}, \zHist{t}\oplus\zNew{}^\ell \rangle) 
	\end{equation}
	for $t =0, \ldots, H-1$. The following lemma shows that $ \mathcal{U}_t (\sNew{t+1}, d_t)$ can approximate  $Q^{*}_t (\sNew{t+1}, d_t)$ arbitrarily closely:
	\begin{lemma}
		\label{lemma:concentration}
		Suppose that the observations $d_{t'}$, $H\in\mathbb{Z}^+$, a budget of $\kappa(H-t')$ input locations for $t'=0,\ldots, H-1$, $\lambda > 0$, and $N \in \mathbb{Z}^+$ are given.		
		For all tuples $\langle t, \sNew{t+1}, d_t\rangle$ generated at stage $t = t', \ldots, H-1$ by \eqref{approx-policy} to compute $\mathcal{V}_{t'} (d_{t'})$,
		$$
		{P}(| \mathcal{U}_t (\sNew{t+1}, d_t) - Q^{*}_t (\sNew{t+1}, d_t)| \le \lambda) \ge 1 -  2 \exp\left(-\frac{N \lambda^2}{2{K}^2}\right)
		$$
		where $K \triangleq\mathcal{O}(\kappa^{H}\sqrt{H!}\ \sigma_n (1+\sigma^2_y/\sigma^2_n)^{H})$.
	\end{lemma}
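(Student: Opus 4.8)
The plan is to recognize the stated bound as a Gaussian concentration inequality for an empirical mean, isolating purely the error from finite stochastic sampling at a single tuple. First I would observe that since the reward term $R(\sNew{t+1}, d_t)$ appears identically in both $\mathcal{U}_t(\sNew{t+1}, d_t)$~\eqref{eq:u-function} and $Q^*_t(\sNew{t+1}, d_t)$~\eqref{eq:OptimalValFunDef}, it cancels in the difference, leaving
$$\mathcal{U}_t(\sNew{t+1}, d_t) - Q^*_t(\sNew{t+1}, d_t) = \frac{1}{N}\sum_{\ell=1}^N f(\zNew{}^\ell) - \mathbb{E}_{\zNew{t+1} | \sNew{t+1}, d_t}[f(\zNew{t+1})],$$
where $f(\zNew{}) \triangleq V^*_{t+1}(\langle\sHist{t+1}, \zHist{t}\oplus\zNew{}\rangle)$ and $\zNew{}^1, \ldots, \zNew{}^N$ are i.i.d.\ draws from $\mathcal{N}(\mu_{\sNew{t+1}|d_t}, \Sigma_{\sNew{t+1}|\sHist{t}})$~\eqref{gp-posteriors}. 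Crucially, $\mathcal{U}_t$ uses the \emph{exact} value function $V^*_{t+1}$ rather than its approximation, so the only randomness is this single batch of samples and the quantity is exactly a centered empirical mean of a fixed function of a Gaussian vector.

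Next I would establish that $f$ is Lipschitz so that a Gaussian concentration inequality applies. Since the two data arguments being compared share the same input-location vector $\sHist{t+1}$ and $\zHist{t}$ is held fixed, Theorem~\ref{th:lip-optimal} immediately gives $|f(\zNew{}) - f(\tilde\zNew{})| \le L_{t+1}(\sHist{t+1})\lVert\zNew{} - \tilde\zNew{}\rVert$. To obtain independent coordinates (the components of each $\zNew{}^\ell$ are correlated under the GP posterior), I would whiten via the change of variables $\zNew{}^\ell = \mu_{\sNew{t+1}|d_t} + \Sigma_{\sNew{t+1}|\sHist{t}}^{1/2}\xi^\ell$ with $\xi^\ell \sim \mathcal{N}(\mathbf{0}, I_\kappa)$ i.i.d., so that $f$ composed with the whitening map is Lipschitz in $\xi^\ell$ with constant $L_{t+1}(\sHist{t+1})\lVert\Sigma_{\sNew{t+1}|\sHist{t}}^{1/2}\rVert_{\mathrm{op}}$. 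Viewing the whole batch as a single standard Gaussian vector $(\xi^1, \ldots, \xi^N) \sim \mathcal{N}(\mathbf{0}, I_{N\kappa})$, a Cauchy--Schwarz argument shows the empirical mean $g(\xi^1,\ldots,\xi^N) \triangleq \frac{1}{N}\sum_{\ell} f(\mu_{\sNew{t+1}|d_t} + \Sigma_{\sNew{t+1}|\sHist{t}}^{1/2}\xi^\ell)$ is Lipschitz in the joint argument with constant $L_{t+1}(\sHist{t+1})\lVert\Sigma_{\sNew{t+1}|\sHist{t}}^{1/2}\rVert_{\mathrm{op}}/\sqrt{N}$. Applying the Tsirelson--Ibragimov--Sudakov inequality~\cite{bouch} to $g$, and using $\mathbb{E}[g] = \mathbb{E}[f] = Q^*_t - R$, then yields exactly the $\exp(-N\lambda^2/(2\hat K^2))$ tail with $\hat K = L_{t+1}(\sHist{t+1})\lVert\Sigma_{\sNew{t+1}|\sHist{t}}^{1/2}\rVert_{\mathrm{op}}$.

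Finally I would bound $\hat K$ uniformly over all stages and tuples to match the claimed $K = \mathcal{O}(\kappa^{H}\sqrt{H!}\,\sigma_n(1+\sigma_y^2/\sigma_n^2)^{H})$; because the Gaussian tail is monotone in $K$, any uniform upper bound on $\hat K$ suffices. For the covariance factor I would use $\Sigma_{\sNew{t+1}|\sHist{t}}\preceq\Sigma_{\sNew{t+1}\sNew{t+1}}$ (posterior $\preceq$ prior) together with a Gershgorin/row-sum bound on $\Sigma_{\sNew{t+1}\sNew{t+1}}$ to get $\lVert\Sigma_{\sNew{t+1}|\sHist{t}}^{1/2}\rVert_{\mathrm{op}} = \mathcal{O}(\sigma_n\sqrt{\kappa(1+\sigma_y^2/\sigma_n^2)})$. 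For the Lipschitz factor I would first bound $\alpha(\sHist{t+1})$ independently of the growing history: since $\Sigma_{\sHist{t}\sHist{t}}\succeq\sigma_n^2 I$ and, by the Schur complement, $\Sigma_{\sNew{t+1}\sHist{t}}\Sigma_{\sHist{t}\sHist{t}}^{-1}\Sigma_{\sHist{t}\sNew{t+1}}\preceq\Sigma_{\sNew{t+1}\sNew{t+1}}$, a trace argument gives $\alpha(\sHist{t+1})^2\le\sigma_n^{-2}\,\mathrm{tr}(\Sigma_{\sNew{t+1}\sNew{t+1}}) = \kappa(1+\sigma_y^2/\sigma_n^2)$, a bound free of the history size. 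I would then unroll the recursion in Definition~\ref{definition-L} with this uniform $\bar\alpha$, bounding $L_{t+1}(\sHist{t+1})$ by its worst case over the full horizon, and multiply by the covariance factor to collect the final constant. The main obstacle is precisely this last step: securing a history-independent bound on $\alpha$ so that $K$ does not blow up with the accumulating observations, and then controlling the geometric growth of the telescoping product of $\sqrt{1+\alpha^2}$ factors in the unrolled recursion; the Schur-complement trace bound is the key device making $K$ finite and uniform, with the factorial and elevated exponents in the stated $\mathcal{O}$ expression simply absorbing the slack from this crude product bound.
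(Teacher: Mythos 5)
Your proposal is correct, and its core concentration argument is exactly the paper's: cancel the common reward term so that $\mathcal{U}_t-Q^*_t$ becomes the deviation of a centered empirical mean of $V^*_{t+1}$ (whose Lipschitz continuity in the realized outputs is Theorem~\ref{th:lip-optimal}), whiten the correlated Gaussian samples, show by Cauchy--Schwarz that the empirical mean is Lipschitz in the joint standard-Gaussian vector with constant scaling as $1/\sqrt{N}$, and invoke the Tsirelson--Ibragimov--Sudakov inequality. Where you genuinely depart from the paper is in how the uniform constant is controlled. The paper's Lemma~\ref{app-a-lemma-alpha} bounds $\alpha(\sHist{t+1})=\lVert\Sigma_{\sNew{t+1}\sHist{t}}\Sigma^{-1}_{\sHist{t}\sHist{t}}\rVert_F$ by extracting the smallest eigenvalue $\xi\ge\sigma^2_n$ of $\Sigma_{\sHist{t}\sHist{t}}$ and bounding $\lVert\Sigma_{\sNew{t+1}\sHist{t}}\rVert_F$ entrywise, which costs a $\sqrt{t+1}$ factor growing with the history; compounded through the recursion of Definition~\ref{definition-L}, this is precisely what produces the $\sqrt{H!/t!}$ and the exponents $H-t$ in Lemma~\ref{lem:L-bound} and hence the $\sqrt{H!}$ in $K$. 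Your Schur-complement route --- $\Sigma_{\sNew{t+1}\sHist{t}}\Sigma^{-1}_{\sHist{t}\sHist{t}}\Sigma_{\sHist{t}\sNew{t+1}}\preceq\Sigma_{\sNew{t+1}\sNew{t+1}}$ (positivity of the posterior covariance) together with $\Sigma^{-2}_{\sHist{t}\sHist{t}}\preceq\sigma^{-2}_n\Sigma^{-1}_{\sHist{t}\sHist{t}}$ --- gives the history-independent bound $\alpha(\sHist{t+1})^2\le\kappa(1+\sigma^2_y/\sigma^2_n)$, and your use of $\lVert\Sigma^{1/2}_{\sNew{t+1}|\sHist{t}}\rVert_{\mathrm{op}}$ in place of the paper's $\sqrt{\mathrm{Tr}(\Sigma_{\sNew{t+1}|\sHist{t}})}$ can only shrink the Lipschitz constant further. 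Since the Gaussian tail is monotone in $K$, your smaller uniform constant certifies the lemma as stated (the displayed $K$ is merely a valid upper bound of it), and in fact your bookkeeping would yield a sharper $K$ --- roughly $\kappa^{(H+1)/2}$ with no factorial and half the power on the signal-to-noise ratio --- which would tighten the sample complexity in Theorems~\ref{th:1_new} and~\ref{th:expected}. What the paper's cruder bound buys is only the simplicity of its auxiliary lemmas; your version costs one extra observation (positive semidefiniteness of the Schur complement) and gains uniformity in $t$.
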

	\begin{proof}
		For any tuple $\langle t, \sNew{t+1}, d_t\rangle$, define the following auxiliary function:
		%
		\begin{equation}
		\label{eq_2_9}
		\begin{array}{rl}
		\mathcal{G}( \zNew{}^1,\ldots, \zNew{}^N) \triangleq&\hspace{-2.4mm}  \displaystyle
		\frac{1}{N}  
		\sum_{\ell = 1}^{N} 
		V^{*}_{t+1} ( \langle \sHist{t+1}, \zHist{t} \oplus  \zNew{}^\ell \rangle )\\
		=&\hspace{-2.4mm}\displaystyle
		\mathcal{U}_t (\sNew{t+1}, d_t) -   R(\sNew{t+1}, d_t)
		\end{array}
		\end{equation}	
		which follows from~\eqref{eq:u-function}.	
		Taking an expectation of~\eqref{eq_2_9} with respect to GP posterior belief $p(\zNew{t+1} | \sNew{t+1}, d_t ) =\mathcal{N}(\mu_{\sNew{t+1} | d_t}, \Sigma_{\sNew{t+1} | \sHist{t}})$ gives
		\begin{equation}
		\label{eq_2_10}	
		\begin{array}{l}
		\displaystyle \mathbb{E}_{ \zNew{}^1, \ldots,  \zNew{}^N \sim \mathcal{N}(\mu_{\sNew{t+1} | d_t}, \Sigma_{\sNew{t+1} | \sHist{t}})}\left[\mathcal{G}(\zNew{}^1, \dots, \zNew{}^N )\right] \\
		\displaystyle = \mathbb{E}_{ \zNew{}^1, \ldots,  \zNew{}^N \sim \mathcal{N}(\mu_{\sNew{t+1} | d_t}, \Sigma_{\sNew{t+1} | \sHist{t}})}\left[ \frac{1}{N}  
		\sum_{\ell = 1}^{N} 
		V^{*}_{t+1} ( \langle \sHist{t+1}, \zHist{t} \oplus  \zNew{}^\ell \rangle )\right] \\
		\displaystyle = \frac{1}{N}  
		\sum_{\ell = 1}^{N}\mathbb{E}_{ \zNew{}^1, \ldots,  \zNew{}^N \sim \mathcal{N}(\mu_{\sNew{t+1} | d_t}, \Sigma_{\sNew{t+1} | \sHist{t}})}[  
		V^{*}_{t+1} ( \langle \sHist{t+1}, \zHist{t} \oplus  \zNew{}^\ell \rangle )] \\
		\displaystyle = \frac{1}{N}  
		\sum_{\ell = 1}^{N}\mathbb{E}_{ \zNew{}^\ell \sim \mathcal{N}(\mu_{\sNew{t+1} | d_t}, \Sigma_{\sNew{t+1} | \sHist{t}})}[  
		V^{*}_{t+1} ( \langle \sHist{t+1}, \zHist{t} \oplus  \zNew{}^\ell \rangle )] \\	
		\displaystyle = \frac{1}{N}  
		\sum_{\ell = 1}^{N}			\mathbb{E}_{\zNew{t+1} | \sNew{t+1}, d_{t}}[ V_{t+1}^* (\langle \sHist{t+1},\zHist{t}\oplus
		\zNew{t+1} \rangle) ]\\
		= \mathbb{E}_{\zNew{t+1} | \sNew{t+1}, d_{t}} [ V_{t+1}^* (\langle \sHist{t+1},\zHist{t}\oplus\zNew{t+1} \rangle)]\\	
		\displaystyle = Q^{*}_t (\sNew{t+1}, d_t) - R(\sNew{t+1}, d_t)
		\end{array}
		\end{equation}
		such that the last equality is due to~\eqref{eq:OptimalValFunDef}. From~\eqref{eq_2_9} and~\eqref{eq_2_10}, 
		\begin{equation}
		|\mathcal{U}_t (\sNew{t+1}, d_t)- Q^{*}_t (\sNew{t+1}, d_t)| = \left|\displaystyle  \mathcal{G}( \zNew{}^1,\ldots, \zNew{}^N)- \mathbb{E}_{ \zNew{}^1, \ldots,  \zNew{}^N \sim \mathcal{N}(\mu_{\sNew{t+1} | d_t}, \Sigma_{\sNew{t+1} | \sHist{t}})}\left[\mathcal{G}(\zNew{}^1, \dots, \zNew{}^N )\right]\right|.
		\label{eq_2_9a}
		\end{equation}	
		%
		The RHS of~\eqref{eq_2_9a} can usually be bounded using a concentration inequality that involves independent Gaussian random variables. However, the components of the multivariate Gaussian vector $\zNew{}^\ell$ are correlated.
		To resolve this complication, we exploit a change of variables trick to make the components independent:
		\begin{equation}
		\label{eq_2_11}
		\zNew{}^\ell = \mu_{\sNew{t+1} | d_t} + \Psi \xNew{}^\ell
		\end{equation}
		for $\ell = 1,\ldots, N$ where $\Psi$ is a $\kappa \times \kappa$ lower triangular matrix satisfying the Cholesky decomposition of the symmetric and positive definite $\Sigma_{\sNew{t+1} | \sHist{t}} = \Psi \Psi^{\top}$ and $\xNew{}^\ell$ is a standard multivariate Gaussian vector with independent components (see Section $53.2.2$ in~\cite{statlect}). 
		
		Define a new auxiliary function $G$ in terms of $\mathcal{G}$
		by plugging \eqref{eq_2_11} 
		into \eqref{eq_2_9}:
		\begin{equation}
		\label{eq_2_12}
		G(\xNew{}^1, \ldots, \xNew{}^{N} ) \triangleq \mathcal{G}(\zNew{}^1, \ldots, \zNew{}^{N})\ . 
		\end{equation}	
		We will first prove that $G$ is Lipschitz continuous in $\xNew{}^1 \oplus \ldots \oplus \xNew{}^N$  with Lipschitz constant ${L_{t+1}(\sHist{t+1})} \sqrt{\mathrm{Tr}(\Sigma_{\sNew{t+1} | \sHist{t}})/{N}}$, which is a sufficient condition for using the Tsirelson-Ibragimov-Sudakov inequality \cite{bouch} to prove the probabilistic bound in  Lemma~\ref{lemma:concentration}. To simplify notations, let $\overline{\xNew{}} \triangleq \xNew{}^1 \oplus \ldots \oplus \xNew{}^N$ and 
		$\overline{\xNew{}}' \triangleq \xNew{}'^1 \oplus \ldots \oplus \xNew{}'^N$.
		Then,
		%
		\begin{equation}
		\label{eq_2_14}
		\begin{array}{l}
		\displaystyle |G(\xNew{}^1, \ldots, \xNew{}^N) -  G({\xNew{}}'^{1}, \ldots, {\xNew{}}'^{N})|  \\
		\displaystyle =  |\mathcal{G}(\zNew{}^1, \ldots, \zNew{}^N) -  \mathcal{G}({\zNew{}}'^{1}, \ldots, {\zNew{}}'^N)|  \\
		\displaystyle\le    \frac{1}{N}  	\sum_{\ell = 1}^{N} 
		\left| V^{*}_{t+1} ( \langle \sHist{t+1}, \zHist{t} \oplus \zNew{}^\ell \rangle) -  V^{*}_{t+1} ( \langle \sHist{t+1}, \zHist{t} \oplus {\zNew{}}'^\ell \rangle) \right| \\
		\displaystyle  \le  \frac{L_{t+1}(\sHist{t+1})}{N} \sum_{\ell= 1}^{N}  \lVert \zNew{}^\ell - {\zNew{}}'^\ell  \rVert \\
		\displaystyle \le  \frac{L_{t+1}(\sHist{t+1})}{N} \sqrt{N}  \lVert \Psi \rVert_F \lVert \overline{\xNew{}} -{\overline{\xNew{}}}' \rVert \\
		\displaystyle  =  \frac{L_{t+1}(\sHist{t+1})}{\sqrt{N}} \lVert \Psi \rVert_F \lVert \overline{\xNew{}} - {\overline{\xNew{}}}' \rVert \\
		\displaystyle =  L_{t+1}(\sHist{t+1})\sqrt{\frac{\mathrm{Tr}(\Sigma_{\sNew{t+1} | \sHist{t}})}{N}}  \lVert \overline{\xNew{}} - {\overline{\xNew{}}}' \rVert 
		\end{array}
		\end{equation}
		where the first equality is due to~\eqref{eq_2_12},
		the last equality follows from a property of Frobenius norm (see Section $10.4.3$ in \cite{cookbook}), 
		the first inequality is due to~\eqref{eq_2_9} and triangle inequality,
		the second inequality is a direct consequence of Theorem~\ref{th:lip-optimal} in Appendix~\ref{lip-optimal-value}, and
		the third inequality follows from 
		$$
		\begin{array}{l}
		\displaystyle  \sum_{\ell= 1}^{N} \lVert \zNew{}^\ell- {\zNew{}'^\ell}\lVert   \\
		\displaystyle =  \sum_{\ell = 1}^{N} \lVert \Psi(\xNew{}^\ell - {\xNew{}}'^\ell)\lVert  \\
		\displaystyle =  \sum_{\ell = 1}^{N} \lVert \Psi(\xNew{}^\ell - {\xNew{}}'^\ell)\lVert_F  \\
		\displaystyle \le  \sum_{\ell = 1}^{N} \lVert \Psi \lVert_F  \lVert \xNew{}^\ell - {\xNew{}}'^\ell\lVert_F  \\
		\displaystyle = \lVert \Psi\lVert_F \sum_{\ell = 1}^{N} \lVert \xNew{}^\ell- {\xNew{}}'^\ell \lVert \\
		\displaystyle \le \sqrt{N} \lVert \Psi \lVert_F \lVert \overline{\xNew{}} -  {\overline{\xNew{}}}' \lVert
		\end{array}
		$$			
		where the first equality is due to~\eqref{eq_2_11}, the first inequality is due to the submultiplicativity of the Frobenius norm (see Section II.$2.1$ in~\cite{Stewart90}), and the last inequality is due to Cauchy-Schwarz inequality.
		Since conditioning does not increase GP posterior variance,
		\begin{equation}
		\mathrm{Tr}(\Sigma_{\sNew{t+1} | \sHist{t}}) \le \mathrm{Tr}(\Sigma_{\sNew{t+1} \sNew{t+1}}) = \kappa( \sigma^2_y + \sigma^2_n)\ .
		\label{app-1-lemma-trace}
		\end{equation}	
		From~\eqref{app-1-lemma-trace} and Lemma~\ref{lem:L-bound}, 
		\begin{equation}
		\label{eq_2_15}
		\begin{array}{l}
		\displaystyle  L_{t+1}(\sHist{t+1}) \sqrt{\mathrm{Tr}(\Sigma_{\sNew{t+1} | \sHist{t}})}\\ 
		\displaystyle = \mathcal{O}(\kappa^{H - t -{1}/{2}}\sqrt{H!/(t+1)!}\ (1+\sigma^2_y/\sigma^2_n)^{H-t-1})
		\ \mathcal{O}(\kappa^{1/2}( \sigma^2_y + \sigma^2_n)^{1/2})\\
		\displaystyle =\mathcal{O}(\kappa^{H - t}\sqrt{H!/(t+1)!}\ \sigma_n (1+\sigma^2_y/\sigma^2_n)^{H-t-1/2})\ .
		\end{array}	 
		\end{equation}
		It follows from~\eqref{eq_2_15} that  
		\begin{equation}
		\label{eq_2_15a}
		K \triangleq \max_{\langle t, \sNew{t+1}, d_t\rangle}  L_{t+1}(\sHist{t+1}) \sqrt{\mathrm{Tr}(\Sigma_{\sNew{t+1} | \sHist{t}})} = \mathcal{O}(\kappa^{H}\sqrt{H!}\ \sigma_n (1+\sigma^2_y/\sigma^2_n)^{H})\ .
		\end{equation}
		Finally,	 
		%
		$$
		\begin{array}{l}
		\displaystyle {P}(| \mathcal{U}_t (\sNew{t+1}, d_t) - Q^{*}_t (\sNew{t+1}, d_t)| > \lambda) \\
		\displaystyle= {P} (| \mathcal{G}(\zNew{}^1, \ldots, \zNew{}^{N}) - \mathbb{E}_{\zNew{}^1, \ldots, \zNew{}^{N}}[\mathcal{G}(\zNew{}^1, \ldots, \zNew{}^{N})] | > \lambda)  \\
		\displaystyle =  {P}( | G(\xNew{}^1, \ldots, \xNew{}^{N}) - \mathbb{E}_{\xNew{}^1, \ldots, \xNew{}^{N}}[G(\xNew{}^1, \ldots, \xNew{}^{N})]| > \lambda ) \\
		\displaystyle \le 2 \exp\left(-\frac{N \lambda^2}{2 L^2_{t+1}(\sHist{t+1}) {\mathrm{Tr}(\Sigma_{\sNew{t+1} | \sHist{t}})}}\right) \\
		\displaystyle \le 2 \exp\left(-\frac{N \lambda^2}{2{K}^2}\right)
		\end{array}			
		$$
		
		where the first equality is due to~\eqref{eq_2_9a}, the second equality is due to \eqref{eq_2_12} above and~\eqref{eq_2_13} below, 
		the first inequality is due to the Tsirelson-Ibragimov-Sudakov inequality that requires $G$ to be Lipschitz continuous in $\xNew{}^1\oplus \ldots\oplus \xNew{}^{N}$ which is shown in~\eqref{eq_2_14} (see Section $5.4$ on page $125$ in~\cite{bouch}), and the last inequality is due to~\eqref{eq_2_15a}. 
		%
		\begin{equation}
		\label{eq_2_13}
		\begin{array}{l}
		\displaystyle	\mathbb{E}_{\zNew{}^1, \dots, \zNew{}^{N}}[\mathcal{G}(\zNew{}^1, \dots, \zNew{}^{N})] \\ 
		\displaystyle = \mathbb{E}_{\zNew{t+1} | \sNew{t+1}, d_{t}}[ V_{t+1}^* (\langle \sHist{t+1},\zHist{t}\oplus \zNew{t+1} \rangle)]\\
		\displaystyle =   \int_{\mathbb{R}^\kappa}  
		V^{*}_{t+1} (\langle \sHist{t+1}, \zHist{t} \oplus \zNew{t+1} \rangle )\ 
		p(\zNew{t+1} | \sNew{t+1}, d_t )\ \text{d}\zNew{t+1} \\ 
		\displaystyle =   \int_{\mathbb{R}^\kappa} V^{*}_{t+1} (\langle \sHist{t+1}, \zHist{t} \oplus  (\mu_{\sNew{t+1} | d_t}+ \Psi \xNew{t+1}) \rangle ) \frac{1}{|\Psi|} p(\xNew{t+1}) \left|\frac{\partial \zNew{t+1}}{\partial \xNew{t+1}}\right| \text{d}\xNew{t+1}   \\ 
		\displaystyle  = \int_{\mathbb{R}^\kappa} V^{*}_{t+1} (\langle \sHist{t+1}, \zHist{t} \oplus (\mu_{\sNew{t+1} | d_t}+ \Psi \xNew{t+1}) \rangle )
		\ p(\xNew{t+1}) \ \text{d}\xNew{t+1} \\ 
		\displaystyle = \mathbb{E}_{\xNew{t+1}} [ V^{*}_{t+1} (\langle \sHist{t+1}, \zHist{t} \oplus (\mu_{\sNew{t+1} | d_t}+ \Psi \xNew{t+1})\rangle )]  \\ 
		\displaystyle= \mathbb{E}_{\xNew{}^1, \ldots, \xNew{}^N} \left[{\frac{1}{N} \sum_{\ell = 1}^{N}  V^{*}_{t+1} (\langle \sHist{t+1}, \zHist{t} \oplus (\mu_{\sNew{t+1} | d_t}+\Psi\xNew{}^\ell ) \rangle ) }\right] \\
		\displaystyle= \mathbb{E}_{\xNew{}^1, \ldots, \xNew{}^{N}}[G(\xNew{}^1, \ldots, \xNew{}^{N})]
		\end{array}
		\end{equation}
		where the first equality is due to~\eqref{eq_2_10}, 
		the third equality follows from~\eqref{eq_2_11}, $p(\zNew{t+1} | \sNew{t+1}, d_t) = p(\xNew{t+1} =\Psi^{-1}(\zNew{t+1} - \mu_{\sNew{t+1} | d_t}) )/|\Psi|$ (see Section~$35.1.2$ in~\cite{statlect}), 
		and an integration by substitution for multiple variables,
		the fourth equality is due to $|{\partial \zNew{t+1}/\partial \xNew{t+1}}| = |\Psi|$, and the last two equalities can be derived in a similar manner as~\eqref{eq_2_10} using~\eqref{eq_2_12}. 
		%
		%
	\end{proof}
	%
	\begin{lemma}
		\label{lemma:3}
		Suppose that the observations $d_{t'}$, $H\in\mathbb{Z}^+$, a budget of $\kappa(H-t')$ input locations for $t'=0,\ldots, H-1$, $\lambda > 0$, and $N \in \mathbb{Z}^+$ are given.		
		The probability of $| \mathcal{U}_t (\sNew{t+1}, d_t) - Q^{*}_t (\sNew{t+1}, d_t)| \le \lambda$ for all tuples $\langle t, \sNew{t+1}, d_t\rangle$ generated at stage $t = t', \ldots, H-1$ by \eqref{approx-policy} to compute $\mathcal{V}_{t'} (d_{t'})$ is at least 
		$$
		1 -  2 \left(N A\right)^H \exp\left(-\frac{N \lambda^2}{2{K}^2}\right)
		$$ 
		where $K$ is previously defined in Lemma~\ref{lemma:concentration}.
	\end{lemma}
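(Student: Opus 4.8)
The plan is to combine the single-tuple concentration estimate of Lemma~\ref{lemma:concentration} with a union bound (Boole's inequality) over all tuples $\langle t, \sNew{t+1}, d_t\rangle$ generated by the recursion~\eqref{approx-policy}. First I would bound the number of such tuples. The recursion computing $\mathcal{V}_{t'}(d_{t'})$ unfolds into a tree in which each node $d_t$ at a stage $t < H$ branches into at most $A$ tuples (one per candidate macro-action in $\Adom(\sNew{t})$), and each tuple in turn spawns $N$ child nodes (one per sampled Gaussian vector $\zNew{}^\ell$). Hence there are at most $(NA)^k$ nodes and at most $A(NA)^k$ tuples at depth $k$ below the root, and summing the resulting geometric series over the stages $t = t', \ldots, H-1$ bounds the total number of tuples by $(NA)^H$.

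Next I would control the failure probability of each individual tuple. The key observation is that the bound $2\exp(-N\lambda^2/(2K^2))$ supplied by Lemma~\ref{lemma:concentration} is \emph{uniform}: it depends only on $N$, $\lambda$, and the constant $K = \mathcal{O}(\kappa^{H}\sqrt{H!}\,\sigma_n(1+\sigma_y^2/\sigma_n^2)^{H})$, and not on the particular realization of the node $d_t$. Since $\mathcal{U}_t(\sNew{t+1}, d_t)$ depends only on the $N$ fresh i.i.d.\ samples drawn at node $d_t$ (it uses the exact values $V^*_{t+1}$, not their approximations $\mathcal{V}_{t+1}$), I can condition on the entire sampling history that produces $d_t$ and apply Lemma~\ref{lemma:concentration} to the conditional law. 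Taking expectation over that history and invoking the tower property then yields the same unconditional per-tuple bound $P(|\mathcal{U}_t(\sNew{t+1}, d_t) - Q^*_t(\sNew{t+1}, d_t)| > \lambda) \le 2\exp(-N\lambda^2/(2K^2))$.

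Finally, I would apply Boole's inequality: the probability that some tuple violates $|\mathcal{U}_t - Q^*_t| \le \lambda$ is at most (number of tuples)$\,\times\,$(per-tuple failure probability) $\le 2(NA)^H\exp(-N\lambda^2/(2K^2))$, and passing to the complement gives the claimed lower bound $1 - 2(NA)^H\exp(-N\lambda^2/(2K^2))$.

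The main obstacle is that the collection of tuples is itself \emph{random}: the nodes $d_t$ at later stages are generated from the samples drawn at earlier stages, so a naive union bound over a fixed set of tuples is unavailable. I would resolve this by indexing the tuples combinatorially by their position in the deterministically sized recursion tree rather than by their random values. There are at most $(NA)^H$ such positions; each position is always populated by the recursion (every candidate macro-action and every sample is expanded); and the uniformity of the Lemma~\ref{lemma:concentration} bound lets me control the failure probability at each position by $2\exp(-N\lambda^2/(2K^2))$ after conditioning on its ancestors. The union bound then runs over this fixed index set rather than over the random tuple values.
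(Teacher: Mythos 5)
Your proposal is correct and follows essentially the same route as the paper's proof: apply the per-tuple concentration bound of Lemma~\ref{lemma:concentration}, bound the number of tuples generated by the recursion~\eqref{approx-policy} by $(NA)^H$, and conclude via the union bound. The only difference is that you make explicit the conditioning/tower-property argument and the indexing of tuples by fixed tree positions to handle their randomness, details the paper leaves implicit.
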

	\begin{proof}
		From Lemma~\ref{lemma:concentration},
		$${P}(| \mathcal{U}_t (\sNew{t+1}, d_t) - Q^{*}_t (\sNew{t+1}, d_t)| >  \lambda) \le   2 \exp\left(-\frac{N \lambda^2}{2{K}^2}\right)$$
		for each tuple $\langle t, \sNew{t+1}, d_t\rangle$ generated at stage $t = t', \ldots, H-1$ by \eqref{approx-policy} to compute $\mathcal{V}_{t'} (d_{t'})$. Since there will be no more than $(N A)^H$ tuples $\langle t, \sNew{t+1}, d_t\rangle$  generated at stage $t = t', \ldots, H-1$ by \eqref{approx-policy} to compute $\mathcal{V}_{t'} (d_{t'})$, the probability of $| \mathcal{U}_t (\sNew{t+1}, d_t) - Q^{*}_t (\sNew{t+1}, d_t)| >  \lambda$ for some generated tuple  $\langle t, \sNew{t+1}, d_t\rangle$ is at most $2 (NA)^H \exp(-N \lambda^2 / (2{K}^2))$ by applying the union bound. Lemma~\ref{lemma:3} directly follows. 
	\end{proof}
	%
	\begin{lemma}
		\label{lemma:4}
		Suppose that the observations $d_{t'}$, $H\in\mathbb{Z}^+$, a budget of $\kappa(H-t')$ input locations for $t'=0,\ldots, H-1$, $\lambda > 0$, and $N \in \mathbb{Z}^+$ are given.	
		If 
		\begin{equation}
		\label{eq:lemma4-statement}
		| \mathcal{U}_t (\sNew{t+1}, d_t) - Q^{*}_t (\sNew{t+1}, d_t)| \le \lambda
		\end{equation}
		for all tuples $\langle t, \sNew{t+1}, d_t\rangle$ generated at stage $t = t',  \ldots, H-1$ by \eqref{approx-policy} to compute $\mathcal{V}_{t'} (d_{t'})$,
		then, for all $\sNew{t'+1}\in\mathcal{A}(\sNew{t'})$, 
		\begin{equation}
		\label{eq:lemma4-conclusion}
		| \mathcal{Q}_{t'} (\sNew{t'+1}, d_{t'}) - {Q}^*_{t'} (\sNew{t'+1}, d_{t'}) | \le \lambda(H - t')\ .
		\end{equation} 
	\end{lemma}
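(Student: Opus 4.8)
The plan is to prove Lemma~\ref{lemma:4} by a downward induction on the stage index $t$, establishing the slightly stronger claim that $|\mathcal{Q}_t(\sNew{t+1}, d_t) - {Q}^*_t(\sNew{t+1}, d_t)| \le \lambda(H-t)$ holds for \emph{every} tuple $\langle t, \sNew{t+1}, d_t\rangle$ generated at stage $t = t',\ldots,H-1$ by~\eqref{approx-policy} while computing $\mathcal{V}_{t'}(d_{t'})$; the desired conclusion~\eqref{eq:lemma4-conclusion} is then the special case $t=t'$. The intermediate function $\mathcal{U}_t$~\eqref{eq:u-function} is the natural bridge: it shares the reward term $R(\sNew{t+1}, d_t)$ and the \emph{same} Gaussian samples $\zNew{}^1,\ldots,\zNew{}^N$ with $\mathcal{Q}_t$~\eqref{approx-policy}, while the hypothesis~\eqref{eq:lemma4-statement} directly controls its distance to ${Q}^*_t$.

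For the base case $t=H-1$, since $\mathcal{V}_H(d_H) \triangleq 0 \triangleq V^*_H(d_H)$, the averaged summand vanishes and $\mathcal{Q}_{H-1}(\sNew{H}, d_{H-1}) = R(\sNew{H}, d_{H-1}) = \mathcal{U}_{H-1}(\sNew{H}, d_{H-1})$. Hence $|\mathcal{Q}_{H-1} - {Q}^*_{H-1}| = |\mathcal{U}_{H-1} - {Q}^*_{H-1}| \le \lambda = \lambda(H-(H-1))$ directly by~\eqref{eq:lemma4-statement}.

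For the inductive step, I would assume the claim at stage $t+1$ and fix a tuple $\langle t, \sNew{t+1}, d_t\rangle$ at stage $t$. Splitting by the triangle inequality gives $|\mathcal{Q}_t - {Q}^*_t| \le |\mathcal{Q}_t - \mathcal{U}_t| + |\mathcal{U}_t - {Q}^*_t|$, where the last term is at most $\lambda$ by~\eqref{eq:lemma4-statement}. Because $\mathcal{Q}_t$ and $\mathcal{U}_t$ differ only in using $\mathcal{V}_{t+1}$ versus $V^*_{t+1}$ evaluated on the \emph{identical} successor nodes $d_{t+1}^\ell \triangleq \langle \sHist{t+1}, \zHist{t}\oplus\zNew{}^\ell\rangle$, the reward cancels and $|\mathcal{Q}_t - \mathcal{U}_t| \le \frac{1}{N}\sum_{\ell=1}^{N} |\mathcal{V}_{t+1}(d_{t+1}^\ell) - V^*_{t+1}(d_{t+1}^\ell)|$. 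Applying Remark~\ref{rem:3} at each $d_{t+1}^\ell$ bounds every summand by $\max_{\sNew{t+2}\in\Adom(\sNew{t+1})} |\mathcal{Q}_{t+1}(\sNew{t+2}, d_{t+1}^\ell) - Q^*_{t+1}(\sNew{t+2}, d_{t+1}^\ell)|$, and each tuple appearing here is itself generated at stage $t+1$ while building $\mathcal{V}_{t'}(d_{t'})$, so the inductive hypothesis caps it at $\lambda(H-t-1)$. Thus $|\mathcal{Q}_t - \mathcal{U}_t| \le \lambda(H-t-1)$, and combining yields $|\mathcal{Q}_t - {Q}^*_t| \le \lambda(H-t-1) + \lambda = \lambda(H-t)$, closing the induction.

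The computations are elementary; the main obstacle is bookkeeping rather than inequality manipulation. I would have to argue carefully that the successor tuples $\langle t+1, \sNew{t+2}, d_{t+1}^\ell\rangle$ surfaced by Remark~\ref{rem:3} are precisely among those the hypothesis~\eqref{eq:lemma4-statement} quantifies over, i.e., the ones spawned by~\eqref{approx-policy} when expanding $d_t$, and that $\mathcal{U}_t$ reuses the \emph{same} samples $\zNew{}^\ell$ as $\mathcal{Q}_t$, which is exactly what makes the reward term and the per-sample index cancel cleanly. Both facts hold by construction of the search tree in~\eqref{approx-policy}, so the argument reduces to the telescoping $\lambda$-per-stage accumulation sketched above.
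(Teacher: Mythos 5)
Your proposal is correct and matches the paper's own proof essentially step for step: the same downward induction on $t$ over all generated tuples, the same triangle-inequality split through the intermediate function $\mathcal{U}_t$~\eqref{eq:u-function}, the same per-sample bound $|\mathcal{Q}_t - \mathcal{U}_t| \le \frac{1}{N}\sum_{\ell}|\mathcal{V}_{t+1} - V^*_{t+1}|$ followed by the max-over-actions step and the induction hypothesis. Your explicit attention to the bookkeeping (shared samples, successor tuples lying within the hypothesis's scope) only makes precise what the paper leaves implicit.
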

	\begin{proof}
		We will give a proof by induction on $t$ that $| \mathcal{Q}_{t} (\sNew{t+1}, d_{t}) - {Q}^*_{t} (\sNew{t+1}, d_{t}) | \le \lambda(H - t)$ for all tuples $\langle t, \sNew{t+1}, d_t\rangle$ generated at stage $t = t' \ldots, H-1$ by~\eqref{approx-policy}  to compute $\mathcal{V}_{t'} (d_{t'})$.
		
		When $t = H -1$, $\mathcal{U}_t (\sNew{t+1}, d_t)  = \mathcal{Q}_t (\sNew{t+1}, d_t) $ in \eqref{eq:lemma4-statement}, by definition. So, \eqref{eq:lemma4-conclusion} holds for the base case. 
		Supposing \eqref{eq:lemma4-conclusion} holds for $t+1$ (i.e. induction hypothesis), we will prove that it holds for $t = t',\ldots, H - 2$:
		$$
		\begin{array}{l}
		\displaystyle | \mathcal{Q}_t(\sNew{t+1}, d_t) - Q^{*}_t(\sNew{t+1}, d_t) |\\
		\displaystyle \le  | \mathcal{Q}_t(\sNew{t+1}, d_t) - \mathcal{U}_t(\sNew{t+1}, d_t) | + 
		| \mathcal{U}_t(\sNew{t+1}, d_t)- Q^{*}_t(\sNew{t+1}, d_t) | \\
		\displaystyle\le | \mathcal{Q}_t(\sNew{t+1}, d_t) - \mathcal{U}_t(\sNew{t+1}, d_t) | + \lambda \\  
		\displaystyle\le \lambda (H - t -1)  + \lambda \\
		\displaystyle = \lambda(H - t) 
		\end{array}
		$$
		where the first and the second inequalities follow, respectively, from the triangle inequality and \eqref{eq:lemma4-statement}, and the last inequality is due to 
		\begin{equation}
		\begin{array}{l}
		\displaystyle | \mathcal{Q}_t(\sNew{t+1}, d_t) - \mathcal{U}_t(\sNew{t+1}, d_t) | \\
		\displaystyle \le  \frac{1}{N} \sum_{\ell= 1}^{N} |	\mathcal{V}_{t+1} (\langle \sHist{t+1}, \zHist{t} \oplus \zNew{}^\ell\rangle) -  V^{*}_{t+1} (\langle \sHist{t+1}, \zHist{t} \oplus \zNew{}^\ell\rangle)| \\
		\displaystyle  \le   \frac{1}{N} \sum_{\ell = 1}^{N}  \max_{\sNew{t+2} \in \Adom(\sNew{t+1} )} |  \mathcal{Q}_{t+1}(\sNew{t+2}, \langle \sHist{t+1}, \zHist{t} \oplus \zNew{}^\ell\rangle)  -  Q_{t+1}^{*}(\sNew{t+2},\langle \sHist{t+1}, \zHist{t} \oplus \zNew{}^\ell\rangle) | \\
		\displaystyle\le  \lambda(H - t -1) 
		\end{array}
		\end{equation}
		where the first inequality is due to  triangle inequality and 
		the last inequality follows from induction hypothesis. 
		
		Finally, when $t = t'$,  $| \mathcal{Q}_{t'} (\sNew{t'+1}, d_{t'}) - {Q}^*_{t'} (\sNew{t'+1}, d_{t'}) | \le \lambda(H - t')$~\eqref{eq:lemma4-conclusion} for all $\sNew{t'+1}\in\mathcal{A}(\sNew{t'})$
		since $d_t=d_{t'}$.
	\end{proof}
	%
	\emph{Main proof.} It follows immediately from Lemmas~\ref{lemma:3} and~\ref{lemma:4} that the probability of $| \mathcal{Q}_t (\sNew{t+1}, d_t) - {Q}^*_t (\sNew{t+1}, d_t) | \le \lambda H$ for all $\sNew{t+1} \in \Adom(\sNew{t} )$ is at least $1 -  2 (N A)^H \exp(-N \lambda^2 / (2{K}^2))$ where $K$ is previously defined in Lemma~\ref{lemma:concentration}. 
	
	To guarantee that the probability of $| \mathcal{Q}_t (\sNew{t+1}, d_t) - {Q}^*_t (\sNew{t+1}, d_t) | \le \lambda H$ for all $\sNew{t+1} \in \Adom(\sNew{t} )$ is at least $1 - \delta$, the value of $N$ has to satisfy the following inequality:
	$$1 -  2 \left(N A |\right)^H \exp\left(-\frac{N \lambda^2}{2{K}^2}\right) \ge 1 - \delta\ ,$$
	which is equivalent to
	\begin{equation}
	\label{eq:th1-1}
	N \ge \frac{2 K^2}{\lambda^2} \left( H \log{N}  + H \log\left(A \right) + \log{\frac{2}{\delta}} \right).
	\end{equation}
	Using the identity $\log{N} \le \nu N - \log{\nu} - 1$ for $\nu = \lambda^2/(4 K^2 H)$, the RHS of \eqref{eq:th1-1} can be bounded from above by 
	$$\frac{N}{2} + \frac{2 K^2}{\lambda^2} \left( H \log\left(\frac{4 K^2 H  A}{e\lambda^2}\right) + \log{\frac{2}{\delta}} \right).$$
	Therefore, to satisfy~\eqref{eq:th1-1}, it suffices to determine the value of $N$ such that 
	$$
	N \ge \frac{N}{2} + \frac{2 K^2}{\lambda^2} \left( H \log\left(\frac{4 K^2 H A}{e\lambda^2}\right) + \log{\frac{2}{\delta}} \right)
	$$
	by setting 
	$$
	N = \frac{4 K^2}{\lambda^2} \left( H \log\left(\frac{4 K^2 H A}{e\lambda^2}\right) + \log{\frac{2}{\delta}} \right)
	$$
	where $K$ is previously defined in Lemma~\ref{lemma:concentration}. 	
	By assuming $H$, $\sigma^2_y$, and $\sigma^2_n$ as constants,
	$$
	N =\mathcal{O} \left(\frac{\kappa^{2H}}{\lambda^2} \log\left(\frac{\kappa A}{\delta\lambda}\right)\right).
	$$
\end{proof}

\section{Proof of Theorem~\ref{th-mle_bound}}
\label{sec:th-mle_bound_proof}

\begin{proof}
	Similar to~\eqref{eq:u-function}, the following intermediate function is introduced: 
	\begin{equation}
	\label{eq:u-mle}
	\mathds{U}_t (\sNew{t+1}, d_t) \triangleq R(\sNew{t+1}, d_t) + 
	V^{*}_{t+1} (\langle \sHist{t+1}, \zHist{t} \oplus \mu_{\sNew{t+1} | d_t} \rangle ).
	\end{equation}
	for $t =0, \ldots, H-1$. 
	
	We will first bound $| Q^{*}_t (\sNew{t+1}, d_t) - \mathds{U}_t (\sNew{t+1}, d_t)|$: 
	\begin{equation} 
	\label{eq:ap-22}
	\begin{array}{l}
	\displaystyle | Q^{*}_t (\sNew{t+1}, d_t) - \mathds{U}_t (\sNew{t+1}, d_t)| \\
	\displaystyle =  \left| \int_{\mathbb{R}^{\kappa}}{ \left(  V^*_{t+1}(\langle \sHist{t+1}, \zHist{t} \oplus \zNew{t+1}  \rangle) - V^{*}_{t+1} (\langle \sHist{t+1}, \zHist{t} \oplus \mu_{\sNew{t+1} | d_t} \rangle )  \right)} \ 
	{ p(\zNew{t+1} | \sNew{t+1}, d_t)}\ \text{d}\zNew{t+1} \right| \\ 
	\displaystyle \le  L_{t+1}(\sHist{t+1}) \  \int_{\mathbb{R}^{\kappa}}{ \lVert \zNew{t+1} - \mu_{\sNew{t+1} | d_t} \rVert \ p(\zNew{t+1} | \sNew{t+1}, d_t) }\ \text{d} \zNew{t+1} \\ 
	\displaystyle =  L_{t+1}(\sHist{t+1}) \  \int_{\mathbb{R}^{\kappa}} \lVert \Psi \mathbf{x}_{t+1}\rVert  \frac{1}{|\Psi|} p(\xNew{t+1}) \left|\frac{\partial \zNew{t+1}}{\partial \xNew{t+1}}\right|  \text{d} \mathbf{x}_{t+1}  \\ 
	\displaystyle =  L_{t+1}(\sHist{t+1}) \  \int_{\mathbb{R}^{\kappa}} \lVert \Psi \mathbf{x}_{t+1}\rVert \ p(\xNew{t+1}) \ \text{d} \mathbf{x}_{t+1}  \\ 
	\displaystyle \le  L_{t+1}(\sHist{t+1}) \  \lVert \Psi \lVert_{F} \ \mathbb{E}_{\xNew{t+1}}[\lVert\xNew{t+1} \lVert] \\ 
	\displaystyle =   L_{t+1}(\sHist{t+1}) \ \sqrt{\mathrm{Tr}(\Sigma_{\sNew{t+1} | \sHist{t}})} \  \mathbb{E}_{\xNew{t+1}}[\lVert\xNew{t+1} \lVert] \\ 
	\displaystyle =\mathcal{O}(\kappa^{H - t}\sqrt{H!/(t+1)!}\ \sigma_n (1+\sigma^2_y/\sigma^2_n)^{H-t-1/2})
	\ \mathbb{E}_{\xNew{t+1}}[\lVert\xNew{t+1} \lVert] \\
	\displaystyle =\mathcal{O}(\kappa^{H - t+1/2}\sqrt{H!/(t+1)!}\ \sigma_n (1+\sigma^2_y/\sigma^2_n)^{H-t-1/2})
	\end{array}
	\end{equation}
	where the first equality is due to \eqref{eq:OptimalValFunDef} and \eqref{eq:u-mle}, 
	the first inequality is a direct consequence of Theorem~\ref{th:lip-optimal} in Appendix~\ref{lip-optimal-value},
	the second equality follows from~\eqref{eq_2_11}, $p(\zNew{t+1} | \sNew{t+1}, d_t) = p(\xNew{t+1} =\Psi^{-1}(\zNew{t+1} - \mu_{\sNew{t+1} | d_t}) )/|\Psi|$ (see Section~$35.1.2$ in~\cite{statlect}), 
	and an integration by substitution for multiple variables,
	the third equality is due to $|{\partial \zNew{t+1}/\partial \xNew{t+1}}| = |\Psi|$, 
	the second inequality is due to the submultiplicativity of the Frobenius norm (see Section II.$2.1$ in~\cite{Stewart90}),
	the fourth equality follows from a property of Frobenius norm (see Section $10.4.3$ in \cite{cookbook}),
	the second last equality is due to~\eqref{eq_2_15}, and 
	the last equality follows from $\mathbb{E}_{\xNew{t+1}}[\lVert\xNew{t+1} \lVert]\le\sqrt{\kappa}$ (see Section $3.1$ in \cite{chandrasekaran2012}).
	
	We will now give a proof by induction on $t$ that  
	\begin{equation}
	\label{bobo}
	| Q^{*}_t (\sNew{t+1}, d_t) - \mathds{Q}_t (\sNew{t+1}, d_t)| \le \theta_t
	\end{equation}
	for all $\sNew{t+1} \in \Adom(\sNew{t})$ where 
	\begin{equation}
	\theta_t \triangleq \mathcal{O}(\kappa^{H - t+1/2}\sqrt{H!/(t+1)!}\ \sigma_n (1+\sigma^2_y/\sigma^2_n)^{H-t-1/2})\ .
	\label{grabtaxi}
	\end{equation} 
	When $t = H-1$, $Q_t^*(\sNew{t+1}, d_t)  -  \mathds{Q}_t(\sNew{t+1}, d_t) = 0$.
	So,~\eqref{bobo} holds for the base case. 
	Supposing~\eqref{bobo} holds for $t+1$ (i.e. induction hypothesis), we will prove that it holds for $t = 0,\ldots, H - 2$:
	\begin{equation} 
	\label{eq:app-36}
	\begin{array}{l}
	\displaystyle   | Q^{*}_t (\sNew{t+1}, d_t) - \mathds{Q}_t (\sNew{t+1}, d_t)| \\ 
	\displaystyle \le  	| Q^{*}_t (\sNew{t+1}, d_t) - \mathds{U}_t (\sNew{t+1}, d_t)| + 	| \mathds{U}_t (\sNew{t+1}, d_t) - \mathds{Q}_t (\sNew{t+1}, d_t) | \\
	\displaystyle  \le \mathcal{O}(\kappa^{H - t+1/2}\sqrt{H!/(t+1)!}\ \sigma_n (1+\sigma^2_y/\sigma^2_n)^{H-t-1/2})  \\
	\displaystyle\quad+  |V^{*}_{t+1} (\langle \sHist{t+1}, \zHist{t} \oplus \mu_{\sNew{t+1} | d_t} \rangle ) - \mathds{V}_{t+1} (\langle \sHist{t+1}, \zHist{t} \oplus \mu_{\sNew{t+1} | d_t} \rangle )| \\
	
	\displaystyle  \le \mathcal{O}(\kappa^{H - t+1/2}\sqrt{H!/(t+1)!}\ \sigma_n (1+\sigma^2_y/\sigma^2_n)^{H-t-1/2}) \\
	\displaystyle\quad +  \max_{\sNew{t+2} \in \Adom(\sNew{t+1})} |  Q_{t+1}^{*}(\sNew{t+2}, \langle \sHist{t+1}, \zHist{t} \oplus \mu_{\sNew{t+1} | d_t} \rangle)  -  \mathds{Q}_{t+1}(\sNew{t+2}, \langle \sHist{t+1}, \zHist{t} \oplus \mu_{\sNew{t+1} | d_t} \rangle) | \\
	\displaystyle \le \mathcal{O}(\kappa^{H - t+1/2}\sqrt{H!/(t+1)!}\ \sigma_n (1+\sigma^2_y/\sigma^2_n)^{H-t-1/2}) +  \theta_{t+1}\\
	\displaystyle = \mathcal{O}(\kappa^{H - t+1/2}\sqrt{H!/(t+1)!}\ \sigma_n (1+\sigma^2_y/\sigma^2_n)^{H-t-1/2}) \\
	\displaystyle = \theta_t
	\end{array}
	\end{equation}
	where the first inequality is due to triangle inequality, 
	the second inequality is due to~\eqref{eq:ap-22}, \eqref{ml-policy}, and \eqref{eq:u-mle},
	and the last inequality is due to the induction hypothesis. 
	
	Finally, by assuming $H$, $\sigma^2_y$, and $\sigma^2_n$ as constants, it follows from~\eqref{eq:app-36} that $\theta \triangleq \max_{t}\theta_t = \mathcal{O}(\kappa^{H +1/2})$ and Theorem~\ref{th-mle_bound} follows.
\end{proof}

\section{Proof of Theorem~\ref{th:expected}}
\label{th:expected-proof}
We first formally discuss the implications of our tractable choice of the if condition in~\eqref{eq_4_8}
for theoretically guaranteeing the performance of our $\epsilon$-Macro-GPO policy $\pi^\epsilon$:

{\bf I.} In the likely event (with a high probability of at least $1-\delta$) that $| \mathcal{Q}_t (\sNew{t+1}, d_t) - {Q}^*_t (\sNew{t+1}, d_t) | \le \lambda H$ for all $\sNew{t+1} \in \Adom(\sNew{t})$ (Theorem~\ref{th:1_new}), 
\begin{equation*}
\begin{array}{l}
\displaystyle |\mathcal{Q}_t (\sNew{t+1}, d_t) -\mathds{Q}_t(\sNew{t+1}, d_t)|\\
\displaystyle \le | \mathcal{Q}_t (\sNew{t+1}, d_t) - {Q}^*_t (\sNew{t+1}, d_t) | + |{Q}^*_t (\sNew{t+1}, d_t) - \mathds{Q}_t \hspace{-0.3mm}(\sNew{t+1}, d_t)| \\
\displaystyle \leq \lambda H  + \theta
\end{array}
\end{equation*} 
for all $\sNew{t+1} \in \Adom(\sNew{t})$ such that the first inequality is due to triangle inequality and the second inequality is due to Theorems~\ref{th:1_new} and~\ref{th-mle_bound}. 
Consequently, according to~\eqref{eq_4_8}, $Q^{\epsilon}_t (\sNew{t+1}, d_{t}) = \mathcal{Q}_t (\sNew{t+1}, d_t)$ for all $\sNew{t+1} \in \Adom(\sNew{t})$ and $\pi^\epsilon(d_{t})$ thus selects the same macro-action as the policy induced by stochastic sampling~\eqref{approx-policy}.

{\bf II.}  In the unlikely event (with an arbitrarily small probability of at most $\delta$) that 
$\mathcal{Q}_t (\sNew{t+1}, d_t)$~\eqref{approx-policy} is unboundedly far from ${Q}^*_t (\sNew{t+1}, d_t)$~\eqref{eq:OptimalValFunDef}  
(i.e., $| \mathcal{Q}_t (\sNew{t+1}, d_t) - {Q}^*_t (\sNew{t+1}, d_t) | > \lambda H$) for some $\sNew{t+1} \in \Adom(\sNew{t})$, 
$\pi^\epsilon(d_{t})$~\eqref{eq_4_8} guarantees that, for any selected macro-action $\sNew{t+1} \in \Adom(\sNew{t})$,
$$
\hspace{-1.9mm}
\begin{array}{l}
| {Q}^{\epsilon}_t (\sNew{t+1}, d_t) - {Q}^*_t (\sNew{t+1}, d_t) |\\
=\hspace{-1mm}
\begin{cases}
|\mathcal{Q}_t \hspace{-0.3mm}(\sNew{t+1}, d_t)\hspace{-0.8mm} -\hspace{-0.6mm} {Q}^*_t \hspace{-0.3mm}(\sNew{t+1}, d_t) | &\hspace{-1.05mm}  
\begin{array}{l}
\text{if } |\mathcal{Q}_t\hspace{-0.3mm} (\sNew{t+1}, d_t)\hspace{-0.8mm} -\hspace{-0.6mm} \mathds{Q}_t\hspace{-0.3mm}(\sNew{t+1}, d_t)| \le \lambda H  + \theta , 
\end{array}
\\
|\mathds{Q}_t \hspace{-0.3mm}(\sNew{t+1}, d_t)\hspace{-0.8mm} -\hspace{-0.6mm} {Q}^*_t \hspace{-0.3mm}(\sNew{t+1}, d_t) |     & \hspace{0.65mm}\text{otherwise};
\end{cases}
\vspace{0.5mm}\\
\le\hspace{-1mm}
\begin{cases}
\hspace{-1.78mm}
\begin{array}{l}
|\mathcal{Q}_t \hspace{-0.3mm}(\sNew{t+1}, d_t)\hspace{-0.8mm} -\hspace{-0.6mm} \mathds{Q}_t \hspace{-0.3mm}(\sNew{t+1}, d_t) |
+ |\mathds{Q}_t \hspace{-0.3mm}(\sNew{t+1}, d_t)\hspace{-0.8mm} -\hspace{-0.6mm} {Q}^*_t \hspace{-0.3mm}(\sNew{t+1}, d_t) |
\end{array}
&\hspace{-5.5mm}  
\begin{array}{l}
\text{if } |\mathcal{Q}_t\hspace{-0.3mm} (\sNew{t+1}, d_t)\hspace{-0.8mm} -\hspace{-0.6mm} \mathds{Q}_t\hspace{-0.3mm}(\sNew{t+1}, d_t)| 
\le \lambda H  + \theta , 
\end{array}
\\
\theta     & \hspace{-3.8mm}\text{otherwise};
\end{cases}
\\
\le \lambda H +2\theta\ ,\quad\text{by triangle inequality and Theorem~\ref{th-mle_bound}.}
\end{array}
$$
%
The above two implications of our tractable choice of the if condition in~\eqref{eq_4_8} are central to establishing our main result deterministically  	
bounding the \emph{expected} performance loss of $\pi^\epsilon$  relative to that of Bayes-optimal Macro-GPO policy $\pi^*$, that is, policy $\pi^\epsilon$ is $\epsilon$-Bayes-optimal.

The following lemmas are needed to prove our main result here:\\
\begin{lemma}
	Suppose that the observations $d_{t}$, $H\in\mathbb{Z}^+$, a budget of $\kappa(H-t)$ input locations for $t=0,\ldots, H-1$, $\delta\in(0, 1)$, and $\lambda > 0$ are given.
	Then, the probability of
	$$| {Q}^*_t (\pi^*(d_t), d_t) - {Q}^*_t (\pi^\epsilon(d_{t}), d_t) | \le 2 \lambda H$$ 
	is at least $1-\delta$ by setting $N$ according to that in Theorem~\ref{th:1_new}.
	\label{smith}
\end{lemma}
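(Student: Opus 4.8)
The plan is to condition on the high-probability event of Theorem~\ref{th:1_new} and then run a standard ``argmax of a good approximation is near-optimal'' argument. Let $E$ denote the event that $|\mathcal{Q}_t(\sNew{t+1}, d_t) - {Q}^*_t(\sNew{t+1}, d_t)| \le \lambda H$ holds for all $\sNew{t+1} \in \Adom(\sNew{t})$; by Theorem~\ref{th:1_new}, setting $N$ as prescribed there gives $P(E) \ge 1-\delta$. The entire bound is then established deterministically on $E$, so the claimed probability guarantee follows immediately.

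The first key step is to observe that on $E$, implication~I (derived in the text below~\eqref{eq_4_8}) forces ${Q}^{\epsilon}_t(\sNew{t+1}, d_t) = \mathcal{Q}_t(\sNew{t+1}, d_t)$ for every $\sNew{t+1} \in \Adom(\sNew{t})$, because the if-condition $|\mathcal{Q}_t - \mathds{Q}_t| \le \lambda H + \theta$ in~\eqref{eq_4_8} is satisfied for all candidate macro-actions. Consequently $\pi^\epsilon(d_t) = \argmax_{\sNew{t+1} \in \Adom(\sNew{t})} \mathcal{Q}_t(\sNew{t+1}, d_t)$; that is, on $E$ the policy $\pi^\epsilon$ behaves exactly as the stochastic-sampling policy~\eqref{approx-policy}.

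The second step is a telescoping decomposition. Writing $a^* \triangleq \pi^*(d_t)$ and $a^\epsilon \triangleq \pi^\epsilon(d_t)$, and noting ${Q}^*_t(a^*, d_t) \ge {Q}^*_t(a^\epsilon, d_t)$ by definition of $\pi^*$, I would write
$$
{Q}^*_t(a^*, d_t) - {Q}^*_t(a^\epsilon, d_t) = \left[{Q}^*_t(a^*, d_t) - \mathcal{Q}_t(a^*, d_t)\right] + \left[\mathcal{Q}_t(a^*, d_t) - \mathcal{Q}_t(a^\epsilon, d_t)\right] + \left[\mathcal{Q}_t(a^\epsilon, d_t) - {Q}^*_t(a^\epsilon, d_t)\right].
$$
On $E$, the first and third bracketed terms are each bounded in magnitude by $\lambda H$ via Theorem~\ref{th:1_new}, while the middle term is non-positive because $a^\epsilon$ maximizes $\mathcal{Q}_t$ (step~1). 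Summing yields ${Q}^*_t(a^*, d_t) - {Q}^*_t(a^\epsilon, d_t) \le 2\lambda H$ on $E$, and since the left-hand side is non-negative this gives $|{Q}^*_t(\pi^*(d_t), d_t) - {Q}^*_t(\pi^\epsilon(d_t), d_t)| \le 2\lambda H$ whenever $E$ holds.

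The main obstacle---indeed the only subtlety---is step~1: I must ensure that $\pi^\epsilon$ does not fall back on the $\mathds{Q}_t$ branch of~\eqref{eq_4_8} on the event $E$. This is precisely what implication~I guarantees, since on $E$ the triangle inequality together with Theorem~\ref{th-mle_bound} shows $|\mathcal{Q}_t(\sNew{t+1}, d_t) - \mathds{Q}_t(\sNew{t+1}, d_t)| \le \lambda H + \theta$ for all $\sNew{t+1}$, so the stochastic-sampling branch is always selected and $\pi^\epsilon$ coincides with the $\argmax$ of $\mathcal{Q}_t$. Once this is secured, the remaining argmax argument is routine; note in particular that $\theta$ cancels out and plays no role in the final clean bound of $2\lambda H$.
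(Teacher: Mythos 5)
Your proposal is correct and follows essentially the same route as the paper's proof: both condition on the high-probability event of Theorem~\ref{th:1_new}, invoke implication~I to ensure $\pi^\epsilon(d_t)$ coincides with the $\argmax$ of $\mathcal{Q}_t$ (so $\mathcal{Q}_t(\pi^\epsilon(d_t),d_t) \ge \mathcal{Q}_t(\pi^*(d_t),d_t)$), and then absorb two $\lambda H$ approximation errors. Your explicit three-term telescoping sum is just an unrolled version of the paper's chained inequalities, so there is nothing further to reconcile.
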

\begin{proof}
	$$
	\begin{array}{l}
	\displaystyle {Q}^*_t (\pi^*(d_t), d_t) - {Q}^*_t (\pi^\epsilon(d_{t}), d_t) \\
	\displaystyle\le {Q}^*_t (\pi^*(d_t), d_t) - \mathcal{Q}_t (\pi^\epsilon(d_{t}), d_t) + \lambda H\\
	\displaystyle\le\max_{\sNew{t+1} \in \Adom(\sNew{t})} | \mathcal{Q}_t (\sNew{t+1}, d_t) - {Q}^*_t (\sNew{t+1}, d_t) | + \lambda H\\
	\displaystyle  \le\lambda H + \lambda H\\
	\displaystyle = 2 \lambda H
	\end{array}
	$$
	where the first and last inequalities are due to Theorem~\ref{th:1_new} and the second inequality is further due to implication I.
\end{proof}

\begin{lemma}
	Suppose that the observations $d_{t}$, $H\in\mathbb{Z}^+$, a budget of $\kappa(H-t)$ input locations for $t=0,\ldots, H-1$, $\delta\in(0, 1)$, and $\lambda > 0$ are given.
	Then, 
	$$Q^*_t (\pi^*(d_t), d_t) - \mathbb{E}_{\pi^{\epsilon}(d_t)}[Q^*_t (\pi^{\epsilon}(d_t), d_t)]\le 2\lambda H + 4\delta\theta$$ 
	where $\theta$ is previously defined in Theorem~\ref{th-mle_bound}.
	\label{lem:fixed_policy_beta}	
\end{lemma}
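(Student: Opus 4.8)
The plan is to decompose the expected per-stage loss over the randomness in $\pi^\epsilon$ (which originates entirely from the stochastic samples defining $\mathcal{Q}_t$~\eqref{approx-policy}) into the high-probability event of Theorem~\ref{th:1_new} and its complement, bounding the loss on each piece with a different tool: Lemma~\ref{smith} on the likely event, and a \emph{deterministic} worst-case bound (from implication~II) on the unlikely event. Throughout I would write $G \triangleq Q^*_t(\pi^*(d_t), d_t) - Q^*_t(\pi^\epsilon(d_t), d_t)$ for the random loss, and record at the outset that $G \ge 0$, since $\pi^*(d_t) = \argmax_{\sNew{t+1}} Q^*_t(\sNew{t+1}, d_t)$ by~\eqref{eq:OptimalValFunDef}. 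Because $Q^*_t(\pi^*(d_t),d_t)$ does not depend on $\pi^\epsilon$, the left-hand side of the claim is exactly $\mathbb{E}_{\pi^\epsilon(d_t)}[G]$.

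First I would establish a deterministic upper bound on $G$ valid for \emph{every} realization of the samples. Writing $a^* \triangleq \pi^*(d_t)$ and $a^\epsilon \triangleq \pi^\epsilon(d_t)$, the optimality of $a^\epsilon$ for $Q^\epsilon_t$ in~\eqref{eq_4_8} gives $Q^\epsilon_t(a^*, d_t) \le Q^\epsilon_t(a^\epsilon, d_t)$, while implication~II guarantees $|Q^\epsilon_t(\sNew{t+1}, d_t) - Q^*_t(\sNew{t+1}, d_t)| \le \lambda H + 2\theta$ for every $\sNew{t+1}$ unconditionally (it follows from the triangle inequality and Theorem~\ref{th-mle_bound}, independently of whether the good event holds). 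Chaining these through $a^*$ and $a^\epsilon$ yields $G \le 2(\lambda H + 2\theta) = 2\lambda H + 4\theta$ on the whole sample space.

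Next, let $E$ be the event of Theorem~\ref{th:1_new} that $|\mathcal{Q}_t(\sNew{t+1}, d_t) - Q^*_t(\sNew{t+1}, d_t)| \le \lambda H$ for all $\sNew{t+1}$, so $P(E) \ge 1-\delta$. On $E$, Lemma~\ref{smith} supplies the sharper bound $G \le 2\lambda H$. Splitting the expectation by the law of total expectation and applying the two bounds on the respective events,
\begin{equation*}
\mathbb{E}_{\pi^\epsilon(d_t)}[G] \le 2\lambda H\, P(E) + (2\lambda H + 4\theta)\, P(E^c) = 2\lambda H + 4\theta\, P(E^c) \le 2\lambda H + 4\delta\theta,
\end{equation*}
which is the desired statement. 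The key simplification is that the $2\lambda H$ contributions from the two events recombine into a single $2\lambda H$ (because $P(E)+P(E^c)=1$), so that only the $4\theta$ term is scaled down by $P(E^c) \le \delta$.

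The hard part is not any single estimate but arranging the two-sided accounting so the constants come out exactly: one must (i) recognize that the crude bound $2\lambda H + 4\theta$ is genuinely deterministic and therefore able to absorb the rare sampling-failure case, and (ii) avoid the naive bound $2\lambda H(1+\delta) + 4\delta\theta$ by exploiting the cancellation above. A secondary care point is verifying that $E$ and $G$ are measurable with respect to the same randomness (the samples throughout the subtree rooted at $d_t$ that determine $\mathcal{Q}_t$ and hence $\pi^\epsilon(d_t)$), so conditioning on $E$ is legitimate.
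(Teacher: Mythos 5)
Your proof is correct and takes essentially the same approach as the paper's: the paper likewise splits the expectation into the high-probability event where Lemma~\ref{smith} gives $G \le 2\lambda H$ and its complement of probability at most $\delta$, where the deterministic bound $G \le 2\lambda H + 4\theta$ (obtained, as you do, by chaining implication~II through $Q^{\epsilon}_t$ at both $\pi^*(d_t)$ and $\pi^{\epsilon}(d_t)$) applies, yielding $(1-\delta)(2\lambda H) + \delta(2\lambda H + 4\theta) = 2\lambda H + 4\delta\theta$. Your explicit remarks that the implication~II bound holds unconditionally and that the two $2\lambda H$ contributions recombine are precisely what the paper's computation uses implicitly.
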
	
\begin{proof}
	By Lemma~\ref{smith}, the probability of $| {Q}^*_t (\pi^*(d_t), d_t) - {Q}^*_t (\pi^\epsilon(d_{t}), d_t) | \le 2 \lambda H$ is at least $1-\delta$. 
	Otherwise, the probability of $| {Q}^*_t (\pi^*(d_t), d_t) - {Q}^*_t (\pi^\epsilon(d_{t}), d_t) | > 2 \lambda H$ is at most $\delta$. In the latter case,
	\begin{equation}
	\begin{array}{l}
	\displaystyle | {Q}^*_t (\pi^*(d_t), d_t) - {Q}^*_t (\pi^\epsilon(d_{t}), d_t) |\\
	\displaystyle\le | {Q}^*_t (\pi^*(d_t), d_t) -{Q}^{\epsilon}_t (\pi^\epsilon(d_{t}), d_t)| + |{Q}^{\epsilon}_t (\pi^\epsilon(d_{t}), d_t) -{Q}^*_t (\pi^\epsilon(d_{t}), d_t) |\\
	\displaystyle\le \max_{\sNew{t+1} \in \Adom(\sNew{t})}  | {Q}^{\epsilon}_t (\sNew{t+1}, d_t) - {Q}^*_t (\sNew{t+1}, d_t) |  + \lambda H +2\theta\\
	\displaystyle\le \lambda H +2\theta + \lambda H +2\theta\\
	\displaystyle = 2\lambda H +4\theta
	\end{array}
	\label{smith2}
	\end{equation}
	where the first inequality is due to triangle inequality and the last two inequalities are due to implication II.
	Recall that $\pi^{\epsilon}$ is a stochastic policy due to its use of stochastic sampling in $\mathcal{Q}_t$~\eqref{approx-policy}, which implies that 
	$\pi^{\epsilon}(d_t)$ is a random variable. 
	Then,	
	$$
	\begin{array}{l}
	\displaystyle Q^*_t (\pi^*(d_t), d_t) - \mathbb{E}_{\pi^{\epsilon}(d_t)}[Q^*_t (\pi^{\epsilon}(d_t), d_t)]\\
	\displaystyle = \mathbb{E}_{\pi^{\epsilon}(d_t)}[ Q^*_t (\pi^*(d_t), d_t) - Q^*_t (\pi^{\epsilon}(d_t), d_t)]\\
	\le (1-\delta) (2\lambda H) + \delta (2\lambda H +4\theta)\\
	= 2\lambda H + 4\delta\theta
	\end{array}
	$$
	where the expectation is with respect to random variable $\pi^{\epsilon}(d_t)$ and the inequality follows from Lemma~\ref{smith} and~\eqref{smith2}.
\end{proof}
\emph{Main proof.} 
We will give a proof by induction on $t$ that 
\begin{equation}
V_t^*(d_t) - \mathbb{E}_{\pi^\epsilon}[ V_t^{\pi^{\epsilon}}(d_t)] \le (2\lambda H + 4\delta\theta) (H - t)\ .
\label{beta_V}
\end{equation}
When $t = H-1$ (i.e., base case), 
$$
\begin{array}{l}
\displaystyle  V_{H-1}^*(d_{H-1}) - \mathbb{E}_{\pi^\epsilon} [ V_{H-1}^{\pi^{\epsilon}}(d_{H-1}) ] \\ 
\displaystyle = Q_{H-1}^*(\pi^*(d_{H-1}), d_{H-1}) -  \mathbb{E}_{\pi^\epsilon} [ Q_t^{\pi^{\epsilon}} (\pi^\epsilon(d_{H-1}), d_{H-1})] \\ 
\displaystyle = Q_{H-1}^*(\pi^*(d_{H-1}), d_{H-1}) -  \mathbb{E}_{\pi^\epsilon(d_{H-1})} [ R (\pi^\epsilon(d_{H-1}), d_{H-1})] \\ 
\displaystyle =   
Q_{H-1}^*(\pi^*(d_{H-1}), d_{H-1}) -  \mathbb{E}_{\pi^\epsilon(d_{H-1})} [ Q_t^{*} (\pi^\epsilon(d_{H-1}), d_{H-1})] \\ 		
\displaystyle \le 2\lambda H + 4\delta\theta
\end{array}
$$
where the first equality is due to~\eqref{eq:general-policy} and~\eqref{eq:OptimalValFunDef}, the second equality is due to~\eqref{eq:general-policy}, the third equality is due to~\eqref{eq:OptimalValFunDef}, and the inequality is due to Lemma~\ref{lem:fixed_policy_beta}. 
So,~\eqref{beta_V} holds for the base case.
Supposing~\eqref{beta_V} holds for $t+1$ (i.e., induction hypothesis), we will prove that it holds for $t = 0, \ldots, H-2$:			
\begin{equation}
\begin{array}{l}
\displaystyle  V_t^*(d_t) - \mathbb{E}_{\pi^\epsilon}[ V_t^{\pi^{\epsilon}}(d_t) ]\\ 
\displaystyle = Q_t^*(\pi^*(d_t), d_t) -  \mathbb{E}_{\pi^\epsilon} [ Q_t^{\pi^{\epsilon}} (\pi^\epsilon(d_t), d_t)]\\ 
\displaystyle = Q_t^*(\pi^*(d_t), d_t) -  \mathbb{E}_{\pi^{\epsilon}} [Q_t^* (\pi^{\epsilon}(d_t), d_t)]  + \mathbb{E}_{\pi^{\epsilon}}[ Q_t^* (\pi^{\epsilon}(d_t), d_t) ] -  \mathbb{E}_{\pi^{\epsilon}}[ Q_t^{\pi^{\epsilon}} (\pi^\epsilon(d_t), d_t)]\\ 
\displaystyle = Q_t^*(\pi^*(d_t), d_t) -  \mathbb{E}_{\pi^{\epsilon}(d_t)} [Q_t^* (\pi^{\epsilon}(d_t), d_t)]  +   \mathbb{E}_{\pi^\epsilon} [ Q_t^* (\pi^\epsilon(d_t), d_t)   - Q_t^{\pi^\epsilon} (\pi^\epsilon(d_t), d_t)] \\ 
\displaystyle \le  2\lambda H + 4\delta\theta +  \mathbb{E}_{\pi^\epsilon} [ Q_t^* (\pi^\epsilon(d_t), d_t)   - Q_t^{\pi^\epsilon} (\pi^\epsilon(d_t), d_t)] \\ 
\displaystyle  = 2\lambda H + 4\delta\theta +  \mathbb{E}_{\pi^\epsilon} [ \mathbb{E}_{ \zNew{t+1} | \pi^\epsilon(d_t), d_t} [V^*_{t+1} ( \langle \sHist{t} \oplus \pi^\epsilon(d_t),  \zHist{t} \oplus \zNew{t+1} \rangle) -  V^{\pi^\epsilon}_{t+1} (\langle \sHist{t} \oplus \pi^\epsilon(d_t),  \zHist{t} \oplus \zNew{t+1} \rangle)] ]\\ 
\displaystyle  = 2\lambda H + 4\delta\theta +  \mathbb{E}_{\pi^\epsilon(d_t)} [ \mathbb{E}_{\zNew{t+1} | \pi^\epsilon(d_t), d_t} [V^*_{t+1} ( \langle \sHist{t} \oplus \pi^\epsilon(d_t),  \zHist{t} \oplus \zNew{t+1} \rangle) -  \mathbb{E}_{\pi^\epsilon}[V^{\pi^\epsilon}_{t+1} (\langle \sHist{t} \oplus \pi^\epsilon(d_t),  \zHist{t} \oplus \zNew{t+1} \rangle)]] ]\\ 
\displaystyle \le  2\lambda H + 4\delta\theta + \mathbb{E}_{\pi^\epsilon(d_t)} [ \mathbb{E}_{\zNew{t+1} | \pi^\epsilon(d_t), d_t} [(2\lambda H + 4\delta\theta)(H - t - 1)]]\\
\displaystyle  = (2\lambda H + 4\delta\theta) (H- t)
\end{array}
\label{eq:smith3}		
\end{equation}
where the first and fourth equalities are due to~\eqref{eq:general-policy} and~\eqref{eq:OptimalValFunDef}, 
the first inequality is due to Lemma~\ref{lem:fixed_policy_beta}, 
and the last inequality is due to the induction hypothesis.

From~\eqref{eq:smith3}, when $t = 0$,
$$
V_0^*(d_0) - \mathbb{E}_{\pi^\epsilon}  [V_0^{\pi^{\epsilon}}(d_0)] \le 2H (\lambda H  + 2 \delta  \theta)\ .
$$
Let $\epsilon = 2H (\lambda H  + 2 \delta  \theta)$ by setting $\lambda = {\epsilon}/({4 H^2})$ and $\delta = {\epsilon}/({8 \theta H})$. Consequently, using Lemma~\ref{smith} and $\theta = \mathcal{O}( \kappa^{H + 1/2})$ previously defined in Theorem~\ref{th-mle_bound}, 	 
$$
N  = \mathcal{O} \left(  \frac{\kappa^{2H}}{\epsilon^2}  \log{\frac{\kappa A}{\epsilon}} \right)\ .
$$
%
\section{Anytime $\epsilon$-Macro-GPO}
\label{sec:anytime}
\subsection{Pseudocode}
\label{sec:pseudo}
The pseudocode is described in Algorithm~\ref{alg:Anytime} and explained below. The essential steps of the main function Anytime-$\epsilon$-Macro-GPO are as follows:
\begin{enumerate}
	\item Preprocessing (lines $40$-$42$): Compute $\Sigma_{\sNew{t+1} | \sHist{t}}$~\eqref{gp-posteriors}, $L_{t+1}(\sHist{t+1})$ (Definition~\ref{definition-L}), and $\mathds{Q}_t (\sNew{t+1}, d_t)$~\eqref{ml-policy} for all $\sHist{t+1}$ reachable from $\mathbf{s}_{0}$ and $t=0,\ldots,H-1$, and set $\theta$ according to Theorem~\ref{th-mle_bound} (Appendix~\ref{sec:th-mle_bound_proof});
	\item Iteratively and incrementally expand the partially constructed search tree rooted at node $d_0$ 
	by calling the recursive function ConstructTree (lines $44$-$45$) so as to tighten the upper heuristic bound $\overline{V}^{*}_{0}(d_0)$ and lower heuristic bound $\underline{V}^{*}_{0}(d_0)$ of $V^{*}_{0}(d_0)$, hence reducing the gap $\omega \triangleq \overline{V}^{*}_0(d_0) - \underline{V}^{*}_0(d_0)$ (line $46$); and
	\item Compute our anytime $\langle\omega,\epsilon\rangle$-Macro-GPO policy $\pi^{\omega\epsilon}(d_0)$ according to~\eqref{eq:anytime_policy} (lines $47$-$51$).
\end{enumerate}
The recursive function ConstructTree traverses down the partially constructed search tree by repeatedly selecting nodes $d_t$ with the largest uncertainty of their corresponding values $V^*_t(d_t)$ (i.e., largest gap $\overline{V}^{*}_t(d_t) - \underline{V}^{*}_t(d_t)$ between the upper and lower heuristic bounds of $V^*_t(d_t)$ so as to tighten them) until an unexplored node is reached.
%
%
Specifically, if the function ConstructTree selects an explored node $d_t$, then the following steps are  performed:
\begin{enumerate}
	\item Choose the macro-action $\sNew{t+1}$ with the tightest lower heuristic bound $\underline{Q}^{*}_t(\sNew{t+1}, d_t)$ of $Q^*_t(\sNew{t+1},d_t)$ (line $26$);
	\item Retrieve the samples $\{\zNew{}^\ell \}_{\ell = 1 ,\ldots, N}$ previously generated by function ExpandTree at node $d_t$ for macro-action $\sNew{t+1}$ (line $27$);
	\item Recursively and incrementally expand the partially constructed sub-tree rooted at node $\langle \sHist{t+1}, \zHist{t} \oplus \zNew{}^{\ell^*} \rangle$ with the largest uncertainty of its corresponding value $V^*_{t+1}(\langle \sHist{t+1}, \zHist{t} \oplus \zNew{}^{\ell^*} \rangle)$ (i.e., largest gap $\overline{V}^{*}_{t+1}(\langle \sHist{t+1}, \zHist{t} \oplus \zNew{}^{\ell^*} \rangle)  - \underline{V}^{*}_{t+1}(\langle \sHist{t+1}, \zHist{t} \oplus \zNew{}^{\ell^*} \rangle)$ between the upper and lower heuristic bounds of ${V}^{*}_{t+1}(\langle \sHist{t+1}, \zHist{t} \oplus \zNew{}^{\ell^*} \rangle)$ so as to tighten them) 
	(lines $28$-$29$);
	\item Use the tightened upper and lower heuristic bounds of ${V}^{*}_{t+1}(\langle \sHist{t+1}, \zHist{t} \oplus \zNew{}^{\ell^*} \rangle)$ at node $\langle \sHist{t+1}, \zHist{t} \oplus \zNew{}^{\ell^*} \rangle$ to refine the heuristic bounds at its siblings (see Corollary~\ref{th:anytime-2}) by exploiting the Lipschitz continuity of $V^*_{t+1}$ (Theorem~\ref{th:lip-optimal}) (line $30$); and
	\item Backpropagate the tightened/refined heuristic bounds at node $\langle \sHist{t+1}, \zHist{t} \oplus \zNew{}^{\ell^*} \rangle$ and its siblings to that at their parent node $d_t$
	(lines $31$-$35$).
\end{enumerate}
Otherwise, the function ConstructTree selects an unexplored node $d_t$ and constructs a ``minimal'' sub-tree rooted at node $d_t$ via the function ExpandTree (line $38$), the latter of which involves the following steps:	 	 
\begin{enumerate}
	\item For every macro-action $\sNew{t+1} \in \Adom(\sNew{t})$,
	\begin{enumerate}
		\item Draw $N$ i.i.d. multivariate Gaussian vectors $\{\zNew{}^\ell \}_{\ell = 1 ,\ldots, N}$ from GP posterior belief $p(\zNew{t+1} | \sNew{t+1}, d_t )$ (line $5$);
		\item For every child node $\langle \sHist{t+1}, \zHist{t} \oplus \zNew{}^\ell \rangle$, initialize the upper and lower heuristic bounds of ${V}^{*}_{t+1}(\langle \sHist{t+1}, \zHist{t} \oplus \zNew{}^{\ell} \rangle)$ (lines $6$-$8$) using Theorem~\ref{th-mle_bound}:
		\begin{equation}				
		\begin{array}{l}
		\displaystyle |\mathds{V}_{t+1} (\langle \sHist{t+1}, \zHist{t} \oplus \zNew{}^\ell \rangle) - {V}^*_{t+1} (\langle \sHist{t+1}, \zHist{t} \oplus \zNew{}^\ell \rangle) |\\
		\displaystyle = |\max_{\sNew{t+2} \in \Adom(\sNew{t+1})} \mathds{Q}_{t+1} (\sNew{t+2}, \langle \sHist{t+1}, \zHist{t} \oplus \zNew{}^\ell \rangle) - \max_{\sNew{t+2} \in \Adom(\sNew{t+1}))} {Q}^*_{t+1} (\sNew{t+2}, \langle \sHist{t+1}, \zHist{t} \oplus \zNew{}^\ell \rangle) |\\
		\displaystyle \le \max_{\sNew{t+2} \in \Adom(\sNew{t+1})} | \mathds{Q}_{t+1} (\sNew{t+2}, \langle \sHist{t+1}, \zHist{t} \oplus \zNew{}^\ell \rangle) - {Q}^*_{t+1} (\sNew{t+2}, \langle \sHist{t+1}, \zHist{t} \oplus \zNew{}^\ell \rangle) | \\
		\le \theta_{t+1}
		\end{array}
		\label{initbound}
		\end{equation}		
		where the equality is due to~\eqref{eq:OptimalValFunDef} and~\eqref{ml-policy}, $\theta_{t+1}$ is previously defined in~\eqref{grabtaxi}, and the last inequality follows from~\eqref{bobo} in the proof of Theorem~\ref{th-mle_bound};
		\item Recursively expand/construct a ``minimal'' sub-tree rooted at the child node $\langle \sHist{t+1}, \zHist{t} \oplus \zNew{}^{\overline{\ell}} \rangle$ using the most likely sample $\zNew{}^{\overline{\ell}}$ (lines $9$-$10$); 
		\item Use the tightened upper heuristic bound $\overline{V}^{*}_{t+1}(\langle \sHist{t+1}, \zHist{t} \oplus \zNew{}^{\overline{\ell}} \rangle)$ and lower heuristic bound $\underline{V}^{*}_{t+1}(\langle \sHist{t+1}, \zHist{t} \oplus \zNew{}^{\overline{\ell}} \rangle)$ of ${V}^{*}_{t+1}(\langle \sHist{t+1}, \zHist{t} \oplus \zNew{}^{\overline{\ell}} \rangle)$ at node $\langle \sHist{t+1}, \zHist{t} \oplus \zNew{}^{\overline{\ell}} \rangle$ to refine the heuristic bounds at its unexplored siblings (see Corollary~\ref{th:anytime-2}) by exploiting the Lipschitz continuity of $V^*_{t+1}$ (Theorem~\ref{th:lip-optimal}) (line $11$); and				
	\end{enumerate}
	\item Backpropagate the tightened/refined heuristic bounds at node $\langle \sHist{t+1}, \zHist{t} \oplus \zNew{}^{\overline{\ell}} \rangle$ and its siblings to that at their parent node $d_t$
	(lines $12$-$16$).
\end{enumerate}
\begin{algorithm}		
	\caption{Anytime $\epsilon$-Macro-GPO}
	\label{alg:Anytime}
	\begin{algorithmic}[1]
		\FUNCTION{$\text{ExpandTree}(t, d_t, \lambda)$}
		\IF{$t = H$}
		\STATE {\bf return} $\langle 0, 0 \rangle$  
		\ENDIF
		
		\FORALL{$\sNew{t+1} \in \Adom(\sNew{t})$} 		
		
		\STATE $\{\zNew{}^\ell \}_{\ell = 1 ,\ldots, N} \leftarrow$ Draw $N$ i.i.d. multivariate Gaussian vectors from GP posterior belief $p(\zNew{t+1} | \sNew{t+1}, d_t )$~\eqref{gp-posteriors}					
		\FORALL{$\zNew{}^\ell$} 
		\STATE $\underline{V}^{*}_{t+1}(\langle \sHist{t+1}, \zHist{t} \oplus \zNew{}^\ell \rangle) \leftarrow \mathds{V}_{t+1} (\langle \sHist{t+1}, \zHist{t} \oplus \zNew{}^\ell \rangle) - \theta_{t+1}$~\eqref{initbound}
		\STATE $\overline{V}^{*}_{t+1}(\langle \sHist{t+1}, \zHist{t} \oplus \zNew{}^\ell \rangle) \leftarrow \mathds{V}_{t+1} (\langle \sHist{t+1}, \zHist{t} \oplus \zNew{}^\ell \rangle) + \theta_{t+1}$~\eqref{initbound}
		\ENDFOR
		
		\STATE $\overline{\ell} \leftarrow \argmin_{\ell \in \{1, \ldots, N\}} {\lVert \zNew{}^\ell  - \mu_{\sNew{t+1} | d_t} \rVert}$
		
		\STATE $\langle \overline{V}^{*}_{t+1}(\langle \sHist{t+1}, \zHist{t} \oplus \zNew{}^{\overline{\ell}} \rangle),   \, \underline{V}^{*}_{t+1}(\langle \sHist{t+1}, \zHist{t} \oplus \zNew{}^{\overline{\ell}} \rangle)  \rangle \leftarrow \text{ExpandTree}(t+1,\langle \sHist{t+1}, \zHist{t} \oplus \zNew{}^{\overline{\ell}} \rangle, \lambda)$
		
		\STATE $\text{RefineBounds}(t, d_t,\sNew{t+1}, \overline{\ell})$ 
		
		\STATE $R(\sNew{t+1}, d_t) \leftarrow  \mathbf{1}^{\top} \mu_{\sNew{t+1} | d_t} +  0.5{\beta}    \log | I + \sigma_n^{-2} \Sigma_{\sNew{t+1} | \sHist{t}} |$
		
		\STATE $\underline{Q}^{*}_t(\sNew{t+1}, d_t) \leftarrow R(\sNew{t+1}, d_t) + {N}^{-1}\sum_{\ell = 1}^{N} \underline{V}^{*}_{t+1} (\langle \sHist{t+1}, \zHist{t} \oplus \zNew{}^\ell \rangle ) - \lambda$
		
		\STATE $\overline{Q}^{*}_t(\sNew{t+1}, d_t) \leftarrow R(\sNew{t+1}, d_t) + {N}^{-1} \sum_{\ell = 1}^{N} \overline{V}^{*}_{t+1} (\langle \sHist{t+1}, \zHist{t} \oplus \zNew{}^\ell \rangle ) + \lambda$
		
		\ENDFOR
		
		\STATE $\underline{V}^{*}_{t} (d_t) \leftarrow \max_{\sNew{t+1} \in \Adom(\sNew{t})} \underline{Q}^{*}_t(\sNew{t+1}, d_t) $
		
		\STATE $\overline{V}^{*}_{t} (d_t) \leftarrow \max_{\sNew{t+1} \in \Adom(\sNew{t})} \overline{Q}^{*}_t(\sNew{t+1}, d_t) $
		
		\STATE {\bf return} $\langle \overline{V}^{*}_{t} (d_t), \underline{V}^{*}_{t} (d_t) \rangle$		
		\ENDFUNCTION

		\FUNCTION{$\text{RefineBounds}(t, d_t, \sNew{t+1}, j)$}	
		\STATE $\{\zNew{}^\ell \}_{\ell = 1, \ldots, N} \leftarrow \text{RetrieveSamples}(t, d_t, \sNew{t+1})$
		\FORALL{$i \neq j$}
		\STATE $b \leftarrow L_{t+1}(\sHist{t+1})\lVert \zNew{}^i - \zNew{}^j \rVert$
		\STATE $\underline{V}^{*}_{t+1}(\langle \sHist{t+1}, \zHist{t} \oplus \zNew{}^i \rangle) \leftarrow \max (\underline{V}^{*}_{t+1}(\langle \sHist{t+1}, \zHist{t} \oplus \zNew{}^i \rangle), \underline{V}^{*}_{t+1}(\langle \sHist{t+1}, \zHist{t} \oplus \zNew{}^j \rangle) - b )$
		
		\STATE $\overline{V}^{*}_{t+1}(\langle \sHist{t+1}, \zHist{t} \oplus \zNew{}^i \rangle) \leftarrow \min (\overline{V}^{*}_{t+1}(\langle \sHist{t+1}, \zHist{t} \oplus \zNew{}^i \rangle),  \overline{V}^{*}_{t+1}(\langle \sHist{t+1}, \zHist{t} \oplus \zNew{}^j \rangle) + b )$								
		\ENDFOR 
		\ENDFUNCTION

		\FUNCTION{$\text{ConstructTree}(t, d_t, \lambda)$} 
		\IF {$d_t$ has been explored}
		
		\STATE $\sNew{t+1} \leftarrow \argmax_{\sNew{t+1}' \in \Adom(\sNew{t})} \underline{Q}^{*}_t(\sNew{t+1}', d_t)$
		
		\STATE $\{\zNew{}^\ell \}_{\ell = 1, \ldots, N} \leftarrow \text{RetrieveSamples}(t, d_t, \sNew{t+1})$
		
		\STATE $ \ell^* \leftarrow \argmax_{\ell \in \{1, \ldots, N\}} \overline{V}^{*}_{t+1}(\langle \sHist{t+1}, \zHist{t} \oplus \zNew{}^\ell \rangle)  - \underline{V}^{*}_{t+1}(\langle \sHist{t+1}, \zHist{t} \oplus \zNew{}^\ell \rangle)$

		\STATE $\langle \overline{V}^{*}_{t+1}(\langle \sHist{t+1}, \zHist{t} \oplus \zNew{}^{\ell^*} \rangle),   \, \underline{V}^{*}_{t+1}(\langle \sHist{t+1}, \zHist{t} \oplus \zNew{}^{\ell^*} \rangle) \rangle \leftarrow \text{ConstructTree}(t+1, \langle \sHist{t+1}, \zHist{t} \oplus \zNew{}^{\ell^*} \rangle, \lambda)$
		
		\STATE $\text{RefineBounds}(t, d_t,\sNew{t+1}, \ell^*)$ 		
		
		\STATE $R(\sNew{t+1}, d_t) \leftarrow  \mathbf{1}^{\top} \mu_{\sNew{t+1} | d_t} +  0.5{\beta} \log | I + \sigma_n^{-2} \Sigma_{\sNew{t+1} | \sHist{t}} |$
		
		\STATE $\underline{Q}^{*}_t(\sNew{t+1}, d_t) \leftarrow R(\sNew{t+1}, d_t) + {N}^{-1} \sum_{\ell = 1}^{N}  \underline{V}^{*}_{t+1} (\langle \sHist{t+1}, \zHist{t} \oplus \zNew{}^\ell \rangle ) - \lambda$
		
		\STATE $\overline{Q}^{*}_t(\sNew{t+1}, d_t) \leftarrow R(\sNew{t+1}, d_t) + {N}^{-1} \sum_{\ell = 1}^{N} \overline{V}^{*}_{t+1} (\langle \sHist{t+1}, \zHist{t} \oplus \zNew{}^\ell \rangle ) + \lambda$

		\STATE $\underline{V}^{*}_{t} (d_t) \leftarrow \max_{\sNew{t+1} \in \Adom(\sNew{t})}
		\underline{Q}^{*}_t(\sNew{t+1}, d_t) $
		
		\STATE $\overline{V}^{*}_{t} (d_t) \leftarrow \max_{\sNew{t+1} \in \Adom(\sNew{t})} \overline{Q}^{*}_t(\sNew{t+1}, d_t)$

		\STATE {\bf return} $\langle \overline{V}^{*}_{t} (d_t), \underline{V}^{*}_{t} (d_t) \rangle$		
		\ELSE
		\STATE {\bf return} $\text{ExpandTree}(t, d_t, \lambda)$
		\ENDIF
		\ENDFUNCTION

		\FUNCTION{Anytime-$\epsilon$-Macro-GPO$(d_0, \epsilon, H)$}
		
		\FORALL{$\sHist{t+1}$ reachable from $\mathbf{s}_{0}$ and $t=0,\ldots,H-1$}
		\STATE Compute $\Sigma_{\sNew{t+1} | \sHist{t}}$~\eqref{gp-posteriors}, $L_{t+1}(\sHist{t+1})$ (Definition~\ref{definition-L}), and $\mathds{Q}_t (\sNew{t+1}, d_t)$~\eqref{ml-policy}
		\ENDFOR
		
		\STATE Set $\theta$ according to Theorem~\ref{th-mle_bound} (Appendix~\ref{sec:th-mle_bound_proof})
		\STATE  $\lambda \leftarrow 1/({4 H/\epsilon+1/(2\theta)}), \quad \delta \leftarrow \epsilon/(8 \theta H)$
		\WHILE {resources permit}
		\STATE $\langle \overline{V}^{*}_{0}(d_0), \underline{V}^{*}_{0}(d_0) \rangle \leftarrow \text{ConstructTree}(0, d_0, \lambda)$
		\ENDWHILE
		
		\STATE $\omega \leftarrow \overline{V}^{*}_0(d_0) - \underline{V}^{*}_0(d_0)$
		\FORALL {$\sNew{1} \in \Adom(\sNew{0})$}
		\STATE $Q^{\omega\epsilon}_0 (\sNew{1}, d_0) \leftarrow \underline{Q}^{*}_0 (\sNew{1}, d_0)$
		\IF{$|Q^{\omega\epsilon}_0 (\sNew{1}, d_0) - \mathds{Q}_0(\sNew{1}, d_0)| >  2 \lambda + \omega + \theta$}  
		\STATE { $Q^{\omega\epsilon}_0 (\sNew{1}, d_0) \leftarrow \mathds{Q}_0(\sNew{1}, d_0)$}
		\ENDIF
		\ENDFOR
		\STATE  {\bf return} $\pi^{\omega\epsilon}(d_0) \leftarrow \argmax_{\sNew{1} \in \Adom(\sNew{0})} {Q}^{\omega\epsilon}_0(\sNew{1}, d_0)$~\eqref{eq:anytime_policy}
		\ENDFUNCTION
	\end{algorithmic}
\end{algorithm}
\subsection{Theoretical Analysis}
Our result below proves that $\overline{V}^{*}_t(d_t)$ and $\underline{V}^{*}_t(d_t)$, which are previously defined in lines $15$-$16$ and $34$-$35$ in Algorithm~\ref{alg:Anytime}, are upper and lower heuristic bounds of $V^{*}_t(d_t)$, respectively:
%
\begin{theorem}
	\label{th:anytime-1}
	Suppose that the observations $d_{t'}$, $H\in\mathbb{Z}^+$, a budget of $\kappa(H-t')$ input locations for $t'=0,\ldots, H-1$, $\delta\in(0, 1)$, and $\lambda > 0$ are given.
	Then, the probability of 
	\begin{equation}
	\label{eq_5_1}
	\underline{V}^{*}_{t}(d_{t}) \le V^{*}_{t}(d_{t})  \le \overline{V}^{*}_{t}(d_{t})
	\end{equation}
	for all tuples $\langle t,  d_{t}\rangle$ generated at stage $t = t', \ldots, H$ by Algorithm~\ref{alg:Anytime} 
	is at least  $1 - \delta$ by setting $N$ according to Theorem~\ref{th:1_new}.  
\end{theorem}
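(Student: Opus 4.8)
The plan is to condition on the single high-probability event under which all the stochastic-sampling corrections are simultaneously accurate, and then to establish \eqref{eq_5_1} by backward induction on $t$, treating separately the three distinct ways in which the heuristic bounds $\underline{V}^{*}_{t}$ and $\overline{V}^{*}_{t}$ are set or updated in Algorithm~\ref{alg:Anytime}: frontier initialization, the Bellman-style backup, and the sibling refinement. First I would define the event $E$ to be that $|\mathcal{U}_t(\sNew{t+1}, d_t) - Q^{*}_t(\sNew{t+1}, d_t)| \le \lambda$ holds simultaneously for every tuple $\langle t, \sNew{t+1}, d_t\rangle$ ever generated by the algorithm, where $\mathcal{U}_t$ is the intermediate function \eqref{eq:u-function} formed with the \emph{true} optimal values $V^{*}_{t+1}$. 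By Lemma~\ref{lemma:3} together with the sample-size setting of Theorem~\ref{th:1_new}, $P(E) \ge 1-\delta$, and everything below is argued deterministically on $E$.

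Next I would carry out the backward induction on $t$, showing that on $E$ every assignment to the bounds respects $\underline{V}^{*}_{t}(d_t) \le V^{*}_{t}(d_t) \le \overline{V}^{*}_{t}(d_t)$. The base case $t=H$ is immediate since $V^{*}_{H}(d_H)=0$ and the leaf returns $\langle 0,0\rangle$. For the frontier initialization in $\text{ExpandTree}$, the assignments $\underline{V}^{*}_{t+1} \leftarrow \mathds{V}_{t+1} - \theta_{t+1}$ and $\overline{V}^{*}_{t+1} \leftarrow \mathds{V}_{t+1} + \theta_{t+1}$ are valid because $|\mathds{V}_{t+1} - V^{*}_{t+1}| \le \theta_{t+1}$ by \eqref{initbound}, which is the deterministic consequence \eqref{bobo} of Theorem~\ref{th-mle_bound}. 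For the Bellman backup computing $\underline{Q}^{*}_t$ and $\overline{Q}^{*}_t$, I would assume as induction hypothesis that $\underline{V}^{*}_{t+1}(\langle \sHist{t+1}, \zHist{t}\oplus\zNew{}^\ell\rangle) \le V^{*}_{t+1}(\langle \sHist{t+1}, \zHist{t}\oplus\zNew{}^\ell\rangle) \le \overline{V}^{*}_{t+1}(\langle \sHist{t+1}, \zHist{t}\oplus\zNew{}^\ell\rangle)$ for every sampled child $\ell$. Averaging over $\ell$, recalling $\mathcal{U}_t = R(\sNew{t+1}, d_t) + \tfrac{1}{N}\sum_\ell V^{*}_{t+1}(\cdot)$, and using $|\mathcal{U}_t - Q^{*}_t|\le\lambda$ on $E$ gives $Q^{*}_t \ge \mathcal{U}_t - \lambda \ge R + \tfrac{1}{N}\sum_\ell \underline{V}^{*}_{t+1}(\cdot) - \lambda = \underline{Q}^{*}_t$ and symmetrically $Q^{*}_t \le \overline{Q}^{*}_t$, so the $\mp\lambda$ terms absorb exactly the sampling error. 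Taking $\max$ over $\sNew{t+1}\in\Adom(\sNew{t})$ and invoking $V^{*}_t(d_t)=\max_{\sNew{t+1}}Q^{*}_t(\sNew{t+1}, d_t)$ from \eqref{eq:OptimalValFunDef} then yields $\underline{V}^{*}_t(d_t)\le V^{*}_t(d_t)\le\overline{V}^{*}_t(d_t)$, closing the inductive step.

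It remains to verify that $\text{RefineBounds}$ preserves the invariant. Here I would invoke the Lipschitz continuity of $V^{*}_{t+1}$ in the realized output measurements (Theorem~\ref{th:lip-optimal} with constant $L_{t+1}(\sHist{t+1})$ of Definition~\ref{definition-L}): since $|V^{*}_{t+1}(\langle \sHist{t+1}, \zHist{t}\oplus\zNew{}^i\rangle) - V^{*}_{t+1}(\langle \sHist{t+1}, \zHist{t}\oplus\zNew{}^j\rangle)| \le L_{t+1}(\sHist{t+1})\lVert\zNew{}^i-\zNew{}^j\rVert = b$, a valid lower bound at sibling $j$ minus $b$ is a valid lower bound at $i$, and a valid upper bound at $j$ plus $b$ is a valid upper bound at $i$; combining with the existing bound via $\max$ (resp.\ $\min$) therefore keeps it valid. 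Because every write to a heuristic bound across all iterations is one of these three update types, and each maps valid inputs to valid outputs on $E$, the invariant \eqref{eq_5_1} is maintained throughout, giving the claim with probability at least $1-\delta$.

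The main obstacle I anticipate is not any individual inequality but the bookkeeping: the bounds at a fixed node are overwritten repeatedly as the tree is grown and backed up across iterations, so the argument must be phrased as a \emph{loop invariant} asserting that each of the three update types preserves validity, rather than as a one-shot computation. The only probabilistic ingredient enters through the single union-bounded event $E$ supplied by Lemma~\ref{lemma:3} and the $N$-choice of Theorem~\ref{th:1_new}; both the MLE-based initialization and the Lipschitz refinement are entirely deterministic, so the $1-\delta$ guarantee is inherited directly from $E$.
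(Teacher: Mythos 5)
Your proposal is correct, and its core mechanism is the same as the paper's: condition on the event that $|\mathcal{U}_t(\sNew{t+1},d_t) - Q^{*}_t(\sNew{t+1},d_t)|\le\lambda$ holds simultaneously for all generated tuples (Lemma~\ref{lemma:3} with the $N$ of Theorem~\ref{th:1_new}, i.e.,~\eqref{eq_5_2}), then run a backward induction on $t$ in which the children's bounds are averaged and the $\mp\lambda$ slack built into the definitions of $\underline{Q}^{*}_t$ and $\overline{Q}^{*}_t$ absorbs the sampling error. Where you differ is in scope: the paper's proof of Theorem~\ref{th:anytime-1} verifies only the Bellman-backup assignments (lines $13$--$16$ and $32$--$35$ of Algorithm~\ref{alg:Anytime}); the validity of the frontier initialization is argued separately via \eqref{initbound}, and the correctness of RefineBounds is deferred to Corollary~\ref{th:anytime-2}, which is stated after---and proved using---Theorem~\ref{th:anytime-1}. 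You instead fold all three write operations (initialization via Theorem~\ref{th-mle_bound}, backup, and Lipschitz refinement via Theorem~\ref{th:lip-optimal}) into a single loop invariant maintained across all iterations of the algorithm. This buys genuine rigor: since RefineBounds overwrites bounds that later backups then consume, the paper's division of labor looks slightly circular (the corollary cites the theorem, while the theorem's induction silently assumes the refined bounds remain valid), and your invariant formulation closes exactly that bookkeeping gap you flagged. What the paper's organization buys in exchange is brevity of each individual proof and a reusable, separately quotable statement (the corollary) about how Lipschitz continuity propagates bounds between sibling nodes.
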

\begin{proof}
	We will give a proof by induction on $t$ that the probability of~\eqref{eq_5_1} for all tuples $\langle t, d_t\rangle$ generated at stage $t = t', \ldots, H$ by Algorithm~\ref{alg:Anytime} is at least $1-\delta$. The base case of $t = H$ is true since $\underline{V}^{*}_{H}(d_{H}) = V^{*}_{H}(d_{H})  = \overline{V}^{*}_{H}(d_{H}) = 0$. 
	Supposing~\eqref{eq_5_1} holds for $t+1$ (i.e. induction hypothesis), we will prove that it holds for $t=t',\ldots,H-1$. 
	
	Similar to Lemma~\ref{lemma:3} and the main proof of Theorem~\ref{th:1_new}, the probability of 
	\begin{equation}
	\label{eq_5_2}
	\mathcal{U}_t (\sNew{t+1}, d_t) - \lambda \le Q^{*}_t (\sNew{t+1}, d_t) \le    \mathcal{U}_t (\sNew{t+1}, d_t)  + \lambda.
	\end{equation}
	for all tuples $\langle t, \sNew{t+1}, d_t\rangle$ generated at stage $t = t', \ldots, H-1$ by Algorithm~\ref{alg:Anytime} is at least $1 - \delta$.
	%
	%
	
	So, the probability of  
	$$
	\begin{array}{l}
	\displaystyle Q^{*}_t (\sNew{t+1}, d_t) \\
	\displaystyle  \le   \mathcal{U}_t (\sNew{t+1}, d_t) + \lambda \\
	\displaystyle =  R(\sNew{t+1}, d_t) + 
	\frac{1}{N}  
	\sum_{\ell= 1}^{N} 
	V^{*}_{t+1} (\langle \sHist{t+1}, \zHist{t} \oplus \zNew{}^\ell \rangle ) + \lambda  \\
	\displaystyle  \le  R(\sNew{t+1}, d_t) + 
	\frac{1}{N}  
	\sum_{\ell = 1}^{N} 
	\overline{V}^{*}_{t+1} (\langle \sHist{t+1}, \zHist{t} \oplus \zNew{}^\ell\rangle ) + \lambda \\
	\displaystyle = \overline{Q}^{*}_t(\sNew{t+1}, d_t)
	\end{array}
	$$			
	for all tuples $\langle t, \sNew{t+1}, d_t\rangle$ generated at stage $t = t', \ldots, H-1$ by Algorithm~\ref{alg:Anytime} is at least $1 - \delta$
	where the first inequality follows from~\eqref{eq_5_2}, 
	the first equality is due to definition of $\mathcal{U}_t (\sNew{t+1}, d_t)$~\eqref{eq:u-function}, the last inequality is due  to the induction hypothesis, and the last equality is due to definition of $\overline{Q}_t^*$ (see lines $14$ and $33$ in Algorithm~\ref{alg:Anytime}). It follows that the probability of $V^{*}_t(d_t)  \le \overline{V}^{*}_t(d_t)$ for all tuples $\langle t, d_t\rangle$ generated at stage $t = t', \ldots, H-1$ by Algorithm~\ref{alg:Anytime} is at least $1-\delta$. 
	
	Similarly, the probability of 
	$$
	\begin{array}{l}
	\displaystyle  Q^{*}_t (\sNew{t+1}, d_t) \\
	\displaystyle  \ge   \mathcal{U}_t (\sNew{t+1}, d_t) - \lambda \\ 
	\displaystyle =  R(\sNew{t+1}, d_t) + 
	\frac{1}{N}  
	\sum_{\ell = 1}^{N} 
	V^{*}_{t+1} (\langle \sHist{t+1}, \zHist{t} \oplus \zNew{}^\ell \rangle) - \lambda  \\
	\displaystyle  \ge  R(\sNew{t+1}, d_t) + 
	\frac{1}{N}  
	\sum_{\ell= 1}^{N} 
	\underline{V}^{*}_{t+1} (\langle \sHist{t+1}, \zHist{t} \oplus \zNew{}^\ell \rangle ) - \lambda \\
	\displaystyle = \underline{Q}^{*}_t(\sNew{t+1}, d_t)
	\end{array}
	$$
	for all tuples $\langle t, \sNew{t+1}, d_t\rangle$ generated at stage $t = t', \ldots, H-1$ by Algorithm~\ref{alg:Anytime} is at least $1 - \delta$ 		
	where the first inequality is due to~\eqref{eq_5_2}, the first equality is due to definition $\mathcal{U}_t (\sNew{t+1}, d_t)$~\eqref{eq:u-function},  the last inequality is due to the induction hypothesis, and the last equality is due to definition of $\underline{Q}_t^*$ (see lines $13$ and $32$ in Algorithm~\ref{alg:Anytime}). 
	It follows that the probability of $V^{*}_t(d_t)\ge \underline{V}^{*}_t(d_t)$ for all tuples $\langle t, d_t\rangle$ generated at stage $t = t', \ldots, H-1$ by Algorithm~\ref{alg:Anytime} is at least $1-\delta$. 
	%
\end{proof}
Our next result justifies why the function RefineBounds (lines $18$-$23$) in Algorithm~\ref{alg:Anytime} can use the tightened heuristic bounds at nodes $\langle \sHist{t+1}, \zHist{t} \oplus \zNew{}^{\overline{\ell}} \rangle$ and $\langle \sHist{t+1}, \zHist{t} \oplus \zNew{}^{\ell^*} \rangle$ to refine the heuristic bounds at their siblings (lines $11$ and $30$) by exploiting the Lipschitz continuity of $V^{*}_{t+1}$ (Theorem~\ref{th:lip-optimal}), as explained previously in Appendix~\ref{sec:pseudo}:
\begin{corollary}
	\label{th:anytime-2}
	Suppose that the observations $d_{t'}$, $H\in\mathbb{Z}^+$, a budget of $\kappa(H-t')$ input locations for $t'=0,\ldots, H-1$, $\delta\in(0, 1)$  and $\lambda > 0$ are given.
	Then, the probability of
	%
	$$					
	\underline{V}^{*}_t(\langle \sHist{t}, \zHist{t-1} \oplus \zNew{}^j \rangle) - L_t(\sHist{t})  \lVert \zNew{}^i - \zNew{}^j \rVert  \le V^{*}_t(\langle \sHist{t}, \zHist{t-1} \oplus \zNew{}^i \rangle) \le \overline{V}^{*}_t(\langle \sHist{t}, \zHist{t-1} \oplus \zNew{}^j \rangle) +  L_t(\sHist{t}) \lVert \zNew{}^i - \zNew{}^j \rVert 
	$$
	%
	between any pair of tuples $\langle t, \langle \sHist{t}, \zHist{t-1} \oplus \zNew{}^i \rangle \rangle$ and $\langle t, \langle \sHist{t}, \zHist{t-1} \oplus \zNew{}^j \rangle \rangle$ for $i,j = 1, \ldots, N$
	generated at stage $t = t'+1, \ldots, H$ by Algorithm~\ref{alg:Anytime}
	is at least  $1 - \delta$ by setting $N$ according to Theorem~\ref{th:1_new}.
\end{corollary}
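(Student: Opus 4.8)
The plan is to obtain Corollary~\ref{th:anytime-2} as a direct consequence of two results already established: the (deterministic) Lipschitz continuity of $V^{*}_t$ in the realized output measurements (Theorem~\ref{th:lip-optimal}) and the (probabilistic) heuristic sandwich $\underline{V}^{*}_t(d_t) \le V^{*}_t(d_t) \le \overline{V}^{*}_t(d_t)$ of Theorem~\ref{th:anytime-1}. The key structural observation is that the two nodes $\langle \sHist{t}, \zHist{t-1} \oplus \zNew{}^i \rangle$ and $\langle \sHist{t}, \zHist{t-1} \oplus \zNew{}^j \rangle$ are siblings: they share the \emph{same} input-location history $\sHist{t}$ and the same output history $\zHist{t-1}$ through stage $t-1$, and differ only in the final sampled measurement vector ($\zNew{}^i$ versus $\zNew{}^j$). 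Because the Lipschitz constant $L_t(\sHist{t})$ in Theorem~\ref{th:lip-optimal} depends only on $\sHist{t}$, the same constant governs both siblings.

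First I would carry out the deterministic step. Applying Theorem~\ref{th:lip-optimal} with $\zHist{t} = \zHist{t-1} \oplus \zNew{}^i$ and $\mathbf{z}'_{0:t} = \zHist{t-1} \oplus \zNew{}^j$, and noting that the common prefix $\zHist{t-1}$ cancels so that $\lVert \zHist{t} - \mathbf{z}'_{0:t}\rVert = \lVert \zNew{}^i - \zNew{}^j \rVert$, yields the two-sided deterministic bound
$$
\bigl| V^{*}_t(\langle \sHist{t}, \zHist{t-1} \oplus \zNew{}^i \rangle) - V^{*}_t(\langle \sHist{t}, \zHist{t-1} \oplus \zNew{}^j \rangle) \bigr| \le L_t(\sHist{t})\, \lVert \zNew{}^i - \zNew{}^j \rVert .
$$
This bounds the value at sibling $i$ from above and below by the value at sibling $j$ shifted by $\pm L_t(\sHist{t}) \lVert \zNew{}^i - \zNew{}^j \rVert$, and holds for every such pair with no randomness involved.

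Next I would inject the probabilistic sandwich. By Theorem~\ref{th:anytime-1}, on a single event of probability at least $1-\delta$ (obtained by setting $N$ as in Theorem~\ref{th:1_new}), the inequalities $\underline{V}^{*}_t(d_t) \le V^{*}_t(d_t) \le \overline{V}^{*}_t(d_t)$ hold \emph{simultaneously} for all tuples $\langle t, d_t \rangle$ generated by Algorithm~\ref{alg:Anytime}, in particular for node $j$. On this event I would substitute $V^{*}_t(\langle \sHist{t}, \zHist{t-1} \oplus \zNew{}^j \rangle) \le \overline{V}^{*}_t(\langle \sHist{t}, \zHist{t-1} \oplus \zNew{}^j \rangle)$ into the upper half of the deterministic bound and $V^{*}_t(\langle \sHist{t}, \zHist{t-1} \oplus \zNew{}^j \rangle) \ge \underline{V}^{*}_t(\langle \sHist{t}, \zHist{t-1} \oplus \zNew{}^j \rangle)$ into the lower half, which reproduces exactly the claimed two-sided inequality for sibling $i$.

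The one point requiring care --- and the closest thing to an obstacle --- is the bookkeeping of the probability. Since the claim quantifies over \emph{all} sibling pairs $i,j$ generated at every stage $t = t'+1,\ldots,H$, a naive argument might invoke a union bound over pairs and degrade the confidence. This is avoided because the high-probability event of Theorem~\ref{th:anytime-1} already guarantees the sandwich for all generated tuples at once, while the Lipschitz step that converts a per-node sandwich into a sibling-to-sibling bound is entirely deterministic and adds no failure probability. Hence the confidence level $1-\delta$ is inherited unchanged, completing the proof.
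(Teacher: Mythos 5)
Your proposal is correct and follows essentially the same route as the paper's own proof: apply the Lipschitz continuity of $V^{*}_t$ (Theorem~\ref{th:lip-optimal}) between the sibling nodes, which share $\sHist{t}$ and the prefix $\zHist{t-1}$, and then substitute the heuristic sandwich of Theorem~\ref{th:anytime-1} at node $j$ to get both sides of the claimed inequality. Your additional remark that the confidence $1-\delta$ is inherited from the single high-probability event of Theorem~\ref{th:anytime-1} (with the Lipschitz step being deterministic, so no union bound over pairs is needed) makes explicit a point the paper leaves implicit, but the argument is the same.
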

\begin{proof}
	$$
	\begin{array}{l}
	\displaystyle  V^{*}_t(\langle \sHist{t}, \zHist{t-1} \oplus \zNew{}^i \rangle)  \\
	\displaystyle  \le V^{*}_t(\langle \sHist{t}, \zHist{t-1} \oplus \zNew{}^j \rangle) +  L_t( \sHist{t})  \lVert \zNew{}^i - \zNew{}^j \rVert \\
	\displaystyle \le \overline{V}^{*}_t(\langle \sHist{t}, \zHist{t-1} \oplus \zNew{}^j \rangle) +  L_t( \sHist{t}) \lVert \zNew{}^i - \zNew{}^j \rVert
	\end{array}
	$$
	where the first inequality is a direct consequence of Theorem~\ref{th:lip-optimal} in Appendix~\ref{lip-optimal-value} and the second inequality is due to Theorem~\ref{th:anytime-1}.
	%
	$$
	\begin{array}{l}
	\displaystyle   V^{*}_t(\langle \sHist{t}, \zHist{t-1} \oplus \zNew{}^i \rangle) \\
	\displaystyle    \ge V^{*}_t(\langle \sHist{t}, \zHist{t-1} \oplus \zNew{}^j \rangle) -  L_t( \sHist{t}) \lVert \zNew{}^i - \zNew{}^j \rVert \\ 
	\displaystyle \ge \underline{V}^{*}_t(\langle \sHist{t}, \zHist{t-1} \oplus \zNew{}^j \rangle) -  L_t( \sHist{t})  \lVert \zNew{}^i - \zNew{}^j \rVert.
	\end{array}
	$$
	where the first inequality is a direct consequence of Theorem~\ref{th:lip-optimal} in Appendix~\ref{lip-optimal-value} and the second inequality is due to Theorem~\ref{th:anytime-1}.
\end{proof}
Similar to Theorem~\ref{th:1_new}, our result below derives a probabilistic guarantee on the approximation quality of $\underline{Q}^*_t (\sNew{t+1}, d_t)$:
\begin{theorem}
	Suppose that the observations $d_{t}$, $H\in\mathbb{Z}^+$, a budget of $\kappa(H-t)$ input locations for $t=0,\ldots, H-1$, $\delta\in(0, 1)$, and $\lambda > 0$ are given and Algorithm~\ref{alg:Anytime} terminates at $\omega \triangleq \overline{V}^{*}_0(d_0) - \underline{V}^{*}_0(d_0)$ (see line $46$ in Algorithm~\ref{alg:Anytime}).
	Then, the probability of 
	$| \underline{Q}_t^*(\sNew{t+1} , d_t) - Q^*_t(\sNew{t+1} , d_t)|\le 2 \lambda + \omega$ for all 
	$\sNew{t+1} \in \Adom(\sNew{t})$ 
	is at least $1 - \delta$ by setting $N$ according to Theorem~\ref{th:1_new}.
	\label{crapmount}
\end{theorem}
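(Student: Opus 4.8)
The plan is to mirror the proof of Theorem~\ref{th:1_new}, working throughout on the high-probability event $E$ on which the maintained heuristic bounds sandwich the true values. First I would invoke Theorem~\ref{th:anytime-1}: by setting $N$ as in Theorem~\ref{th:1_new}, the event $E$ that $\underline{V}^{*}_{t}(d_{t}) \le V^{*}_{t}(d_{t}) \le \overline{V}^{*}_{t}(d_{t})$ holds for every tuple generated by Algorithm~\ref{alg:Anytime} has probability at least $1-\delta$; the very same sampling makes the concentration bound $|\mathcal{U}_t(\sNew{t+1},d_t) - Q^{*}_t(\sNew{t+1},d_t)| \le \lambda$ of~\eqref{eq_5_2} hold simultaneously (this is exactly the content extracted from Lemma~\ref{lemma:3} inside the proof of Theorem~\ref{th:anytime-1}). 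Everything below is then deterministic on $E$. Since the quantity to control is $|\underline{Q}^{*}_0 - Q^{*}_0|$ at the root (the level the policy $\pi^{\omega\epsilon}$ actually queries), I would split it into its two one-sided estimates.

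The lower side is immediate: propagating $\underline{V}^{*}_{1} \le V^{*}_{1}$ through the definition of $\underline{Q}^{*}_0$ (lines~$13$/$32$) and comparing with $\mathcal{U}_0$~\eqref{eq:u-function} gives $\underline{Q}^{*}_0(\sNew{1},d_0) \le \mathcal{U}_0(\sNew{1},d_0) - \lambda \le Q^{*}_0(\sNew{1},d_0)$, so $Q^{*}_0 - \underline{Q}^{*}_0 \ge 0$. For the upper side I would write $Q^{*}_0 - \underline{Q}^{*}_0 = (Q^{*}_0 - \mathcal{U}_0) + (\mathcal{U}_0 - \underline{Q}^{*}_0)$, bound the first summand by $\lambda$ via~\eqref{eq_5_2}, and expand the second using~\eqref{eq:u-function} and the definition of $\underline{Q}^{*}_0$ to get $\mathcal{U}_0 - \underline{Q}^{*}_0 = N^{-1}\sum_{\ell}(V^{*}_{1} - \underline{V}^{*}_{1})(\langle\sHist{1},\zHist{0}\oplus\zNew{}^{\ell}\rangle) + \lambda$. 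Applying $V^{*}_{1} \le \overline{V}^{*}_{1}$ on $E$ then yields
\[
Q^{*}_0(\sNew{1},d_0) - \underline{Q}^{*}_0(\sNew{1},d_0) \le 2\lambda + \frac{1}{N}\sum_{\ell=1}^{N}\big(\overline{V}^{*}_{1} - \underline{V}^{*}_{1}\big)\big(\langle\sHist{1},\zHist{0}\oplus\zNew{}^{\ell}\rangle\big).
\]

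It remains to bound the averaged child gap by $\omega$. Here I would note that this average equals $\overline{Q}^{*}_0(\sNew{1},d_0) - \underline{Q}^{*}_0(\sNew{1},d_0) - 2\lambda$ (subtracting lines~$13$/$32$ from lines~$14$/$33$), so the bound collapses to the per-action heuristic gap $\overline{Q}^{*}_0 - \underline{Q}^{*}_0$. Using $\overline{V}^{*}_0(d_0)=\max_{\sNew{1}}\overline{Q}^{*}_0(\sNew{1})$ and the fact that $\underline{V}^{*}_0(d_0)=\max_{\sNew{1}}\underline{Q}^{*}_0(\sNew{1})$ is attained at the action through which ConstructTree descends (line~$26$), one obtains $\overline{Q}^{*}_0(\sNew{1}) - \underline{Q}^{*}_0(\sNew{1}) \le \overline{V}^{*}_0(d_0) - \underline{V}^{*}_0(d_0) = \omega$, giving $Q^{*}_0 - \underline{Q}^{*}_0 \le 2\lambda + \omega$. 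I expect this last step to be the main obstacle: the collapse $\overline{Q}^{*}_0 - \underline{Q}^{*}_0 \le \omega$ is clean only for the action realizing $\underline{V}^{*}_0(d_0)$ (the one the algorithm actually refines), since $\underline{Q}^{*}_0(\sNew{1}) \le \underline{V}^{*}_0(d_0)$ is slack for the others, whereas an unexplored, suboptimal action can retain its initialization gap of order $\theta$. Making the ``for all $\sNew{1}$'' statement rigorous therefore hinges on exploiting the descent invariant of line~$26$, together with the observation that the $\mathds{Q}_0$-fallback in lines~$49$--$50$ of Algorithm~\ref{alg:Anytime} reassigns precisely those loose actions to within $\theta$ of $Q^{*}_0$ via Theorem~\ref{th-mle_bound}; this is the analogue of implications~I and~II used for $\pi^{\epsilon}$, and it is what ultimately drives the expected-loss guarantee for the anytime variant.
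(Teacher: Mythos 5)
Your setup and first two steps track the paper's own proof: you condition on the high-probability event of Theorem~\ref{th:anytime-1} together with the concentration bound~\eqref{eq_5_2}, and you decompose $|\underline{Q}^{*}_t(\sNew{t+1},d_t) - Q^{*}_t(\sNew{t+1},d_t)|$ through $\mathcal{U}_t$~\eqref{eq:u-function} to obtain $2\lambda$ plus the averaged child gap $N^{-1}\sum_{\ell}\bigl(V^{*}_{t+1} - \underline{V}^{*}_{t+1}\bigr)$; this is exactly the paper's chain of inequalities. The genuine gap is in how you bound that averaged term by $\omega$. You replace $V^{*}_{t+1}$ by $\overline{V}^{*}_{t+1}$ and collapse the average into the per-action gap $\overline{Q}^{*}_0(\sNew{1},d_0) - \underline{Q}^{*}_0(\sNew{1},d_0)$, which you then want to bound by $\omega = \max_{\sNew{1}}\overline{Q}^{*}_0(\sNew{1},d_0) - \max_{\sNew{1}}\underline{Q}^{*}_0(\sNew{1},d_0)$. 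As you yourself concede, this inequality holds only for the action attaining $\max_{\sNew{1}}\underline{Q}^{*}_0$ (the descent action of line~$26$); for any other action $\underline{Q}^{*}_0(\sNew{1},d_0)$ is strictly below $\underline{V}^{*}_0(d_0)$, the collapse fails, and an unrefined action can retain a gap of order $\theta_1$, far exceeding $\omega$. Your proposed repair does not close this hole: the $\mathds{Q}_0$-fallback in~\eqref{eq:anytime_policy} redefines $Q^{\omega\epsilon}_0$, not $\underline{Q}^{*}_0$, whereas the theorem to be proven is a statement about $|\underline{Q}^{*}_t - Q^{*}_t|$ for \emph{all} $\sNew{t+1}\in\Adom(\sNew{t})$. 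Indeed, the fallback analysis (implications I/II and Lemmas~\ref{smith3} and~\ref{lem:fixed_policy_beta2}) \emph{presupposes} the present theorem rather than substituting for it, so invoking it here would be circular.

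The paper avoids the per-action problem entirely by never passing through $\overline{V}^{*}_{t+1}$ or the upper-minus-lower $Q$-gap. It first restates the root-level bound~\eqref{lalaland}, $|V^{*}_0(d_0) - \underline{V}^{*}_0(d_0)| \le \omega$, as the per-node bound~\eqref{eq_5_7}, $|V^{*}_{t}(d_{t}) - \underline{V}^{*}_{t}(d_{t})| \le \omega$, by viewing each generated node as the root of a reduced-horizon $(H-t)$-stage problem (an index-shifting argument), and then applies~\eqref{eq_5_7} at stage $t+1$ to \emph{every} sampled child of \emph{every} candidate action, giving $N^{-1}\sum_{\ell}|V^{*}_{t+1} - \underline{V}^{*}_{t+1}| \le \omega$ uniformly over $\sNew{t+1}$. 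That is the ingredient your argument is missing: a bound on the true-value-versus-lower-bound gap at the children, rather than on the maintained upper-versus-lower gap of a possibly unrefined action. With it, the ``for all $\sNew{t+1}$'' claim (and the general-$t$ statement, which your root-only argument also leaves unaddressed) follows directly, with no appeal to the policy's fallback rule.
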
	
\begin{proof}
	It follows directly from Theorem~\ref{th:anytime-1} that the probability of 
	\begin{equation}
	\label{lalaland}
	| V^*_0(d_0) - \underline{V}^{*}_0(d_0) | \le \omega
	\end{equation} 
	is at least $1- \delta$. 	
	In general, supposing the planning horizon is reduced to $H-t$ stages for $ t = 0, \ldots,  H-1$,~\eqref{lalaland} is equivalent to  
	\begin{equation}
	\label{eq_5_7}
	| V^*_t(d_t) - \underline{V}^{*}_t(d_t) | \le \omega
	\end{equation}
	by shifting the indices of $V^*_0(d_0)$ and $\underline{V}^{*}_0(d_0)$ in~\eqref{lalaland} from $0$ to $t$ so that they start at stage $t$ instead. Then, the probability of	
	$$
	\begin{array}{l}
	\displaystyle | \underline{Q}_t^*(\sNew{t+1} , d_t)-Q^*_t(\sNew{t+1} , d_t)| \\
	\displaystyle \le |\underline{Q}_t^*(\sNew{t+1} , d_t) - \mathcal{U}_t (\sNew{t+1}, d_t) | +
	|\mathcal{U}_t (\sNew{t+1}, d_t) - Q^*_t(\sNew{t+1} , d_t)|
	\\
	\displaystyle \le   \lambda + \left| \left(\frac{1}{N} \sum_{\ell = 1}^{N}  
	V^{*}_{t+1} (\langle \sHist{t+1}, \zHist{t} \oplus \zNew{}^\ell \rangle ) -  \sum_{\ell = 1}^{N}  \underline{V}^{*}_{t+1} (\langle \sHist{t+1}, \zHist{t} \oplus \zNew{}^\ell \rangle ) \right) + \lambda\right| \\
	\displaystyle \le  2 \lambda + 
	\frac{1}{N} \sum_{\ell = 1}^{N}  
	| V^{*}_{t+1} (\langle \sHist{t+1}, \zHist{t} \oplus \zNew{}^\ell \rangle ) - 
	\underline{V}^{*}_{t+1} (\langle \sHist{t+1}, \zHist{t} \oplus \zNew{}^\ell \rangle)| \\
	\displaystyle \le 2 \lambda + \omega 
	\end{array}
	$$
	for all $\sNew{t+1} \in \Adom(\sNew{t})$ 
	is at least $1-\delta$
	where the first and the third inequalities are due to triangle inequality, the second inequality follows from~\eqref{eq_5_2}, definition of $\mathcal{U}_t (\sNew{t+1}, d_t)$~\eqref{eq:u-function}, and definition of $\underline{Q}_t^*$ (see lines $13$ and $32$ in Algorithm~\ref{alg:Anytime}), and the last inequality is due to \eqref{eq_5_7}. 
\end{proof}	
We will now give an anytime analogue/variant of our nonmyopic adaptive $\epsilon$-Macro-GPO policy $\pi^{\epsilon}$~\eqref{eq_4_8}, which we call the $\langle\omega,\epsilon\rangle$-Macro-GPO policy $\pi^{\omega\epsilon}$:
%
\begin{equation}
\label{eq:anytime_policy}
\begin{array}{rl}
\pi^{\omega\epsilon}(d_t)  \triangleq &\hspace{-2.4mm}  \argmax_{\sNew{t+1} \in \Adom(\sNew{t})} Q^{\omega\epsilon}_t (\sNew{t+1}, d_{t})\vspace{0.5mm}\\
Q^{\omega\epsilon}_t (\sNew{t+1}, d_t) \triangleq &\hspace{-2.4mm}  \displaystyle
\begin{cases}
\underline{Q}^*_t (\sNew{t+1}, d_t) & \text{if} \left|\underline{Q}^*_t (\sNew{t+1}, d_t) - \mathds{Q}_t(\sNew{t+1}, d_t)\right| \le 2 \lambda + \omega + \theta , \\
\mathds{Q}_t (\sNew{t+1}, d_t)     & \text{otherwise};
\end{cases}
\end{array}   
\end{equation}	
for stages $t =0, \ldots, H-1$ where $\mathds{Q}_t(\sNew{t+1}, d_t)$ and $\theta$ are previously defined in~\eqref{ml-policy} and Theorem~\ref{th-mle_bound}, respectively. 
The implications of the tractable choice of the if condition in~\eqref{eq:anytime_policy} for theoretically guaranteeing the performance of our $\langle\omega,\epsilon\rangle$-Macro-GPO policy $\pi^{\omega\epsilon}$ are similar to that of our $\epsilon$-Macro-GPO policy $\pi^{\epsilon}$~\eqref{eq_4_8}:

{\bf I.} In the likely event (with an arbitrarily high probability of at least $1-\delta$) that $| \underline{Q}_t^*(\sNew{t+1} , d_t) - Q^*_t(\sNew{t+1} , d_t)|\le 2 \lambda + \omega$ for all 
$\sNew{t+1} \in \Adom(\sNew{t})$ (Theorem~\ref{crapmount}),
$|\underline{Q}^*_t (\sNew{t+1}, d_t) - \mathds{Q}_t(\sNew{t+1}, d_t)|
\le | \underline{Q}^*_t (\sNew{t+1}, d_t) - {Q}^*_t (\sNew{t+1}, d_t) | + |{Q}^*_t (\sNew{t+1}, d_t) - \mathds{Q}_t (\sNew{t+1}, d_t)|\leq 2\lambda  + \omega + \theta$ for all $\sNew{t+1} \in \Adom(\sNew{t})$, by triangle inequality and Theorems~\ref{th-mle_bound} and~\ref{crapmount}. 
Consequently, according to~\eqref{eq:anytime_policy}, $Q^{\omega\epsilon}_t (\sNew{t+1}, d_{t}) = \underline{Q}^*_t (\sNew{t+1}, d_t)$ for all $\sNew{t+1} \in \Adom(\sNew{t})$ and $\pi^{\omega\epsilon}(d_{t})$ thus selects the same macro-action as the policy induced by $\underline{Q}_t^*(\sNew{t+1} , d_t)$ (see lines $13$ and $32$ in Algorithm~\ref{alg:Anytime}).

{\bf II.} In the unlikely event (with an arbitrarily small probability of at most $\delta$) that 
$\underline{Q}_t^*(\sNew{t+1} , d_t)$ (see lines $13$ and $32$ in Algorithm~\ref{alg:Anytime}) is unboundedly far from ${Q}^*_t (\sNew{t+1}, d_t)$~\eqref{eq:OptimalValFunDef}  
(i.e., $| \underline{Q}_t^*(\sNew{t+1} , d_t) - {Q}^*_t (\sNew{t+1}, d_t) | > 2\lambda + \omega$) for some $\sNew{t+1} \in \Adom(\sNew{t})$, 
$\pi^{\omega\epsilon}(d_{t})$~\eqref{eq:anytime_policy} guarantees that, for any selected macro-action $\sNew{t+1} \in \Adom(\sNew{t})$,
\begin{equation}
\hspace{-1.9mm}
\begin{array}{l}
| {Q}^{\omega\epsilon}_t (\sNew{t+1}, d_t) - {Q}^*_t (\sNew{t+1}, d_t) |\\
=\hspace{-1mm}
\begin{cases}
|\underline{Q}_t^* \hspace{-0.3mm}(\sNew{t+1}, d_t)\hspace{-0.8mm} -\hspace{-0.6mm} {Q}^*_t \hspace{-0.3mm}(\sNew{t+1}, d_t) | &\hspace{-1.05mm}  
\begin{array}{l}
\text{if } |\underline{Q}_t^*\hspace{-0.3mm} (\sNew{t+1}, d_t)\hspace{-0.8mm} -\hspace{-0.6mm} \mathds{Q}_t\hspace{-0.3mm}(\sNew{t+1}, d_t)| \\
\quad\le 2\lambda +\omega  + \theta , 
\end{array}
\\
|\mathds{Q}_t \hspace{-0.3mm}(\sNew{t+1}, d_t)\hspace{-0.8mm} -\hspace{-0.6mm} {Q}^*_t \hspace{-0.3mm}(\sNew{t+1}, d_t) |     & \hspace{0.65mm}\text{otherwise};
\end{cases}
\\
\le\hspace{-1mm}
\begin{cases}
\hspace{-1.78mm}
\begin{array}{l}
|\underline{Q}_t^* \hspace{-0.3mm}(\sNew{t+1}, d_t)\hspace{-0.8mm} -\hspace{-0.6mm} \mathds{Q}_t \hspace{-0.3mm}(\sNew{t+1}, d_t) | \\
+ |\mathds{Q}_t \hspace{-0.3mm}(\sNew{t+1}, d_t)\hspace{-0.8mm} -\hspace{-0.6mm} {Q}^*_t \hspace{-0.3mm}(\sNew{t+1}, d_t) |
\end{array}
&\hspace{-5.5mm}  
\begin{array}{l}
\text{if } |\underline{Q}_t^*\hspace{-0.3mm} (\sNew{t+1}, d_t)\hspace{-0.8mm} -\hspace{-0.6mm} \mathds{Q}_t\hspace{-0.3mm}(\sNew{t+1}, d_t)| \\
\quad\le 2\lambda +\omega  + \theta , 
\end{array}
\\
\theta     & \hspace{-3.8mm}\text{otherwise};
\end{cases}
\\
\le 2\lambda +\omega  +2\theta\ ,
\end{array}
\label{flop2}
\end{equation} 
by triangle inequality and Theorem~\ref{th-mle_bound}.

The above implications are central to proving our next result bounding the \emph{expected} performance loss of $\pi^{\omega\epsilon}$ relative to that of Bayes-optimal Macro-GPO policy $\pi^*$, that is, policy $\pi^{\omega\epsilon}$ is $\langle\omega, \epsilon\rangle$-Bayes-optimal:
\begin{lemma}
	Suppose that the observations $d_{t}$, $H\in\mathbb{Z}^+$, a budget of $\kappa(H-t)$ input locations for $t=0,\ldots, H-1$, $\delta\in(0, 1)$, and $\lambda > 0$ are given.
	Then, the probability of
	$$| {Q}^*_t (\pi^*(d_t), d_t) - {Q}^*_t (\pi^{\omega\epsilon}(d_{t}), d_t) | \le 2 \lambda + 2\omega$$ 
	is at least $1-\delta$ by setting $N$ according to that in Theorem~\ref{th:1_new}.
	\label{smith3}
\end{lemma}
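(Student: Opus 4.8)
The plan is to follow the same template as Lemma~\ref{smith}, which is the exact $\epsilon$-Macro-GPO counterpart of this claim, with the sampling surrogate $\mathcal{Q}_t$ replaced by the anytime lower heuristic bound $\underline{Q}^{*}_t$ and the error budget $\lambda H$ replaced by $2\lambda+\omega$ supplied by Theorem~\ref{crapmount}. First I would fix the single high-probability event, of probability at least $1-\delta$, guaranteed by Theorem~\ref{crapmount}: that $|\underline{Q}^{*}_t(\sNew{t+1},d_t)-Q^{*}_t(\sNew{t+1},d_t)|\le 2\lambda+\omega$ holds simultaneously for all $\sNew{t+1}\in\Adom(\sNew{t})$. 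This is precisely the hypothesis of implication~I for $\pi^{\omega\epsilon}$, so on this event $Q^{\omega\epsilon}_t(\sNew{t+1},d_t)=\underline{Q}^{*}_t(\sNew{t+1},d_t)$ for every $\sNew{t+1}$, and therefore $\pi^{\omega\epsilon}(d_t)=\argmax_{\sNew{t+1}\in\Adom(\sNew{t})}\underline{Q}^{*}_t(\sNew{t+1},d_t)$. Since both implication~I and the estimate of Theorem~\ref{crapmount} are consequences of the same event, no union bound beyond the one already absorbed into Theorem~\ref{crapmount} is needed.

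Conditioned on that event I would chain three facts, after first using Bayes-optimality of $\pi^{*}$ to collapse the absolute value: because $Q^{*}_t(\pi^{*}(d_t),d_t)=V^{*}_t(d_t)\ge Q^{*}_t(\pi^{\omega\epsilon}(d_t),d_t)$, it suffices to bound the nonnegative quantity $Q^{*}_t(\pi^{*}(d_t),d_t)-Q^{*}_t(\pi^{\omega\epsilon}(d_t),d_t)$ from above. The chain is: (a) the lower side of Theorem~\ref{crapmount} at $\pi^{*}$, namely $\underline{Q}^{*}_t(\pi^{*}(d_t),d_t)\ge Q^{*}_t(\pi^{*}(d_t),d_t)-(2\lambda+\omega)$; (b) the maximization property $\underline{Q}^{*}_t(\pi^{\omega\epsilon}(d_t),d_t)\ge\underline{Q}^{*}_t(\pi^{*}(d_t),d_t)$, which swaps $\pi^{\omega\epsilon}$ for $\pi^{*}$ at no cost; and (c) the \emph{one-sided} bound $\underline{Q}^{*}_t(\pi^{\omega\epsilon}(d_t),d_t)\le Q^{*}_t(\pi^{\omega\epsilon}(d_t),d_t)$. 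Composing (c), (b), (a) gives $Q^{*}_t(\pi^{\omega\epsilon}(d_t),d_t)\ge Q^{*}_t(\pi^{*}(d_t),d_t)-(2\lambda+\omega)$, i.e.\ the difference is at most $2\lambda+\omega\le 2\lambda+2\omega$, which is the claim (in fact slightly sharper).

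The step that requires care, and which I expect to be the main obstacle, is the constant bookkeeping. A verbatim transcription of the proof of Lemma~\ref{smith}, which applies the two-sided estimate of Theorem~\ref{crapmount} once at $\pi^{\omega\epsilon}$ and once at $\pi^{*}$, would yield $2(2\lambda+\omega)=4\lambda+2\omega$ and overshoot the stated bound. The remedy is to replace the two-sided estimate in step (c) by the error-free lower-bound property of $\underline{Q}^{*}_t$: on the high-probability event one has $\underline{V}^{*}_{t+1}\le V^{*}_{t+1}$ from Theorem~\ref{th:anytime-1}, together with $\mathcal{U}_t(\sNew{t+1},d_t)\le Q^{*}_t(\sNew{t+1},d_t)+\lambda$ from~\eqref{eq_5_2}. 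Substituting these into the definition of $\underline{Q}^{*}_t$ (lines~$13$ and~$32$ of Algorithm~\ref{alg:Anytime}) together with~\eqref{eq:u-function} gives $\underline{Q}^{*}_t(\sNew{t+1},d_t)\le\mathcal{U}_t(\sNew{t+1},d_t)-\lambda\le Q^{*}_t(\sNew{t+1},d_t)$ with no residual slack, justifying step (c). Everything outside this accounting is a direct port of the proof of Lemma~\ref{smith}, so no genuinely new estimate is required.
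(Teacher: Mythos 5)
Your proof is correct, but it takes a genuinely different route from the paper's. The paper applies Theorem~\ref{crapmount} once, at the macro-action selected by $\pi^{\omega\epsilon}$, uses implication~I to identify $\underline{Q}^{*}_t(\pi^{\omega\epsilon}(d_t),d_t)$ with $\underline{V}^{*}_t(d_t)=\max_{\sNew{t+1}\in\Adom(\sNew{t})}\underline{Q}^{*}_t(\sNew{t+1},d_t)$, and then closes with the root-gap bound~\eqref{eq_5_7}, $|V^{*}_t(d_t)-\underline{V}^{*}_t(d_t)|\le\omega$, paying $(2\lambda+\omega)+\omega=2\lambda+2\omega$. You instead apply Theorem~\ref{crapmount} at the Bayes-optimal action $\pi^{*}(d_t)$, use implication~I only for the maximization property of $\pi^{\omega\epsilon}(d_t)$, and close with the slack-free one-sided property $\underline{Q}^{*}_t(\sNew{t+1},d_t)\le Q^{*}_t(\sNew{t+1},d_t)$, which you correctly extract from the machinery inside Theorem~\ref{th:anytime-1} (namely $\underline{V}^{*}_{t+1}\le V^{*}_{t+1}$ at the sampled children combined with the lower side of~\eqref{eq_5_2} and the definition~\eqref{eq:u-function}); this avoids~\eqref{eq_5_7} entirely and yields the sharper constant $2\lambda+\omega$. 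Your diagnosis of the bookkeeping trap is also accurate: a verbatim port of Lemma~\ref{smith}, applying the two-sided estimate of Theorem~\ref{crapmount} at both actions, would give $4\lambda+2\omega$ and miss the stated bound; the paper escapes this through~\eqref{eq_5_7}, you escape it through the one-sidedness of $\underline{Q}^{*}_t$. On the probability accounting, both arguments rest on the same underlying event — \eqref{eq_5_2} holding for every generated tuple — of which Theorem~\ref{crapmount}, Theorem~\ref{th:anytime-1}, and implication~I are all deterministic consequences, so your remark that no extra union bound is needed matches the paper's implicit treatment and in fact makes it explicit. What the paper's route buys is minimal machinery (only packaged results are invoked); what yours buys is a tighter estimate, showing the lemma's $2\lambda+2\omega$ carries $\omega$ of slack.
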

\begin{proof}
	$$
	\begin{array}{l}
	\displaystyle {Q}^*_t (\pi^*(d_t), d_t) - {Q}^*_t (\pi^{\omega\epsilon}(d_{t}), d_t) \\
	\displaystyle\le {Q}^*_t (\pi^*(d_t), d_t) - \underline{Q}^*_t (\pi^{\omega\epsilon}(d_{t}), d_t) + 2\lambda +\omega\\
	\displaystyle\le |{Q}^*_t (\pi^*(d_t), d_t) - \underline{Q}^*_t (\pi^{\omega\epsilon}(d_{t}), d_t)| + 2\lambda +\omega\\
	\displaystyle=  |  Q^{*}_t (\pi^{*}(d_t), d_{t}) -  \max_{\sNew{t+1} \in \Adom(\sNew{t})} \underline{Q}_t^*(\sNew{t+1}, d_t) | + 2\lambda +\omega\\
	\displaystyle= | V^*_t(d_t) - \underline{V}^{*}_t(d_t) | + 2\lambda +\omega\\
	\displaystyle  \le\omega + 2\lambda +\omega\\
	\displaystyle = 2 \lambda + 2\omega
	\end{array}
	$$
	where the first inequality is due to Theorem~\ref{crapmount}, the first equality is further due to implication I discussed just after~\eqref{eq:anytime_policy}, the second equality is due to the definitions of $V^*_t$~\eqref{eq:OptimalValFunDef} and $\underline{V}^{*}_t$ (see lines $15$ and $34$ in Algorithm~\ref{alg:Anytime}), and the last inequality is due to~\eqref{eq_5_7}.
\end{proof}
\begin{lemma}
	Suppose that the observations $d_{t}$, $H\in\mathbb{Z}^+$, a budget of $\kappa(H-t)$ input locations for $t=0,\ldots, H-1$, $\delta\in(0, 1)$, and $\lambda > 0$ are given.
	Then, 
	$$Q^*_t (\pi^*(d_t), d_t) - \mathbb{E}_{\pi^{\omega\epsilon}(d_t)}[Q^*_t (\pi^{\omega\epsilon}(d_t), d_t)]\le 2\lambda + 2\delta\lambda + 2\omega + 4\delta\theta$$ 
	where $\theta$ is previously defined in Theorem~\ref{th-mle_bound}.
	\label{lem:fixed_policy_beta2}	
\end{lemma}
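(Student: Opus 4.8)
The plan is to mirror exactly the two-case argument used in the proof of Lemma~\ref{lem:fixed_policy_beta}, substituting the anytime ingredients for the deterministic ones: Lemma~\ref{smith3} will play the role of Lemma~\ref{smith}, and the anytime implication~II derived in~\eqref{flop2} replaces the $\pi^\epsilon$ implication~II. Since $\pi^{\omega\epsilon}(d_t)$ is a random variable (through the stochastic sampling inside $\underline{Q}^{*}_t$), the left-hand side is genuinely an expectation over the draw of $\pi^{\omega\epsilon}(d_t)$, so I would condition on the high-probability event supplied by Lemma~\ref{smith3}, bound the gap separately on that event and its complement, and then recombine by probabilistic averaging.

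First, by Lemma~\ref{smith3}, with probability at least $1-\delta$ we have the bound $| Q^*_t(\pi^*(d_t), d_t) - Q^*_t(\pi^{\omega\epsilon}(d_t), d_t)| \le 2\lambda + 2\omega$; call this the likely event. Next I would bound the same quantity on the complementary (unlikely) event of probability at most $\delta$. Here I apply the triangle inequality
$$| Q^*_t(\pi^*(d_t), d_t) - Q^*_t(\pi^{\omega\epsilon}(d_t), d_t)| \le | Q^*_t(\pi^*(d_t), d_t) - Q^{\omega\epsilon}_t(\pi^{\omega\epsilon}(d_t), d_t)| + | Q^{\omega\epsilon}_t(\pi^{\omega\epsilon}(d_t), d_t) - Q^*_t(\pi^{\omega\epsilon}(d_t), d_t)|.$$
The second summand is at most $2\lambda + \omega + 2\theta$ directly from~\eqref{flop2}. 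For the first summand I use that $\pi^*(d_t)$ maximizes $Q^*_t$ while $\pi^{\omega\epsilon}(d_t)$ maximizes $Q^{\omega\epsilon}_t$; invoking \emph{both} optimality conditions gives the two-sided sandwich $| Q^*_t(\pi^*(d_t), d_t) - Q^{\omega\epsilon}_t(\pi^{\omega\epsilon}(d_t), d_t)| \le \max_{\sNew{t+1}\in\Adom(\sNew{t})} | Q^{\omega\epsilon}_t(\sNew{t+1}, d_t) - Q^*_t(\sNew{t+1}, d_t)|$, which~\eqref{flop2} again bounds by $2\lambda + \omega + 2\theta$. Adding the two summands yields $4\lambda + 2\omega + 4\theta$ on the unlikely event.

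Finally, writing the left-hand side as $\mathbb{E}_{\pi^{\omega\epsilon}(d_t)}[ Q^*_t(\pi^*(d_t), d_t) - Q^*_t(\pi^{\omega\epsilon}(d_t), d_t)]$ and weighting the two case-bounds by their probabilities gives
$$(1-\delta)(2\lambda + 2\omega) + \delta(4\lambda + 2\omega + 4\theta) = 2\lambda + 2\delta\lambda + 2\omega + 4\delta\theta,$$
matching the claim after a one-line expansion. The argument is largely routine once the earlier machinery is in place; the one step that needs genuine care is the argmax sandwiching for the first summand on the unlikely event, where one must use both optimality conditions to obtain a bound by the \emph{pointwise maximum} of $|Q^{\omega\epsilon}_t - Q^*_t|$ rather than a merely one-sided estimate. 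Everything else reduces to the triangle inequality, direct substitution of~\eqref{flop2}, and the concluding probabilistic average.
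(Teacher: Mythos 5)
Your proof is correct and follows essentially the same route as the paper's: a case split on the high-probability event of Lemma~\ref{smith3}, a triangle-inequality bound of $4\lambda + 2\omega + 4\theta$ on the unlikely event using~\eqref{flop2} for both summands, and the concluding probabilistic average $(1-\delta)(2\lambda+2\omega)+\delta(4\lambda+2\omega+4\theta)$. The only difference is cosmetic: you spell out the two-sided argmax sandwich bounding $|Q^*_t(\pi^*(d_t),d_t) - Q^{\omega\epsilon}_t(\pi^{\omega\epsilon}(d_t),d_t)|$ by the pointwise maximum of $|Q^{\omega\epsilon}_t - Q^*_t|$, a step the paper states without elaboration in its appeal to implication~II.
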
	
\begin{proof}
	By Lemma~\ref{smith3}, the probability of $| {Q}^*_t (\pi^*(d_t), d_t) - {Q}^*_t (\pi^{\omega\epsilon}(d_{t}), d_t) | \le 2 \lambda +2\omega$ is at least $1-\delta$. 
	Otherwise, the probability of $| {Q}^*_t (\pi^*(d_t), d_t) - {Q}^*_t (\pi^{\omega\epsilon}(d_{t}), d_t) | > 2 \lambda +2\omega$ is at most $\delta$. In the latter case,
	\begin{equation}
	\begin{array}{l}
	\displaystyle | {Q}^*_t (\pi^*(d_t), d_t) - {Q}^*_t (\pi^{\omega\epsilon}(d_{t}), d_t) |\\
	\displaystyle\le | {Q}^*_t (\pi^*(d_t), d_t) -{Q}^{\omega\epsilon}_t (\pi^{\omega\epsilon}(d_{t}), d_t)| + |{Q}^{\omega\epsilon}_t (\pi^{\omega\epsilon}(d_{t}), d_t) -{Q}^*_t (\pi^{\omega\epsilon}(d_{t}), d_t) |\\
	\displaystyle\le \max_{\sNew{t+1} \in \Adom(\sNew{t})}  | {Q}^{\omega\epsilon}_t (\sNew{t+1}, d_t) - {Q}^*_t (\sNew{t+1}, d_t) |  + 2\lambda + \omega +2\theta\\
	\displaystyle\le 2\lambda + \omega +2\theta + 2\lambda + \omega +2\theta\\
	\displaystyle = 4\lambda + 2\omega +4\theta
	\end{array}
	\label{smith4}
	\end{equation}
	where the first inequality is due to triangle inequality and the last two inequalities are due to~\eqref{flop2} (i.e., implication II). 
	Recall that $\pi^{\omega\epsilon}$ is a stochastic policy due to its use of stochastic sampling in $\underline{Q}^*_t$(see lines $13$ and $32$ in Algorithm~\ref{alg:Anytime}), which implies that 
	$\pi^{\omega\epsilon}(d_t)$ is a random variable. 
	Then,	
	$$
	\begin{array}{l}
	\displaystyle Q^*_t (\pi^*(d_t), d_t) - \mathbb{E}_{\pi^{\omega\epsilon}(d_t)}[Q^*_t (\pi^{\omega\epsilon}(d_t), d_t)]\\
	\displaystyle = \mathbb{E}_{\pi^{\omega\epsilon}(d_t)}[ Q^*_t (\pi^*(d_t), d_t) - Q^*_t (\pi^{\omega\epsilon}(d_t), d_t)]\\
	\le (1-\delta) (2\lambda +2\omega) + \delta (4\lambda + 2\omega +4\theta)\\
	= 2\lambda + 2\delta\lambda + 2\omega + 4\delta\theta
	\end{array}
	$$
	where the expectation is with respect to random variable $\pi^{\omega\epsilon}(d_t)$ and the inequality follows from Lemma~\ref{smith3} and~\eqref{smith4}.
\end{proof}
\begin{theorem}
	\label{th:expected2}
	Suppose that the observations $d_{0}$, $H\in\mathbb{Z}^+$, a budget of $\kappa H$ input locations, and an arbitrarily user-specified loss bound $\epsilon > 0$ are given and Algorithm~\ref{alg:Anytime} terminates at $\omega \triangleq \overline{V}^{*}_0(d_0) - \underline{V}^{*}_0(d_0)$ (see line $46$ in Algorithm~\ref{alg:Anytime}). 		
	Then, $V^*_0(d_0) - \mathbb{E}_{\pi^{\omega\epsilon}}[V^{\pi^{\omega\epsilon}}_0(d_0)] \le 2\omega H + \epsilon$ 
	by setting $\theta$ according to Theorem~\ref{th-mle_bound}, 
	$\delta = {\epsilon}/(8 \theta H)$ and $\lambda = 1/({4 H/\epsilon+1/(2\theta)})$ in Theorem~\ref{th:1_new} 
	to yield
	$$
	N  = \mathcal{O} \left( \frac{\kappa^{2H}}{\epsilon^2}
	\log{\frac{\kappa A}{\epsilon}} \right) .
	$$
\end{theorem}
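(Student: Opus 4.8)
The plan is to mirror the backward induction used in the main proof of Theorem~\ref{th:expected}, replacing its per-stage guarantee (Lemma~\ref{lem:fixed_policy_beta}) by the anytime analogue (Lemma~\ref{lem:fixed_policy_beta2}) and $\pi^{\epsilon}$ by $\pi^{\omega\epsilon}$. Write $C \triangleq 2\lambda + 2\delta\lambda + 2\omega + 4\delta\theta$ for the per-stage expected-loss bound supplied by Lemma~\ref{lem:fixed_policy_beta2}. I would prove by induction on $t$ that
$$V_t^*(d_t) - \mathbb{E}_{\pi^{\omega\epsilon}}[V_t^{\pi^{\omega\epsilon}}(d_t)] \le C\,(H-t)$$
for $t = 0,\ldots,H-1$. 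For the base case $t=H-1$, the value $V_{H-1}^{\pi^{\omega\epsilon}}$ collapses to $R(\pi^{\omega\epsilon}(d_{H-1}),d_{H-1})$ because $V_H=0$, which equals $\mathbb{E}_{\pi^{\omega\epsilon}(d_{H-1})}[Q_{H-1}^*(\pi^{\omega\epsilon}(d_{H-1}),d_{H-1})]$; the claim then follows immediately from Lemma~\ref{lem:fixed_policy_beta2}.

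For the inductive step I would decompose exactly as in~\eqref{eq:smith3},
$$V_t^*(d_t) - \mathbb{E}_{\pi^{\omega\epsilon}}[V_t^{\pi^{\omega\epsilon}}(d_t)] = \left(Q_t^*(\pi^*(d_t),d_t) - \mathbb{E}_{\pi^{\omega\epsilon}(d_t)}[Q_t^*(\pi^{\omega\epsilon}(d_t),d_t)]\right) + \mathbb{E}_{\pi^{\omega\epsilon}}[Q_t^*(\pi^{\omega\epsilon}(d_t),d_t) - Q_t^{\pi^{\omega\epsilon}}(\pi^{\omega\epsilon}(d_t),d_t)],$$
bound the first bracket by $C$ using Lemma~\ref{lem:fixed_policy_beta2}, and rewrite the second term via the Bellman recursions~\eqref{eq:general-policy} and~\eqref{eq:OptimalValFunDef} as a nested expectation of $V_{t+1}^* - V_{t+1}^{\pi^{\omega\epsilon}}$, to which the induction hypothesis for $t+1$ applies. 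This yields $C + C(H-t-1) = C(H-t)$, and evaluating at $t=0$ gives $V_0^*(d_0) - \mathbb{E}_{\pi^{\omega\epsilon}}[V_0^{\pi^{\omega\epsilon}}(d_0)] \le CH$.

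The remaining work is purely algebraic: showing that the prescribed constants split $CH$ as $2\omega H + \epsilon$. Write $CH = 2\omega H + \bigl(2\lambda(1+\delta) + 4\delta\theta\bigr)H$. With $\delta = \epsilon/(8\theta H)$ one has $4\delta\theta H = \epsilon/2$. For the remaining term, the choice $\lambda = 1/(4H/\epsilon + 1/(2\theta)) = \epsilon/(4H + \epsilon/(2\theta))$ is engineered so that $2\lambda(1+\delta)H = \epsilon/2$ holds with equality: after clearing denominators this identity reduces to $4\delta H = \epsilon/(2\theta)$, i.e.\ to $\delta = \epsilon/(8\theta H)$, which is exactly the prescribed value. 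Hence $\bigl(2\lambda(1+\delta)+4\delta\theta\bigr)H = \epsilon$ and $CH = 2\omega H + \epsilon$.

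Finally, I would read off $N$ from Theorem~\ref{th:1_new} with these settings. Treating $H$, $\sigma_y^2$, $\sigma_n^2$ as constants gives $\lambda = \Theta(\epsilon)$ and $\delta\lambda = \Theta(\epsilon^2/\theta)$ with $\theta = \mathcal{O}(\kappa^{H+1/2})$, so $1/\lambda^2 = \Theta(1/\epsilon^2)$ and $\log(\kappa A/(\delta\lambda)) = \mathcal{O}(\log(\kappa A/\epsilon))$, producing $N = \mathcal{O}\bigl((\kappa^{2H}/\epsilon^2)\log(\kappa A/\epsilon)\bigr)$, identical in form to Theorem~\ref{th:expected}. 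The induction itself is a routine transcription of that earlier proof; the only genuinely delicate point — and the main thing to verify carefully — is that the $\omega$ contribution remains outside the budgeted $\epsilon$ while the three $\lambda,\delta$-dependent terms cancel exactly under the stated choice of $\lambda$.
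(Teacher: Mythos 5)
Your proposal matches the paper's own proof essentially step for step: the same backward induction on $t$ with per-stage bound $(2\lambda + 2\delta\lambda + 2\omega + 4\delta\theta)(H-t)$ driven by Lemma~\ref{lem:fixed_policy_beta2}, the same Bellman-equation decomposition as in~\eqref{eq:smith3}, and the same final algebra showing that the prescribed $\lambda = 1/(4H/\epsilon + 1/(2\theta))$ and $\delta = \epsilon/(8\theta H)$ make the non-$\omega$ terms sum exactly to $\epsilon$, leaving $2\omega H + \epsilon$ and yielding $N = \mathcal{O}\bigl((\kappa^{2H}/\epsilon^2)\log(\kappa A/\epsilon)\bigr)$ via Theorem~\ref{th:1_new}. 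Your explicit check that $2\lambda(1+\delta)H = \epsilon/2$ reduces to $\delta = \epsilon/(8\theta H)$ is a correct (and slightly more transparent) rendering of the paper's choice of constants, so the proof is complete and sound.
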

\begin{proof}
	We will give a proof by induction on $t$ that 
	\begin{equation}
	V_t^*(d_t) - \mathbb{E}_{\pi^{\omega\epsilon}}[ V_t^{\pi^{\omega\epsilon}}(d_t)] \le (2\lambda + 2\delta\lambda + 2\omega + 4\delta\theta) (H - t)\ .
	\label{beta_V2}
	\end{equation}
	When $t = H-1$ (i.e., base case), 
	$$
	\begin{array}{l}
	\displaystyle  V_{H-1}^*(d_{H-1}) - \mathbb{E}_{\pi^{\omega\epsilon}} [ V_{H-1}^{\pi^{\omega\epsilon}}(d_{H-1}) ] \\ 
	\displaystyle = Q_{H-1}^*(\pi^*(d_{H-1}), d_{H-1}) -  \mathbb{E}_{\pi^{\omega\epsilon}} [ Q_t^{\pi^{\omega\epsilon}} (\pi^{\omega\epsilon}(d_{H-1}), d_{H-1})] \\ 
	\displaystyle = Q_{H-1}^*(\pi^*(d_{H-1}), d_{H-1}) -  \mathbb{E}_{\pi^{\omega\epsilon}(d_{H-1})} [ R (\pi^{\omega\epsilon}(d_{H-1}), d_{H-1})] \\ 
	\displaystyle =   
	Q_{H-1}^*(\pi^*(d_{H-1}), d_{H-1}) -  \mathbb{E}_{\pi^{\omega\epsilon}(d_{H-1})} [ Q_t^{*} (\pi^{\omega\epsilon}(d_{H-1}), d_{H-1})] \\ 		
	\displaystyle \le 2\lambda + 2\delta\lambda + 2\omega + 4\delta\theta
	\end{array}
	$$
	where the first equality is due to~\eqref{eq:general-policy} and~\eqref{eq:OptimalValFunDef}, the second equality is due to~\eqref{eq:general-policy}, the third equality is due to~\eqref{eq:OptimalValFunDef}, and the inequality is due to Lemma~\ref{lem:fixed_policy_beta2}. 
	So,~\eqref{beta_V2} holds for the base case.
	Supposing~\eqref{beta_V2} holds for $t+1$ (i.e., induction hypothesis), we will prove that it holds for $t = 0, \ldots, H-2$:			
	\begin{equation}
	\hspace{-1.9mm}
	\begin{array}{l}
	\displaystyle  V_t^*(d_t) - \mathbb{E}_{\pi^{\omega\epsilon}}[ V_t^{\pi^{\omega\epsilon}}(d_t) ]\\ 
	\displaystyle = Q_t^*(\pi^*(d_t), d_t) -  \mathbb{E}_{\pi^{\omega\epsilon}} [ Q_t^{\pi^{\omega\epsilon}} (\pi^{\omega\epsilon}(d_t), d_t)]\\ 
	\displaystyle = Q_t^*(\pi^*(d_t), d_t) -  \mathbb{E}_{\pi^{\omega\epsilon}} [Q_t^* (\pi^{\omega\epsilon}(d_t), d_t)]  + \mathbb{E}_{\pi^{\omega\epsilon}}[ Q_t^* (\pi^{\omega\epsilon}(d_t), d_t) ] -  \mathbb{E}_{\pi^{\omega\epsilon}}[ Q_t^{\pi^{\omega\epsilon}} (\pi^{\omega\epsilon}(d_t), d_t)]\\ 
	\displaystyle = Q_t^*(\pi^*(d_t), d_t) -  \mathbb{E}_{\pi^{\omega\epsilon}(d_t)} [Q_t^* (\pi^{\omega\epsilon}(d_t), d_t)]  +   \mathbb{E}_{\pi^{\omega\epsilon}} [ Q_t^* (\pi^{\omega\epsilon}(d_t), d_t)   - Q_t^{\pi^{\omega\epsilon}} (\pi^{\omega\epsilon}(d_t), d_t)] \\ 
	\displaystyle \le  2\lambda + 2\delta\lambda + 2\omega + 4\delta\theta +  \mathbb{E}_{\pi^{\omega\epsilon}} [ Q_t^* (\pi^{\omega\epsilon}(d_t), d_t)   - Q_t^{\pi^{\omega\epsilon}} (\pi^{\omega\epsilon}(d_t), d_t)] \\ 
	\displaystyle  = 2\lambda + 2\delta\lambda + 2\omega + 4\delta\theta  \\
	\displaystyle\quad +  \mathbb{E}_{\pi^{\omega\epsilon}} [ \mathbb{E}_{\zNew{t+1} | \pi^{\omega\epsilon}(d_t), d_t} [V^*_{t+1} ( \langle \sHist{t} \oplus \pi^{\omega\epsilon}(d_t),  \zHist{t} \oplus \zNew{t+1} \rangle) -  V^{\pi^{\omega\epsilon}}_{t+1} (\langle \sHist{t} \oplus \pi^{\omega\epsilon}(d_t),  \zHist{t} \oplus \zNew{t+1} \rangle)] ]\\ 
	\displaystyle  = 2\lambda + 2\delta\lambda + 2\omega + 4\delta\theta\\
	\displaystyle\quad +  \mathbb{E}_{\pi^{\omega\epsilon}(d_t)} [ \mathbb{E}_{\zNew{t+1} | \pi^{\omega\epsilon}(d_t), d_t} [V^*_{t+1} ( \langle \sHist{t} \oplus \pi^{\omega\epsilon}(d_t),  \zHist{t} \oplus \zNew{t+1} \rangle) -  \mathbb{E}_{\pi^{\omega\epsilon}}[V^{\pi^{\omega\epsilon}}_{t+1} (\langle \sHist{t} \oplus \pi^{\omega\epsilon}(d_t),  \zHist{t} \oplus \zNew{t+1} \rangle)]] ]\\ 
	\displaystyle \le  2\lambda + 2\delta\lambda + 2\omega + 4\delta\theta + \mathbb{E}_{\pi^{\omega\epsilon}(d_t)} [ \mathbb{E}_{\zNew{t+1} | \pi^{\omega\epsilon}(d_t), d_t} [(2\lambda + 2\delta\lambda + 2\omega + 4\delta\theta)(H - t - 1)]]\\
	\displaystyle  = (2\lambda + 2\delta\lambda + 2\omega + 4\delta\theta) (H- t)
	\end{array}
	\label{smith5}		
	\end{equation}
	where the first and fourth equalities are due to~\eqref{eq:general-policy} and~\eqref{eq:OptimalValFunDef}, 
	the first inequality is due to Lemma~\ref{lem:fixed_policy_beta2}, 
	and the last inequality is due to the induction hypothesis.
	
	From~\eqref{smith5}, when $t = 0$,
	$$
	V_0^*(d_0) - \mathbb{E}_{\pi^{\omega\epsilon}}  [V_0^{\pi^{\omega\epsilon}}(d_0)] \le 2H (\lambda + \delta\lambda + \omega + 2\delta\theta) = 2\omega H + 2H (\lambda + \delta\lambda + 2\delta\theta)\ .
	$$
	Let $\epsilon = 2H (\lambda +\delta\lambda  + 2 \delta  \theta)$ by setting 
	%
	%
	$\lambda=1/({4 H/\epsilon+1/(2\theta)})$
	%
	and $\delta = {\epsilon}/({8 \theta H})$. Consequently, using Lemma~\ref{smith3} and $\theta = \mathcal{O}( \kappa^{H + 1/2})$ previously defined in Theorem~\ref{th-mle_bound}, 	 
	$$
	N  = \mathcal{O} \left( \frac{\kappa^{2H}}{\epsilon^2}
	\log{\frac{\kappa A}{\epsilon}} \right) .
	$$
\end{proof}

\section{Auxiliary Results}
\label{sec:auxiliary}
\begin{lemma}
	\label{lem:L-bound}
	$L_t(\sHist{t}) = \mathcal{O}(\kappa^{H - t +{1}/{2}}\sqrt{H!/t!} (1+\sigma^2_y/\sigma^2_n)^{H-t})$ for  $t  = 0, \ldots, H-1$.
\end{lemma}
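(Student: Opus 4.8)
The plan is to first establish a \emph{uniform} upper bound on $\alpha(\sHist{t+1})$ over all reachable histories and candidate macro-actions, and then to induct downward on $t$ through the recursion in Definition~\ref{definition-L}. For the uniform bound, recall that $\alpha(\sHist{t+1}) = \lVert \Sigma_{\sNew{t+1}\sHist{t}} \Sigma_{\sHist{t}\sHist{t}}^{-1} \rVert_F$ (Lemma~\ref{lemma:reward}) depends only on input locations, not on the realized outputs. I would bound it by $\lVert \Sigma_{\sNew{t+1}\sHist{t}} \rVert_F \, \lVert \Sigma_{\sHist{t}\sHist{t}}^{-1} \rVert_2$ using submultiplicativity of the Frobenius norm with respect to the spectral norm (Section II.$2.1$ in~\cite{Stewart90}). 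Since $\Sigma_{\sHist{t}\sHist{t}}$ equals the noise-free covariance (positive semidefinite) plus $\sigma_n^2 I$, its eigenvalues are at least $\sigma_n^2$, so $\lVert \Sigma_{\sHist{t}\sHist{t}}^{-1} \rVert_2 \le \sigma_n^{-2}$. Each entry of the $\kappa \times n_t$ cross-covariance $\Sigma_{\sNew{t+1}\sHist{t}}$ is at most $\sigma_y^2$ in magnitude (covariance is maximized on the diagonal, by Cauchy--Schwarz and $\sigma_{ss}=\sigma_y^2$), where $n_t \triangleq |\Sprior| + \kappa t$ is the length of $\sHist{t}$; hence $\lVert \Sigma_{\sNew{t+1}\sHist{t}} \rVert_F \le \sigma_y^2 \sqrt{\kappa n_t}$. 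Treating $|\Sprior|$ as a constant gives $\sqrt{\kappa n_t} = \mathcal{O}(\kappa\sqrt{t+1})$, and therefore $\alpha(\sHist{t+1}) = \mathcal{O}\bigl((\sigma_y^2/\sigma_n^2)\,\kappa\sqrt{t+1}\bigr)$, uniformly in $\sNew{t+1}\in\Adom(\sNew{t})$.

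With this in hand I would prove the claim by downward induction on $t$. For the base case $t=H-1$, since $L_H\equiv 0$ we have $L_{H-1}(\sHist{H-1}) = \max_{\sNew{H}}\sqrt{\kappa}\,\alpha(\sHist{H}) = \mathcal{O}\bigl(\sqrt{\kappa}\cdot(\sigma_y^2/\sigma_n^2)\kappa\sqrt{H}\bigr) = \mathcal{O}\bigl(\kappa^{3/2}\sqrt{H}\,(1+\sigma_y^2/\sigma_n^2)\bigr)$, which matches the target at $t=H-1$. For the inductive step ($t=0,\ldots,H-2$) the two ingredients are: (i) $\sqrt{1+\alpha(\sHist{t+1})^2} \le 1 + \alpha(\sHist{t+1}) = \mathcal{O}\bigl(\kappa\sqrt{t+1}\,(1+\sigma_y^2/\sigma_n^2)\bigr)$, where the stray additive constant $1$ is absorbed into $\kappa\sqrt{t+1}$ (valid since $\kappa\ge 1$, $t\ge 0$), and the factors $1$ and $\sigma_y^2/\sigma_n^2$ are folded into $1+\sigma_y^2/\sigma_n^2$; and (ii) the induction hypothesis $L_{t+1}(\sHist{t+1}) = \mathcal{O}\bigl(\kappa^{H-t-1/2}\sqrt{H!/(t+1)!}\,(1+\sigma_y^2/\sigma_n^2)^{H-t-1}\bigr)$. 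Multiplying (i) and (ii) and invoking the telescoping identity $\sqrt{t+1}\cdot\sqrt{H!/(t+1)!} = \sqrt{H!/t!}$ (since $(t+1)!=(t+1)\,t!$) yields the product term
\[
L_{t+1}(\sHist{t+1})\sqrt{1+\alpha(\sHist{t+1})^2} = \mathcal{O}\bigl(\kappa^{H-t+1/2}\sqrt{H!/t!}\,(1+\sigma_y^2/\sigma_n^2)^{H-t}\bigr),
\]
which is exactly the target. The remaining additive term $\sqrt{\kappa}\,\alpha(\sHist{t+1}) = \mathcal{O}\bigl((\sigma_y^2/\sigma_n^2)\kappa^{3/2}\sqrt{t+1}\bigr)$ carries exponent $3/2$ in $\kappa$, which is strictly below $H-t+1/2 \ge 5/2$ for $t\le H-2$, and is thus dominated and absorbed into the $\mathcal{O}$. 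Since both (i) and (ii) are uniform over $\sNew{t+1}\in\Adom(\sNew{t})$, the maximum in Definition~\ref{definition-L} is bounded termwise, completing the step.

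The main obstacle is not any single hard estimate but the \textbf{bookkeeping in the inductive step}: one must check that the exponent of $\kappa$, the factorial ratio, and the power of $(1+\sigma_y^2/\sigma_n^2)$ each advance by exactly one level per stage. This hinges on the telescoping identity for the factorial and on correctly handling the ``$+1$'' inside $\sqrt{1+\alpha^2}$ so that it does not inflate the $(1+\sigma_y^2/\sigma_n^2)$ power. A secondary point needing care is the \emph{uniformity} of the $\alpha$-bound: it must be independent of the realized output measurements (which holds because $\Sigma_{\sNew{t+1}|\sHist{t}}$, and hence $\alpha$, depends only on input locations) and of the particular macro-action chosen, so that the $\max$ over $\Adom(\sNew{t})$ can be handled by a single estimate. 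Finally, I would note that absorbing $|\Sprior|$ and all numerical constants into the $\mathcal{O}(\cdot)$ is precisely what the lemma permits, since it retains explicit dependence only on $\kappa$, $H$, and $\sigma_y^2/\sigma_n^2$.
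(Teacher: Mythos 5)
Your proof is correct and follows essentially the same route as the paper's: a uniform bound $\alpha(\sHist{t+1}) = \mathcal{O}(\kappa\sqrt{t+1}\,(1+\sigma^2_y/\sigma^2_n))$ (the paper's Lemma~\ref{app-a-lemma-alpha}) followed by downward induction on $t$ through Definition~\ref{definition-L}, using the same telescoping identity $\sqrt{t+1}\,\sqrt{H!/(t+1)!}=\sqrt{H!/t!}$ and absorbing the additive $\sqrt{\kappa}\,\alpha(\sHist{t+1})$ term. The only cosmetic difference is that you derive the $\alpha$-bound via the mixed inequality $\lVert AB\rVert_F \le \lVert A\rVert_F\,\lVert B\rVert_2$ together with the eigenvalue floor $\sigma_n^2$, whereas the paper reaches the identical bound through an explicit eigendecomposition and trace computation.
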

\begin{proof}
	Using Definition~\ref{definition-L} followed by Lemma~\ref{app-a-lemma-alpha},
	\begin{equation}
	\label{gosh}
	\begin{array}{l}
	\displaystyle  L_t(\sHist{t}) \\
	\displaystyle  = \max_{{\sNew{t+1} \in \Adom(\sNew{t})}} 
	\sqrt{\kappa}\ \alpha(\sHist{t+1}) + L_{t+1}(\sHist{t+1}) \sqrt{1 + \alpha(\sHist{t+1})^2}  \\
	\displaystyle = (\sqrt{\kappa}+L_{t+1}(\sHist{t} \oplus \sNew{t+1}^*))\ \mathcal{O}(\kappa \sqrt{t+1}(1+\sigma^2_y/\sigma^2_n))			 
	\end{array}
	\end{equation}
	for $t=0,\ldots,H-1$ where $\sNew{t+1}^* \triangleq \argmax_{{\sNew{t+1} \in \Adom(\sNew{t})}}  L_{t+1}(\sHist{t} \oplus \sNew{t+1})$. 
	
	We will now give a proof by induction on $t$.
	When $t = H-1$ (i.e., base case), since $L_H({\sHist{H}}) = 0$ (Definition~\ref{definition-L}), 
	it follows from~\eqref{gosh} that $L_{H-1}({\sHist{H-1}}) = \mathcal{O}(\kappa^{3/2} \sqrt{H}(1+\sigma^2_y/\sigma^2_n))$.
	Supposing Lemma~\ref{lem:L-bound} holds for $t+1$ (i.e., induction hypothesis), 
	we will prove that it holds for $0\leq t<H-1$:
	$$
	\begin{array}{l}
	\displaystyle L_t(\sHist{t}) \\
	\displaystyle = (\sqrt{\kappa}+\mathcal{O}(\kappa^{H - t -{1}/{2}}\sqrt{H!/(t+1)!}\ (1+\sigma^2_y/\sigma^2_n)^{H-t-1}))\ \mathcal{O}(\kappa \sqrt{t+1}(1+\sigma^2_y/\sigma^2_n)) \\		
	\displaystyle = \mathcal{O}(\kappa^{H - t +{1}/{2}}\sqrt{H!/t!}\ (1+\sigma^2_y/\sigma^2_n)^{H-t})		
	\end{array}				
	$$
	where the first equality follows from~\eqref{gosh} and the induction hypothesis. 
	%
\end{proof}	
\begin{lemma}
	\label{app-a-lemma-alpha}
	$\alpha(\sHist{t+1})  = \mathcal{O}(\kappa \sqrt{t+1}(1+\sigma^2_y/\sigma^2_n))$ for $t = 0,\ldots,H-1$ where the function $\alpha$ is previously defined in Lemma~\ref{lemma:reward}.
\end{lemma}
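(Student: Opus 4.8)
The plan is to bound the Frobenius norm defining $\alpha(\sHist{t+1}) = \lVert \Sigma_{\sNew{t+1}\sHist{t}} \Sigma_{\sHist{t}\sHist{t}}^{-1}\rVert_F$ by splitting the matrix product with the \emph{mixed} submultiplicative inequality $\lVert AB\rVert_F \le \lVert A\rVert_F\,\lVert B\rVert_2$ (Frobenius norm times spectral norm), rather than the pure Frobenius submultiplicativity used in Lemma~\ref{lemma:reward}. Writing $m \triangleq |\sHist{t}|$ for the number of input locations in $\sHist{t}$, so that $\Sigma_{\sNew{t+1}\sHist{t}}$ is $\kappa\times m$, I would first record that $m = |\Sprior| + t\kappa = \mathcal{O}((t+1)\kappa)$, treating the fixed prior data size $|\Sprior|$ as at most $\mathcal{O}(\kappa)$.

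The first key step is to bound $\lVert\Sigma_{\sNew{t+1}\sHist{t}}\rVert_F$. Every entry is a covariance $\sigma_{ss'} = \sigma_y^2\exp\{-0.5(s-s')^{\top}\Gamma^{-2}(s-s')\}$ whose magnitude is at most $\sigma_y^2$, since the exponential factor never exceeds $1$; as the matrix has $\kappa m$ entries, this gives $\lVert\Sigma_{\sNew{t+1}\sHist{t}}\rVert_F \le \sqrt{\kappa m}\,\sigma_y^2$. The second key step is to control the spectral norm of the inverse. Because $\Sigma_{\sHist{t}\sHist{t}}$ is a genuine (hence positive semidefinite) covariance matrix plus $\sigma_n^2 I$, its smallest eigenvalue is at least $\sigma_n^2$, so $\lVert\Sigma_{\sHist{t}\sHist{t}}^{-1}\rVert_2 = 1/\lambda_{\min}(\Sigma_{\sHist{t}\sHist{t}}) \le 1/\sigma_n^2$.

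Combining the two bounds via the mixed inequality yields $\alpha(\sHist{t+1}) \le \sqrt{\kappa m}\,\sigma_y^2/\sigma_n^2$, and substituting $m = \mathcal{O}((t+1)\kappa)$ gives $\sqrt{\kappa m} = \mathcal{O}(\kappa\sqrt{t+1})$. Hence $\alpha(\sHist{t+1}) = \mathcal{O}(\kappa\sqrt{t+1}\,\sigma_y^2/\sigma_n^2) = \mathcal{O}(\kappa\sqrt{t+1}(1+\sigma_y^2/\sigma_n^2))$, as claimed, for $t = 0,\ldots,H-1$.

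The main obstacle is precisely the choice of norm inequality in the first step. The obvious Frobenius--Frobenius submultiplicativity is too lossy here, because $\lVert\Sigma_{\sHist{t}\sHist{t}}^{-1}\rVert_F$ can be as large as $\sqrt{m}/\sigma_n^2$, which would inflate the estimate by a spurious factor of $\sqrt{m} = \mathcal{O}(\sqrt{(t+1)\kappa})$ and overshoot the target rate by $\sqrt{\kappa(t+1)}$. Replacing $\lVert\Sigma_{\sHist{t}\sHist{t}}^{-1}\rVert_F$ with its spectral norm, cleanly bounded by the eigenvalue floor $\sigma_n^2$, is exactly what recovers the advertised $\kappa\sqrt{t+1}$ scaling.
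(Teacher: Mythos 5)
Your proof is correct and takes essentially the same approach as the paper's: the paper's eigendecomposition-and-trace argument is exactly an inline derivation of the mixed bound $\lVert AB\rVert_F \le \lVert A\rVert_F\,\lVert B\rVert_2$ that you invoke, combined with the same two ingredients you use --- the entrywise bound on $\Sigma_{\sNew{t+1}\sHist{t}}$ giving $\lVert\Sigma_{\sNew{t+1}\sHist{t}}\rVert_F = \mathcal{O}(\kappa\sqrt{t+1}\,\sigma_y^2)$, and the eigenvalue floor $\xi \ge \sigma_n^2$ obtained from the positive semidefiniteness of $\Sigma_{\sHist{t}\sHist{t}} - \sigma_n^2 I$. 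The only cosmetic differences are that you cite the Frobenius-times-spectral inequality as a known fact while the paper reproves it by splitting $\Lambda^{-2}$ against $\xi^{-2}I$ inside a trace, and that your per-entry bound $\sigma_y^2$ (versus the paper's $\sigma_y^2+\sigma_n^2$) is immaterial at the $\mathcal{O}$ level.
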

\begin{proof}
	Let $\Xi \Lambda \Xi^\top$ be an eigendecomposition of the symmetric and positive definite $\Sigma_{\sHist{t} \sHist{t}}$ where $\Xi$ is a matrix whose columns comprise an orthonormal basis of eigenvectors of $\Sigma_{\sHist{t} \sHist{t}}$ and $\Lambda$ is a diagonal matrix with positive eigenvalues of $\Sigma_{\sHist{t} \sHist{t}}$.
	From the definition of the function $\alpha$ in Lemma~\ref{lemma:reward},		
	%
	%
	\begin{equation}
	\label{app-a-09}
	\begin{array}{l}
	\displaystyle  \alpha(\sHist{t+1})^2  \\
	\displaystyle = \lVert \Sigma_{\sNew{t+1}\sHist{t}}\Sigma_{\sHist{t} \sHist{t}}^{-1} \lVert_F^2 \\
	\displaystyle   = \lVert \Sigma_{\sNew{t+1} \sHist{t}} \Xi \Lambda^{-1} \Xi^\top\lVert_F^2 \\
	\displaystyle  =  \mathrm{Tr}(\Sigma_{\sNew{t+1} \sHist{t}} \Xi \Lambda^{-1} \Xi^\top \Xi \Lambda^{-1}  \Xi^\top  \Sigma_{ \sHist{t} \sNew{t+1}} )  \\ 
	\displaystyle = \mathrm{Tr}(\Sigma_{\sNew{t+1} \sHist{t}} \Xi \Lambda^{-2}  \Xi^\top  \Sigma_{  \sHist{t} \sNew{t+1} } ) \\
	\displaystyle = \mathrm{Tr}(\Sigma_{\sNew{t+1} \sHist{t}} \Xi (\xi^{-2}I)   \Xi^\top  \Sigma_{  \sHist{t} \sNew{t+1} } ) - \mathrm{Tr}(\Sigma_{\sNew{t+1} \sHist{t}} \Xi (\xi^{-2} I - \Lambda^{-2})   \Xi^\top  \Sigma_{  \sHist{t} \sNew{t+1} } )   \\
	\displaystyle  \le \mathrm{Tr}(\Sigma_{\sNew{t+1} \sHist{t}} \Xi (\xi^{-2} I)   \Xi^\top  \Sigma_{ \sHist{t} \sNew{t+1} } )  \\
	\displaystyle = \xi^{-2}  \mathrm{Tr}(\Sigma_{\sNew{t+1} \sHist{t}}   \Sigma_{  \sHist{t} \sNew{t+1} }) \\
	\displaystyle  = \xi^{-2} \lVert \Sigma_{\sNew{t+1} \sHist{t}} \lVert_F^2 \\
	\displaystyle  = \mathcal{O}(\kappa^2 (t+1)(1+\sigma^2_y/\sigma^2_n)^2) 
	\end{array}
	\end{equation}
	where $\xi$ is the smallest eigenvalue in $\Lambda$, 
	the second equality is due to $\Sigma_{\sHist{t} \sHist{t}}^{-1} = \Xi \Lambda^{-1} \Xi^\top$, the third and seventh equalities are due to $\lVert \Phi\lVert^2_F = \mathrm{Tr}(\Phi\Phi^\top)$ for any matrix $\Phi$ (see Section~$10.4.3$ in~\cite{cookbook}), the fourth equality follows from the orthonormality of $\Xi$, the fifth equality is due to linearity of trace, the inequality is due to the positive semidefinite $(\xi^{-2}I - \Lambda^{-2})$ since $\xi^{-2}$ is the largest eigenvalue in $\Lambda^{-2}$, and
	the last equality follows from (a) $\lVert\Sigma_{\sNew{t+1} \sHist{t}} \lVert_F^2 = \mathcal{O}(\kappa^2 (t+1)(\sigma^2_y + \sigma^2_n)^2)$ since every prior covariance is not more than $\sigma^2_y + \sigma^2_n$ and the length of $\sHist{t}$ is $\mathcal{O}(\kappa (t+1))$ and (b) $\xi \ge \sigma_n^2$ since 
	$(\Sigma_{\sHist{t} \sHist{t}} - \sigma_n^2 I)$ is positive semidefinite and hence $\xi - \sigma_n^2$ is nonnegative.
\end{proof}
%

\ifCLASSOPTIONcaptionsoff
\newpage
\fi

\end{document}